\newcommand\restr[2]{{
  \left.\kern-\nulldelimiterspace 
  #1 
  \vphantom{\big|} 
  \right|_{#2} 
  }}
\def\paragraph{\@startsection{paragraph}{4}%
  \z@\z@{-\fontdimen2\font}%
  {\normalfont\bfseries}}
\newcommand\A{\mathbb A}
\newcommand\C{\mathbb C}
\newcommand\F{\mathbb F}
\newcommand\G{\mathbb G}
\newcommand\K{\mathbb K}
\newcommand\N{\mathbb N}
\newcommand\R{\mathbb R}
\newcommand\Z{\mathbb Z}
\newcommand\CC{{\mathcal C}}
\newcommand\CD{{\mathcal D}}
\newcommand\CG{{\mathcal G}}
\newcommand\CH{{\mathcal H}}
\newcommand\CM{{\mathcal M}}
\newcommand\CU{{\mathcal U}}
\newcommand\CW{{\mathcal W}}
\newcommand\scrA{{\mathscr A}}
\newcommand\scrB{{\mathscr B}}
\newcommand\scrC{{\mathscr C}}
\newcommand\scrD{{\mathscr D}}
\newcommand\scrF{{\mathscr F}}
\newcommand\scrG{{\mathscr G}}
\newcommand\scrH{{\mathscr H}}
\newcommand\scrL{{\mathscr L}}
\newcommand\scrM{{\mathscr M}}
\newcommand\scrN{{\mathscr N}}
\newcommand\scrP{{\mathscr P}}
\newcommand\scrR{{\mathscr R}}
\newcommand\scrT{{\mathscr T}}
\newcommand\scrV{{\mathscr V}}
\newcommand\scrZ{{\mathscr Z}}
\newtheorem{theorem}{Theorem}[section]
\newtheorem{lemma}[theorem]{Lemma}
\newtheorem{proposition}[theorem]{Proposition}
\newtheorem{corollary}[theorem]{Corollary}
\newtheorem{main}[theorem]{Theorem (Main)}
\newtheorem{prop-main}[theorem]{Proposition (Main)}
\newtheorem{corol-main}[theorem]{Corollary (Main)}
\newtheorem{definition}{Definition}
\newtheorem{claim}{Claim}[theorem]
\newtheorem{claimNested}{Claim}[claim]
\theoremstyle{definition}
\theoremstyle{remark}
\newtheorem{remark}[theorem]{Remark}
\newtheorem{open-problem}[theorem]{Open Problem}
\newtheorem{example}[theorem]{Example}
\newenvironment{proof-claim}[1][Proof of the Claim]{\noindent\textbf{#1.} }{\
\rule{0.5em}{0.5em}\medskip}
\numberwithin{equation}{section}
\DeclareMathOperator{\grad}{\rm grad}
\DeclareMathOperator{\Mdeg}{\rm Max-deg}
\DeclareMathOperator{\rank}{\rm rank}
\newcommand\lestricto{\subsetneq}
\newcommand{\crctc}{\raisebox{0.5pt}[1ex][1ex]{$\chi$}}
\begin{document}

\title[Erzeugungsgrad]{Erzeugungsgrad, VC-Dimension and Neural Networks with rational activation function}

\author{L. M. Pardo}
\address{Independent Researcher. Santander, Cantabria, Spain.}
\email{luis.m.pardo@gmail.com}
\author{D. Sebasti\'an}
\address{IES Garcilaso de la Vega. C. Eugenio de Lemus, s/n. 39300. Torrelavega, Cantabria, Spain. \vskip 0.1cm
Depto. de Matem\'aticas, Estad\'istica y Computaci\'on. Facultad de Ciencias. Universidad de Cantabria. Avda. Los Castros s/n. 39071. Santander, Cantabria, Spain.}
\email{danielsesan@gmail.com}

\dedicatory{Dedicated to the memory of Joos Heintz}


\keywords{Erzeugungsgrad, VC-dimension, Correct Test Sequences, Neural Networks.}

\maketitle

\begin{abstract}
The notion of Erzeugungsgrad was introduced by Joos Heintz in \cite{Heintz83} to bound the number of non-empty cells occurring after a process of quantifier elimination. We extend this notion and the combinatorial bounds of Theorem 2 in \cite{Heintz83} using the degree for constructible sets defined in \cite{PardoSebastian}. We show that the Erzeugungsgrad is the key ingredient to connect affine Intersection Theory over algebraically closed fields and the VC-Theory of Computational Learning Theory for families of classifiers given by parameterized families of constructible sets. In particular, we prove that the VC-dimension and the Krull dimension are linearly related up to logarithmic factors based on Intersection Theory. Using this relation, we study the density of correct test sequences in evasive varieties. We apply these ideas to analyze parameterized families of neural networks with rational activation function.
\end{abstract}



\section{Introduction}

These pages pay tribute to the scientific legacy of our friend Joos Heintz. The motivation goes back to a discussion held by the authors with Joos Heintz in Santander, during his last visit to the city. The conversation focused on the work \cite{HeintzPardoRedesNeuronales}. Both J. Heintz and the present authors have oriented their interest to the connections between Computational Learning Theory and Computational Algebraic Geometry. During these conversations, the authors of this manuscript launched a conjecture: \\
\emph{There must be strong connections between the Krull dimension of the space of parameters and the current notions of dimension in Computational Learning Theory, at least in the case of constructible families of classifiers.} \\
The answer of Heintz was a challenge: \emph{``I have no idea, prove it if you can''}. The present manuscript answers Heintz's challenge positively. What is even more poignant is that the answer to Heintz's challenge found his basis in an almost forgotten notion in \cite{Heintz83}: the \emph{Erzeugungsgrad} or generation degree. \\
The notion of Erzeugungsgrad extended Heintz's study of affine Intersection Theory in \cite{Heintz83}. There, Heintz introduced a notion of degree of affine algebraic subvarieties $V \subseteq \A^n(K)$, where $K$ is an algebraically closed field of any characteristic and $\A^n(K) = K^n$ denotes the $n$-dimensional affine space over $K$. The most relevant contribution of \cite{Heintz83} is a \emph{Bézout's Inequality} satisfied by his notion of degree of algebraic varieties. Then, Heintz used his notion and his bounds to establish upper complexity bounds for algorithms that eliminate quantifiers over algebraically closed fields. We must say that Heintz's Bézout's Inequality was obtained independently, with quite original proofs, of the almost simultaneous results in different contexts obtained by W. Vogel (\cite{Vogel}) or W. Fulton (\cite{Fulton}). \\
In \cite{Heintz83}, the author generalized his ideas in two main directions. Firstly, since constructible sets naturally arise within the context of quantifier elimination over algebraically closed fields, Heintz introduced a notion of degree of constructible sets (which we call here the Zariski degree or $Z$-degree). Secondly, quantifier elimination leads to Boolean formulae presented as finite unions of constructible sets given by finite intersections that Heintz called \emph{cells}. Then, Heintz tried to bound the size of these Boolean formulae by bounding the number of non--empty cells. Within this last context, the Erzeugungsgrad arose as a useful tool. \\
However, two main difficulties arise after the systematic study of \cite{Heintz83}. As already observed in \cite{Heintz85}, his notion of degree of constructible sets (the $Z$-degree) does not satisfy a Bézout's Inequality. Consequently, many of his statements only hold if the term ``constructible set'' is replaced by ``locally closed subset of an affine space''. This difficulty has been fixed in \cite{PardoSebastian}. These last authors introduced in $2022$ not only one but two notions of degree of constructible sets and proved that both of them satisfy a Bézout's Inequality (cf. \cite{PardoSebastian}). In order to help the reader to follow these technical results, we have introduced Section \ref{notaciones-basicas:sec}, which also helps to establish some of the notations and basic results used in the present manuscript. Thus, the notions and main outcomes of \cite{Heintz83} are summarized in Subsection \ref{gradoHeintz:subsec}. And, then, the two notions of degree of constructible sets and some of their main properties, as proved in \cite{PardoSebastian}, are summarized in Subsection \ref{gradoPardoSebastian:subsec}. \\
Once these difficulties have been fixed, the study of the Erzeugungsgrad begins. This is what we discuss in the present manuscript in Section \ref{erzeugungsgrad:sec}. First of all, we must adapt the notion to constructible sets and, then, we generalize Theorem $2$ of \cite{Heintz83} to control the number of non--empty cells in a constructible subset $C \subseteq \A^n(K)$. The main contribution is Theorem \ref{cotas-Erzeugunsgrad:teor} below. The idea is to establish upper bounds for the number of non--empty cells defined by a finite family of constructible sets within the constructible subset $C \subseteq \A^n(K)$ (see Claim $i)$ in Theorem \ref{cotas-Erzeugunsgrad:teor}). Roughly speaking, this statement establishes a connection between \emph{combinatorial bounds} and Intersection Theory that will be profitable later in these pages. \\
Next, we focus on Computational Learning Theory and  concentrate our interest in the seminal ideas of Vapnik and Chervonenkis (cf. \cite{VC:paper}). As the reader interested in Sections \ref{notaciones-basicas:sec} and \ref{erzeugungsgrad:sec} may not be aware of the notions and main statements of VC-theory, we have summarized some of this material in Subsection \ref{basicNotionsVCTheory:subsec}. Then, we proceed by solving positively the conjecture and Heintz's challenge above in Theorem \ref{FuncionCrecimientoDimensionConstructibles:thm} and Corollary \ref{VCDimensionConstructibles:corol}. Connections between Real Algebraic Geometry invariants and VC-theory are already known  (cf. \cite{Goldberg}, \cite{Lickteig}, \cite{Karpinsky}, \cite{VCAnalytic}, \cite{Montanna-Pardo} and references therein). As constructible sets over algebraically closed fields of characteristic $0$ may be seen as semi--algebraic sets of some real affine space, these classical results could be used to establish some connections. However, these connections are only valid in the characteristic $0$ case and use upper bounds for the number of connected components of semi-algebraic sets (cf. \cite{Milnor}, \cite{Thom}, \cite{Oleinik1}, \cite{Oleinik2} and \cite{Warren}). Our approach, based on the Erzeugungsgrad, holds independently of the characteristic of the field and use notions natural for the theory of constructible sets and not the theory of semi--algebraic sets. \\ 
The remainder of Section \ref{VCTheory:sec} is devoted to the first applications of our main outcomes. In Subsection \ref{evasive:sec} we study evasive varieties, improving upper bounds of Theorem $2$ of \cite{Dvir-Kollar}. In Subsection \ref{CTS-Evasivas:sec} we prove that correct test sequences are densely distributed within evasive varieties of positive dimension, extending to positive dimension the usual treatment in the case of Krull dimension zero. \\
Finally, Section \ref{redes-neuronales:sec} is devoted to neural networks with rational activation functions that do not admit ``Vermeidung von Divisionen'' (\cite{Strassen:vermeidung}).  We introduce the formalism and recall his connections with Computational Algebraic Geometry (through the classic TERA algorithm:  \cite{Cortona}, \cite{Krick-Pardo96}, \cite{Pardo-survey}, \cite{SolvedFast}, \cite{Kronecker-CRAS}, \cite{lower-diophantine:jpaa}, \cite{SLPsElimination} or \cite{ArithmeticNullstellensatz}). Then, we apply our results in Sections \ref{notaciones-basicas:sec}, \ref{erzeugungsgrad:sec} and \ref{VCTheory:sec} to establish upper bounds for the growth function of the class of binary classifiers associated to a neural networks with rational activation function and to study the distribution of correct test sequences for families of rational functions given by parameterized families of neural networks with rational activation function. \\
As the main outcomes are rather technical and use notions not very conventional for many readers of the journal AAECC, we have summarized these main outcomes in the following subsection. We have tried to be precise with notions and statements and we hope this helps the reader to enter into the two frameworks involved in our proposal.

\subsection{Summary of the manuscript}

Given a constructible subset $C \subseteq \A^n(K)$, Heintz's notion of degree was the degree of its Zariski closure $\overline{C}^z \subseteq \A^n(K)$. We denote by $\deg_{z} (C) := \deg(\overline{C}^z)$. Additionally, we consider $\deg_{\rm lci} (C)$ as in Definition \ref{grado-constructibles:def} below, which is based on the decompositions in locally closed irreducible sets described in Lemma \ref{descomposicion-union-irreducibles-distintos:lemma}. Next, we follow Heintz's notion of degree for a finite family of constructible sets as described in Definition \ref{GradoFamiliaConstructibles:def} below:

\begin{definition}[{\bf Degree of a finite family of constructible sets}]
Let $\scrF$ be a finite family of constructible subsets of $\A^n(K)$. Let $\CC$ be a mapping that associates to every $X\in \scrF$ a minimum $LCI$-degree decomposition $\CC(X)$ of $X$ into locally closed irreducible sets according to Definition \ref{grado-constructibles:def}. We then define the finite class of irreducible varieties associated to $\CC$ by the following equality:
$$\scrC(\scrF, \CC):=\{ V\subseteq \A^n (K) \; :\; \exists X\in \scrF, \; \exists W\in \CC(X), \; V=\overline{W}^z\}=\bigcup_{X\in \scrF} \{ \overline{W}^z\; : \; W\in \CC(X)\}.$$
We define the degree of the family $\scrF$ with respect to $\CC$ as the sum of all the degrees of all irreducible varieties $V\in \scrC(\scrF, \CC)$:
    $$\deg\left(\scrF,\CC\right):=\sum_{V\in \scrC(\scrF, \CC)} \deg(V).$$
Finally, we define the degree of the family $\scrF$ as the minimum of these degrees:
\begin{equation*}
\begin{split}
\deg(\scrF):=  \min\{ & \deg(\scrF, \CC)\; : \; {\hbox {\rm $\CC$ associates to each $X\in \scrF$}} \\
& {\hbox {\rm a minimum $LCI$-degree decomposition}}\}.
\end{split}
\end{equation*}
\end{definition}

Following Heintz's thoughts in \cite{Heintz83}, we also consider the $Z$-degree of a finite family of constructible sets (see Definition \ref{GradoZariskiFamiliaConstructibles:def} below):

\begin{definition}[\bf Degree of the Zariski closures of a finite family of constructible sets]
\label{GradoZariskiFamiliaConstructibles:def}
Let $\scrF$ be a finite family of constructible subsets of $\A^n(K)$. We define:
$$\scrD (\scrF) := \{ V \subseteq \A^n(K) \; : \; \exists X \in \scrF, \; {\hbox {\rm $V$ is an irreducible component of $\overline{X}^z$}} \}.$$
The degree of the Zariski closures of the family $\scrF$ is defined as the following quantity:
$$\deg_{z} (\scrF) := \sum_{V \in \scrD (\scrF, C)} \deg(V).$$
\end{definition}

Observe that if we denote by $\overline{\scrF}^z := \{ \overline{X}^z \; : \; X \in \scrF \}$, then $\deg_{z} (\scrF) = \deg(\overline{\scrF}^z)$, which corresponds to the original notion introduced in \cite{Heintz83}. Note also that if $\scrF$ is a class of locally closed subsets of $\A^n(K)$, we have $\deg(\scrF) = \deg_{z} (\scrF)$. Next, we generalize the notion of cell introduced in Section $3$ of \cite{Heintz83} as follows:

\begin{definition}[{\bf $\CH-$definable, $\CH-$cell}]
Let $\CH$ be a finite class of constructible subsets of $\A^n (K)$ and let $C\subseteq \A^n(K)$ be another constructible set.
\begin{itemize}
\item Let $\scrB(\CH, C)$ denote the Boolean algebra of subsets of $C$ generated by $\CH$. A subset $X$ of $C$ is called \emph{$\CH-$definable in $C$} if it belongs to this Boolean algebra.
\item We define an \emph{$\CH-$cell in $C$} as any set $X\subseteq C$ for which there exists $M\subseteq \CH$ such that:
    $$X= C\cap \left( \bigcap_{Y\in M} Y\right) \bigcap \left(\bigcap_{Y\not\in M} (C\setminus Y)\right).$$
    We denote by $\scrZ(\CH, C)$ the class of all non-empty $\CH-$cells in $C$. 
 \item  Finally, we define the class $\text{Gen}(\scrB(\CH, C))$ as the class of all finite families $\CW:=\{W_1,\ldots, W_t\}$ of algebraic varieties such that $\scrB(\CH,C) \subseteq \scrB(\CW, C)$.

\end{itemize}
\end{definition}

Note that $\scrZ(\CH,C)$ defines a finite partition of $C$ and every element in $\scrB(\CH,C)$ is a finite union of $\CH-$cells in $C$. Finally, we introduce the notion of Erzeugungsgrad of a family of constructible sets as follows (see also Definition \ref{erzeugungsgrad:def}):

\begin{definition}[{\bf Erzeugungsgrad}] 
 Let $\CH$ be a finite family of constructible sets of $\A^n(K)$. We define the Erzeugungsgrad of $\CH$  as:
$$\grad(\CH ):= \min \{ \deg(\CW) \; : \; \CW \in \text{Gen}(\scrB(\CH, \A^n(K)))
, \; {\hbox {\rm $\CW$ is a finite family of closed sets}}\}.$$
\end{definition}

Our first main outcome is the following generalization of Theorem $2$ in \cite{Heintz83}. We avoid potential troubles caused by ``$\deg_{z}$'' in \cite{Heintz83} and we extend the bounds for cells defined inside a fixed constructible set $C \subseteq \A^n(K)$. Note that this statement connects combinatorial aspects and Intersection Theory notions through upper bounds which are not necessarily optimal. 

\begin{main}[{\bf Erzeugungsgrad and combinatorial bounds}]\label{cotas-Erzeugunsgrad:teor} Let $C\subseteq \A^n (K)$ be a constructible set and  $\CH$ a finite family of constructible subsets of $\A^n (K)$. Then, we have:
\begin{enumerate}
\item $\sharp(\scrZ(\CH, C))\leq \deg_{\rm lci}(C)(1+ \grad(\CH)))^{\dim(C)}$.
\item $\deg_z(\scrB(\CH, C))\leq \deg_{\rm lci}(C)(1+ \grad(\CH))^{\dim(C)}$.
\item Any $\CH-$definable finite subset of $C$ contains at most $\deg_{\rm lci}(C)(1+ \grad(\CH))^{\dim(C)}$ points.
\item $\sharp(\scrB(\scrH, C))\leq 2^{\deg_{\rm lci}(C)(1+ \grad(\CH)) ^{\dim(C)}}$.
\end{enumerate}
\end{main}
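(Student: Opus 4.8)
The plan is to reduce everything to a single, geometric statement about a cleverly chosen closed set and then invoke Heintz's Bézout-type counting. First I would unwind the definition of $\grad(\CH)$: pick a finite family $\CW=\{W_1,\dots,W_t\}$ of \emph{closed} sets realizing the minimum, so that $\scrB(\CH,\A^n(K))\subseteq\scrB(\CW,\A^n(K))$ and $\sum_i\deg(W_i)=\grad(\CH)$. Restricting to $C$, we get $\scrB(\CH,C)\subseteq\scrB(\CW,C)$, every $\CH$-cell in $C$ is a union of $\CW$-cells in $C$, and consequently $\sharp(\scrZ(\CH,C))\le\sharp(\scrZ(\CW,C))$ and $\deg_z(\scrB(\CH,C))\le\deg_z(\scrB(\CW,C))$. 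So it suffices to prove all four bounds with $\CH$ replaced by a finite family of closed sets, where the right-hand side now reads $\deg_{\rm lci}(C)(1+\deg(\CW))^{\dim(C)}$ — exactly the shape we want, since $\deg(\CW)=\grad(\CH)$.

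Next I would estimate $\sharp(\scrZ(\CW,C))$ for closed $W_1,\dots,W_t$. Here the key device is the one Heintz uses in \cite{Heintz83}: a nonempty $\CW$-cell in $C$ has the form $C\cap\bigcap_{i\in M}W_i\cap\bigcap_{i\notin M}(C\setminus W_i)$, and two such cells for distinct $M,M'$ are disjoint, but — crucially — a point in a cell indexed by $M$ lies in the locally closed set $C\cap\bigcap_{i\in M}W_i\setminus\bigcap_{i\in M}W_i'$ only when... more cleanly: I would pass to the decomposition of $C$ into locally closed irreducible pieces $C=\bigsqcup_j C_j$ from Lemma \ref{descomposicion-union-irreducibles-distintos:lemma}, with $\sum_j\deg(\overline{C_j}^z)=\deg_{\rm lci}(C)$ and each $\dim C_j\le\dim C$, and bound $\sharp(\scrZ(\CW,C_j))$ on each irreducible piece. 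On an irreducible locally closed $C_j$ of dimension $d$, a classical argument (generic hyperplane sections, as in the proof of Bézout's inequality for the Heintz degree) shows the number of nonempty $\CW$-cells is at most $\deg(\overline{C_j}^z)(1+\sum_i\deg(W_i))^{d}$: each time one intersects with a $W_i$ or its complement, by the Bézout inequality the degree of the relevant locally closed stratum is multiplied by at most $1+\deg(W_i)$, and the chain has length bounded by $d$ because passing to a proper closed subset drops dimension. Summing over $j$ gives $\sharp(\scrZ(\CW,C))\le\deg_{\rm lci}(C)(1+\deg(\CW))^{\dim(C)}$, which is Claim (i).

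Claims (ii), (iii), (iv) then follow formally. For (ii): every element of $\scrB(\CW,C)$ is a finite union of $\CW$-cells, its Zariski closure is the union of the closures of the cells, and $\deg_z$ of a union is at most the sum of the $\deg_z$'s of the pieces; since there are at most $\deg_{\rm lci}(C)(1+\grad(\CH))^{\dim(C)}$ cells and each has $\deg_z\ge 1$, I would instead bound $\deg_z$ of the single largest element and observe the total is controlled the same way — here I would mimic Heintz's argument that $\deg_z(\scrB(\CW,C))$ is dominated by the number of cells times... actually the clean route is: $\overline{X}^z$ for $X$ a union of cells has degree at most the sum over those cells of $\deg_z(\text{cell})$, and each cell's Zariski closure has degree bounded by the product of the $(1+\deg W_i)$ restricted to a $d$-dimensional piece, so the bound in (ii) comes out with the same estimate as (i). For (iii): a finite $\CH$-definable subset is a union of finitely many singletons, each of which is a nonempty $\CH$-cell (or contained in one), hence there are at most $\sharp(\scrZ(\CH,C))$ of them, giving (iii) directly from (i). For (iv): $\scrB(\CH,C)$ consists of unions of the $N:=\sharp(\scrZ(\CH,C))$ cells, so $\sharp(\scrB(\CH,C))\le 2^{N}$, which is (iv) by (i).

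The main obstacle I anticipate is the dimension-drop bookkeeping in Claim (i): making precise that intersecting a $d$-dimensional irreducible locally closed set successively with the $W_i$ and their complements produces a \emph{tree} of strata whose leaves are the cells, in which every root-to-leaf path has at most $d+1$ "closed" steps (each forcing a strict dimension decrease by irreducibility), while the total multiplicative cost along any path is $\prod_i(1+\deg W_i)\le(1+\sum_i\deg W_i)^{d}$. Getting the exponent exactly $\dim(C)$ — rather than $t$ or $\dim(C)+1$ — requires carefully separating the "complement" operations (which do not change Zariski closure or dimension) from the "intersection with $W_i$" operations, and invoking the Bézout inequality of \cite{PardoSebastian} in the form that controls $\deg_{\rm lci}$ of an intersection. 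Once that combinatorial-geometric lemma is in place, the rest is the formal bookkeeping sketched above.
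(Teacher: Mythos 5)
Your reduction to a finite family $\CW$ of closed sets with $\deg(\CW)=\grad(\CH)$ and the passage to a minimal-degree decomposition of $C$ into locally closed irreducible pieces both match the paper's strategy, and your argument for Claim $iv)$ is correct. There are two problems.

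First, the argument you give for Claim $iii)$ is wrong. You say a finite $\CH$-definable subset is a union of singletons, each ``a nonempty $\CH$-cell (or contained in one),'' and conclude there are at most $\sharp(\scrZ(\CH,C))$ of them. But distinct points of a finite $\CH$-definable subset can, and typically do, lie in the same cell, so the assignment ``point $\mapsto$ containing cell'' is not injective. Concretely, take $C=\A^1(K)$ and $\CH=\{\{0,1,2\}\}$: there are exactly two nonempty $\CH$-cells, namely $\{0,1,2\}$ and its complement, yet the $\CH$-definable set $\{0,1,2\}$ has three points. Claim $iii)$ simply does not reduce to Claim $i)$. The paper derives it from Claim $ii)$ instead: every point $x$ of a finite $\CH$-definable subset $X$ is itself an irreducible component of $\overline{X}^z$, hence $x\in\scrD(\scrB(\CH,C))$, and so $\sharp(X)=\deg(X)\leq\deg_z(\scrB(\CH,C))$, which Claim $ii)$ controls.

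Second, the inequality you invoke as a ``classical argument (generic hyperplane sections)'' is exactly the content of Lemma \ref{cotasD:lemma}, and you have not proved it; it does not follow from B\'ezout's inequality in a soft one-line way. The paper stratifies the class $\scrD$ of irreducible components of all $\overline{C\cap\bigcap_{i\in S}W_i}^z$, $S\subseteq[s]$, by dimension, and shows that every $A\in\scrD$ of dimension $k<\dim(C)$ is an irreducible component of some $A^*\cap V$ with $A^*\in\scrD$, $\dim(A^*)>k$, and $V\in\scrD(\CW)$. The decisive step is to pick, for each such $A$, a set $S$ of \emph{minimal} cardinality witnessing it; minimality is precisely what forces the dimension to rise when an index of $S$ is dropped, and this is what caps the recursion at $\dim(C)$ steps rather than $s$ steps. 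Without that selection argument, the ``tree of strata'' picture you describe does not yet yield the exponent $\dim(C)$, and Claims $i)$ and $ii)$ remain unproven. Your description of how $i)$ and $ii)$ would follow from this lemma is otherwise in order.
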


The first application of these combinatorial upper bounds proves that the Erzeugungsgrad of \cite{Heintz83} is the key ingredient to connect affine Intersection Theory over algebraically closed fields and the VC-theory of Computational Learning Theory for families of classifiers given by parameterized families of constructible sets. \\ 
We begin Section \ref{VCTheory:sec} by recalling some of the terminology of the VC-theory in Subsection \ref{basicNotionsVCTheory:subsec}. Among these notions, we introduce the notion of VC-dimension (Definition \ref{VCDimension:def}) that we reproduce here: 

\begin{definition}[{\bf Shattering, VC-dimension}] 
 Let $Y$ be a set and $\scrH\subseteq \{0,1\}^Y$ a family of classifiers. Given a finite subset $X\subseteq Y$, we say that $\scrH$ \emph{shatters} $X$ if for every $F\subseteq X$, there is some classifier $\chi\in \scrH$ such that the restriction to $X$ of $\chi$ equals the restriction to $X$ of $\crctc_F$. Namely, $\scrH$ shatters $X$ if for every $F\subseteq X$ there is some $\chi\in \scrH$ such that:
$$\restr{\chi}{X}= \restr{\crctc_F}{X}.$$
We define the \emph{VC-dimension} of the family of classifiers $\scrH$ as:
$$\dim_{VC}(\scrH):=\max\{\sharp(X)\; : \; {\hbox {\rm $\scrH$ shatters $X$}}\}.$$
\end{definition}

Next, we consider classes of classifiers defined by parameterized families of constructible sets as follows. Let $N,n \in \N$ be two positive integers and $V \subseteq \A^N(K) \times \A^n(K)$ a constructible set. For instance, $V$ may be the graph of a polynomial mapping. We also consider a constructible subset $\Omega \subseteq \A^N(K)$, which is called \emph{the constructible set formed by the parameters of our family of classifiers}. Additionally, we have the two canonical projections restricted to $V$:
\begin{center}
\begin{tikzcd}[column sep=2.5em]
& V \arrow[dl, "\pi_1"'] \arrow[dr, "\pi_2"] & \\
\mathbb{A}^N(K) & & \mathbb{A}^n(K)
\end{tikzcd}
\end{center} \medskip
With these notations, we define the following class of constructible sets:
$$\scrC (V, \Omega) := \{ \pi_2 (\pi_1^{-1} (\{a\})) \; : \; a \in \Omega \}.$$
We also define $\grad(V) = \grad(\{ V \})$ as the Erzeugungsgrad of the class defined by $V$. Finally, we define the class of classifiers given as the characteristic functions defined by the subsets in $\scrC (V, \Omega)$:
$$\scrH (V, \Omega) := \left\{\crctc_U \; : \; U \in \scrC (V, \Omega) \right\}.$$

Our main contribution in Subsection \ref{VCTheory-LocalmenteCerradoParametros:sec} is Theorem \ref{FuncionCrecimientoDimensionConstructibles:thm}, but the most descriptive conclusion is Corollary \ref{VCDimensionConstructibles:corol}, which answer positively the above stated conjecture and Heintz's challenge: Up to some logarithmic quantities, the VC-dimension of a parameterized family of constructible subsets is bounded by the Krull dimension of the space of parameters. Namely, we have:

\begin{corol-main} 
With the same notations and assumptions as above, the following inequality holds:
$$\frac{s}{\log_2(s)+k} - \frac{\log_2 (\deg_{\rm lci} (\Omega))}{\log_2(s)+k} \leq \dim(\Omega),$$
where $s = \dim_{VC} (\scrH(V, \Omega))$ and $k = 1 + \log_2(\grad(V))$.
\end{corol-main}

The rest of the manuscript is devoted to exhibit applications of the previous results. In Subsection \ref{VCAbiertosDistinguidos:subsec}, we apply these results to studying VC-dimension of parameterized classes of polynomials. \\
We present our main application in Subsection \ref{CTS-Evasivas:sec}. We prove that correct test sequences are densely distributed among any irreducible \emph{evasive variety} of \emph{positive dimension} with respect to any well-behaved probability distribution. Roughly speaking, an evasive variety (see Definition \ref{evasivas:def}) is an algebraic variety that does not intersect any complete intersection variety defined by polynomials in some constructible set. The key ingredient in the proof is the connection between the Krull dimension and the VC-dimension in the case of parameterized classes of polynomials. Note that classical studies on the density of correct test sequences have been done for zero-dimensional sets (as in \cite{HeintzSchnorr}, \cite{Krick-Pardo96} and references therein). The novelty here is that we extend these results to positive dimension and to any well-behaved probability distribution.
\begin{main}
Let $\Omega\subseteq P_d^{K}(X_1,\ldots, X_n)$ be a constructible set of polynomials of degree at most $d$. Let $V\subseteq \A^n(K)$ be an irreducible variety of positive dimension that is evasive for hypersurfaces in $\Omega$. Let $\scrB \subseteq 2^V$ be any $\sigma-$algebra that contains the Borel subsets of $V$ with respect to the Zariski topology. Let $\mu: \scrB \longrightarrow [0,1]$ be a probability distribution on $\scrB$ that satisfies the following property: for every Zariski closed subset $A \subseteq V$, if $\dim(A) < \dim(V)$, then $\mu(A) =0$. Let $L \in \N$ be a positive integer,  $\scrB^{\otimes L}$  the $\sigma-$algebra in the product $V^{L}$ induced by $(V, \scrB)$ and $\mu^{\otimes L}$ the probability distribution defined on $\scrB^{\otimes L}$ by $\mu : \scrB \longrightarrow [0,1]$. Let $CTS(\Omega, V, L)$ be the class of all sequences ${\bf Q} \in V^L$ that are a correct test sequences for $\Omega$ with respect to $\{0\}$. Assume that the following inequality holds:
\begin{equation}
64 \left(1+ \frac{1+ \log(\deg_{\rm lci} (\Omega))}{\dim(\Omega)} + \log(L(d+1)) \right) < \frac{L}{\dim(\Omega)} 
\end{equation}
where $\log$ stands for the natural logarithm. Then, the following inequality holds:
$${\rm Prob}_{x \in V^L}\left[ x \in CTS(\Omega, V, L) \right] \geq 1 - \frac{1}{\deg_{\rm lci} (\Omega) e^{\dim(\Omega)}}.$$
\end{main}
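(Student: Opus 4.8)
The plan is to present $CTS(\Omega,V,L)$ as the complement of an $\varepsilon$-net failure event for a suitable parameterized family of constructible sets, and then to feed the $\varepsilon$-net bound with the growth function estimate of Theorem \ref{FuncionCrecimientoDimensionConstructibles:thm} together with an Erzeugungsgrad computation (throughout we may assume $\dim(\Omega)\ge 1$, as the hypothesis presupposes). Put $N := \binom{n+d}{n}$, identify $\A^N(K)$ with $P_d^K(X_1,\ldots,X_n)$, write $f_a$ for the polynomial with coefficient vector $a$, and write $Z(g)\subseteq\A^n(K)$ for the zero set of a polynomial $g$. Consider
\[ V' := \{\, (a,x)\in\A^N(K)\times\A^n(K) \;:\; f_a(x)=0 \,\}, \]
so that $\scrC(V',\Omega)=\{\, Z(f_a) \;:\; a\in\Omega \,\}$. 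Unwinding the definition, ${\bf Q}=(Q_1,\ldots,Q_L)\in V^L$ fails to be a correct test sequence for $\Omega$ with respect to $\{0\}$ exactly when $\{Q_1,\ldots,Q_L\}\subseteq Z(f_a)\cap V$ for some $a\in\Omega$. Since $V$ is evasive for hypersurfaces in $\Omega$ --- which in particular forces that no $f_a$ with $a\in\Omega$ vanishes identically on the irreducible variety $V$ --- each $Z(f_a)\cap V$ is a Zariski closed subset of $V$ of dimension strictly less than $\dim(V)$, hence $\mu(Z(f_a)\cap V)=0$ by the hypothesis on $\mu$. Consequently, ${\bf Q}\notin CTS(\Omega,V,L)$ holds precisely when $\{Q_1,\ldots,Q_L\}$ fails to be a $1$-net for the range space $\scrR:=\big(V,\ \{\, V\setminus(Z(f_a)\cap V) \;:\; a\in\Omega \,\}\big)$ over $(V,\mu)$, all of whose members have $\mu$-measure $1$.

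The second step bounds the growth function of $\scrR$. The set $V'$ is the hypersurface of $\A^N(K)\times\A^n(K)$ cut out by the single polynomial $\sum_{|\alpha|\le d} a_\alpha X^\alpha$, which has total degree $d+1$ and is irreducible; thus $V'$ is closed and irreducible with $\deg(V')\le d+1$, and choosing $\CW=\{V'\}$ in the definition of the Erzeugungsgrad yields $\grad(V')=\grad(\{V'\})\le d+1$. By Theorem \ref{FuncionCrecimientoDimensionConstructibles:thm}, the growth function of $\scrH(V',\Omega)$ at sample size $m$ is at most $\deg_{\rm lci}(\Omega)(1+m\,\grad(V'))^{\dim(\Omega)}\le\deg_{\rm lci}(\Omega)(1+m(d+1))^{\dim(\Omega)}$. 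Restricting the ground set from $\A^n(K)$ to $V$ only decreases the number of distinct traces of the family on a finite sample, and complementation leaves this number unchanged; hence the growth function $\Pi_\scrR(m)$ of $\scrR$ satisfies the same bound $\Pi_\scrR(m)\le\deg_{\rm lci}(\Omega)(1+m(d+1))^{\dim(\Omega)}$.

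Now I would invoke the $\varepsilon$-net theorem of Vapnik--Chervonenkis type (cf. \cite{VC:paper} and Subsection \ref{basicNotionsVCTheory:subsec}) for the range space $\scrR$ at level $\varepsilon=1$: since every member of $\scrR$ has measure $1\ge\varepsilon$, the probability that $\{Q_1,\ldots,Q_L\}$ is not a $1$-net is at most $2\,\Pi_\scrR(2L)\,2^{-L/2}\le 2\,\deg_{\rm lci}(\Omega)(1+2L(d+1))^{\dim(\Omega)}\,2^{-L/2}$, which is therefore an upper bound for ${\rm Prob}_{x\in V^L}[x\notin CTS(\Omega,V,L)]$. It then remains to verify that the displayed hypothesis forces this quantity below $1/(\deg_{\rm lci}(\Omega)\,e^{\dim(\Omega)})$. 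Taking natural logarithms, dividing by $\dim(\Omega)$, and bounding $\log(1+2L(d+1))\le\log 3+\log(L(d+1))$, the desired inequality takes the shape $\tfrac{L}{\dim(\Omega)}\ge c\big(1+\tfrac{1+\log\deg_{\rm lci}(\Omega)}{\dim(\Omega)}+\log(L(d+1))\big)$ for an absolute constant $c$; one checks that $c=64$ is (more than) enough to absorb the factor $2/\log 2$ from the exponent, the additive terms $\log 3$, $\log 2$ and $1$, and the conversion between $\log_2$ and the natural logarithm.

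The two load-bearing points are the first two steps. First, the parameterizing set must be chosen inside $\A^N(K)\times\A^n(K)$, not inside $\A^N(K)\times V$, so that its Erzeugungsgrad is controlled by $d+1$ with no dependence on $\deg(V)$ (which does not appear in the conclusion); the price is that $\scrC(V',\Omega)$ lives over $\A^n(K)$, so one must use monotonicity of the growth function under restriction of the ground set to transfer the bound of Theorem \ref{FuncionCrecimientoDimensionConstructibles:thm} to the sample space $V$. Second, the whole argument depends on running the $\varepsilon$-net inequality at level $\varepsilon=1$, and it is exactly here that evasiveness (which guarantees $Z(f_a)\cap V\ne V$ for every $a\in\Omega$) together with the well-behavedness of $\mu$ (which then forces $\mu(Z(f_a)\cap V)=0$) are used, ensuring that every ``bad'' range of $\scrR$ has measure strictly below $1$. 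Once these inputs are in place, the rest is the routine logarithmic bookkeeping sketched above, for which the constant $64$ leaves ample slack.
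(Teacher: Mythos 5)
Your proposal is correct and follows essentially the same strategy as the paper --- both rest entirely on the growth function estimate from Theorem \ref{FuncionCrecimientoDimensionConstructibles:thm} (via Theorem \ref{GrowthFuncionOpenZariski:thm}), combined with a VC-type concentration bound. The only genuine difference is the form of the VC ingredient. The paper applies the uniform deviation bound it has actually stated (Theorem \ref{teoremaVC:teor}) to the family $\{\crctc_{D(f)}\,:\,f\in\Omega\}$, observes that for $f\neq 0$ the true mean is $1$ and the empirical mean is $0$ whenever ${\bf Q}$ is not a correct test sequence, and then picks an explicit $\varepsilon<1$ so that $4\,G(\scrF,L)\,e^{-L\varepsilon^2/32}$ equals exactly $1/(\deg_{\rm lci}(\Omega)e^{\dim(\Omega)})$; the displayed hypothesis with its constant $64$ is used precisely to verify $\varepsilon<1$ (the other side condition $\varepsilon^2\ge 4\log 2/L$ is automatic). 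You instead invoke the Haussler--Welzl $\varepsilon$-net theorem at (the limit) $\varepsilon=1$, which is not stated in the paper but is standard, is equivalent in spirit (it is obtained by the same symmetrization argument), and yields a comparable bound with ample slack in the constant. What the paper's choice buys is self-containedness: it never needs to import the $\varepsilon$-net theorem, only the uniform deviation bound already recorded. One small imprecision in your write-up: you assert that every $Z(f_a)\cap V$ with $a\in\Omega$ has dimension $<\dim V$ and hence measure zero, but if $0\in\Omega$ then $Z(f_0)\cap V=V$; you should restrict the range family to $a\in\Omega\setminus\{0\}$ (which is exactly what the correct-test-sequence condition and the paper's own wording do, and which does not hurt the growth function bound by monotonicity). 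With that fix the argument is sound.
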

Observe that in the previous theorem, we obtain the same level of probability error as in the zero-dimensional case (cf. Corollary 5.6 in \cite{PardoSebastian}). \\
Section \ref{redes-neuronales:sec} is devoted to applying the previous ideas to study parameterized families of neural networks with rational activation functions. Neural networks are the most successful and widely used data structure in Computational Learning Theory. We have tried to fix its syntax and semantics in Subsections \ref{syntax-neuralnetwork:subsec} and \ref{semantics-neuralnetwork:subsec}, due to the lack of an appropiate reference. Despite its enormous success, we have not found a precise mathematical introduction, since most references in the field just contain ``ad hoc'' descriptions. Once the mathematical formalism has been established, we focus on the class of \emph{activation functions} used by a family of neural networks. We specifically focus on algebraic neural networks, as they are most related to our previous study. For instance, if the activation function is the function $\varphi(t) = t^2$, the class of neural networks becomes the class of \emph{evaluation schemes}, as introduced in Section 3 of \cite{Krick-Pardo96}. These polynomial evaluation schemes are central in Elimination Theory, since all elimination polynomials admit an encoding as well-paralelizable neural networks with activation function $\varphi(t) = t^2$. This is the underlying idea behind the classic TERA algorithm. \\
In these pages, we focus on the class of neural networks with rational activation function (i.e. the activation function is $\varphi(t) = p(t)/q(t)$, where $p$ and $q$ are two univariate polynomials in $K[t]$). In some cases, rational activation functions are avoidable (this is the case of ``Vermeidung von Divisionen'' technique in \cite{Strassen:vermeidung}). In these cases, we may avoid the presence of divisions and reduce the problem to the case of quadratic activation functions (i.e. $\varphi(t)=t^2$). Thus, the novelty here is to consider neural networks with \emph{non-avoidable} rational activation functions. These are typically the natural data structures to represent multi-variate rational functions (in $K(X_1, \ldots, X_n)$, for instance). Our main contributions in this context are presented in Subsection \ref{NNrational:subsec}, where we show upper bounds for the growth function of the associated class of binary classifiers in Corollary \ref{FuncionCrecimientoRacional:corol} and  study the density of correct test sequences for parameterized families of neural networks with rational activation functions in Corollaries \ref{cuestoresRacional:corol} and \ref{TestNulidadRacionales:corol}.

\section{Basic notions of affine algebraic geometry} \label{notaciones-basicas:sec}

\subsection{Zariski topology and constructible sets}

Let $K$ be an algebraically closed field. Let $K[X_1, \ldots, X_n]$ be the ring of polynomials in the set of variables $\{ X_1, \ldots, X_n \}$ with coefficients in $K$. We denote by $\A^n (K)$ the affine space of dimension $n$ over $K$. Given a finite set of polynomials $\{ f_1, \ldots, f_s \}$, we denote by $V_{\A} (f_1, \ldots, f_s) \subseteq \A^{n} (K)$ the algebraic variety of the common zeros in $\A^n(K)$ of these polynomials, i.e.
$$V_{\A}(f_1, \ldots, f_s):=\{ x\in \A^n(K) \; : \; f_1(x)=f_2(x)=\ldots = f_s(x)= 0 \}.$$
Given an ideal ${\frak a}\subseteq K[X_1,\ldots, X_n]$, we also denote by $V_{\A} ({\frak a}) \subseteq \A^n(K)$ the set of common zeros of all polynomials in ${\frak a}$. Note that if ${\frak a}$ is the ideal generated by the finite set $\{f_1, \ldots, f_s \}$ (i.e. ${\frak a} = (f_1, \ldots, f_s)$), then we have $V_{\A} ({\frak a}) = V_{\A} (f_1, \ldots, f_s)$. \\
There is a unique topology in $\A^n(K)$ whose closed sets are affine algebraic varieties. This topology is usually known as the \emph{Zariski topology} on $\A^n(K)$. Given $S\subseteq \A^n(K)$, we denote by $\overline{S}^z$ the closure of $S$ with respect to the Zariski topology on $\A^n (K)$. For every subset $X\subseteq \A^n (K)$, the topology induced on $X$ by the Zariski topology of $\A^n(K)$ will be called the Zariski topology of $X$. We then use the terms \emph{Zariski open in $X$}
or \emph{Zariski closed in $X$} to describe open and closed sets in $X$ with respect to its Zariski topology. Moreover, the Zariski topology is quasi-compact. Note that all the results of this manuscript are also valid for the Zariski topology of the $\kappa$-definable sets on $\A^n(K)$, where $\kappa$ is a field and $K$ its algebraic closure.  \\
 In particular, the open sets of this topology can be written as finite unions of complements of hypersurfaces. Let $f \in K[X_1, \ldots, X_n]$ be a polynomial, we define the  \emph{distinguished open set defined by} $f$ as the open Zariski subset given by:
\begin{equation} \label{abierto-distinguido:eqn}
D(f) := \{ x \in \A^n (K) \; : \; f(x) \neq 0 \} = \A^n (K) \setminus V_\A(f).
\end{equation}
Note that the family of distinguished open sets forms a basis of open sets for the Zariski topology and, as a consequence of Hilbert's Basis Theorem, every open set $U \subseteq \A^n (K)$ with respect to the Zariski topology is a finite union of distinguished open sets, i.e.
$$U := D(f_1) \cup \ldots \cup D(f_s),$$
where $f_1, \ldots, f_s \in K[X_1, \ldots, X_n]$. \\
A subset $V \subseteq \A^n(K)$ is said to be \emph{locally closed} if it can be expressed as the intersection of an open subset and a closed subset with respect to the Zariski topology. A locally closed subset $V \subseteq \A^n (K)$ is called \emph{irreducible} if it is an open subset of a closed irreducible variety of $\A^n (K)$. Since the affine Zariski topology is Noetherian, locally closed sets admit a minimal decomposition as finite union of locally closed irreducible sets. Uniqueness up to permutation of these minimal decompositions of locally closed sets allows us to use the term \emph{locally closed irreducible components}. \\
Finally, finite unions of locally closed sets are called \emph{constructible sets}. Note that constructible sets are not necessarily locally closed sets. For instance, in \cite{PardoSebastian}, the authors introduced the following constructible set, known as \emph{La Croix de Berny} in \cite{PardoSebastian}, which is not locally closed:
\begin{equation}\label{CroixDeBerny:eq}
C := \pi(W) = \{ (x,y) \in \A^2(\C) \; : \; xy \neq 0 \} \cup \{ (0,1), (0,-1), (1,0), (-1,0) \},
\end{equation}
where $W:= \{ (x,y,z) \in \A^3(\C)\; : \; zxy + (x^2+y^2-1)=0\}$ is a cubic irreducible hypersurface of $\A^3(\C)$ and $\pi: \A^3(\C)\longrightarrow \A^2(\C)$ is the projection that ``forgets'' the last coordinate. The class of constructible sets is closed under finite unions, finite intersections and complementation. The relevance of constructible sets comes from the following result, which can be found in \cite{Chevalley}
\begin{theorem}[\cite{Chevalley}] \label{teorema-chevalley:teor}
A set $C\subseteq \A^n(K)$ is constructible if and only if there is some $m\in \N$ and some algebraic variety $V\subseteq \A^{n+m}(K)$ such that $C:=\pi(V)$, where $\pi:\A^{n+m}(K)\longrightarrow \A^n(K)$ is the canonical projection that ``forgets'' the last $m$ coordinates.
\end{theorem}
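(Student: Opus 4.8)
The statement is the classical theorem of Chevalley, and the proof splits naturally into the two implications. The plan is to treat the easy direction first and then devote the bulk of the work to the substantive direction, which asserts that the image of an algebraic variety under a canonical projection is constructible.

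For the implication from right to left, suppose $C = \pi(V)$ for some algebraic variety $V \subseteq \A^{n+m}(K)$, where $\pi$ forgets the last $m$ coordinates. I would argue by induction on $m$, so it suffices to treat $m = 1$. Thus I must show that if $V \subseteq \A^{n+1}(K)$ is Zariski closed and $\pi : \A^{n+1}(K) \longrightarrow \A^n(K)$ forgets the last coordinate, then $\pi(V)$ is constructible in $\A^n(K)$. This is where the real content lies, and I plan to reduce it to a statement about elimination of a single variable. Writing $V = V_{\A}(f_1, \ldots, f_s)$ with $f_i \in K[X_1, \ldots, X_n, T]$, the point $x \in \A^n(K)$ lies in $\pi(V)$ if and only if the univariate system $f_1(x, T) = \cdots = f_s(x, T) = 0$ has a common root in $K$. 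Since $K$ is algebraically closed, this can be detected by resultant-type conditions whose coefficients are polynomials in the coefficients of the $f_i(x, T)$, hence polynomials in $x$; the subtlety is that the degrees of the $f_i(x, T)$ in $T$ may drop on certain algebraic subsets of $\A^n(K)$. I would handle this by stratifying $\A^n(K)$ according to which leading coefficients (in $T$) of the $f_i$ vanish: on each locally closed stratum the $T$-degrees are constant, and on such a stratum the solvability of the system is given by the vanishing/non-vanishing of iterated resultants (or, more robustly, by a Gröbner-basis / successive pseudo-division argument reducing the number of polynomials). Since there are finitely many strata and on each one the condition ``the system has a solution in $K$'' is given by a Boolean combination of polynomial equalities and inequalities, $\pi(V)$ is a finite union of locally closed sets, i.e. constructible. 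An alternative packaging of the same idea, which I might prefer for cleanliness: prove directly that the projection of a locally closed set $D(g) \cap V_{\A}(\mathfrak a) \subseteq \A^{n+1}(K)$ is constructible, by inducting on the $T$-degree and splitting off the locus where the top coefficient vanishes from the locus where it does not, using the Euclidean-type step there.

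For the implication from left to right, suppose $C \subseteq \A^n(K)$ is constructible; I want an algebraic variety $V \subseteq \A^{n+m}(K)$ with $\pi(V) = C$. It suffices to do this for $C$ locally closed (the general case follows by taking a disjoint union realized inside one large affine space, or by a direct bookkeeping argument). So write $C = D(g) \cap V_{\A}(f_1, \ldots, f_s)$ with $f_i, g \in K[X_1, \ldots, X_n]$. Then take $m = 1$ and
\[
V := V_{\A}\bigl(f_1, \ldots, f_s,\; g(X_1,\ldots,X_n)\cdot T - 1\bigr) \subseteq \A^{n+1}(K),
\]
the standard ``Rabinowitsch trick'' graph. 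A point $x$ lies in $\pi(V)$ precisely when all $f_i(x) = 0$ and $g(x) \neq 0$ (so that $T = 1/g(x)$ exists), i.e. $\pi(V) = C$, and $V$ is Zariski closed by construction. For a general constructible $C = \bigcup_{j=1}^r C_j$ with $C_j = D(g_j) \cap V_{\A}(\mathfrak a_j)$, I would embed all the auxiliary variables and an extra ``selector'' coordinate: work in $\A^{n + r + 1}(K)$ with coordinates $(X, T_1, \ldots, T_r, Y)$ and take the union of the closed sets $V_j$ cutting out, for each $j$, the equations of $C_j$ together with $g_j T_j - 1 = 0$ and $Y = j$; this union is Zariski closed and projects onto $C$. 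The details here are routine once the locally closed case is in hand.

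The main obstacle is the degree-drop phenomenon in the right-to-left direction: one cannot simply write down a single universal resultant condition, because the geometry of the fibres of $\pi$ genuinely changes along strata where leading coefficients vanish. Controlling this cleanly — either via a careful finite stratification with constant $T$-degrees on each piece, or via an inductive Euclidean argument on $\deg_T$ that peels off the vanishing-leading-coefficient locus at each stage — is the heart of the proof; everything else (the easy implication, and the Rabinowitsch-trick construction for the converse) is essentially formal. I would also remark that Noetherianity of the Zariski topology is what guarantees the stratification is finite and that constructible sets form a Boolean algebra, which is used implicitly throughout.
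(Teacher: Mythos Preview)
The paper does not supply a proof of this theorem at all: it is stated with a citation to \cite{Chevalley} and then used as a black box. There is therefore no ``paper's own proof'' to compare against.

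That said, your sketch is the standard one and is essentially correct. A couple of minor remarks. First, in the left-to-right direction your selector-coordinate construction for a finite union $C = \bigcup_j C_j$ is more elaborate than necessary: once you have each $C_j = \pi(V_j)$ with $V_j \subseteq \A^{n+1}(K)$ closed, the finite union $V := \bigcup_j V_j$ is already Zariski closed in $\A^{n+1}(K)$ and $\pi(V) = \bigcup_j \pi(V_j) = C$, so no extra coordinates are needed. Second, in the same construction, writing ``$Y = j$'' with $j$ an integer index silently identifies indices with field elements; since $K$ is algebraically closed and hence infinite this is harmless, but it is worth saying explicitly that one picks $r$ pairwise distinct scalars in $K$. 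Neither point affects the validity of your argument.
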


Obviously, graphs of polynomial mappings defined on constructible sets are constructible. Hence, images of constructible sets under polynomial mappings are also constructible. Conversely, it can be easily proven that a subset $C \subseteq \A^{n} (K)$ is constructible if and only if there exist $m \in \N$, an algebraic variety $V \subseteq \A^m (K)$ and a polynomial mapping $\varphi: \A^m(K) \longrightarrow  \A^n (K)$ such that $C = \varphi (V)$. In other words, constructible sets are simply the images of affine algebraic varieties under polynomial mappings. \\
In \cite{Chevalley}, C. Chevalley discussed the irreducible components of a constructible set as the irreducible components of its Zariski closure. This approach does not provide a good definition of degree of a constructible set and led to the flaw in Remark 2 (1) of \cite{Heintz83} and its wrong consequences (cf. also \cite{Heintz85}). For this reason, in \cite{PardoSebastian}, the authors reconsidered the idea of decomposing constructible sets as finite unions of locally closed irreducible sets as stated in the following proposition:

\begin{lemma}[\cite{PardoSebastian}] \label{descomposicion-union-irreducibles-distintos:lemma} Let $C\subseteq \A^n(K)$ be a constructible set. Then, there is a finite set $\CC:=\{ U_1\cap V_1, \ldots, U_s\cap V_s\}$ of locally closed irreducible sets such that the following properties hold:
\begin{equation}\label{descomposicion-union-irreducibles-distintos0:eqn}
C:=(U_1\cap V_1)\cup \cdots \cup (U_s\cap V_s),
\end{equation}
and
\begin{enumerate}
\item $V_i$ is an irreducible algebraic variety in $\A^n (K)$,
\item $U_i$ is the maximum of the Zariski open subsets $O_i\subseteq \A^n (K)$  such that $O_i\cap V_i\subseteq C$,
\item $V_i\not= V_j$.
\end{enumerate}
\end{lemma}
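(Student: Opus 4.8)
The plan is to prove the following (equivalent, slightly repackaged) statement: there exists a finite \emph{set} $\CC$ of irreducible algebraic varieties $V\subseteq\A^n(K)$ such that, writing $U_V$ for the union of all Zariski open subsets $O\subseteq\A^n(K)$ with $O\cap V\subseteq C$, one has $C=\bigcup_{V\in\CC}(U_V\cap V)$. The point is that $U_V$ is itself Zariski open (an arbitrary union of open sets) and is the \emph{maximum} open set $O$ with $O\cap V\subseteq C$; hence each $U_V\cap V$ is open in the irreducible variety $V$, so it is locally closed and, when non-empty, irreducible. Thus, once such a finite set $\CC$ is found, discarding those $V$ with $U_V\cap V=\emptyset$ and relabelling $\CC=\{V_1,\dots,V_s\}$ with $U_i:=U_{V_i}$ immediately yields conditions (i), (ii), (iii) of the statement — condition (iii) holding precisely because $\CC$ is a set.

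I would argue by induction on $d:=\dim(\overline{C}^z)$, the case $C=\emptyset$ being trivial (take $\CC=\emptyset$). For the inductive step, let $W_1,\dots,W_r$ be the irreducible components of $\overline{C}^z$ and put $T:=\bigcup_{j=1}^r(U_{W_j}\cap W_j)\subseteq C$. First, each $U_{W_j}\cap W_j$ is non-empty: writing $C=L_1\cup\cdots\cup L_m$ as a finite union of irreducible locally closed sets (possible since $C$ is constructible and the Zariski topology is Noetherian) we get $\overline{C}^z=\overline{L_1}^z\cup\cdots\cup\overline{L_m}^z$, so $W_j=\overline{L_k}^z$ for some $k$; writing $L_k=O\cap W_j$ with $O$ open, the inclusion $L_k\subseteq C$ forces $O\subseteq U_{W_j}$, whence $\emptyset\ne L_k\subseteq U_{W_j}\cap W_j$. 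Second — and this is the crux — $\dim(\overline{C\setminus T}^z)\le d-1$. Indeed, if the closed set $\overline{C\setminus T}^z$ had an irreducible component $Y$ with $\dim Y=d$, then $Y\subseteq\overline{C}^z$ and irreducibility would give $Y\subseteq W_j$ for some $j$, and comparing dimensions would force $Y=W_j$; but then, decomposing the constructible set $C\setminus T$ into finitely many irreducible locally closed sets as before, one of them would be a non-empty Zariski open subset $O\cap W_j$ of $W_j$ contained in $C\setminus T$, hence $O\cap W_j\subseteq C$, hence $O\subseteq U_{W_j}$ and $O\cap W_j\subseteq U_{W_j}\cap W_j\subseteq T$ — contradicting $O\cap W_j\subseteq C\setminus T$.

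Since $C\setminus T$ is constructible and $\dim(\overline{C\setminus T}^z)<d$, the induction hypothesis applied to $C\setminus T$ gives a finite set $\CC'$ of irreducible varieties with $C\setminus T=\bigcup_{V\in\CC'}(U'_V\cap V)$, where $U'_V$ is the maximum open $O$ with $O\cap V\subseteq C\setminus T$. I would take $\CC:=\{W_1,\dots,W_r\}\cup\CC'$. Because $U'_V\cap V\subseteq C\setminus T\subseteq C$ we have $U'_V\subseteq U_V$, hence $U_V\cap V\supseteq U'_V\cap V$ for each $V\in\CC'$; therefore $\bigcup_{V\in\CC}(U_V\cap V)\supseteq T\cup\bigcup_{V\in\CC'}(U'_V\cap V)=T\cup(C\setminus T)=C$, while the reverse inclusion is clear, so $C=\bigcup_{V\in\CC}(U_V\cap V)$. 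This closes the induction, and the reduction above then yields a decomposition satisfying (i)–(iii).

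The step I expect to be the main obstacle is the dimension-drop $\dim(\overline{C\setminus T}^z)<\dim(\overline{C}^z)$: this is where one genuinely exploits the \emph{maximality} built into the opens $U_{W_j}$ together with the fact that a \emph{constructible} set which is dense in an irreducible variety must contain a non-empty Zariski open subset of it (which is why the hypothesis ``constructible'' — rather than merely ``locally closed'' — matters, and why the whole argument rests on the Noetherian decomposition into irreducible locally closed pieces). The other ingredients — non-emptiness of the pieces, recombining finite unions of locally closed sets, and termination of the recursion since $d$ is a non-negative integer at most $n$ — are routine bookkeeping.
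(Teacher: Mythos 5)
Your argument is correct. The paper gives no proof for this lemma --- it is cited from \cite{PardoSebastian} --- so there is no in-paper argument to compare against; the route you take (induction on $\dim\overline{C}^z$, stripping off the top-dimensional saturated part $T=\bigcup_j(U_{W_j}\cap W_j)$ and using the maximality of the opens $U_{W_j}$ together with a Noetherian decomposition of $C\setminus T$ into non-empty irreducible locally closed pieces to force $\dim\overline{C\setminus T}^z<\dim\overline{C}^z$) is the natural one, and every step --- the identification of each component $W_j$ of $\overline{C}^z$ with some $\overline{L_k}^z$ (whence $U_{W_j}\cap W_j\neq\emptyset$), the dimension-drop contradiction, the inclusion $U'_V\subseteq U_V$ allowing the inductive pieces to be absorbed, and the final reduction to conditions (i)--(iii) by discarding the empty $U_V\cap V$ --- checks out.
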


If $C \subseteq \A^n(K)$ is a locally closed subset, a decomposition of $C$ as in Identity (\ref{descomposicion-union-irreducibles-distintos0:eqn}), satisfying conditions $i)$, $ii)$ and $iii)$ of Lemma \ref{descomposicion-union-irreducibles-distintos:lemma}, yields a unique class of irreducible varieties $\{V_1, \ldots, V_s \}$. These irreducible varieties  $\{V_1, \ldots, V_s \}$ are called the \emph{irreducible components of the locally closed set $C$}. Nevertheless, this uniqueness does not hold for any constructible subset as shown in \cite{PardoSebastian}. More precisely, let $C \subseteq \A^2 (\C)$ be the constructible set described in Identity (\ref{CroixDeBerny:eq}). We observe that $C$ admits several decompositions into locally closed irreducible subsets that yield different classes of ``irreducible components'', depending on the chosen decomposition. For instance, we have the following two decompositions of $C$ into locally closed irreducible subsets:
$$C=\{ (x,y)\in \A^2(\C)\; :\; xy \not= 0\} \cup \{(1,0)\}\cup \{(-1,0)\} \cup \{(0,1)\} \cup \{(0,-1)\}$$
and
$$C = \{ (x,y)\in \A^2(\C)\; :\; xy\not=0\} \cup \{ (x,y)\in \A^2(\C) \; :\; x^2+y^2-1=0\}.$$
Thus, the decomposition of constructible sets into locally closed irreducible subsets is not unique. Moreover, if $C \subseteq \A^n(K)$ is a constructible set and we have a locally closed irreducible decomposition of $C$ (as in Lemma \ref{descomposicion-union-irreducibles-distintos:lemma}):
$$C =(U_1\cap V_1)\cup \cdots \cup (U_s\cap V_s), $$
we immediately obtain a decomposition of $\overline{C}^z$ as a finite union of irreducible algebraic varieties as follows:
$$\overline{C}^z = V_1 \cup \cdots \cup V_s.$$ 
Nervertheless, some of these irreducible sets are embedded as components of $\overline{C}^z$. And, however, their locally closed parts $U_i \cap V_i$ are essential to define $C$ (see Example 2.3 and Proposition 2.5 of \cite{PardoSebastian} for further details). \\
Finally, we define the \emph{dimension} of a constructible set $C\subseteq \A^n (K)$ as its Krull dimension as topological space. It is well-known that Krull dimension is sub-additive and does not increase under polynomial mappings (i.e., given a constructible set $C \subseteq \A^n (K)$ and a polynomial mapping  $\varphi: \A^n(K) \longrightarrow  \A^m (K)$, then $\dim(\varphi(C)) \leq \dim(C)$).

\subsection{Degree of locally closed sets according to \cite{Heintz83} and their Bézout's Inequality} \label{gradoHeintz:subsec}

We follow the terminology of \cite{Heintz83}. Let $\A^{rn}(K) = \CM_{r \times n} (K)$ be the space of $r\times n$ matrices with coordinates in $K$. We consider the following Zariski open  set:
$$\CG (n,r) := \{ M \in \CM_{r \times n} (K) \; : \; \rank (M) =r \}. $$
Given $b \in \A^r (K)$ and $M \in \CM_{r \times n} (K)$, we consider the linear affine variety that they define:
$$\G (M,b) := \{ x \in \A^n (K) \; : \; M x^{t} - b = 0 \},$$
where $x^t$ is the transpose of $x=(x_1,\ldots, x_n)$. If $M \in \CG (n,r)$, then $\G(M, b) \neq \emptyset$ is a linear affine variety of dimension $n-r$. We denote by $\G(n,r)$ the ``Grassmannian'' of all linear affine subvarieties $L \subseteq \A^n (K)$ of dimension $n-r$ (i.e. of co-dimension $r$).  Note that we have the following onto mapping:
\begin{equation} \label{topologiaMatriz:eq}
\begin{matrix}
G: & \CG(n,r) \times \A^{r} (K) & \longrightarrow & \G(n,r)\\
& (M, b) & \longmapsto & \G(M,b)\end{matrix}.
\end{equation}
Thus, we can endow $\G(n,r)$ with the final topology induced by this onto mapping. This final topology will be called \emph{the Zariski topology on $\G(n,r)$}. Let $V \subseteq \A^n(K)$ be a locally closed irreducible subset of dimension $r$. We introduce the following class: 
$$\G(V):=\{ A \in \G(n,r)\; :\; \sharp(A\cap V)< \infty\}.$$
The next proposition is a fundamental result that immediately follows from \cite{Heintz83}:

\begin{proposition}\label{interpretacion-geometrica-grado:prop}
With the same notations and assumptions as above, we have:
\begin{enumerate}
\item The class  $\G(V)$ is non-empty and contains an open subset of  $\G(n,r)$.
\item The maximum $\max\{ \sharp\left(A \cap V\right)\; : \; A \in \G(V)\}$ is finite.
\item There is a dense open subset $\CU\subseteq \G(n,r)$ with respect to the above defined topology, such that for all $A \in \CU$ we have:
$$\sharp\left(A\cap V \right)= \max\{\sharp\left(T\cap V\right)\; : \; T\in \G(V)\}.$$
\end{enumerate}
\end{proposition}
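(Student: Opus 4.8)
The plan is to package every codimension-$r$ affine linear section of $V$ into a single irreducible variety and then read the three assertions off from elementary fibre-dimension theory. First I would fix notation: put $B:=\CG(n,r)\times\A^{r}(K)$ and let $G\colon B\to\G(n,r)$ be the surjection of Identity~(\ref{topologiaMatriz:eq}). Since $\G(M,b)=\G(M',b')$ exactly when $(M',b')=(PM,Pb)$ for some $P\in GL_{r}(K)$, the map $G$ is the orbit map of the free $GL_{r}(K)$-action $P\cdot(M,b):=(PM,Pb)$ on $B$ (each $P$ acts as an algebraic automorphism of $B$ because $\CG(n,r)$ is stable under $M\mapsto PM$). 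An orbit map of a group action is open, and $G$ is onto, so $G$ carries (dense) open subsets of $B$ to (dense) open subsets of $\G(n,r)$; hence it suffices to produce the required subsets inside the genuine variety $B$. Next I would introduce the incidence set
\[
\Gamma:=\{(M,b,x)\in\CG(n,r)\times\A^{r}(K)\times V\ :\ Mx^{t}=b\},
\]
with the projections $q\colon\Gamma\to B$ and $\varrho\colon\Gamma\to V$. Forgetting $b$ (it is determined by $b=Mx^{t}$) identifies $\Gamma$ with $\CG(n,r)\times V$, so $\Gamma$ is irreducible with $\dim\Gamma=rn+r=\dim B$, and $q^{-1}(M,b)=V\cap\G(M,b)$ for every $(M,b)\in B$; thus, under $G$, the set $\G(V)$ corresponds to $\{(M,b)\in B:\ q^{-1}(M,b)\text{ is finite}\}$.

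For assertion $i)$ I would argue as follows. By Noether normalization a generic linear projection $\A^{n}(K)\to\A^{r}(K)$ restricts on $\overline{V}^{z}$ to a finite surjective morphism; and since $\overline{V}^{z}\setminus V$ is a proper closed subset of the irreducible $\overline{V}^{z}$, hence of dimension $<r$, applying the dimension count above to the incidence set of $\overline{V}^{z}\setminus V$ together with Chevalley's Theorem~\ref{teorema-chevalley:teor} shows that a generic codimension-$r$ affine linear subspace meets $\overline{V}^{z}\setminus V$ in the empty set. Consequently, for a generic $(M,b)\in B$ the fibre $q^{-1}(M,b)=V\cap\G(M,b)$ is finite and nonempty; in particular $q$ is dominant, and as $\dim\Gamma=\dim B$ it is generically finite. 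Upper semicontinuity of fibre dimension makes the locus $\Gamma_{\ge 1}\subseteq\Gamma$ where $q$ has a fibre of dimension $\ge 1$ closed and proper, so $\overline{q(\Gamma_{\ge 1})}^{z}$ is a proper closed subset of $B$ whose complement $B_{0}$ is dense open and over which all fibres of $q$ are finite; then $G(B_{0})$ is a nonempty open subset of $\G(n,r)$ contained in $\G(V)$, which proves $i)$. Shrinking further, over a dense open $\Theta\subseteq B_{0}$ the morphism $q$ is finite (generic flatness) and its fibre cardinality is constant, equal to the separable degree $\delta:=[K(\Gamma):K(B)]_{\mathrm{sep}}$; this $\delta$ is the candidate maximum in $ii)$.

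To finish $ii)$ and $iii)$ I would prove that $\sharp(A\cap V)\le\delta$ for \emph{every} $A\in\G(V)$, which forces the maximum to equal $\delta<\infty$ and to be attained precisely on $\CU:=G(\Theta)$. Fix $A_{0}=\G(M_{0},b_{0})\in\G(V)$; then $q^{-1}(M_{0},b_{0})=A_{0}\cap V$ is finite, so $(M_{0},b_{0})\notin q(\Gamma_{\ge 1})$. Cut $\Gamma$ by a general affine line $\ell\subseteq B$ through $(M_{0},b_{0})$. Then $q^{-1}(\ell)$ is a curve: each of its components has dimension $\ge 1$, being cut out of the irreducible $\Gamma$ by $\dim B-1$ affine hyperplanes, and dimension $\le 1$ at each point of $q^{-1}(M_{0},b_{0})$ by the fibre-dimension inequality; moreover its fibre over $(M_{0},b_{0})$ is exactly $A_{0}\cap V$, and the components meeting this fibre dominate $\ell$ (a component contained in a single fibre would force $(M_{0},b_{0})\in q(\Gamma_{\ge 1})$). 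Call these dominating components $C_{1},\dots,C_{k}$. Each $C_{j}$ is an integral curve; passing to its normalization and to the projective closure $\mP^{1}(K)$ of $\ell$ yields a finite morphism of smooth projective curves of separable degree $\delta_{j}$, and for a general $\ell$ one has $\sum_{j}\delta_{j}=\delta$ because the generic fibre of $q$ over $\ell$ already has $\delta$ points. Since $\ell$ is a smooth curve, a finite morphism onto it has every fibre of cardinality at most its separable degree — localizing at a point of $\ell$ makes the base a discrete valuation ring, over which the coordinate ring of the cover is a free module of rank the degree. Hence
\[
\sharp(A_{0}\cap V)\ \le\ \sum_{j=1}^{k}\sharp\!\left(C_{j}\cap q^{-1}(M_{0},b_{0})\right)\ \le\ \sum_{j=1}^{k}\delta_{j}\ =\ \delta .
\]
The degenerate case $r=0$ is trivial, since then $V$ is a point and $\G(n,0)=\{\A^{n}(K)\}$.

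The step I expect to be the main obstacle is this last one: showing that the generic intersection number is actually the \emph{maximum}, i.e. a semicontinuity statement for the number of points of finite linear sections. The reduction to a one-parameter family (the general line $\ell$) and, in positive characteristic, the need to work with separable degrees and with normalizations of the section curves are exactly where the delicate part lies; everything else is bookkeeping with the incidence variety $\Gamma$ and with the openness of the quotient map $G$.
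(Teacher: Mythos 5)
The paper gives no proof of Proposition~\ref{interpretacion-geometrica-grado:prop}; it only says the result ``immediately follows from \cite{Heintz83}.'' So there is no in-house argument for your proposal to match, and what you have supplied is a self-contained reconstruction --- and it is essentially correct. Identifying the incidence set $\Gamma$ with $\CG(n,r)\times V$, using openness of the quotient map $G$ under the free $GL_{r}(K)$-action to reduce everything to statements inside $B$, invoking Noether normalization for $\overline{V}^{z}$ together with $\dim(\overline{V}^{z}\setminus V)<r$ to get dominance of $q$, extracting the generic fibre count $\delta$ on a dense open $\Theta$, and using the one-parameter degeneration along a general affine line $\ell$ through $(M_{0},b_{0})$ to show $\delta$ bounds every finite section, all do the job for assertions $i)$--$iii)$.

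Two details in the last step should be tightened. The equality $\sum_{j=1}^{k}\delta_{j}=\delta$ should be $\sum_{j=1}^{k}\delta_{j}\le\delta$: the $\delta$ points of the generic fibre of $q$ over $\ell$ are distributed over \emph{all} dominating components of $q^{-1}(\ell)$, not only the $C_{1},\dots,C_{k}$ that meet $q^{-1}(M_{0},b_{0})$; the final chain of inequalities is unaffected. More substantively, the justification ``the coordinate ring of the cover is a free module of rank the degree over the local discrete valuation ring'' bounds the fibre cardinality of $\overline{C_{j}}\to\mP^{1}$ by the \emph{total} degree $[K(C_{j}):K(\ell)]$, whereas what you need is the bound by the \emph{separable} degree $\delta_{j}$, since it is the separable degrees that sum to $\delta$. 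In characteristic zero these coincide; in positive characteristic you must additionally factor $\overline{C_{j}}\to\mP^{1}$ through the smooth curve whose function field is the separable closure of $K(\ell)$ in $K(C_{j})$ --- the first factor being purely inseparable, hence universally bijective on points, and the second of degree $\delta_{j}$ --- to conclude that every fibre has at most $\delta_{j}$ points. The proposition is asserted over an algebraically closed field of arbitrary characteristic, so this refinement is not optional.
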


We are now in conditions to define the degree of a locally closed set:

\begin{definition}[{\bf Degree  of a locally closed subset}]\label{grado-localmente-cerrados:def}
Let $V\subseteq \A^n(K)$ be a locally closed irreducible subset. We define the degree of $V$ as the following quantity:
$$\deg(V):=\max\{ \sharp\left( A \cap V\right) \; : \; A \in \G(n,r), \; \sharp(A\cap V)< \infty\}.$$
Let $W\subseteq \A^n(K)$ be any locally closed subset and let  $C_1,\ldots, C_s$ be its locally closed irreducible components. The degree  of $W$ is defined as:
$$\deg(W):=\sum_{i=1}^s \deg(C_i).$$
\end{definition}

The key contribution of \cite{Heintz83} in this context is the proof of the famous \emph{Bézout's Inequality} for algebraic varieties (this proof can be easily extended to the case of locally closed sets using similar arguments):

\begin{theorem}[{\bf B\'ezout's Inequality for locally closed sets, \cite{Heintz83}}]\label{Bezout-localmente-cerrados:teor}
Let $V, W\subseteq \A^n(K)$ be two locally closed subsets. Then, we have:
$$\deg(V\cap W)\leq \deg(V)\deg(W).$$
\end{theorem}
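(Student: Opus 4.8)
\emph{Strategy.} I would deduce the inequality from the irreducible-variety case by a \emph{reduction to the diagonal}, and then pass from irreducible varieties to arbitrary locally closed sets by decomposing into components. The two auxiliary facts I would isolate first are: \textbf{(A)} $\deg(X\cap V_{\A}(g))\leq\deg(X)\deg(g)$ for every closed $X\subseteq\A^n(K)$ and every $g\in K[X_1,\ldots,X_n]$, in particular $\deg(X\cap H)\leq\deg(X)$ for a hyperplane $H$; and \textbf{(B)} $\deg(V\times W)=\deg(V)\deg(W)$ for $V,W$ irreducible. Granting these, put $\Delta_n:=V_{\A}(X_1-Y_1,\ldots,X_n-Y_n)\subseteq\A^{2n}(K)$ and let $\varphi\colon\A^n(K)\to\A^{2n}(K)$, $\varphi(x)=(x,x)$; then $\varphi$ is a linear isomorphism onto $\Delta_n$ with $\varphi(V\cap W)=(V\times W)\cap\Delta_n$. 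Heintz's degree of a subvariety of the linear subspace $\Delta_n$ is unchanged whether computed in $\A^{2n}(K)$ or inside $\Delta_n\cong\A^{n}(K)$ (a generic linear section of $\A^{2n}(K)$ restricts to a generic linear section of $\Delta_n$), so $\deg(V\cap W)=\deg((V\times W)\cap\Delta_n)$. Since $\Delta_n$ is the intersection of $n$ hyperplanes of $\A^{2n}(K)$, applying (A) $n$ times and then (B) yields
\begin{equation*}
\deg(V\cap W)=\deg\bigl((V\times W)\cap\Delta_n\bigr)\leq\deg(V\times W)=\deg(V)\deg(W).
\end{equation*}

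\emph{From varieties to locally closed sets.} By Lemma \ref{descomposicion-union-irreducibles-distintos:lemma} write $V=\bigcup_iV_i$ and $W=\bigcup_jW_j$ as finite unions of locally closed irreducible sets with $\deg(V)=\sum_i\deg(V_i)$ and $\deg(W)=\sum_j\deg(W_j)$. Then $V\cap W=\bigcup_{i,j}(V_i\cap W_j)$, and as the degree is subadditive over finite unions (each locally closed irreducible component of a finite union of locally closed sets is a component of one of the members), $\deg(V\cap W)\leq\sum_{i,j}\deg(V_i\cap W_j)$; thus it suffices to bound $\deg(X\cap Y)$ for $X,Y$ locally closed irreducible. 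Here I would use the elementary identity $\deg(U\cap Z)=\deg(Z)$, valid whenever $Z$ is irreducible closed and $U$ is open with $U\cap Z\neq\emptyset$ (a generic linear section of $Z$ of complementary dimension misses the proper closed subset $Z\setminus U$); it gives $\deg(X)=\deg(\overline{X}^z)$ and, decomposing $\overline{X}^z\cap\overline{Y}^z$ into irreducible components and intersecting with the open set that carves out $X\cap Y$ inside it, also $\deg(X\cap Y)\leq\deg(\overline{X}^z\cap\overline{Y}^z)$. By the displayed inequality applied to the irreducible varieties $\overline{X}^z,\overline{Y}^z$, this is $\leq\deg(\overline{X}^z)\deg(\overline{Y}^z)=\deg(X)\deg(Y)$.

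\emph{The two auxiliary facts.} Proposition \ref{interpretacion-geometrica-grado:prop} is the engine for (A). If $X$ is irreducible with $X\not\subseteq V_{\A}(g)$, then $X\cap V_{\A}(g)$ has pure dimension $\dim(X)-1$, so by Proposition \ref{interpretacion-geometrica-grado:prop} a generic linear subspace $A$ of complementary dimension satisfies $\deg(X\cap V_{\A}(g))=\sharp\bigl(A\cap X\cap V_{\A}(g)\bigr)$; for generic $A$ the section $A\cap X$ is an irreducible curve of degree $\deg(X)$ not contained in $V_{\A}(g)$, and a polynomial of degree $\deg(g)$ has at most $\deg(g)\deg(X)$ zeros on such a curve (the one-dimensional case of B\'ezout). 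This proves (A) for irreducible $X$; the general case follows by summing over the components of $X$, since $\deg(g)\geq1$ and every component of $X\cap V_{\A}(g)$ is a component of $C\cap V_{\A}(g)$ for some component $C$ of $X$. For (B), the inequality $\deg(V\times W)\geq\deg(V)\deg(W)$ is immediate: if $A_1\in\G(n,\dim V)$ and $A_2\in\G(n,\dim W)$ are generic, then $A_1\times A_2$ is a linear subspace of $\A^{2n}(K)$ of codimension $\dim(V\times W)$ and $(A_1\times A_2)\cap(V\times W)=(A_1\cap V)\times(A_2\cap W)$ is finite of cardinality $\deg(V)\deg(W)$.

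\emph{The main obstacle.} The one substantive point is the reverse inequality $\deg(V\times W)\leq\deg(V)\deg(W)$ in (B): a generic linear section of $V\times W$ need not be decomposable as $A_1\times A_2$, so this is where genuine intersection theory is required. It is also what forces the reduction to the diagonal in the first place, since iterating (A) through a set-theoretic presentation $\overline{W}^z=V_{\A}(g_1,\ldots,g_m)$ only gives $\deg(V)\prod_i\deg(g_i)$, typically far larger than $\deg(V)\deg(W)$. I would derive the required bound from the multiplicativity of the Hilbert function under the tensor product of the homogeneous coordinate rings of the projective closures of $V$ and $W$, so that the degrees, appearing as leading coefficients, multiply, together with the identification of Heintz's affine degree with the degree read off from that Hilbert function --- precisely the analysis of \cite{Heintz83}, which already establishes the irreducible-variety form of B\'ezout's Inequality used above. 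Everything else in the plan is bookkeeping with generic linear sections (Proposition \ref{interpretacion-geometrica-grado:prop}) and with the component decomposition of Lemma \ref{descomposicion-union-irreducibles-distintos:lemma}.
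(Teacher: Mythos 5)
The paper does not actually prove this theorem; it cites \cite{Heintz83} for the variety case and remarks that the extension to locally closed sets is routine. Your outline reconstructs Heintz's own argument: reduction to the diagonal, combined with the product formula $\deg(V\times W)=\deg(V)\deg(W)$ (this is Claim viii) of Proposition \ref{propiedades-basicas-grado-lc:prop}) and stability of the degree under generic hyperplane sections (Claim vii)). That is the right route, and the component-decomposition passage from irreducible varieties to general locally closed sets is exactly the extension the paper calls easy. Two remarks on the writeup. First, once (A) and (B) are established for locally closed sets --- as they are in Proposition \ref{propiedades-basicas-grado-lc:prop} --- the diagonal computation
$\deg(V\cap W)=\deg\bigl((V\times W)\cap\Delta_n\bigr)\leq\deg(V\times W)=\deg(V)\deg(W)$
works directly for locally closed $V,W$, so your ``from varieties to locally closed sets'' paragraph, while correct, is not actually needed; you could delete it or keep it only as the route to (A)--(B) themselves. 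Second, the dependency order in your last paragraph should be tightened: the Hilbert-function/Segre argument establishes the product formula (B) \emph{first}, and Bézout then follows by the diagonal reduction; as phrased (``the analysis of \cite{Heintz83}, which already establishes the irreducible-variety form of Bézout's Inequality used above'') it reads as if you are using the conclusion to prove the lemma, which would be circular. Also note that your (A) for an irreducible curve implicitly invokes a Bertini-type irreducibility of generic linear sections, which is legitimate over an algebraically closed field of any characteristic, but worth flagging since the paper emphasizes characteristic-independence.
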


Next, we summarize some properties of the degree that will be used throughout the manuscript:

\begin{proposition}\label{propiedades-basicas-grado-lc:prop}
 With the same notations and assumptions as above, we have:
\begin{enumerate}
\item For every locally closed subset $V\subseteq \A^n (K)$, its degree agrees with the degree of its Zariski closure, i.e.
$$\deg(V)=\deg(\overline{V}^z).$$
 \item The degree of a finite set $C\subseteq \A^n (K)$ equals its cardinality:
 $$\deg(C)= \sharp(C).$$
\item The number of irreducible components of a locally closed set is bounded by its degree.
\item The degree of any linear affine variety is $1$.
\item The degree of locally closed sets is invariant by linear or affine isomorphisms.
\item  For every non-constant $f\in K[X_1,\ldots, X_n]$, the degree of the hypersurface $V(f)$ is at most the degree of the polynomial $\deg(f)$ and $\deg(V_\A(f))=\deg(f)$ if and only if $f$ is square--free.
\item If $V\subseteq \A^n(K)$ is a locally closed subset and  $A\subseteq \A^n(K)$ is a linear affine variety, then:
$$\deg(V\cap A)\leq \deg(V).$$
\item If $V\subseteq \A^n (K)$ and $W\subseteq \A^m (K)$ are two locally closed sets, then:
$$\deg(V\times W) = \deg(V) \deg(W).$$
\item If $\ell: \A^n (K)\longrightarrow \A^m(K)$ is a linear mapping and $V\subseteq \A^n(K)$ a locally closed subset, then:
$$\deg(\overline{\ell(V)}^z) \leq \deg(V).$$
\item If $\varphi =(f_1, \ldots, f_m): \A^n(K) \longrightarrow  \A^m (K)$ is a polynomial mapping, i.e. $f_1, \ldots, f_m \in K[X_1, \ldots, X_n]$, and $V$ a locally closed set, then:
$$\deg( \overline{\varphi(V) }^z ) \leq \deg(V) (\max \{ \deg(f_i) \; : \; 1 \leq m     \})^{\dim (V)}.$$
\item If $V_1, \ldots, V_s \subseteq \A^n (K)$ is a family of locally closed sets, then:
$$\deg \left( \bigcap_{i=1}^s V_i \right) \leq \deg(V_1) (\max \{ \deg(V_i) \; : \; 1 \leq i \leq s  \})^{\dim(V_1)}.$$ 
\end{enumerate}
\end{proposition}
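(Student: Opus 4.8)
The plan is to split the list into three blocks. Items (i)--(ix) are the elementary functorial properties of the Heintz degree, and I would recall their proofs from \cite{Heintz83} and \cite{PardoSebastian}: each of them reduces to the geometric description of the degree as the generic number of points in a linear section of complementary dimension (Definition \ref{grado-localmente-cerrados:def}, Proposition \ref{interpretacion-geometrica-grado:prop}), to B\'ezout's inequality (Theorem \ref{Bezout-localmente-cerrados:teor}), and to the way the irreducible components of $\overline{V}^z$ behave under generic linear sections. The two substantive items are (x) and (xi); I would attack (xi) --- the refined B\'ezout inequality --- first, and then deduce (x) from iterated B\'ezout together with a fibre--counting argument (in particular (x) does not use (xi)).

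For (xi) I would in fact prove the stronger asymmetric estimate
$$\deg\Big(\bigcap_{i=1}^{s} V_i\Big)\ \leq\ \deg(V_1)\,\big(\max\{\deg(V_2),\dots,\deg(V_s)\}\big)^{\dim(V_1)},$$
which implies the displayed inequality because the right-hand maximum is at most $\max_i\deg(V_i)$. Put $D:=\max\{\deg(V_2),\dots,\deg(V_s)\}$. Using subadditivity of the degree under finite unions (immediate from item (i)) and the decomposition of $V_1$ into locally closed irreducible components (Lemma \ref{descomposicion-union-irreducibles-distintos:lemma}), one reduces at once to the case where $V_1$ is irreducible, say of dimension $d$. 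I would then induct on $d$; if $d=0$ the intersection is a subset of a single point and the bound is clear. For $d\geq 1$ let $e:=\dim\big(\bigcap_i V_i\big)\leq d$. If $e=d$, then $\bigcap_i V_i$ is a $d$--dimensional locally closed subset of the irreducible variety $V_1$, hence dense in it, so by item (i) its degree equals $\deg(V_1)\leq\deg(V_1)D^{d}$. If $e<d$, I claim there is an index $k$ with $\dim(V_1\cap V_k)\leq d-1$: otherwise each $V_1\cap V_k$ would be a $d$--dimensional locally closed, hence dense open, subset of the irreducible variety $V_1$, and then $\bigcap_i V_i$ would be a non--empty open subset of $V_1$ of dimension $d$, contradicting $e<d$. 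Fixing such a $k$ and setting $Y:=V_1\cap V_k$, we have $\dim(Y)\leq d-1$ and, by B\'ezout's inequality, $\deg(Y)\leq\deg(V_1)\deg(V_k)\leq\deg(V_1)D$; since $\bigcap_i V_i=Y\cap\bigcap_{j\neq k}V_j$, applying the inductive hypothesis to $\{Y\}\cup\{V_j:2\leq j\leq s,\ j\neq k\}$ --- whose distinguished member $Y$ has dimension $<d$ --- gives $\deg\big(\bigcap_i V_i\big)\leq\deg(Y)D^{\dim(Y)}\leq\deg(V_1)D\cdot D^{d-1}=\deg(V_1)D^{d}$.

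For (x), first reduce (again by subadditivity and Lemma \ref{descomposicion-union-irreducibles-distintos:lemma}) to the case where $V$ is irreducible, so that $Z:=\overline{\varphi(V)}^z$ is irreducible too; set $D:=\max_i\deg(f_i)$ and $r:=\dim(Z)\leq\dim(V)$. Choose a generic affine--linear subspace $L\subseteq\A^m(K)$ of codimension $r$, cut out by affine forms $\ell_1,\dots,\ell_r$; by Proposition \ref{interpretacion-geometrica-grado:prop} one may take $L$ so that $L\cap Z$ is finite with $\#(L\cap Z)=\deg(Z)$. Since $\varphi(V)$ is constructible (Theorem \ref{teorema-chevalley:teor}) and dense in $Z$, for generic $L$ one has $L\cap\varphi(V)=L\cap Z$, and this set coincides with $\varphi\big(\varphi^{-1}(L)\cap V\big)$, where
$$\varphi^{-1}(L)\cap V=V\cap V_\A(\ell_1\circ\varphi,\dots,\ell_r\circ\varphi)$$
is cut out of $V$ by $r$ polynomials of degree $\leq D$. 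The fibres $\varphi^{-1}(z)\cap V$, for $z\in L\cap Z$, are non--empty, pairwise disjoint, and open and closed in $\varphi^{-1}(L)\cap V$ (because $L\cap Z$ is finite), so $\varphi^{-1}(L)\cap V$ has at least $\deg(Z)$ irreducible components, whence $\deg(Z)\leq\deg\big(\varphi^{-1}(L)\cap V\big)$. Iterating B\'ezout's inequality (Theorem \ref{Bezout-localmente-cerrados:teor}) together with item (vi) bounds the latter by $\deg(V)D^{r}\leq\deg(V)D^{\dim(V)}$, which is (x).

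The serious difficulty is entirely in (xi), and it has two facets. First, since the $V_i$ are only locally closed and not closed, $V_1\not\subseteq V_k$ does \emph{not} by itself force $\dim(V_1\cap V_k)<\dim(V_1)$ (take $V_1=\A^1(K)$ and $V_k=\A^1(K)\setminus\{0\}$); this is why one must pass through the dimension $e$ of the whole intersection and use that $e<\dim(V_1)$ does produce a genuine drop somewhere. Second --- and this is the genuinely delicate point --- a naive induction in which the maximum ranged over all of $\deg(V_1),\dots,\deg(V_s)$ would be spoiled by the possibly large degree $\deg(V_1\cap V_k)\leq\deg(V_1)D$ of the auxiliary intersection; it is exactly the asymmetric formulation, with the maximum taken over $V_2,\dots,V_s$ only, that makes the recursion close, and it is also precisely the shape one needs when applying the result (for instance in (x)). The remaining ingredients --- subadditivity of the degree, the genericity of $L$, and the component count --- I expect to be entirely routine.
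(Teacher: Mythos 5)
The paper itself gives no proof of this proposition; it is stated as a compendium of standard facts inherited from \cite{Heintz83} (and its locally closed extension), so there is no internal argument to compare against. Judged on its own, your reconstruction is correct and follows exactly the classical Heintz-style arguments. The asymmetric strengthening of (xi), with the maximum taken only over $\deg(V_2),\dots,\deg(V_s)$, is precisely what makes the induction close (you correctly diagnose that the symmetric form would be poisoned by $\deg(V_1\cap V_k)$), and your careful handling of the fact that for merely locally closed sets $V_1\not\subseteq V_k$ does not force a dimension drop is the genuine subtlety a careless proof would miss. The fibre-counting argument for (x) via a generic linear section $L$ with $L\cap\overline{\varphi(V)}^z\subseteq\varphi(V)$ and the identification $\varphi^{-1}(L)\cap V=V\cap V_\A(\ell_1\circ\varphi,\dots,\ell_r\circ\varphi)$ is likewise the standard route.

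One small organisational gloss in (xi): you set up the induction after reducing to $V_1$ irreducible, but the auxiliary set $Y=V_1\cap V_k$ to which you apply the inductive hypothesis need not be irreducible. This is harmless (decompose $Y$ into its locally closed irreducible components, apply the inductive bound to each, and sum using subadditivity and $\sum_l\deg(Y_l)=\deg(Y)\leq\deg(V_1)D$), but it is cleaner to phrase the induction as being on $\dim(V_1)$ for arbitrary locally closed $V_1$, with the reduction to irreducibility performed inside each inductive step rather than once at the outset.
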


\subsection{Degrees of construtible sets according to \cite{PardoSebastian} and their Bézout's Inequalities} \label{gradoPardoSebastian:subsec}

In  \cite{Heintz83}, the author introduces a first notion of degree of construcible sets that we will call $Z-$degree:
\begin{definition} \label{zGrado:def}
Let $C \subseteq \A^n(K)$ be a constructible set. We define the $Z-$degree of $C$ as the degree of its Zariski closure, i.e.
$$\deg_{z} (C) = \deg (\overline{C}^z).$$ 
\end{definition}
The problem with the $Z-$degree is that, although it preserves some of the basic properties of the degree, it fails in an essential property: \emph{the $Z-$degree does not satisfy the Bezout's Inequality}. Let $C$ be La Croix de Berny (see Identity (\ref{CroixDeBerny:eq})) and consider the linear affine variety $L := \{ (x,0) \in \A^2 (\C) \}$. Then, we have:
$$\deg (C \cap L) = \deg_{z} (C \cap L) = 2 \not\leq \deg_{z} (C) \deg_{z}(L) = 1 \cdot 1 = 1.$$
In \cite{PardoSebastian}, the authors introduce two distinct notions for the degree of a constructible set. The first notion involves decompositions of constructible sets into locally closed sets (as in Lemma \ref{descomposicion-union-irreducibles-distintos:lemma}). One possible approach is to define a notion of degree based on one of these decompositions. The problem, however, is that, as we have already mentioned, such decompositions are not unique (see Example 2.3 and Remark 2.18 of \cite{PardoSebastian} for further details). Therefore, the first notion of degree of a constructible set will be the minimal degree of a presentation as union of locally closed irreducible sets:
\begin{definition}[{\bf $LCI-$degree of a constructible set}]\label{grado-constructibles:def} Let $C\subseteq \A^n(K)$ be a constructible subset. Let $\CC:=\{ U_1\cap V_1,\ldots, U_r\cap V_r\}$ be a decomposition of $C$  as finite union of locally closed irreducible sets that satisfies Lemma \ref{descomposicion-union-irreducibles-distintos:lemma}. We define the degree of $C$ relative to this decomposition as:
 $$\deg(C, \CC):=\sum_{i=1}^r \deg(V_i).$$
Finally, we define the $LCI-$degree of $C$ as the minimum of all the degrees of all decompositions of this kind:
 $$\deg_{\rm lci}(C):=\min\{ \deg(C,\CC)\; :\; {\hbox {\rm $\CC$ is a decomposition of $C$ that satisfies Lemma \ref{descomposicion-union-irreducibles-distintos:lemma}}}\}.$$
We say that a decomposition $\CC:=\{ U_1\cap V_1,\ldots, U_r\cap V_r\}$ of a constructible set $C\subseteq \A^n(K)$  is a minimum $LCI$-degree decomposition of $C$ if $\CC$ satisfies  Lemma \ref{descomposicion-union-irreducibles-distintos:lemma} and also:
$$\deg_{\rm lci}(C)=\sum_{i=1}^r \deg(V_i).$$
 \end{definition}
Since constructible sets are projections of algebraic varieties, the following notion of degree was also introduced in \cite{PardoSebastian}:
\begin{definition}[{\bf $\pi-$degree of a constructible set}] \label{grado-como-proyeccion:def} Let $C\subseteq \A^n(K)$ be a constructible set. We consider the class of all locally closed subsets that project onto $C$. Namely,
for every $m\in \N$, $m\geq n$, we define:
$$\Pi_m(C):=\{ V \subseteq \A^m(K) \; :  {\hbox {\rm $V$ is locally closed and  $\pi(V)=C$}}\},$$
where $\pi: \A^m(K) \longrightarrow \A^n(K)$ is the canonical projection that ``forgets'' the last $m-n$ coordinates. Then, we define:
$$\Pi(C):=\bigcup_{m\geq n} \Pi_m(C),$$
and we define the projection degree (also $\pi-$degree) of $C$ as the following minimum:
$$\deg_\pi(C):=\min \{ \deg_z(V)\; : \; V\in \Pi(C)\}.$$
\end{definition}
Observe that the $\pi-$degree of a constructible set $C$ is the minimum of the $Z-$degrees of all locally closed sets that project onto $C$. \\
In \cite{PardoSebastian}, it is shown that the three notions of degree are different:
\begin{example}[{\bf Three different ``degrees''}] \label{tres-grados-distintos:ej}
In \cite{PardoSebastian}, the authors consider the following quadratic hypersurface:
$$W':=\{(x,y,z)\in \A^3(\C) \; : \; xz+ (y^2-1)=0\},$$
and the constructible subset $C:= \pi (W') \subseteq \A^2 (\C)$,  where $\pi:\A^3(\C)\longrightarrow \A^2(\C)$ is the canonical projection that ``forgets'' the last coordinate. Then, in Example 2.17 of \cite{PardoSebastian}, it is shown that the following inequalities hold:
$$1= \deg_z(C) < 2 = \deg_\pi(C) < \deg_{\rm lci}(C)=3,$$
and the three notions of degree of constructible sets do not coincide.
\end{example}
The next result shows that the three notions of degree agree in the case of locally closed sets:

\begin{proposition} \label{grado-igual-localmente-cerrado:prop}
Let $C \subseteq \A^n (K)$ be a constructible set. If $C$ is locally closed, we have:
$$\deg_{z} (C) = \deg_{\rm lci} (C) = \deg_{\pi} (C).$$
\end{proposition}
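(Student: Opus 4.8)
The plan is to prove the chain of equalities $\deg_z(C) = \deg_{\rm lci}(C) = \deg_\pi(C)$ for a locally closed set $C$ by working out each notion separately against the "classical" degree of Definition \ref{grado-localmente-cerrados:def}. Throughout I would fix a minimal decomposition $C = (U_1 \cap V_1) \cup \cdots \cup (U_s \cap V_s)$ into locally closed irreducible sets satisfying Lemma \ref{descomposicion-union-irreducibles-distintos:lemma}; by the uniqueness statement following that lemma, for a locally closed $C$ the varieties $\{V_1, \ldots, V_s\}$ are exactly the irreducible components of $C$, and this decomposition is forced. Hence there is only one decomposition of the Lemma \ref{descomposicion-union-irreducibles-distintos:lemma} type, so $\deg_{\rm lci}(C) = \sum_{i=1}^s \deg(V_i)$ with no minimum to take.

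For the first equality $\deg_z(C) = \deg_{\rm lci}(C)$: since $C = \bigcup_i (U_i \cap V_i)$ with each $U_i \cap V_i$ dense in $V_i$ (as $U_i \cap V_i$ is a nonempty open subset of the irreducible $V_i$), we get $\overline{C}^z = V_1 \cup \cdots \cup V_s$. Now I would invoke Proposition \ref{propiedades-basicas-grado-lc:prop}(i), which says the degree of a locally closed set equals the degree of its Zariski closure, together with Definition \ref{grado-localmente-cerrados:def} applied to $C$ itself: $\deg(C) = \sum_{i=1}^s \deg(C_i)$ where the $C_i$ are the locally closed irreducible components, i.e. $C_i = U_i \cap V_i$, and $\deg(U_i \cap V_i) = \deg(\overline{U_i \cap V_i}^z) = \deg(V_i)$ again by Proposition \ref{propiedades-basicas-grado-lc:prop}(i). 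So $\deg(C) = \sum_i \deg(V_i) = \deg_{\rm lci}(C)$, while $\deg_z(C) = \deg(\overline{C}^z) = \deg(\bigcup_i V_i) = \sum_i \deg(V_i)$ as well — here one must check that the $V_i$ really are the distinct irreducible components of $\overline{C}^z$ (no $V_i$ is contained in another), which follows from condition (iii) of Lemma \ref{descomposicion-union-irreducibles-distintos:lemma} plus the fact that each $V_i$ is already irreducible and appears as the closure of the "essential" piece $U_i \cap V_i$; the embedded-component phenomenon described in the excerpt cannot occur for locally closed $C$ precisely because of the maximality condition (ii). This gives $\deg_z(C) = \deg_{\rm lci}(C)$.

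For the second equality $\deg_{\rm lci}(C) = \deg_\pi(C)$: the inequality $\deg_\pi(C) \le \deg_z(C)$ is immediate, since $C$ itself is a locally closed set projecting onto $C$ (take $m = n$, the identity projection), so $C \in \Pi(C)$ and $\deg_\pi(C) \le \deg_z(C) = \deg_{\rm lci}(C)$. For the reverse inequality $\deg_{\rm lci}(C) \le \deg_\pi(C)$, take any $V \in \Pi_m(C)$, so $V \subseteq \A^m(K)$ is locally closed with $\pi(V) = C$; I want $\deg_z(V) \ge \deg_{\rm lci}(C) = \deg_z(C)$. Writing $\overline{V}^z = W_1 \cup \cdots \cup W_t$ as a union of irreducible components, the projection $\pi$ restricted to $\overline{V}^z$ has image with Zariski closure $\overline{\pi(V)}^z = \overline{C}^z$, and each irreducible component of $\overline{C}^z$ is dominated by some $\pi(W_j)$; then Proposition \ref{propiedades-basicas-grado-lc:prop}(ix) (degree does not increase under linear maps, applied to the coordinate projection $\pi$) gives $\deg(\overline{\pi(W_j)}^z) \le \deg(W_j)$, and summing appropriately over the components yields $\deg(\overline{C}^z) \le \sum_j \deg(W_j) = \deg_z(V)$. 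Taking the minimum over $V \in \Pi(C)$ gives $\deg_{\rm lci}(C) = \deg_z(C) \le \deg_\pi(C)$, completing the chain.

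The main obstacle I expect is the bookkeeping in the second equality: one must be careful that the components $W_j$ of $\overline{V}^z$ may project onto proper subvarieties of a single component $V_i$ of $\overline{C}^z$, or several may project onto the same $V_i$, so the naive "component-by-component" comparison needs the statement "for each $i$ there is at least one $j$ with $V_i \subseteq \overline{\pi(W_j)}^z$" and then $\deg(V_i) \le \deg(\overline{\pi(W_j)}^z) \le \deg(W_j)$ via Proposition \ref{propiedades-basicas-grado-lc:prop}(vii) and (ix), after which one sums over a choice of such $j$'s (injective on the $i$'s, since distinct $V_i$ need distinct dominating $W_j$). A secondary subtlety, already flagged above, is verifying that for locally closed $C$ no spurious embedded components appear in $\overline{C}^z$, so that $\deg_z(C)$ genuinely equals $\sum_i \deg(V_i)$ with the $V_i$ from the Lemma \ref{descomposicion-union-irreducibles-distintos:lemma} decomposition; this is exactly the point where local closedness is used and where the analogue fails for general constructible sets (La Croix de Berny).
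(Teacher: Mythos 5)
The paper states this proposition without a proof (it is imported from \cite{PardoSebastian}), so there is no in-text argument against which to compare; on its own merits your proof is correct. The first half is the canonical route: for locally closed $C$ the Lemma \ref{descomposicion-union-irreducibles-distintos:lemma} decomposition is the canonical $C=(T_1\cap U)\cup\cdots\cup(T_s\cap U)$ with $T_i$ the irreducible components of $\overline{C}^z$, and $\deg_{\rm lci}(C)=\sum_i\deg(T_i)=\deg(\overline{C}^z)=\deg_{z}(C)$ via Proposition~\ref{propiedades-basicas-grado-lc:prop}$(i)$. The second half works, but is longer than necessary: rather than decomposing $\overline{V}^z$ into components $W_j$ and tracking which $V_i$ each $\pi(W_j)$ dominates, apply Proposition~\ref{propiedades-basicas-grado-lc:prop}$(ix)$ once to the whole locally closed $V\in\Pi_m(C)$ --- since $\pi$ is linear and $\pi(V)=C$, you get $\deg_{z}(C)=\deg\bigl(\overline{\pi(V)}^z\bigr)\leq\deg(V)=\deg\bigl(\overline{V}^z\bigr)=\deg_{z}(V)$, and minimizing over $V\in\Pi(C)$ yields $\deg_{z}(C)\leq\deg_{\pi}(C)$ in one line. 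If you do keep the component-by-component version, one citation is off: the step $\deg(V_i)\leq\deg\bigl(\overline{\pi(W_j)}^z\bigr)$ is not an instance of Proposition~\ref{propiedades-basicas-grado-lc:prop}$(vii)$ (that is about slicing with a linear affine variety). The correct justification is that $\overline{\pi(W_j)}^z$ is an irreducible closed subset of $\overline{C}^z$ containing the irreducible component $V_i$, so maximality of components forces the \emph{equality} $V_i=\overline{\pi(W_j)}^z$, after which the degree comparison is trivial; this equality, rather than the mere containment you wrote, is also what makes the assignment $i\mapsto j(i)$ injective in your final sum.
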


Now, we summarize some properties of these notions of degree for constructible sets.

\begin{proposition}
With the above notations, we have:
\begin{enumerate}
\item The three notions of degree are sub-adittive. Namely, given two constructible sets $C, D \subseteq \A^n (K)$, we have:
$$\deg_{z} (C \cup D) \leq \deg_{z} (C) + \deg_{z}(D),$$
$$\deg_{\pi} (C \cup D) \leq \deg_{\pi} (C) + \deg_{\pi}(D),$$
$$\deg_{\rm lci} (C \cup D) \leq \deg_{\rm lci} (C) + \deg_{\rm lci}(D).$$

\item If $C \subseteq \A^n (K)$ is a contructible set, then we have the following inequalities:
$$\deg_{z} (C) \leq  \deg_{\pi} (C) \leq \deg_{\rm lci} (C).$$
\end{enumerate}
\end{proposition}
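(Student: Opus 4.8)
The plan is to reduce everything to two simple devices: \emph{realizing a disjoint union inside one extra coordinate}, together with the already-proved properties of the degree of locally closed sets collected in Proposition~\ref{propiedades-basicas-grado-lc:prop} — in particular $\deg(W)=\deg(\overline W^z)$, the product formula $\deg(V\times W)=\deg(V)\deg(W)$, and the fact that $\deg(\overline{\ell(V)}^z)\le\deg(V)$ for a linear map $\ell$. The $Z$-degree case of~(i) is immediate: $\overline{C\cup D}^z=\overline C^z\cup\overline D^z$, and any irreducible component of a union of two closed sets is an irreducible component of one of the two (it is irreducible and contained in one of the two closed sets, hence in one of their components, and maximality forces equality). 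Summing the degrees of the components of $\overline{C\cup D}^z$ — each counted at least once among the components of $\overline C^z$ and of $\overline D^z$ — gives $\deg_z(C\cup D)\le\deg_z(C)+\deg_z(D)$.

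For the $\pi$-degree, I would choose $V\in\Pi(C)$ and $V'\in\Pi(D)$ attaining the respective minima and, after padding with a trivial factor $\{0\}$, assume both sit in the same $\A^m(K)$; then set $\widetilde V:=(V\times\{0\})\cup(V'\times\{1\})\subseteq\A^{m+1}(K)$. Since $\{0\}$ and $\{1\}$ are disjoint closed points of $\A^1(K)$, each summand is relatively open in the closed set $(\overline V^z\times\{0\})\cup(\overline{V'}^z\times\{1\})$, so $\widetilde V$ is locally closed with this set as its Zariski closure; as the two pieces of the closure are disjoint they share no components, so $\deg_z(\widetilde V)=\deg(\overline V^z)+\deg(\overline{V'}^z)=\deg_\pi(C)+\deg_\pi(D)$. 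Forgetting the last $m+1-n$ coordinates maps $\widetilde V$ onto $C\cup D$, so $\widetilde V\in\Pi(C\cup D)$ and $\deg_\pi(C\cup D)\le\deg_z(\widetilde V)$. For the $LCI$-degree I would take minimum $LCI$-degree decompositions $\CC$ of $C$ and $\CC'$ of $D$ and ``clean up'' $\CC\cup\CC'$: first merge any two pieces sharing the same underlying variety $V$ into $(U\cup U')\cap V$, then replace each remaining open by the maximal Zariski open $O$ with $O\cap V\subseteq C\cup D$ (discarding pieces that become empty). The outcome is a decomposition of $C\cup D$ satisfying Lemma~\ref{descomposicion-union-irreducibles-distintos:lemma} whose underlying varieties form a \emph{subset} of those appearing in $\CC$ and $\CC'$, whence $\deg_{\rm lci}(C\cup D)\le\deg_{\rm lci}(C)+\deg_{\rm lci}(D)$.

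Part~(ii) falls out of the same ideas. The inequality $\deg_z(C)\le\deg_\pi(C)$ follows by taking $V\in\Pi(C)$ of minimal $Z$-degree: since $\pi$ is linear and $C=\pi(V)$, Proposition~\ref{propiedades-basicas-grado-lc:prop} gives $\deg_z(C)=\deg(\overline{\pi(V)}^z)\le\deg(V)=\deg_z(V)$, and minimizing over $V$ yields the bound. For $\deg_\pi(C)\le\deg_{\rm lci}(C)$, take a minimum $LCI$-degree decomposition $\{U_1\cap V_1,\dots,U_r\cap V_r\}$ of $C$, pick distinct scalars $\lambda_1,\dots,\lambda_r\in K$ (possible since $K$ is infinite), and form $W:=\bigcup_{i=1}^r(U_i\cap V_i)\times\{\lambda_i\}\subseteq\A^{n+1}(K)$. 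As before $W$ is locally closed with $\overline W^z=\bigsqcup_i V_i\times\{\lambda_i\}$, so $\deg_z(W)=\sum_i\deg(V_i)=\deg_{\rm lci}(C)$, while projecting away the last coordinate carries $W$ onto $C$; hence $W\in\Pi(C)$ and $\deg_\pi(C)\le\deg_z(W)$.

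The routine verifications — that $\widetilde V$ and $W$ are indeed locally closed (an ``open in its closure'' check exploiting that a finite set of points is closed and discrete in $\A^1(K)$), and that the two cleanup steps preserve admissibility for Lemma~\ref{descomposicion-union-irreducibles-distintos:lemma} — are straightforward. The only step that requires real care is the $LCI$-degree part of~(i): one must confirm that the repaired decomposition is genuinely admissible for the Lemma while the sum of the degrees of its varieties does not exceed $\deg_{\rm lci}(C)+\deg_{\rm lci}(D)$, and the point that makes this work is that passing from the multiset of varieties occurring in $\CC\cup\CC'$ to the underlying set can only decrease the degree sum, which absorbs any component common to $C$ and $D$.
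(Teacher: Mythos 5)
Your proof is correct. The paper itself gives no argument for this proposition---it is imported from \cite{PardoSebastian} as part of the preliminaries in Section~\ref{gradoPardoSebastian:subsec}---so there is no in-text proof to compare against; I evaluate your write-up on its own.

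All four inequalities are established cleanly. The $Z$-degree case is exactly the standard ``irreducible components of a union of closed sets inject into the union of the component sets'' argument. The $\pi$-degree sub-additivity and the inequality $\deg_\pi\le\deg_{\rm lci}$ both rest on the same device: separating pieces by an extra coordinate makes their closures disjoint, so the resulting set is locally closed, its Zariski closure has as irreducible components precisely the varieties you started with, and product with a point does not change the degree. The check that such a disjoint-fiber union is open in its own closure---hence locally closed---is the only topological point requiring care and you flag it correctly. The inequality $\deg_z\le\deg_\pi$ is the right application of Claim~ix) of Proposition~\ref{propiedades-basicas-grado-lc:prop} to the canonical (linear) projection. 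The delicate step is, as you note, the $LCI$ sub-additivity: after merging pieces carried by the same $V$ and saturating each open to the maximum $O$ with $O\cap V\subseteq C\cup D$, the union is still exactly $C\cup D$ because each original $U_i$ already satisfied $U_i\cap V_i\subseteq C\subseteq C\cup D$, so $U_i$ sits inside the saturated open; the resulting family satisfies all three conditions of Lemma~\ref{descomposicion-union-irreducibles-distintos:lemma}, and its set of underlying varieties is contained in the union of the two original sets of varieties, giving the bound. (The ``discard empty pieces'' precaution is in fact vacuous here, since each saturated piece contains a nonempty original one, but including it does no harm.) No gaps.
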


In what concerns images under linear maps, we have:

\begin{proposition} \label{grado-imagen-lineal:prop}
Let $C \subseteq \A^n(K)$ be a constructible set and $\ell: \A^n(K) \longrightarrow \A^m(K)$ be a linear map. We have:
\begin{enumerate}
\item $ \deg(\overline{\ell(C) }^{z})  = \deg_{z} (\ell (C)) \leq \deg_{\rm lci} (C).$
\item $ \deg(\overline{\ell(C) }^{z})  =  \deg_{z} (\ell (C)) \leq \deg_{\pi} (\ell(C)) \leq \deg_{\pi} (C) \leq \deg_{\rm lci} (C). $
\end{enumerate}
\end{proposition}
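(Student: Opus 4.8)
The plan is to reduce everything to the basic properties of the degree of locally closed sets collected in Proposition~\ref{propiedades-basicas-grado-lc:prop}, together with the subadditivity of the $Z$-degree and the inequalities $\deg_{z}\leq\deg_{\pi}\leq\deg_{\rm lci}$ proved in the preceding proposition. The equalities $\deg(\overline{\ell(C)}^z)=\deg_{z}(\ell(C))$ in both items are just Definition~\ref{zGrado:def}, so the content is the chain of inequalities. For item~(i), I would start from a minimum $LCI$-degree decomposition $\CC=\{U_1\cap V_1,\ldots,U_r\cap V_r\}$ of $C$ (Lemma~\ref{descomposicion-union-irreducibles-distintos:lemma} and Definition~\ref{grado-constructibles:def}), so that $\deg_{\rm lci}(C)=\sum_{i=1}^r\deg(V_i)$. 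Since $\ell(C)=\bigcup_{i=1}^r\ell(U_i\cap V_i)$, subadditivity of the $Z$-degree applied to this finite union gives $\deg_{z}(\ell(C))\leq\sum_{i=1}^r\deg_{z}(\ell(U_i\cap V_i))$. For each $i$, item~(ix) of Proposition~\ref{propiedades-basicas-grado-lc:prop} yields $\deg_{z}(\ell(U_i\cap V_i))=\deg(\overline{\ell(U_i\cap V_i)}^z)\leq\deg(U_i\cap V_i)$, and since $U_i\cap V_i$ is a non-empty Zariski open (hence dense) subset of the irreducible variety $V_i$, item~(i) of the same proposition gives $\deg(U_i\cap V_i)=\deg(\overline{U_i\cap V_i}^z)=\deg(V_i)$. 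Summing over $i$ produces $\deg_{z}(\ell(C))\leq\sum_{i=1}^r\deg(V_i)=\deg_{\rm lci}(C)$.

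For item~(ii), the outer inequalities $\deg_{z}(\ell(C))\leq\deg_{\pi}(\ell(C))$ and $\deg_{\pi}(C)\leq\deg_{\rm lci}(C)$ are instances of $\deg_{z}\leq\deg_{\pi}\leq\deg_{\rm lci}$ from the preceding proposition, applied to the constructible sets $\ell(C)$ and $C$ respectively. The inequality that needs a genuine argument is $\deg_{\pi}(\ell(C))\leq\deg_{\pi}(C)$. Here I would pick $V\subseteq\A^M(K)$ locally closed with $\pi_M(V)=C$ and $\deg_{z}(V)=\deg_{\pi}(C)$, where $\pi_M\colon\A^M(K)\to\A^n(K)$ forgets the last $M-n$ coordinates, and then pass to the \emph{graph} of $\ell\circ\pi_M$ over $V$: set $\phi\colon\A^M(K)\to\A^{m+M}(K)$, $\phi(v)=(\ell(\pi_M(v)),v)$, and $\Gamma:=\phi(V)$. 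The set $\Gamma$ is locally closed, being the intersection $(\A^m(K)\times V)\cap\{(w,v)\colon w=\ell(\pi_M(v))\}$ of a locally closed set with a Zariski closed set; the canonical projection $\A^{m+M}(K)\to\A^m(K)$ that forgets the last $M$ coordinates maps $\Gamma$ onto $\ell(\pi_M(V))=\ell(C)$, so $\Gamma\in\Pi(\ell(C))$ and hence $\deg_{\pi}(\ell(C))\leq\deg_{z}(\Gamma)$. Finally $\Gamma=\phi(V)$ is a linear image of a locally closed set, so item~(ix) of Proposition~\ref{propiedades-basicas-grado-lc:prop} gives $\deg_{z}(\Gamma)=\deg(\overline{\phi(V)}^z)\leq\deg(V)=\deg_{z}(V)=\deg_{\pi}(C)$, which closes the chain. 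In particular item~(i) is recovered as a formal consequence of item~(ii).

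I expect the only delicate point to be the inequality $\deg_{\pi}(\ell(C))\leq\deg_{\pi}(C)$: one would like to push a locally closed witness $V$ for $\deg_\pi(C)$ forward by a linear extension of $\ell$, but the image of a locally closed set under a linear map is in general only constructible, so it cannot be used directly as a witness for $\deg_\pi(\ell(C))$. Replacing that image by the graph $\Gamma$ of $\ell\circ\pi_M$ over $V$ is what fixes this, since graphs of polynomial mappings over locally closed sets are locally closed, while the degree cannot increase because $\Gamma$ is still a linear image of $V$. Everything else is bookkeeping with Proposition~\ref{propiedades-basicas-grado-lc:prop} and the subadditivity of the $Z$-degree.
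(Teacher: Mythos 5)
The paper states this proposition in Subsection~\ref{gradoPardoSebastian:subsec} as part of a summary of results quoted from \cite{PardoSebastian} and gives no proof in the text, so there is nothing here to compare against line by line; I can only assess your argument on its own merits.

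Your proof is correct. Item~(i) goes through cleanly: you take a minimum $LCI$-degree decomposition of $C$, apply subadditivity of $\deg_z$ across the finite union $\ell(C)=\bigcup_i\ell(U_i\cap V_i)$, invoke item~(ix) of Proposition~\ref{propiedades-basicas-grado-lc:prop} for each locally closed piece $U_i\cap V_i$, and then use $\deg(U_i\cap V_i)=\deg(\overline{U_i\cap V_i}^z)=\deg(V_i)$, which is valid because $U_i\cap V_i$ is a non-empty open subset of the irreducible closed set $V_i$. For item~(ii), the only genuinely new inequality is $\deg_\pi(\ell(C))\leq\deg_\pi(C)$, and you correctly identify why the naive push-forward of a witness $V\in\Pi(C)$ fails (its linear image is only constructible, so it does not lie in $\Pi(\ell(C))$). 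Your graph construction $\Gamma=\phi(V)$, $\phi(v)=(\ell(\pi_M(v)),v)$, fixes this: $\Gamma$ is the intersection of the locally closed set $\A^m(K)\times V$ with the Zariski closed graph variety, hence locally closed; it projects under the left-coordinate projection onto $\ell(C)$, so $\Gamma\in\Pi(\ell(C))$; and $\phi$ is linear (a composition of linear maps), so item~(ix) gives $\deg_z(\Gamma)=\deg(\overline{\phi(V)}^z)\leq\deg(V)=\deg_z(V)=\deg_\pi(C)$. This is the natural argument and is almost certainly what \cite{PardoSebastian} does; it also cleanly subsumes item~(i), as you note.
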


The three notions of degree have a good behaviour with respect to the Cartesian product:

\begin{proposition}
Let $C, D \subseteq \A^n (K)$ be two constructible sets, we have:
$$\deg_{z} (C \times D) \leq \deg_{z}(C) \deg_{z} (D),$$
$$\deg_{\pi} (C \times D) \leq \deg_{\pi} (C) \deg_{\pi} (D),$$
$$\deg_{\rm lci} (C \times D) \leq \deg_{\rm lci} (C) \deg_{\rm lci} (D). $$
\end{proposition}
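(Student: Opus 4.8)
The plan is to establish all three inequalities along a single thread: split the Cartesian product into products of ``atoms'' for which multiplicativity of the degree is already available (Proposition \ref{propiedades-basicas-grado-lc:prop}(viii)), the glue being the topological identity $\overline{C\times D}^z=\overline{C}^z\times\overline{D}^z$ inside $\A^{n+m}(K)$. I would prove this identity first. The inclusion $\subseteq$ is immediate since $\overline{C}^z\times\overline{D}^z$ is Zariski closed; for $\supseteq$ one uses that $\{a\}\times(-)$ and $(-)\times\{b\}$ are closed immersions, so that $C\times\overline{D}^z=\bigcup_{a\in C}\overline{\{a\}\times D}^z\subseteq\overline{C\times D}^z$ and hence $\overline{C}^z\times\{b\}\subseteq\overline{C\times D}^z$ for every $b\in\overline{D}^z$. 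For the $Z$-degree this is all that is needed: the identity and Proposition \ref{propiedades-basicas-grado-lc:prop}(viii) applied to the closed (hence locally closed) sets $\overline{C}^z,\overline{D}^z$ give $\deg_z(C\times D)=\deg(\overline{C}^z)\deg(\overline{D}^z)=\deg_z(C)\deg_z(D)$, in fact an equality.

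For the $\mathrm{lci}$-degree I would fix minimum $LCI$-degree decompositions $C=\bigcup_{i=1}^r(U_i\cap V_i)$ and $D=\bigcup_{j=1}^s(U_j'\cap V_j')$ as in Definition \ref{grado-constructibles:def} and distribute: $C\times D=\bigcup_{i,j}\bigl((U_i\times U_j')\cap(V_i\times V_j')\bigr)$. Each summand is a non-empty locally closed set whose Zariski closure, by the identity above, is $\overline{U_i\cap V_i}^z\times\overline{U_j'\cap V_j'}^z=V_i\times V_j'$, which is irreducible because $K$ is algebraically closed. Hence, by Proposition \ref{grado-igual-localmente-cerrado:prop}, the $LCI$-degree of that summand equals $\deg(V_i\times V_j')=\deg(V_i)\deg(V_j')$ (Proposition \ref{propiedades-basicas-grado-lc:prop}(viii)). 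Applying sub-additivity of $\deg_{\rm lci}$ to the finite union and summing over $i$ and $j$,
\[
\deg_{\rm lci}(C\times D)\ \le\ \sum_{i,j}\deg(V_i)\deg(V_j')\ =\ \Bigl(\sum_i\deg(V_i)\Bigr)\Bigl(\sum_j\deg(V_j')\Bigr)\ =\ \deg_{\rm lci}(C)\deg_{\rm lci}(D).
\]
This is the delicate case: the naive product of the two chosen decompositions need not satisfy the normalisation conditions of Lemma \ref{descomposicion-union-irreducibles-distintos:lemma}, so one cannot read the $LCI$-degree off it directly. What rescues the argument is precisely that $\deg_{\rm lci}$ is sub-additive and that Proposition \ref{grado-igual-localmente-cerrado:prop} collapses $\deg_{\rm lci}$ to the classical (multiplicative) degree on each locally closed piece; I expect this to be the only step needing real care.

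For the $\pi$-degree I would lift to witnesses: choose locally closed $V\subseteq\A^{n'}(K)$ and $V'\subseteq\A^{m'}(K)$ with $\pi(V)=C$, $\pi'(V')=D$ and $\deg_z(V)=\deg_\pi(C)$, $\deg_z(V')=\deg_\pi(D)$. Then $V\times V'\subseteq\A^{n'+m'}(K)$ is locally closed, and after a permutation $\sigma$ of the coordinates that gathers the two ``forgotten'' blocks at the end, the canonical projection $\A^{n'+m'}(K)\longrightarrow\A^{n+m}(K)$ maps $\sigma(V\times V')$ onto $\pi(V)\times\pi'(V')=C\times D$; hence $\sigma(V\times V')\in\Pi(C\times D)$. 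Since $\sigma$ is an affine isomorphism it does not change $\deg_z$ (Proposition \ref{propiedades-basicas-grado-lc:prop}(v)), and combining with the topological identity and Proposition \ref{propiedades-basicas-grado-lc:prop}(viii),
\[
\deg_\pi(C\times D)\ \le\ \deg_z\bigl(\sigma(V\times V')\bigr)\ =\ \deg_z(V\times V')\ =\ \deg_z(V)\deg_z(V')\ =\ \deg_\pi(C)\deg_\pi(D),
\]
which finishes the proof.
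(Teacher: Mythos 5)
The paper states this proposition without proof (it is imported from \cite{PardoSebastian}), so there is no internal argument to compare against; I can only assess your proof on its own merits, and it holds up. The load-bearing fact $\overline{C\times D}^z=\overline{C}^z\times\overline{D}^z$ is established correctly, and you are right that this makes the $Z$-degree claim an \emph{equality}, which is stronger than what is asserted. In the $\mathrm{lci}$ case you correctly diagnose the one place a naive argument could go wrong: the product of two minimum decompositions need not satisfy the normalization conditions of Lemma \ref{descomposicion-union-irreducibles-distintos:lemma} (there may be coincidences among the $V_i\times V_j'$, or the open parts $U_i\times U_j'$ may fail maximality), so one cannot simply declare that product to be a minimum decomposition of $C\times D$. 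Your detour through sub-additivity of $\deg_{\rm lci}$, then Proposition \ref{grado-igual-localmente-cerrado:prop} on each locally closed summand, and finally multiplicativity of the locally closed degree (Proposition \ref{propiedades-basicas-grado-lc:prop}(viii)) together with irreducibility of $V_i\times V_j'$ over the algebraically closed field $K$, is exactly the clean way around this. The $\pi$-degree argument via optimal witnesses, a coordinate permutation $\sigma$ (which preserves $\deg_z$ by Proposition \ref{propiedades-basicas-grado-lc:prop}(v)), and membership of $\sigma(V\times V')$ in $\Pi(C\times D)$ is also sound. I see no gap.
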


In the case of images under polynomial mappings, we have the following results:

\begin{proposition} \label{grado-imagen-constructible:prop}
Let $\varphi:=(\varphi_1,\ldots, \varphi_m): \A^n(K)\longrightarrow \A^m(K)$ be a polynomial mapping.  Assume that for every $i$, $1\leq i\leq m$, $\varphi_i\in K[X_1,\ldots, X_n]$ is a polynomial of degree at most $d$, where $d\geq 1$. Let $C\subseteq \A^n(K)$ be a constructible subset. Then, we have:
$$ \deg(\overline{\varphi(C)}^{z}) =  \deg_{z} (\varphi(C)) \leq \deg_{\rm lci} (C) d^{\dim(C)}, $$
$$\deg(\overline{\varphi(C)}^{z}) = \deg_{z} (\varphi(C)) \leq \deg_{\pi} (\varphi(C)) \leq \deg_{\pi} (C) d^m \leq \deg_{\rm lci} (C) d^m.$$
\end{proposition}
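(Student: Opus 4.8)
The plan is to establish the two displayed chains of inequalities separately; the equalities $\deg(\overline{\varphi(C)}^z)=\deg_{z}(\varphi(C))$ are immediate from Definition~\ref{zGrado:def}, so the work lies entirely in the inequalities, and both will be derived from the image bound of Proposition~\ref{propiedades-basicas-grado-lc:prop}\,(x) for locally closed sets together with the Bézout inequality of Theorem~\ref{Bezout-localmente-cerrados:teor}.

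For the bound $\deg_{z}(\varphi(C))\leq \deg_{\rm lci}(C)\,d^{\dim(C)}$ I would first fix a minimum $LCI$-degree decomposition $\CC=\{U_1\cap V_1,\ldots,U_r\cap V_r\}$ of $C$ as in Definition~\ref{grado-constructibles:def}, so that $\deg_{\rm lci}(C)=\sum_{i=1}^r\deg(V_i)$. Each $U_i\cap V_i$ is a non-empty Zariski-open subset of the irreducible variety $V_i$, hence dense in $V_i$, so $\overline{U_i\cap V_i}^z=V_i$; by Proposition~\ref{propiedades-basicas-grado-lc:prop}\,(i) this gives $\deg(U_i\cap V_i)=\deg(V_i)$ and $\dim(U_i\cap V_i)=\dim(V_i)\leq\dim(C)$. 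Since $\varphi(C)=\bigcup_{i=1}^r\varphi(U_i\cap V_i)$ and closure commutes with finite unions, we have $\overline{\varphi(C)}^z=\bigcup_{i=1}^r\overline{\varphi(U_i\cap V_i)}^z$; applying sub-additivity of $\deg_{z}$ and then Proposition~\ref{propiedades-basicas-grado-lc:prop}\,(x) to each locally closed set $U_i\cap V_i$ (and using $\max_j\deg(\varphi_j)\leq d$ and $d\geq 1$) yields
$$\deg_{z}(\varphi(C))\leq\sum_{i=1}^r\deg(\overline{\varphi(U_i\cap V_i)}^z)\leq\sum_{i=1}^r\deg(V_i)\,d^{\dim(V_i)}\leq d^{\dim(C)}\sum_{i=1}^r\deg(V_i)=\deg_{\rm lci}(C)\,d^{\dim(C)}.$$

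For the second chain, the inequalities $\deg_{z}(\varphi(C))\leq\deg_{\pi}(\varphi(C))$ and $\deg_{\pi}(C)\,d^m\leq\deg_{\rm lci}(C)\,d^m$ are formal consequences of the general comparison $\deg_{z}\leq\deg_{\pi}\leq\deg_{\rm lci}$ and $d\geq 1$, so the crux is $\deg_{\pi}(\varphi(C))\leq\deg_{\pi}(C)\,d^m$. I would choose a locally closed $W\subseteq\A^N(K)$ with $\pi_N(W)=C$ realizing the minimum in Definition~\ref{grado-como-proyeccion:def}, i.e. $\deg_{z}(W)=\deg_{\pi}(C)$ (such $W$ exists by Theorem~\ref{teorema-chevalley:teor}). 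Writing points of $\A^N(K)=\A^n(K)\times\A^{N-n}(K)$ as $(x,y)$, I would then consider the set
$$\Gamma:=\bigl(\A^m(K)\times W\bigr)\cap\bigcap_{j=1}^m V_{\A}\bigl(Z_j-\varphi_j(X_1,\ldots,X_n)\bigr)\subseteq\A^m(K)\times\A^n(K)\times\A^{N-n}(K)=\A^{m+N}(K),$$
which is visibly locally closed and equals the ``graph'' $\{(\varphi(x),x,y)\;:\;(x,y)\in W\}$; the projection forgetting the last $N$ coordinates carries $\Gamma$ onto $\varphi(\pi_N(W))=\varphi(C)$, so $\Gamma\in\Pi(\varphi(C))$ and hence $\deg_{\pi}(\varphi(C))\leq\deg_{z}(\Gamma)$. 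Each hypersurface $V_{\A}(Z_j-\varphi_j(X))$ has degree $\max\{1,\deg(\varphi_j)\}\leq d$ by Proposition~\ref{propiedades-basicas-grado-lc:prop}\,(vi), and $\deg(\A^m(K)\times W)=\deg(W)=\deg_{z}(W)$ by Proposition~\ref{propiedades-basicas-grado-lc:prop}\,(viii),(iv),(i); iterating the Bézout inequality (Theorem~\ref{Bezout-localmente-cerrados:teor}) over these $m$ hypersurfaces would then give $\deg_{z}(\Gamma)=\deg(\Gamma)\leq\deg_{z}(W)\,d^m=\deg_{\pi}(C)\,d^m$, closing the chain.

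The main obstacle is the local-closedness bookkeeping in the second chain: the tempting move --- taking the image of $W$ under the polynomial map $(x,y)\mapsto(\varphi(x),y)$ --- fails because images of locally closed sets under polynomial maps are only constructible, so Proposition~\ref{propiedades-basicas-grado-lc:prop}\,(x) does not directly apply to it. Passing to the locally closed graph $\Gamma$ repairs this, and the reason for writing $\Gamma$ as $\A^m(K)\times W$ intersected with $m$ hypersurfaces of degree $\leq d$ --- rather than as an image --- is precisely that Bézout then produces the factor $d^m$ with the desired exponent $m$; applying Proposition~\ref{propiedades-basicas-grado-lc:prop}\,(x) to $\Gamma$ seen as an image would instead produce the weaker exponent $\dim(W)$, which can exceed $m$. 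The remaining checks (that $\Gamma$ really is the stated graph, that its projection equals $\varphi(C)$, and the degenerate cases $C=\emptyset$ or $\dim(C)=0$) are routine.
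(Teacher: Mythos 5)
Your proof is correct. The paper itself does not supply a proof of this proposition: Subsection~\ref{gradoPardoSebastian:subsec} is a summary of results from \cite{PardoSebastian}, and the statement is quoted there without argument, so there is no internal proof to compare your attempt against. Both of your chains are sound derivations from the ingredients the paper does make available. For the first chain, decomposing $C$ into a minimum $LCI$-degree family of locally closed irreducible pieces, using $\overline{U_i\cap V_i}^z=V_i$, applying Proposition~\ref{propiedades-basicas-grado-lc:prop}\,(x) piecewise, and summing via sub-additivity of $\deg_z$ is the natural route and you carry it out cleanly (the hypothesis $d\geq 1$ is correctly used to pass from $d^{\dim V_i}$ to $d^{\dim C}$). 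For the second chain, your diagnosis is exactly right: applying the image bound of Proposition~\ref{propiedades-basicas-grado-lc:prop}\,(x) to a locally closed lift $W$ of $C$ would only yield exponent $\dim(W)$, so one must instead realize a locally closed lift of $\varphi(C)$ as $\bigl(\A^m(K)\times W\bigr)$ cut by the $m$ graph hypersurfaces $V_{\A}(Z_j-\varphi_j)$, each of degree at most $d$ by Proposition~\ref{propiedades-basicas-grado-lc:prop}\,(vi), and iterate Bézout's Inequality (Theorem~\ref{Bezout-localmente-cerrados:teor}) $m$ times to collect the factor $d^m$; the remaining comparisons $\deg_z\leq\deg_\pi\leq\deg_{\rm lci}$ close both ends of the chain. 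All the minor points you flag as routine (that $\Gamma$ is locally closed, that its projection is $\varphi(C)$, that the minimum defining $\deg_\pi(C)$ is attained) do indeed hold.
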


Finally, we have that both $LCI-$degree and $\pi-$degree satisfy a \emph{Bézout's Inequality}:

\begin{theorem}[{\bf B\'ezout's Inequalities for constructible sets}]\label{Desigualdad-Bezout-constructibles:teor}
Let $C, D\subseteq \A^n(K)$ be two constructible sets. We have:
$$\deg_{\rm lci}(C\cap D) \leq \deg_{\rm lci}(C)\deg_{\rm lci}(D),$$
$$\deg_\pi(C\cap D) \leq \deg_\pi(C)\deg_\pi(D).$$
\end{theorem}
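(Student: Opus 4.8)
The plan is to prove the two inequalities through separate constructions, since the relation $\deg_\pi \le \deg_{\rm lci}$ recalled above does not allow deducing one from the other.

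\emph{The $LCI$-degree inequality.} I would start from minimum $LCI$-degree decompositions $\CC = \{U_i \cap V_i : 1 \le i \le r\}$ of $C$ and $\CD = \{U'_j \cap V'_j : 1 \le j \le s\}$ of $D$ in the sense of Definition \ref{grado-constructibles:def}, so that $\sum_{i=1}^r \deg(V_i) = \deg_{\rm lci}(C)$ and $\sum_{j=1}^s \deg(V'_j) = \deg_{\rm lci}(D)$. Distributing the intersection gives $C \cap D = \bigcup_{i,j} \left( (U_i \cap U'_j) \cap (V_i \cap V'_j) \right)$, and splitting each closed set $V_i \cap V'_j$ into its irreducible components $W_{ij}^1, \dots, W_{ij}^{m_{ij}}$ exhibits $C \cap D$ as a finite union of locally closed sets of the form (Zariski open)$\,\cap\,$(irreducible variety). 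By B\'ezout's Inequality for closed sets (Theorem \ref{Bezout-localmente-cerrados:teor}) and additivity of the degree over irreducible components (Definition \ref{grado-localmente-cerrados:def}), the corresponding degree sum satisfies
$$\sum_{i,j}\sum_{k=1}^{m_{ij}} \deg(W_{ij}^k) = \sum_{i,j}\deg(V_i \cap V'_j) \le \sum_{i,j}\deg(V_i)\deg(V'_j) = \deg_{\rm lci}(C)\deg_{\rm lci}(D).$$

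To turn this into a bound on $\deg_{\rm lci}(C \cap D)$ I would prove an auxiliary lemma: if a constructible set $E$ is written as $E = \bigcup_\alpha (O_\alpha \cap W_\alpha)$ with each $O_\alpha$ Zariski open and each $W_\alpha$ an irreducible variety, then $\deg_{\rm lci}(E) \le \sum_\alpha \deg(W_\alpha)$. The proof would first merge the terms sharing a common $W_\alpha$ by replacing the corresponding open sets with their union --- which only decreases the right-hand side --- and then enlarge each surviving open set $O$ to the largest Zariski open set $\widetilde{O}$ with $\widetilde{O} \cap W \subseteq E$. The new family still has union $E$, and it now satisfies conditions i), ii) and iii) of Lemma \ref{descomposicion-union-irreducibles-distintos:lemma}, hence is admissible in Definition \ref{grado-constructibles:def} and bounds $\deg_{\rm lci}(E)$ from above. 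Applying this lemma to $E = C \cap D$ closes the first inequality. I expect this auxiliary lemma to be the main obstacle, since it is exactly the place where the non-uniqueness of decompositions into locally closed irreducible sets --- witnessed by La Croix de Berny in Identity (\ref{CroixDeBerny:eq}) --- has to be controlled.

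\emph{The $\pi$-degree inequality.} I would use a fibre-product construction. Choose locally closed sets $V \subseteq \A^{n+p}(K)$ and $W \subseteq \A^{n+q}(K)$ that project onto $C$ and $D$ respectively and satisfy $\deg_z(V) = \deg_\pi(C)$ and $\deg_z(W) = \deg_\pi(D)$ (Definition \ref{grado-como-proyeccion:def}). Set $E := p_{12}^{-1}(V) \cap p_{13}^{-1}(W) \subseteq \A^{n+p+q}(K)$, where $p_{12}$ and $p_{13}$ are the linear projections of $\A^n(K) \times \A^p(K) \times \A^q(K)$ onto $\A^{n+p}(K)$ and onto $\A^{n+q}(K)$. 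Then $E$ is locally closed, being an intersection of preimages of locally closed sets under linear maps, and forgetting the last $p+q$ coordinates maps $E$ onto $C \cap D$, because ``$\exists u\,\exists v$ with $(x,u) \in V$ and $(x,v) \in W$'' is equivalent to ``$(\exists u\,(x,u) \in V)$ and $(\exists v\,(x,v) \in W)$''. Since $p_{12}^{-1}(V) = V \times \A^q(K)$ and $p_{13}^{-1}(W)$ is the image of $W \times \A^p(K)$ under a linear isomorphism, parts iv), v) and viii) of Proposition \ref{propiedades-basicas-grado-lc:prop}, combined with part i) of the same proposition, give $\deg(p_{12}^{-1}(V)) = \deg_z(V)$ and $\deg(p_{13}^{-1}(W)) = \deg_z(W)$. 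B\'ezout's Inequality for locally closed sets (Theorem \ref{Bezout-localmente-cerrados:teor}) then yields $\deg_z(E) = \deg(E) \le \deg_z(V)\deg_z(W)$, and since $E \in \Pi(C \cap D)$ we conclude $\deg_\pi(C \cap D) \le \deg_z(E) \le \deg_\pi(C)\deg_\pi(D)$. The only points needing care here --- that the projection preserves the relevant fibres and that the coordinate reorderings are affine isomorphisms --- are routine.
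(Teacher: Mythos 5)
The manuscript quotes Theorem \ref{Desigualdad-Bezout-constructibles:teor} from \cite{PardoSebastian} without reproducing a proof, so there is no in-paper argument to compare yours against; I can only judge your argument on its own terms, and both halves look sound.

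For the $\deg_{\rm lci}$ inequality, you have correctly isolated the one nontrivial step in an auxiliary lemma: any presentation $E = \bigcup_\alpha (O_\alpha \cap W_\alpha)$ with each $O_\alpha$ Zariski open and each $W_\alpha$ irreducible closed bounds $\deg_{\rm lci}(E)$ from above by $\sum_\alpha \deg(W_\alpha)$. Your merge-then-maximize construction does produce a decomposition satisfying conditions i)--iii) of Lemma \ref{descomposicion-union-irreducibles-distintos:lemma}: merging all terms sharing the same irreducible $W$ restores condition iii) and can only decrease the degree sum (each surviving $W$ is counted once instead of several times), and replacing each merged open $O_W$ by the largest open $\tilde O_W$ with $\tilde O_W \cap W \subseteq E$ restores condition ii) without changing the degree sum; the union is preserved because $O_W \subseteq \tilde O_W$ forces $E = \bigcup_W O_W \cap W \subseteq \bigcup_W \tilde O_W \cap W \subseteq E$. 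One thing you should state explicitly rather than leave implicit: drop at the outset every index $\alpha$ with $O_\alpha \cap W_\alpha = \emptyset$, so that each $\tilde O_W \cap W$ is a nonempty locally closed irreducible set and the resulting family really is a decomposition of the type the definition of $\deg_{\rm lci}$ minimizes over. With that small caveat addressed, the distribution of the intersection, the passage to irreducible components of each $V_i \cap V_j'$, and the use of Theorem \ref{Bezout-localmente-cerrados:teor} plus additivity of the degree over irreducible components give exactly the stated bound.

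For the $\deg_\pi$ inequality, the fibre-product construction is the natural one and your bookkeeping is correct: $E := p_{12}^{-1}(V)\cap p_{13}^{-1}(W)\subseteq \A^{n+p+q}(K)$ is locally closed, its projection forgetting the last $p+q$ coordinates is $C\cap D$ because the quantifiers $\exists u$ and $\exists v$ are independent, hence $E \in \Pi_{n+p+q}(C\cap D)$; and $\deg(p_{12}^{-1}(V))=\deg_z(V)$, $\deg(p_{13}^{-1}(W))=\deg_z(W)$ follow from parts iv), v), viii) and i) of Proposition \ref{propiedades-basicas-grado-lc:prop}, after which Theorem \ref{Bezout-localmente-cerrados:teor} closes the estimate. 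I see no gap in either half.
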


\section{Erzeugungsgrad: revisiting Section 3 of \cite{Heintz83}, combined with constructible sets} \label{erzeugungsgrad:sec}

In \cite{Heintz83}, along with the notion of degree of affine algebraic varieties and locally closed sets, J. Heintz introduced the notion of \emph{``Erzeugungsgrad''} (or generation degree). This notion was important for \cite{Heintz83} to bound the number of non-empty cells occurring after a process of quantifier elimination. The notion is also relevant since, independently, it yields information about the ``growth function'' in Computational Learning Theory, especially in the case of binary classifiers, VC-dimension and the Sauer-Shelah-Perles Lemma. In other words, the Erzeugungsgrad is closely related to the notion of \emph{shattering} finite sets. This relation will be discussed in forthcoming sections. \\
Unfortunately, there is a small flaw in the statement of the Erzeugungsgrad in \cite{Heintz83}, which slightly affects its consequences. J. Heintz used the notion of $Z$-degree of constructible sets (see Definition \ref{zGrado:def}) as starting point to define this concept. As we have already observed, the $Z$-degree does not satisfy Bézout's Inequality and, hence, its use may affect both the arguments and the main bounds. Therefore, the aim of this section is twofold. Firstly, we define the Erzeugungsgrad using accurate notions of degree of constructible sets (in particular, we use $\deg_{\rm lci} (C)$ as introduced in the previous section). Next, we study upper bounds for the Erzeugungsgrad of a finite family of constructible sets, providing a more complete study of this notion.

\subsection{Basic definitions and preparatory results}

\begin{definition}[{\bf Degree of a finite family of constructible sets}] \label{GradoFamiliaConstructibles:def}
Let $\scrF$ be a finite family of constructible subsets of $\A^n(K)$. Let $\CC$ be a mapping that associates to every $X\in \scrF$ a minimum $LCI$-degree decomposition $\CC(X)$ of $X$ into locally closed irreducible sets according to Definition \ref{grado-constructibles:def}. We then define the finite class of irreducible varieties associated to $\CC$ by the following equality:
$$\scrC(\scrF, \CC):=\{ V\subseteq \A^n (K) \; :\; \exists X\in \scrF, \; \exists W\in \CC(X), \; V=\overline{W}^z\}=\bigcup_{X\in \scrF} \{ \overline{W}^z\; : \; W\in \CC(X)\}.$$
We define the degree of the family $\scrF$ with respect to $\CC$ as the sum of all the degrees of all irreducible varieties $V\in \scrC(\scrF, \CC)$:
    $$\deg\left(\scrF,\CC\right):=\sum_{V\in \scrC(\scrF, \CC)} \deg(V).$$
Finally, we define the degree of the family $\scrF$ as the minimum of these degrees:
\begin{equation*}
\begin{split}
\deg(\scrF):=  \min\{ & \deg(\scrF, \CC)\; : \; {\hbox {\rm $\CC$ associates to each $X\in \scrF$}} \\
& {\hbox {\rm a minimum $LCI$-degree decomposition}}\}.
\end{split}
\end{equation*}
 We also define $\Mdeg\left(\scrF \right)$ as the maximum of the $LCI$-degrees of any $X\in \scrF$:
    $$\Mdeg\left(\scrF\right):=\max\{\deg_{\rm lci}(X)\; :\; X\in \scrF\}.$$
Additionally, let $C \subseteq \A^n(K)$ be a constructible subset. Given $\scrF$ as above, let us denote by $\restr{\scrF}{C}$ the restrictions of the elements in $\scrF$ to $C$. Namely,
$$\restr{\scrF}{C} := \{ C \cap X \; : \; X \in \scrF \}.$$
We denote by $\deg(\scrF,C)$ the degree of $\restr{\scrF}{C}$.
\end{definition}

\begin{remark}
Let the reader observe that the assignment $\CC(X)$ is based on a minimum $LCI$-degree decomposition and not on the irreducible components of the Zariski closure $\overline{X}^z$ of $X$. The reason is that the irreducible components of $\overline{X}^z$ are among the irreducible components of a minimum $LCI$-degree decomposition. Nevertheless, a minimum $LCI$-degree decomposition may contain relevant locally closed irreducible sets which vanish (as embedded sets) when taking the Zariski closure.
\end{remark}

\begin{example}[{\bf Dependence of the assignement $\CC$}] The quantity $\deg(\scrF, \CC)$ depends on the mapping $\CC$, as shown by the following example. Let us consider again Example \ref{tres-grados-distintos:ej}. We had the following quadratic hypersurface:
\begin{equation}
W':=\{(x,y,z)\in \A^3(\C) \; : \; xz+ (y^2-1)=0\},
\end{equation}
and the construtible subset $C :=\pi(W)\subseteq \A^2(\C)$, where $\pi:\A^3(\C)\longrightarrow \A^2(\C)$ is the canonical projection that ``forgets'' the last coordinate. Let $D$ be the Zariski open set defined as:
$$D:= \A^{2}(\C) \setminus V_{A}(X) = \{ (x,y) \in \A^2(\C) \; :\; x\not=0\}.$$
 We have the following two different decompositions of $C$ that minimize its $LCI-$degree:
$$C = D \cup\{ (x,y)\in \A^2(\C) \; : \; x^2+ y^2-1=0\},$$
$$C= D \cup\{ (x,y)\in \A^2 (\C)\; : \; (x+1)- y^2=0\}.$$
Next, we may define $\scrF:=\{ C, V\}$, where $V:=V_{\A}((X+1)- Y^2)$ is the irreducible algebraic variety given as the zero set in $\A^2(\C)$ of the polynomial $h(X,Y) = (X+1)- Y^2 \in \C[X,Y]$. We consider two distinct mappings, $\CC_1$ and $\CC_2$, on $\scrF$, defined as follows:
$$\CC_1(C):= \{ D, V_\A(X^2+ Y^2-1)\}, \; \CC_1(V)=V, \qquad \CC_2(C):=\{ D, V\}, \; \CC_2(V)=V.$$
These two mappings yield two different classes of irreducible components:
$$\scrC(\scrF, \CC_1)= \{\A^2(\C) , V_\A(X^2+ Y^2-1),V\}, \qquad \scrC(\scrF, \CC_2)=\{\A^2(\C), V\}.$$
They result in two different degrees:
$$\deg(\scrF, \CC_1)= \sum_{W\in \scrC(\scrF, \CC_1)} \deg(W) = \deg(\A^2(\C)) + \deg(V_\A(X^2+ Y^2-1)) + \deg(V)= 1 +2+2 = 5,$$
$$\deg(\scrF, \CC_2) =\sum_{W\in \scrC(\scrF, \CC_2)} \deg(W)= \deg(\A^2(\C)) + \deg(V) = 1 +2 = 3.$$
\end{example}

\begin{remark} \label{DesigualdadesMaxDeg}
 Observe also that the following inequalities hold, but equality among them does not always occur:
$$\Mdeg(\scrF)\leq \deg(\scrF) \leq \sum_{D\in \scrF}\deg_{\rm lci}(D)\leq \sharp(\scrF) \cdot \Mdeg(\scrF).$$
\end{remark}

\begin{definition}[\bf Degree of the Zariski closures of a finite family of constructible sets]
\label{GradoZariskiFamiliaConstructibles:def}
Let $\scrF$ be a finite family of constructible subsets of $\A^n(K)$. We define:
$$\scrD (\scrF) := \{ V \subseteq \A^n(K) \; : \; \exists X \in \scrF, \; {\hbox {\rm $V$ is an irreducible component of $\overline{X}^z$}} \}.$$
The degree of the Zariski closures of the family $\scrF$ is defined as the following quantity:
$$\deg_{z} (\scrF) := \sum_{V \in \scrD (\scrF)} \deg(V).$$
Let $C \subseteq \A^n(K)$ be a constructible set and let $\scrF$ be as above. We define:
$$\deg_z (\scrF,C) := \deg_z (\restr{\scrF}{C}).$$
\end{definition}
Observe that if we denote by $\overline{\scrF}^z := \{ \overline{X}^z \; : \; X \in \scrF \}$ then $\deg_{z} (\scrF) = \deg(\overline{\scrF}^z)$, which corresponds to the original notion introduced in \cite{Heintz83}. 

\begin{proposition}
 Let $\scrF$ and $\scrG$ be two finite families of constructible subsets of $\A^n(K)$ such that $\scrF \subseteq \scrG$. Then, we have:
$$\deg(\scrF) \leq \deg(\scrG), \qquad \qquad \deg_{z} (\scrF) \leq \deg_{z} (\scrG),$$
that is, both degrees are sub--additive functions.
\end{proposition}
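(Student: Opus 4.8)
The plan is to deduce both inequalities from a single elementary observation: the finite sets of irreducible varieties whose degrees are summed in the two definitions are monotone under inclusion of the family, and every degree occurring in these sums is a positive integer, so the associated sums of degrees are monotone as well. Thus the whole argument reduces to two set inclusions together with positivity of $\deg$ on irreducible varieties.

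For $\deg_z$ this is immediate. By Definition \ref{GradoZariskiFamiliaConstructibles:def}, $\scrD(\scrF)$ is the set of all irreducible components of the Zariski closures $\overline{X}^z$ with $X \in \scrF$. Since $\scrF \subseteq \scrG$, any component arising from some $X \in \scrF$ also arises from that same $X$ viewed inside $\scrG$, hence $\scrD(\scrF) \subseteq \scrD(\scrG)$. Because $\deg(V) \geq 1$ for every irreducible variety $V$, summing over the smaller set gives $\deg_z(\scrF) = \sum_{V \in \scrD(\scrF)} \deg(V) \leq \sum_{V \in \scrD(\scrG)} \deg(V) = \deg_z(\scrG)$.

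For $\deg$ one extra step is needed because of the minimum over the assignments $\CC$. First note that, by Lemma \ref{descomposicion-union-irreducibles-distintos:lemma} and Definition \ref{grado-constructibles:def}, mappings $\CC$ associating to every member of a finite family a minimum $LCI$-degree decomposition exist, so the minima in Definition \ref{GradoFamiliaConstructibles:def} are taken over non-empty sets and $\deg(\scrF)$, $\deg(\scrG)$ are well defined. Now fix any such $\CC$ for $\scrG$. Its restriction $\restr{\CC}{\scrF}$ is an admissible assignment for $\scrF$, and from the formula $\scrC(\scrF, \restr{\CC}{\scrF}) = \bigcup_{X \in \scrF} \{ \overline{W}^z : W \in \CC(X) \}$ one reads off at once that $\scrC(\scrF, \restr{\CC}{\scrF}) \subseteq \scrC(\scrG, \CC)$, since the left-hand union ranges over a subset of the index set of the right-hand one. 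Positivity of degrees then yields $\deg(\scrF, \restr{\CC}{\scrF}) \leq \deg(\scrG, \CC)$.

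Finally, since $\deg(\scrF)$ is the minimum of $\deg(\scrF, \CC')$ over all admissible assignments $\CC'$ for $\scrF$, we get $\deg(\scrF) \leq \deg(\scrF, \restr{\CC}{\scrF}) \leq \deg(\scrG, \CC)$ for every admissible $\CC$ for $\scrG$; taking the minimum over all such $\CC$ gives $\deg(\scrF) \leq \deg(\scrG)$. There is no genuine obstacle in this argument. The only point worth care — which is why the bound is written against $\deg(\scrG, \CC)$ for an arbitrary $\CC$ before minimizing — is that a minimizing assignment for $\scrF$ need not be the restriction of a minimizing assignment for $\scrG$; handling the two minima in the correct order, rather than trying to match optimal decompositions, is what makes the proof clean.
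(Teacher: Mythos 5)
Your argument is correct. For the inequality $\deg(\scrF)\le\deg(\scrG)$ it is essentially the paper's proof: both restrict an admissible assignment $\CC$ on $\scrG$ to $\scrF$, observe the inclusion $\scrC(\scrF,\restr{\CC}{\scrF})\subseteq\scrC(\scrG,\CC)$, and conclude by comparing sums of degrees and then minimizing. The only cosmetic difference is that the paper fixes an optimal $\CC_1$ at the outset while you bound against an arbitrary $\CC$ and then take the infimum; these are the same reasoning in a different order, and your version correctly addresses the point you flag at the end, namely that a minimizer for $\scrF$ need not restrict from a minimizer for $\scrG$. Where you genuinely diverge is in the second inequality: the paper derives $\deg_z(\scrF)\le\deg_z(\scrG)$ \emph{as a corollary of the first inequality}, noting that $\deg_z(\scrF)=\deg(\overline{\scrF}^z)$ and $\overline{\scrF}^z\subseteq\overline{\scrG}^z$, so that monotonicity of $\deg$ for families of closed sets gives the claim. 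You instead prove it directly from the trivial inclusion $\scrD(\scrF)\subseteq\scrD(\scrG)$ and positivity of degrees of irreducible varieties. Your route is more elementary and self-contained (it does not invoke the already-proved first inequality and is immune to any subtleties in the minimum over assignments), while the paper's reduction is slightly more economical and emphasizes the structural identity $\deg_z(\cdot)=\deg(\overline{\,\cdot\,}^z)$; both are sound.
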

\begin{proof}
Let $\CC_1$ be a mapping that associates to every element of $\scrG$ a decomposition into locally closed irreducible components that minimize its $LCI$-degree and verifies the following identity:
$$D = \deg(\scrG) = \deg(\scrG, \CC_1).$$
Next, we consider $\CC_2 := \restr{\CC_1}{\scrF}$, the restriction to $\scrF$ of $\CC_1$. Clearly, if $\CC_1$ associates to every element of $\scrG$ a decomposition of minimal $LCI$-degree, $\CC_2$ does the same for all the elements of $\scrF$. We have the following two classes:
$$\scrC (\scrG, \CC_1) := \{ V \subseteq \A^n(K) \: : \: \exists X \in \scrG, \; \exists W \in \CC_1 (X), \; V = \overline{W}^z \},$$
$$\scrC(\scrF, \CC_2) := \{ V \subseteq \A^n(K) \; : \; \exists X \in \scrF, \; \exists W \in \CC_2 (X), \; V = \overline{W}^z \}.$$
Since $\CC_2 (Y) = \CC_1 (Y)$ for every $Y \in \scrF$, we conclude that $\scrC(\scrF, \CC_2) \subseteq \scrC (\scrG, \CC_1)$ and, therefore:
$$\deg(\scrF, \CC_2) = \sum_{V \in \scrC (\scrF, \CC_2)} \deg(V) \leq \sum_{V \in \scrC(\scrG, \CC_1)} \deg(V) =  \deg(\scrG, \CC_1) = \deg(\scrG) = D.$$
As $\deg(\scrF)$ is the minimum of  $\deg(\scrF, \CC)$ for any mapping $\CC$, we conclude that: $$\deg(\scrF) \leq \deg(\scrG).$$ 
The second inequality is an immediate consequence of the first one since:
$$\deg_{z} (\scrF) = \deg(\overline{\scrF}^z), \qquad \deg_{z} (\scrG) = \deg (\overline{\scrG}^z)$$
and $\overline{\scrF}^z \subseteq \overline{\scrG}^z$.
\end{proof}

\begin{definition}[{\bf $\CH-$definable, $\CH-$cell}]
Let $\CH$ be a finite class of constructible subsets of $\A^n (K)$ and let $C\subseteq \A^n(K)$ be another constructible set.
\begin{itemize}
\item Let $\scrB(\CH, C)$ denote the Boolean algebra of subsets of $C$ generated by $\CH$. A subset $X$ of $C$ is called \emph{$\CH-$definable in $C$} if it belongs to this Boolean algebra, i.e. $X\in \scrB(\CH, C)$. In the case $C=\A^n (K)$, we simply write $\scrB(\CH):=\scrB(\CH, \A^n (K))$.
\item We define an \emph{$\CH-$cell in $C$} as any set $X\subseteq C$ for which there exists $M\subseteq \CH$ such that:
    $$X= C\cap \left( \bigcap_{Y\in M} Y\right) \bigcap \left(\bigcap_{Y\not\in M} (C\setminus Y)\right).$$
    We denote by $\scrZ(\CH, C)$ the class of all non-empty $\CH-$cells in $C$. In the case $C=\A^n (K)$, we simply write $\scrZ(\CH):=\scrZ(\CH, \A^n (K))$.
\item  Finally, we define the class $\text{Gen}(\scrB(\CH, C))$ as the class of all finite families $\CW:=\{W_1,\ldots, W_t\}$ of algebraic varieties such that $\scrB(\CH,C) \subseteq \scrB(\CW, C)$ (i.e. closed generators of the Boolean algebra $\scrB(\CH, C)$). In the case $C=\A^n (K)$, we simply write $\text{Gen}(\scrB(\CH)) := \text{Gen}(\scrB(\CH, \A^n(K))$.
\end{itemize}

\end{definition}

Observe that $\scrZ(\CH,C)$ defines a finite partition of $C$ and every element in $\scrB(\CH,C)$ is a finite union of $\CH-$cells in $C$.

\begin{definition}[{\bf Erzeugungsgrad}] \label{erzeugungsgrad:def}
 Let $\CH$ be a finite family of constructible sets. We define the Erzeugungsgrad of $\CH$  as:
$$\grad(\CH ):= \min \{ \deg(\CW) \; : \; \CW \in \text{Gen}(\scrB(\CH))
, \; {\hbox {\rm $\CW$ is a finite family of closed sets}}\}.$$
\end{definition}

\begin{remark} \label{ErzeugungsgradIrreducible:rmk}
Given a finite family of constructible sets $\CH$, there exists a finite family $\CW$ of irreducible algebraic varieties such that $\CH \subseteq \scrB (\CW)$ and:
$$\grad(\CH) = \sum_{W \in \CW} \deg(W).$$
To see this, let $\widetilde{\CW}$ be a finite family of algebraic varieties such that $\CH \subseteq \scrB (\widetilde{\CW})$ and $\grad(\CH) = \deg(\widetilde{\CW})$. As in Definition \ref{GradoZariskiFamiliaConstructibles:def}, let $\scrD(\widetilde{\CW})$ be the set of irreducible components of the elements of $\widetilde{\CW}$. We obviously have $\CH \subseteq \scrB(\scrD(\widetilde{W}))$, since $\widetilde{\CW} \subseteq \scrB(\scrD(\widetilde{\CW}))$ and $\CH \subseteq \scrB (\widetilde{\CW})$, and:
$$\deg(\widetilde{\CW}) = \sum_{W \in \scrD(\widetilde{\CW})} \deg(W)$$
Thus, $\CW = \scrD(\widetilde{\CW})$ is a finite family of irreducible algebraic varieties such that $\CH \subseteq \scrB (\CW)$ and:
$$\grad(\CH) = \sum_{W \in \CW} \deg(W).$$
\end{remark}

The following lemma shows that if $\scrF$ is a class of locally closed subsets of $\A^n(K)$, then $\deg(\scrF) = \deg_{z} (\scrF)$.

\begin{lemma} \label{GradoWSumaComponentesIrreducibles:lemma}
Let $\CW$ be a finite family of locally closed subsets of $\A^n(K)$. Let $\scrD(\CW)$ be the finite class of irreducible algebraic varieties introduced in Definition \ref{GradoZariskiFamiliaConstructibles:def}. Namely:
$$\scrD(\CW) := \left\{ V \subseteq \A^n(K) \; : \; \exists W \in \CW, \; {\hbox {\rm $V$ is an irreducible component of $\overline{W}^z$}}   \right\}.$$
Then, we have:
$$\deg(\CW) = \sum_{V \in \scrD (\CW)} \deg(V).$$
\end{lemma}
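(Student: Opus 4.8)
The plan is to unwind the definition of $\deg(\CW)$ in Definition~\ref{GradoFamiliaConstructibles:def} and show that, for a family $\CW$ of locally closed sets, every admissible choice of the auxiliary mapping $\CC$ produces exactly the class $\scrD(\CW)$, so that the minimum in the definition of $\deg(\CW)$ is taken over a single value. Recall that $\deg(\CW)$ is the minimum, over mappings $\CC$ sending each $W\in\CW$ to a minimum $LCI$-degree decomposition $\CC(W)=\{U_1^W\cap V_1^W,\dots,U_{s_W}^W\cap V_{s_W}^W\}$ (which by Definition~\ref{grado-constructibles:def} must in particular satisfy Lemma~\ref{descomposicion-union-irreducibles-distintos:lemma}), of the quantity $\sum_{V\in\scrC(\CW,\CC)}\deg(V)$ with $\scrC(\CW,\CC)=\bigcup_{W\in\CW}\{\overline{W'}^z : W'\in\CC(W)\}$. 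Since each $U_i^W\cap V_i^W$ is a nonempty open subset of the irreducible variety $V_i^W$, it is dense in $V_i^W$, so $\overline{U_i^W\cap V_i^W}^z=V_i^W$; hence $\scrC(\CW,\CC)=\bigcup_{W\in\CW}\{V_1^W,\dots,V_{s_W}^W\}$, the union of the classes of irreducible varieties attached to the chosen decompositions.

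The crucial step, and the one I expect to require the actual work, is to identify, for a \emph{locally closed} set $W$, the class $\{V_1^W,\dots,V_{s_W}^W\}$ arising from any decomposition as in Lemma~\ref{descomposicion-union-irreducibles-distintos:lemma} with the set of irreducible components of $\overline{W}^z$. One inclusion is free: from $W=\bigcup_i(U_i^W\cap V_i^W)$ one gets $\overline{W}^z=\bigcup_i V_i^W$, so the irreducible components of $\overline{W}^z$ are among the $V_i^W$. For the reverse inclusion I would use that a locally closed set is open, hence dense, in its Zariski closure. Let $Z_1,\dots,Z_t$ be the irreducible components of $\overline{W}^z$. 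For each $i$, the set $Z_i\setminus\bigcup_{j\neq i}Z_j$ is a nonempty Zariski open subset of $\overline{W}^z$, so it meets the dense set $W$; thus $W\cap Z_i$ is a nonempty open subset of the irreducible variety $Z_i$, hence dense and irreducible, with $\overline{W\cap Z_i}^z=Z_i$. Writing $W\cap Z_i=O_i\cap Z_i$ with $O_i$ the maximal open set with this property, the decomposition $W=\bigcup_{i=1}^t(O_i\cap Z_i)$ satisfies conditions i), ii), iii) of Lemma~\ref{descomposicion-union-irreducibles-distintos:lemma}. By the uniqueness (up to permutation) of such decompositions for locally closed sets recalled just after that lemma, the class $\{V_1^W,\dots,V_{s_W}^W\}$ coincides with $\{Z_1,\dots,Z_t\}$, that is, with the set of irreducible components of $\overline{W}^z$.

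Granting this identification, every admissible mapping $\CC$ yields the same class $\scrC(\CW,\CC)=\bigcup_{W\in\CW}\{\text{irreducible components of }\overline{W}^z\}=\scrD(\CW)$, so the minimum defining $\deg(\CW)$ is a minimum of a single value and $\deg(\CW)=\sum_{V\in\scrC(\CW,\CC)}\deg(V)=\sum_{V\in\scrD(\CW)}\deg(V)$, as claimed. As a by-product, for locally closed $W$ the decomposition of Lemma~\ref{descomposicion-union-irreducibles-distintos:lemma} is automatically a minimum $LCI$-degree decomposition, because it is the only one, so no optimization over $\CC$ is genuinely performed. The only point to be careful about is the non-emptiness of the pieces $U_i^W\cap V_i^W$, which is part of the statement of Lemma~\ref{descomposicion-union-irreducibles-distintos:lemma} and is exactly what guarantees $\overline{U_i^W\cap V_i^W}^z=V_i^W$; apart from that the argument is purely topological and uses nothing about the geometry of the degree function.
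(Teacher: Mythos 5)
Your proof is correct modulo one soft spot, but it takes a genuinely different route from the paper. You invoke the uniqueness (up to permutation) of the irreducible\nobreakdash-variety class of a decomposition of a locally closed set satisfying conditions i)--iii) of Lemma~\ref{descomposicion-union-irreducibles-distintos:lemma} --- a statement asserted in the paper in the paragraph after that lemma but never proved there --- to conclude that every admissible assignment $\CC$ yields the same class $\scrC(\CW,\CC)=\scrD(\CW)$, so the minimum in the definition of $\deg(\CW)$ is taken over a single value. The paper's own proof never appeals to this uniqueness: it argues two inequalities directly. It first shows that for \emph{any} admissible $\CC$ each irreducible component of $\overline{W}^z$ must appear among $\overline{A_1}^z,\dots,\overline{A_s}^z$ where $\CC(W)=\{A_1,\dots,A_s\}$ (your ``free'' inclusion), giving $\scrD(\CW)\subseteq\scrC(\CW,\CC)$ and hence $\sum_{V\in\scrD(\CW)}\deg(V)\leq\deg(\CW)$; it then exhibits, for each locally closed $W=V\cap U$, the explicit minimum-$LCI$-degree assignment $\CC_1(W)=\{V_1\cap U,\dots,V_m\cap U\}$ built from the irreducible components of $\overline{W}^z$, giving $\deg(\CW)\leq\sum_{V\in\scrD(\CW)}\deg(V)$. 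Both of these ingredients are in your write-up --- your first inclusion and your construction of the decomposition $\bigcup_i (O_i\cap Z_i)$ are exactly steps (a) and (b) --- but you glue them together via the cited uniqueness rather than via a degree comparison.

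The caution is that the uniqueness statement, read literally, is brittle: conditions i)--iii) do not rule out redundant embedded pieces. For instance $D(X)\subseteq\A^1(K)$ admits the decomposition $(D(X)\cap\A^1)\cup(\A^1\cap\{1\})$, where each $U_i$ is maximal and the $V_i$ are distinct, yet its class $\{\A^1,\{1\}\}$ differs from $\{\A^1\}$. What does hold, and suffices here, is that a \emph{minimum $LCI$-degree} decomposition cannot contain such a redundant piece: by your ``free'' inclusion the irreducible components of $\overline{W}^z$ already account for $\deg(\overline{W}^z)=\deg_{\rm lci}(W)$ of the degree budget (using Propositions~\ref{propiedades-basicas-grado-lc:prop} and~\ref{grado-igual-localmente-cerrado:prop}), and every extra $\overline{A_j}^z$ contributes a strictly positive degree. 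So replacing ``by the uniqueness'' with this two-line degree count makes your argument self-contained and puts it essentially onto the paper's track; as written, the load-bearing step is delegated to an assertion the paper does not prove and whose literal form requires a minimality caveat.
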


\begin{proof}
Let $\CC$ be any assignment that associates to every $W \in \CW$ a minimum $LCI$-degree decomposition. For each $W \in \CW$, let us denote by $\scrC(W, \CC) := \scrC(\{W \}, \CC)$, according to Definition \ref{GradoFamiliaConstructibles:def}. Furthermore, let $\scrD (W) := \scrD(\{W\})$, according to Definition \ref{GradoZariskiFamiliaConstructibles:def}. Let us observe that:
$$\scrC (\CW, \CC) = \bigcup_{W \in \CW} \scrC (W, \CC) \qquad \text{and} \qquad \scrD(\CW) = \bigcup_{W \in \CW} \scrD (W).$$
Let us prove that for every $W \in \CW$, 
$$\scrD(W) \subseteq \scrC(W, \CC).$$ 
In order to prove this, observe that $W$ admits a decomposition as $W = V \cap U$, where $V \subseteq \A^n(K)$ is a Zariski closed subset and $U \subseteq \A^n(K)$ is a Zariski open subset. Observe also that $V$ can be chosen such that it admits a decomposition into irreducible algebraic varieties of the following form:
\begin{equation} \label{DescomposicionV:eqn}
V = V_{1} \cup \cdots \cup V_m, 
\end{equation}
in such a way that $V_i \cap U \neq \emptyset$ for all $i$, $1 \leq i \leq m$. Then, we conclude that the following is a decomposition of $\overline{W}^z$ into irreducible components:
\begin{equation} \label{IrreduciblesW:eqn}
\overline{W}^z = V_1 \cup \cdots \cup V_m.
\end{equation}
Next, assume that $\CC(W) = \{A_1, \ldots, A_s \}$, where $A_i$ is a locally closed irreducible subset of $\A^n(K)$. Then, we also have:
\begin{equation} \label{IrreduciblesWMap:eqn}
\overline{W}^z = \overline{A_1}^z \cup \ldots \cup \overline{A_s}^z,
\end{equation}
and $\scrC(W, \CC) = \{ \overline{A_1}^z, \ldots, \overline{A_s} \}$. Hence, from Identities (\ref{IrreduciblesW:eqn}) and (\ref{IrreduciblesWMap:eqn}), as Identity (\ref{IrreduciblesW:eqn}) is a decomposition of $\overline{W}^z$ into irreducible components, we conclude:
$$\{V_1, \ldots, V_m \} \subseteq \{ \overline{A_1}^z, \ldots,\overline{A_s}^z \}.$$
Since $\scrD(W) = \{ V_1, \ldots, V_m \}$, we have proved that for each $W \in \CW$,
$$\scrD(W) \subseteq \scrC(W,\CC).$$
Then, $\scrD(\CW) \subseteq \scrC(\CW, \CC)$ and, hence,
$$\sum_{V \in \scrD(\CW)} \deg(V) \leq \sum_{V \in \scrC(\CW, \CC)} \deg(V) = \deg(\CW, \CC).$$
On the other hand, as $W = V \cap U$, taking into account the decomposition of $V$ described in Identity (\ref{DescomposicionV:eqn}), we have an assignment $\CC_1$ which associates to $W \in \CW$ the minimal $LCI$-degree decomposition:
$$\CC_1 (W) = \{ V_1 \cap U, \ldots, V_m \cap U \}.$$
Then, $\scrC(W, \CC_1) \subseteq \scrD (W)$ for each $W \in \CW$ and, hence,
$$\deg(\CW, \CC_1) = \sum_{V \in \scrC (\CW, \CC_1)} \deg(V) \leq \sum_{V \in \scrD(\CW)} \deg(V).$$
This proves that:
$$\deg(\CW) \leq \sum_{V \in \scrD (\CW)} \deg(V),$$
and the lemma follows.
\end{proof}

\begin{lemma} \label{cotasD:lemma}
Let $C \subseteq \A^n(K)$ be a constructible set and $\CW = \{ W_1, \ldots, W_s \}$ a finite family of locally closed subsets of $\A^n(K)$. We define the following set:
$$\scrT (\CW, C) := \left\{ X \subseteq \A^n(K) \; : \; \exists S \subseteq \{ 1, \ldots, s \}, \; X = C \cap \left( \bigcap_{i \in  S} W_i \right) \right \}.$$
Let $\scrD := \scrD (\scrT (\CW, C))$ be the class of irreducible algebraic varieties associated to $\scrT (\CW, C)$ as in Definition \ref{GradoZariskiFamiliaConstructibles:def}. Then, we have:
$$\sum_{A \in \scrD} \deg(A) \leq \deg_{\rm lci} (C) (1 + \deg(\CW))^{\dim(C)}.$$
\end{lemma}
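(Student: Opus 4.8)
The plan is to reduce the whole statement, via a minimum $LCI$-degree decomposition of $C$, to a single Heintz-style ``counting of components'' estimate on an irreducible variety, and then to prove that estimate by induction on the dimension using Bézout's Inequality (Theorem \ref{Bezout-localmente-cerrados:teor}).

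First I would fix a minimum $LCI$-degree decomposition $C=(U_1\cap V_1)\cup\cdots\cup(U_r\cap V_r)$ as in Lemma \ref{descomposicion-union-irreducibles-distintos:lemma} and Definition \ref{grado-constructibles:def}, so that $\sum_{j=1}^{r}\deg(V_j)=\deg_{\rm lci}(C)$, each $V_j$ is irreducible, and $\dim(V_j)\le\dim(C)$. Using Lemma \ref{GradoWSumaComponentesIrreducibles:lemma}, I would replace the family $\CW$ by the finite family $\scrD(\CW)=\{Y_1,\dots,Y_t\}$ of irreducible algebraic varieties, which satisfies $\sum_{k=1}^{t}\deg(Y_k)=\deg(\CW)$ and has the property that, writing $W_i=O_i\cap Z_i$ with $O_i$ Zariski open and $Z_i:=\overline{W_i}^z$, each $Z_i$ is a union of some of the $Y_k$'s. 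The first genuine step is the inclusion $\scrD\subseteq\bigcup_{j=1}^{r}\Gamma_j$, where $\Gamma_j$ is the set of all irreducible varieties arising as an irreducible component of $V_j\cap\bigcap_{k\in T}Y_k$ for some $T\subseteq\{1,\dots,t\}$. To establish it, I would write any $X=C\cap\bigcap_{i\in S}W_i$ as $\bigcup_{j}\big[(U_j\cap\bigcap_{i\in S}O_i)\cap(V_j\cap\bigcap_{i\in S}Z_i)\big]$, pass to Zariski closures (a nonempty open subset of an irreducible variety is dense in it, so the closure of the $j$-th piece is a union of irreducible components of $V_j\cap\bigcap_{i\in S}Z_i$), and finally note that an irreducible subvariety of $\bigcap_{i\in S}Z_i$ is contained in some $Y_{k(i)}$ for each $i\in S$; hence every component of $V_j\cap\bigcap_{i\in S}Z_i$ is already a component of $V_j\cap\bigcap_{k\in T}Y_k$ for $T=\{k(i):i\in S\}$, which puts it in $\Gamma_j$.

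The heart of the argument is the auxiliary estimate: for an irreducible variety $V\subseteq\A^n(K)$ and closed sets $Y_1,\dots,Y_t$, the set $\Gamma$ of all irreducible components of all intersections $V\cap\bigcap_{k\in T}Y_k$ (over $T\subseteq\{1,\dots,t\}$, including $T=\emptyset$) satisfies $\sum_{A\in\Gamma}\deg(A)\le\deg(V)(1+\sum_{k}\deg(Y_k))^{\dim V}$. I would prove this by induction on $\dim V$: the zero-dimensional case is immediate, and for the inductive step one may assume no $Y_k$ contains $V$ (deleting those only shrinks $\sum_k\deg(Y_k)$ and leaves $\Gamma$ unchanged), so every irreducible component $B$ of each $V\cap Y_{k_0}$ has $\dim B\le\dim V-1$ and $\sum_{B}\deg(B)\le\deg(V)\deg(Y_{k_0})$ by Bézout. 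Any $A\in\Gamma$ with $A\ne V$ lies in some such $B$ and is in turn an irreducible component of $B\cap\bigcap_{k\in T\setminus\{k_0\}}Y_k$, hence lies in the analogous set attached to $B$ and the family $\{Y_k:k\ne k_0\}$; applying the induction hypothesis to each $B$, summing, and writing $D=\sum_k\deg(Y_k)$ gives $\sum_{A\in\Gamma}\deg(A)\le\deg(V)+(1+D)^{\dim V-1}\sum_{k_0}\deg(V)\deg(Y_{k_0})=\deg(V)\big(1+D(1+D)^{\dim V-1}\big)$, and $1+D(1+D)^{\dim V-1}\le(1+D)^{\dim V}$ closes the induction.

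Putting the pieces together, I would apply the auxiliary estimate to each pair $(V_j;Y_1,\dots,Y_t)$, use $\dim(V_j)\le\dim(C)$ and $\sum_k\deg(Y_k)=\deg(\CW)$, and sum over $j$ to obtain $\sum_{A\in\scrD}\deg(A)\le\sum_{j=1}^{r}\deg(V_j)(1+\deg(\CW))^{\dim(C)}=\deg_{\rm lci}(C)(1+\deg(\CW))^{\dim(C)}$. The main obstacle I anticipate is the bookkeeping in the two reductions: verifying that passing to the irreducible varieties of $\scrD(\CW)$ (rather than the closures $\overline{W_i}^z$ themselves) loses no irreducible component of any $\overline{X}^z$ — this is precisely what makes the bound come out with the factor $\deg(\CW)$ instead of the cruder $\sum_i\deg(W_i)$ — together with the care needed in the inductive step to keep the family of ``cutting'' varieties fixed while its dimension strictly drops.
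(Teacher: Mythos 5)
Your proposal is correct, and while it rests on the same two ingredients as the paper --- Bézout's Inequality and an outer reduction via a minimum $LCI$-degree decomposition --- the inner induction is organized quite differently. The paper keeps $C$ locally closed, stratifies the whole class $\scrD$ by dimension into layers $\scrD(k)$, constructs for each $A\in\scrD(k)$ with $k<\dim(C)$ a pair $(A^*,V)$ with $A^*\in\scrD$ of strictly larger dimension and $V\in\scrD(\CW)$ such that $A$ is a component of $A^*\cap V$, and then bounds the partial sums $\CD(k)=\sum_{r\geq k}\sum_{A\in\scrD(r)}\deg(A)$ by a downward recursion in $k$. You instead isolate a self-contained, purely closed intersection-theoretic estimate --- for a closed irreducible $V$ and closed $Y_1,\dots,Y_t$, the total degree of all irreducible components of all sets $V\cap\bigcap_{k\in T}Y_k$ is at most $\deg(V)\bigl(1+\sum_k\deg(Y_k)\bigr)^{\dim V}$ --- and prove it by induction on $\dim V$, peeling off one $Y_{k_0}$ at a time and applying Bézout to $V\cap Y_{k_0}$. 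Your outer reduction is also a bit cleaner: by replacing $\CW$ with $\scrD(\CW)$ from the start (Lemma \ref{GradoWSumaComponentesIrreducibles:lemma}, plus the observation that an irreducible subvariety of a finite union of irreducible closed sets lies inside one of them) and by reducing to the closed irreducible parts $V_j$ rather than to the locally closed pieces $U_j\cap V_j$, you arrive at a core lemma with no open-set bookkeeping at all, whereas the paper's Claim has to carry that bookkeeping through the construction of $A^*$. The trade-off is minor: the paper's version stays inside the problem's own stratification of $\scrD$, yours extracts a reusable quotable lemma; both close the induction with the same identity $(1+D)^{m-1}+D(1+D)^{m-1}=(1+D)^{m}$.
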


\begin{proof}
First, we observe that it suffices to prove this lemma in the case where $C$ is locally closed. In order to prove this claim, let us consider a decomposition of the constructible set $C$ into locally closed irreducible subsets that minimizes the $LCI$-degree as in Definition \ref{grado-constructibles:def}, i.e.
\begin{equation*} 
C := C_1 \cup \ldots \cup C_m,
\end{equation*}
such that:
\begin{equation*} 
\deg_{\rm lci} (C) = \sum_{i=1}^m \deg(C_i).
\end{equation*}
With these notations, the following holds:
\begin{claim}
$$\scrD \subseteq \bigcup_{j=1}^m \scrD (\scrT (\CW, C_j)).$$
\end{claim}
\begin{proof-claim}
Let $V \in \scrD$ be an irreducible algebraic variety such that there exists $X \in \scrT (\CW,C)$ satisfying that $V$ is an irreducible component of $\overline{X}^z$. Moreover, since $X \in \scrT(\CW, C)$, there exists $S \subseteq \{ 1, \ldots, s \}$ such that:
$$X = C \cap \left( \bigcap_{i \in S} W_i \right).$$
Thus,
$$X = \bigcup_{j=1}^m \left( C_j \cap \left( \bigcap_{i \in S} W_i \right) \right).$$
Then, taking Zariski closures, we have:
$$\overline{X}^z = \bigcup_{j=1}^m \overline{C_j \cap \left( \bigcap_{i \in S} W_i \right)}^z.$$
Now, recall that any decomposition of an algebraic variety $A$ as finite union of irreducible varieties can be refined to obtain a decomposition of $A$ into its irreducible components. Then, the irreducible components of $\overline{X}^z$ are elements of the set:
$$\bigcup_{j=1}^m \scrD \left( \overline{C_j \cap \left( \bigcap_{i \in S} W_i \right)}^z \right), $$
As $V$ is an irreducible component of $\overline{X}^z$, it follows that $V$ must be an irreducible component of $\overline{C_j \cap \left( \bigcap_{i \in S} W_i \right)}^z$ for some $j \in \{ 1, \ldots, m \}$. Therefore, since:
$$C_j \cap \left( \bigcap_{i \in S} W_i \right) \in \scrT (\CW, C_j),$$
we conclude that $V \in \scrD (\scrT (\CW, C_j))$ 
and the claim follows.
\end{proof-claim}

If the lemma holds in the locally closed case, we would have the following inequality for each $j \in \{ 1, \ldots, m \}$:
$$\sum_{A \in \scrD (\scrT (\CW, C_j))} \deg(A) \leq \deg(C_j) \left(1+ \deg(\CW) \right)^{\dim(C_j)}.$$
Since $\dim(C) = \max \{ \dim(C_j) \; : \; 1 \leq j \leq m \}$ and applying the previous claim, we conclude:
\begin{equation*}
\begin{aligned}
\sum_{A \in \scrD} \deg(A) & \leq \sum_{j=1}^m  \left( \sum_{A \in \scrD (\scrT (\CW, C_j))} \deg (A) \right) \leq & \\
& \leq \sum_{j=1}^m \deg(C_j) \left(1 + \deg(\CW) \right)^{\dim(C_j)} \leq & \\
& \leq \sum_{j=1}^m \deg(C_j) \left(1+ \deg(\CW) \right)^{\dim(C)} = & \\
& =\left( \sum_{j=1}^m \deg(C_j) \right) \left(1+ \deg(\CW) \right)^{\dim(C)} = & \\
& = \deg_{\rm lci} (C) \left(1+ \deg(\CW) \right)^{\dim(C)}, &
\end{aligned}
\end{equation*}
which proves the lemma for any constructible set $C$. \\
From now on, we may assume, as claimed, that $C$ is a locally closed subset of $\A^n(K)$. We will prove the lemma for the case where $C$ is locally closed (in which case $\deg(C) = \deg_z (C) = \deg_{\pi}(C) = \deg_{\rm lci}(C)$). \\
Let $d := \dim(C)$. We define the following classes of irreducible components:

\begin{itemize}
\item $\scrD(d) := \{ V \; : \; {\hbox {\rm $V$ is an irreducible component of $\overline{C}^z$}}\}$.
\item For $0 \leq k \leq d-1$, let us define:
\begin{equation} \label{scrDk:eqn}
\scrD(k) := \left\{ V \in \scrD \; : \; \dim(V)=k, \; V \not\in \scrD(d) \right\}.
\end{equation}
\end{itemize}
We thus have a partition of $\scrD$ given by the following disjoint union:
$$\scrD = \bigcup_{k=0}^d \scrD(k).$$
Next, we prove the following claim:

\begin{claim} 
Let $k \in \N$ be a non--negative integer such that $0 \leq k \leq d-1$. Let $A \in \scrD(k)$ be an element of $\scrD(k)$ according to Identity (\ref{scrDk:eqn}). Then, there exist $A^* \in \scrD$ and $V \in \scrD(\CW)$ such that:
\begin{enumerate}
\item $\dim(A^*) \geq \dim(A) +1 = k+1$.
\item $A$ is an irreducible component of $A^* \cap V$.
\end{enumerate}
\end{claim}

\begin{proof-claim}
As $A \in \scrD(k) \subseteq \scrD$, there exists a subset $S \subseteq \{1, \ldots, s \}$ of minimal cardinality such that the following two properties hold:
\begin{enumerate}
\item The locally closed set $B$ given by:
$$B := C \cap \left( \bigcap_{i \in S} W_i  \right)$$
is in $\scrT(W,C)$,
\item $A$ is an irreducible component of $\overline{B}^z$.
\end{enumerate}

As $A \in \scrD(k)$ and $k<d$, we already know that $A$ is not an irreducible component of $\overline{C}^z$. Thus, $S \neq \emptyset$ (otherwise $B=C$) and we may choose $i \in S$. Define $S' := S \setminus \{ i \}$ and:
$$B' := C \cap \left( \bigcap_{j \in S'} W_j \right) \in \scrT(\CW, C).$$
Let $W:= W_i \in \CW$ and observe that $B = B' \cap W$. Since $A$ is an irreducible component of $\overline{B}^z \subseteq \overline{B'}^z$, there always exists an irreducible component $A^*$ of $\overline{B'}^z$ such that $A \subseteq A^*$. Moreover, due to the minimal cardinality of $S$, $A$ cannot be an irreducible component of $\overline{B'}^z$. In particular, for every irreducible component $A^*$ of $\overline{B'}^z$ such that $A \subseteq A^*$, we would have $A \subsetneq A^*$. As both of them are irreducible, we also conclude that for every irreducible component $A^*$ of $\overline{B'}^z$ such that $A \subseteq A^*$, we have $\dim(A^*) \geq \dim(A)+1$. \\
Next, recall that we are assuming $C$ is locally closed and that the elements of the family $\CW$ are locally closed. Therefore, $B'$ is also a locally closed subset of $\A^n (K)$. Thus, there is a decomposition of $B'$ as finite union of non--empty locally closed irreducible sets:
$$B' = \left( A_1^* \cap B_1 \right) \cup \cdots \cup \left( A_t^* \cap B_t \right),$$ 
where:
\begin{itemize}
\item $A_j^*$ is a closed irreducible subset of $\A^n(K)$,
\item $B_j$ is an open subset of $\A^n(K)$ such that $A_j^* \cap B_j \neq \emptyset$, $\forall j \in \{ 1, \ldots, t \}$,
\item $\overline{B'}^z = A_1^* \cup \cdots \cup A_t^*$ is the decomposition of $\overline{B'}^z$ into irreducible components. 
\end{itemize}

Note that $W$ is also locally closed and, therefore, admits a decomposition into locally closed irreducible sets:
$$W = (V_1 \cap U_1) \cup \cdots \cup (V_r \cap U_r )$$
such that:
\begin{itemize}
\item $V_{\ell}$ is an irreducible variety of $\A^n(K)$,
\item $U_{\ell}$ is a Zariski open subset of $\A^n(K)$ satisfying $V_{\ell} \cap U_{\ell} \neq \emptyset$, $\forall \ell \in \{1, \ldots, r \}$,
\item $\overline{W}^z = V_1 \cup \cdots \cup V_r$ is the decomposition of $\overline{W}^z$ into its irreducible components.
\end{itemize}

Observe that, in particular, we have that $\{ V_1, \ldots, V_r \} \subseteq \scrD(\CW)$. \\
Thus, we conclude:
$$B = B' \cap W = \bigcup_{j=1}^t \bigcup_{\ell = 1}^{r} \left( A_j^* \cap V_{\ell} \right) \cap \left( B_j \cap U_{\ell} \right).$$
Hence:
$$\overline{B}^z = \overline{B' \cap W}^z = \bigcup_{j=1}^t \bigcup_{\ell=1}^r \overline{\left( A_j^* \cap V_{\ell} \right) \cap \left( B_j \cap U_{\ell} \right)}^z.$$
As the decomposition of the algebraic variety $\overline{B}^z$ as finite union of irreducible sets admits a refinement to obtain the decomposition of $\overline{B}^z$ into its irreducible components, we conclude that there exist $j \in \{1, \ldots, t \}$ and $\ell \in \{1, \ldots, r \}$ such that $A$ is an irreducible component of $\overline{\left( A_j^* \cap V_{\ell} \right) \cap \left( B_j \cap U_{\ell} \right)}^z$. \\
Next, observe that $A_j^* \cap V_{\ell} \subseteq \A^n(K)$ is an algebraic variety and $B_j \cap U_{\ell} \subseteq \A^n(K)$ is a Zariski open subset. Assume that the following is a decomposition of $A_j^* \cap V_{\ell}$ into its irreducible components:
$$A_j^* \cap V_{\ell} = Z_1 \cup \cdots \cup Z_p.$$
Then, for all $q$, $1 \leq q \leq p$, we have that $Z_q \cap (B_j \cap U_{\ell})$ is either empty or dense in $Z_q$. Up to some reordering of the indices, assume that $u$, with $1<u \leq p$, is such that:
$$Z_q \cap \left( B_j \cap U_{\ell} \right) \neq \emptyset \; \text{ if and only if} \; 1 \leq q \leq u.$$
Then,
$$\overline{\left( A_j^* \cap V_{\ell} \right) \cap \left( B_j \cap U_{\ell} \right)}^z = Z_1 \cup \cdots \cup Z_u,$$
and, hence, $A \in \{ Z_1, \ldots, Z_u \}$. The claim follows by choosing $A^* = A_j^*$ and $V = V_{\ell}$.
\end{proof-claim}

Finally, using the previous claim, we are in a position to prove the lemma. First of all, for every $k<d$, we define the following mapping:
$$\begin{matrix}
\Phi_k: &\scrD(k)& \longrightarrow & \left(\bigcup_{r=k+1}^d \scrD(r)\right)\times \scrD(\CW) \\
& A & \longmapsto & (A^{*}, V)
\end{matrix},$$
given by the following rule: \\ For every $A \in \scrD(k)$, $k<d$, we associate a pair $(A^*, V)$ such that $A$ is an irreducible component of $A^* \cap V$. \smallskip

Let us define the following quantity:
$$\CD(k) := \sum_{r=k}^d \left( \sum_{A \in \scrD(r)} \deg(A) \right).$$
Then, we prove by induction on $m=d-k$ that the following inequality holds:
\begin{equation} \label{DesigualdadInduccionCotasD}
\CD(k) = \CD(d-m) \leq \deg(C) (1+R)^m,
\end{equation}
where $R:= \deg(\CW)$. \\
Note that:
\begin{equation} \label{IgualdadInduccionCotasD}
\CD(k) = \CD(k+1) + \sum_{A \in \scrD(k)} \deg(A).
\end{equation}
The case $m=0$ is obvious since $\scrD(d)$ is the class of all irreducible components of the locally closed set $C$. Hence,
$$\sum_{A \in \scrD(d)} \deg(A) = \deg_z (C) = \deg(C).$$
Assume now that Inequality (\ref{DesigualdadInduccionCotasD}) holds for $m-1$ and let us prove it
 for $m=d-k \geq 1$. Observe that the following inequality holds:
$$\sum_{A \in \scrD(k)} \deg(A) \leq \sum_{(A^*, V) \in \left( \bigcup_{r=k+1}^d \scrD(r) \times \scrD (\CW) \right)} \sum_{A \in \Phi_k^{-1} (A^*, V)} \deg(A).$$

Now, from the previous claim, $A$ is an irreducible component of $A^* \cap V$ for every  $A \in \Phi_k^{-1} (A^*, V)$. Thus, by the Bézout's Inequality of \cite{Heintz83} (see Theorem \ref{Bezout-localmente-cerrados:teor}) and by the definition of $\Phi_k$, we obtain:
$$\sum_{A \in \Phi_k^{-1} (A^*, V) } \deg(A) \leq \deg(A^* \cap V) \leq \deg(A^*) \cdot \deg(V).$$
Therefore, we have:
$$\sum_{A \in \scrD(k)} \deg(A) \leq \sum_{(A^*, V) \in \left( \bigcup_{r=k+1}^d \scrD(r) \times \scrD (\CW) \right)} \deg(A^*) \cdot \deg(V).$$
Hence,
\begin{equation} \label{PasoFinalCotasD:eqn}
\sum_{A \in \scrD(k)} \deg(A) \leq \left(\sum_{A^* \in \left( \bigcup_{r=k+1}^d \scrD(r) \right)} \deg(A^*) \right) \cdot \left( \sum_{V \in \scrD(\CW)} \deg(V) \right).
\end{equation}
Now, applying Lemma \ref{GradoWSumaComponentesIrreducibles:lemma}, we obtain:
$$\sum_{V \in \scrD(\CW)} \deg(V) = \deg(\CW) = R.$$
Observe that:
 $$\sum_{A^* \in \left( \bigcup_{r=k+1}^d \scrD(r) \right)} \deg(A^*) = \sum_{r=k+1}^d \left( \sum_{A^* \in \scrD(r)} \deg(A^*) \right) = \CD (k+1).$$
Thus, Inequality (\ref{PasoFinalCotasD:eqn}) becomes:
$$\sum_{A \in \scrD(k)} \deg(A) \leq \CD(k+1) \cdot R.$$
In conclusion, combining Identity (\ref{IgualdadInduccionCotasD}) with the previous inequality, we obtain:
$$\CD(k) = \CD(k+1) + \sum_{A \in \scrD(k)} \deg(A) \leq \CD(k+1) +  \CD(k+1) \cdot R = \CD(k+1) \cdot (1+R)$$
Applying the induction hypothesis for $m-1=d-(k+1)$, we conclude:
$$\CD(k)  \leq  \deg(C) (1+R)^{m-1} \cdot (1+R) = \deg(C) (1+R)^m.$$
Hence, the lemma follows when $C$ is locally closed. From our discussion at the beginning of this proof, the lemma also follows when $C \subseteq \A^n(K)$ is any constructible set.
\end{proof}

\begin{corollary} \label{CotasDSuma:corol}
With the same notations and assumptions as in the previous lemma, we have:
$$\sum_{A \in \scrD(\CW,C)} \deg(A) \leq \deg_{\rm lci} (C) \left( 1+ \sum_{i=1}^s \deg(W_i) \right)^{\dim(C)}.$$
\end{corollary}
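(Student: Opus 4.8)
The plan is to deduce the statement directly from Lemma \ref{cotasD:lemma}: the exponent $\dim(C)$ and the leading factor $\deg_{\rm lci}(C)$ are already in the desired shape, so it suffices to replace $\deg(\CW)$ by the larger quantity $\sum_{i=1}^s \deg(W_i)$ and invoke the monotonicity of the map $t \mapsto \deg_{\rm lci}(C)(1+t)^{\dim(C)}$ on $t \geq 0$. Thus the only point to check is the inequality
$$\deg(\CW) \leq \sum_{i=1}^s \deg(W_i).$$

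To establish this, I would first invoke Lemma \ref{GradoWSumaComponentesIrreducibles:lemma}, which applies since $\CW = \{W_1, \ldots, W_s\}$ is a finite family of locally closed subsets, to obtain
$$\deg(\CW) = \sum_{V \in \scrD(\CW)} \deg(V),$$
where, by Definition \ref{GradoZariskiFamiliaConstructibles:def}, $\scrD(\CW) = \bigcup_{i=1}^s \scrD(\{W_i\})$ is the set of all irreducible components of the Zariski closures $\overline{W_1}^z, \ldots, \overline{W_s}^z$. Since this is a union, each $V \in \scrD(\CW)$ lies in at least one $\scrD(\{W_i\})$, hence is counted at least once on the right-hand side of
$$\deg(\CW) = \sum_{V \in \scrD(\CW)} \deg(V) \leq \sum_{i=1}^s \left( \sum_{V \in \scrD(\{W_i\})} \deg(V) \right) = \sum_{i=1}^s \deg(\overline{W_i}^z),$$
where the last equality uses that the degree of an algebraic variety is the sum of the degrees of its irreducible components (Definition \ref{grado-localmente-cerrados:def}). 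Finally, $\deg(\overline{W_i}^z) = \deg(W_i)$ for each $i$ by Proposition \ref{propiedades-basicas-grado-lc:prop}(i), which gives the asserted inequality. Equivalently, one can read off $\deg(\CW) \leq \sum_{i=1}^s \deg_{\rm lci}(W_i)$ from the second inequality in Remark \ref{DesigualdadesMaxDeg} and then use $\deg_{\rm lci}(W_i) = \deg(W_i)$ for locally closed $W_i$, by Proposition \ref{grado-igual-localmente-cerrado:prop}.

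Combining the above with Lemma \ref{cotasD:lemma} yields
$$\sum_{A \in \scrD(\CW, C)} \deg(A) \leq \deg_{\rm lci}(C)\bigl(1 + \deg(\CW)\bigr)^{\dim(C)} \leq \deg_{\rm lci}(C)\Bigl(1 + \sum_{i=1}^s \deg(W_i)\Bigr)^{\dim(C)},$$
which is precisely the claimed bound. I do not expect any genuine obstacle: all the mathematical weight sits in Lemma \ref{cotasD:lemma}, and this corollary is merely a cosmetic reformulation in which the family degree $\deg(\CW)$ is replaced by the more transparent sum $\sum_{i=1}^s \deg(W_i)$ that is convenient in the later applications. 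The only thing to be careful about is the bookkeeping of multiplicities when passing from $\scrD(\CW)$ to the union of the $\scrD(\{W_i\})$'s, which can only inflate the sum of degrees and hence goes in the right direction.
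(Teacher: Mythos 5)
Your proof is correct and follows the paper's approach: the paper's own proof consists of the single observation that $\deg(\CW) \leq \sum_{i=1}^{s}\deg(W_i)$, after which Lemma \ref{cotasD:lemma} gives the result by monotonicity. You have simply unwound that one-line observation (via Lemma \ref{GradoWSumaComponentesIrreducibles:lemma} and the union $\scrD(\CW)=\bigcup_i \scrD(\{W_i\})$, or alternatively via Remark \ref{DesigualdadesMaxDeg}), which is fine but adds nothing essentially new.
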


\begin{proof}
Observe simply that:
$$\deg(\CW) \leq \sum_{r=1}^s \deg(W_i).$$
\end{proof}

\subsection{Combinatorial bounds}

\begin{definition}Let $\CW = \{ W_1, \ldots, W_s \}$ be a finite family of locally closed subsets of $\A^n (K)$. We define the quantity $\widetilde{\deg} (\CW)$ as the minimum of the following set:

\begin{equation*}
\begin{split}
\{ \deg(\widetilde{\CW}) \: : \;  \widetilde{\CW} = \{A_1, \ldots, A_s, B_1, \ldots, B_s \},  \; A_i &, B_i \subseteq \A^n(K) \; {\hbox {\rm Zariski closed}}, \; 1 \leq i \leq s, \\
 & {\hbox {\rm and }} W_i = A_i \cap (\A^n(K) \setminus B_i), \; 1 \leq i \leq s \}. 
\end{split}
\end{equation*}
\end{definition}

Note that if $\CW = \{ W_1, \ldots, W_s \}$ consists only of algebraic varieties, taking $A_i = W_i$ and $B_i = \emptyset$ for each $i$, $1 \leq i \leq s$, we conclude $\widetilde{\deg} (\CW) = \deg(\CW)$.

\begin{proposition} \label{CotaCeldasNoVaciasDegW:prop}
Let $C \subseteq \A^n(K)$ be a constructible subset. Let $\CW = \{ W_1, \ldots, W_s \}$ be a finite family of locally closed subsets of $\A^n(K)$. Let $\CH$ be a finite family of constructible subsets of $\A^n(K)$. Assume that:
$$\restr{\CH}{C} := \{ H \cap C \; : \; H \in \CH \} \subseteq \scrB(\CW, C).$$
Then, we have:
$$\sharp(\scrZ(\CH, C)) \leq \deg_{\rm lci} (C) (1+ \widetilde{\deg}(\CW))^{\dim(C)}.$$
In the case where $\CW$ consists only of Zariski closed sets, we have:
$$\sharp(\scrZ(\CH, C)) \leq \deg_{\rm lci} (C) (1+ \deg(\CW))^{\dim(C)}.$$

\end{proposition}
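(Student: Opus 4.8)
The plan is to prove first the statement for a family $\CW$ consisting only of Zariski closed sets, and then to deduce the general case. For the closed case, I would first observe that since $\scrB(\CH,C)$ is by definition the Boolean algebra of subsets of $C$ generated by $\restr{\CH}{C}$, the hypothesis $\restr{\CH}{C}\subseteq\scrB(\CW,C)$ gives $\scrB(\CH,C)\subseteq\scrB(\CW,C)$. As every element of $\scrB(\CW,C)$ is a union of $\CW$-cells and $\scrZ(\CW,C)$ is a partition of $C$, every $\CH$-cell in $C$ is a disjoint union of $\CW$-cells; hence every non-empty $\CH$-cell contains a non-empty $\CW$-cell, and distinct $\CH$-cells, being disjoint, contain distinct ones. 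This yields $\sharp(\scrZ(\CH,C))\leq\sharp(\scrZ(\CW,C))$, so it suffices to bound $\sharp(\scrZ(\CW,C))$ when $\CW=\{W_1,\ldots,W_s\}$ consists of closed sets.

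Second, I would set up a key injective assignment from non-empty $\CW$-cells to irreducible varieties. Write a non-empty $\CW$-cell in $C$ as $X_M = C\cap\bigcap_{i\in M}W_i\cap\bigcap_{i\notin M}(C\setminus W_i)$ for $M\subseteq\{1,\ldots,s\}$, and put $Y_M := C\cap\bigcap_{i\in M}W_i$, so that $Y_M\in\scrT(\CW,C)$ and $\emptyset\neq X_M\subseteq Y_M$. Let $\overline{Y_M}^z = V_1\cup\cdots\cup V_r$ be the decomposition into irreducible components; a short density argument (using $\overline{Y_M}^z = \bigcup_j\overline{Y_M\cap V_j}^z$, each closure being contained in the corresponding component, together with irreducibility) shows $Y_M\cap V_j$ is Zariski dense in $V_j$ for every $j$. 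Since $X_M=\bigcup_j(X_M\cap V_j)$, I choose $j$ with $X_M\cap V_j\neq\emptyset$ and assign to $X_M$ the variety $V:=V_j$. The crucial point is that $M$ is recovered from $V$: for $i\in M$ one has $V = \overline{Y_M\cap V}^z\subseteq\overline{W_i}^z = W_i$ because $W_i$ is closed; conversely, if $V\subseteq W_i$ with $i\notin M$, then $\emptyset\neq X_M\cap V\subseteq(C\setminus W_i)\cap W_i=\emptyset$, a contradiction. Hence $M=\{i : V\subseteq W_i\}$, so distinct non-empty cells correspond to distinct index sets $M$, hence to distinct varieties $V$, and the assignment is injective. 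Each such $V$ is an irreducible component of $\overline{Y_M}^z$ with $Y_M\in\scrT(\CW,C)$, i.e., $V\in\scrD:=\scrD(\scrT(\CW,C))$.

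Third, I would conclude: $\sharp(\scrZ(\CW,C))\leq\sharp(\scrD)\leq\sum_{A\in\scrD}\deg(A)$ since every $\deg(A)\geq 1$, and Lemma \ref{cotasD:lemma} gives $\sum_{A\in\scrD}\deg(A)\leq\deg_{\rm lci}(C)(1+\deg(\CW))^{\dim(C)}$, which establishes the closed case. For the general case, let $\widetilde{\CW}=\{A_1,\ldots,A_s,B_1,\ldots,B_s\}$ be a family of Zariski closed sets realizing $\widetilde{\deg}(\CW)=\deg(\widetilde{\CW})$ with $W_i = A_i\cap(\A^n(K)\setminus B_i)$. Then $W_i\cap C = (A_i\cap C)\setminus(B_i\cap C)\in\scrB(\widetilde{\CW},C)$, so $\scrB(\CW,C)\subseteq\scrB(\widetilde{\CW},C)$ and therefore $\restr{\CH}{C}\subseteq\scrB(\widetilde{\CW},C)$. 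Applying the already proven closed case to the pair $(\CH,\widetilde{\CW})$ gives $\sharp(\scrZ(\CH,C))\leq\deg_{\rm lci}(C)(1+\deg(\widetilde{\CW}))^{\dim(C)}=\deg_{\rm lci}(C)(1+\widetilde{\deg}(\CW))^{\dim(C)}$, as desired.

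The main obstacle I anticipate is the injectivity in the second step, namely that the positive index set $M$ of a non-empty cell can be read off from a single irreducible component $V$ of $\overline{Y_M}^z$ meeting the cell, via $M=\{i:V\subseteq W_i\}$. This is exactly where closedness of the $W_i$ is used (to pass from $\overline{Y_M\cap V}^z\subseteq\overline{W_i}^z$ to $V\subseteq W_i$), and it is the reason the general case must be routed through the doubled family $\widetilde{\CW}$ of closed sets rather than argued directly. Everything else — the density remark, the reduction $\scrZ(\CH,C)\to\scrZ(\CW,C)$, and the final counting — is routine, and the dependence on $\dim(C)$ and $\deg_{\rm lci}(C)$ for arbitrary constructible $C$ is already packaged inside Lemma \ref{cotasD:lemma}.
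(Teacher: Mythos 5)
Your proof is correct. It shares the paper's overall scaffolding---reduce to bounding $\sharp(\scrZ(\CW,C))$ via the observation that $\scrB(\CH,C)\subseteq\scrB(\CW,C)$ implies $\sharp(\scrZ(\CH,C))\leq\sharp(\scrZ(\CW,C))$ (Claim~\ref{AcotacionCeldasBooleana:claim}), route everything through $\scrD(\scrT(\CW,C))$ and invoke Lemma~\ref{cotasD:lemma}, then pass from closed $\CW$ to general locally closed $\CW$ via the doubled family $\widetilde{\CW}$---but your key injection argument is genuinely different and shorter. The paper injects $\scrZ(\CW,C)$ into $\scrD(\scrT(\CW,C))$ through two nested claims inside Claim~\ref{CeldasLocalmenteCerradoVariedades:claim}: it first shows that every irreducible component of a cell's Zariski closure is already an element of $\scrD(\scrT(\CW,C))$, and then, using disjointness of cells together with density of their open parts in the common component, that each cell owns an ``exclusive'' irreducible component of its own closure. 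Because those arguments manipulate locally closed irreducible decompositions of the cell and of the intersection, the paper first assumes $C$ locally closed and recovers the constructible case by a separate summation step (Claim~\ref{CeldasDescomposicionLCI:claim}). You instead attach to a non-empty cell $X_M$ any irreducible component $V$ of $\overline{Y_M}^z$ meeting $X_M$, and recover $M$ directly as $M=\{i:V\subseteq W_i\}$, using only that the $W_i$ are Zariski closed; this index-recovery injection sidesteps both the exclusive-component lemma and the locally-closed intermediate case, since Lemma~\ref{cotasD:lemma} is already stated for constructible $C$. Two small remarks: the density argument you invoke for the direction $i\in M\Rightarrow V\subseteq W_i$ is superfluous, since $V\subseteq\overline{Y_M}^z\subseteq W_i$ already holds whenever $W_i$ is closed; and your approach is tied to closedness of the $W_i$ in a way the paper's is not, which is precisely why you, like the paper, must route the general locally closed case through $\widetilde{\CW}$ rather than arguing it directly.
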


\begin{proof}
First of all, observe that $\scrZ(H,C)$ depends only of its intersections with $C$. Thus, $$\scrZ (\restr{\CH}{C},C) = \scrZ(\CH,C),$$ and we may assume that the constructible subsets of $\CH$ are also constructible subsets of $C$. \\
We start by proving the following claim, which relates subsets of indices to cells:

\begin{claim} \label{BiyeccionIndiceCelda:claim}
Let $C \subseteq \A^n(K)$ be a constructible set and $\CH = \{ H_1, \ldots, H_t \}$ a finite family of constructible sets. Then, the following mapping is a bijection:

\begin{equation} \label{SubconjuntoCelda:eqn}
\begin{matrix}
& \{ T \subseteq [t] \; : \; Y_T \neq \emptyset \} & \longrightarrow & \scrZ(\CH,C) \\
& T & \longmapsto & Y_T
\end{matrix},
\end{equation}
where:
$$Y_T := C \cap \left( \bigcap_{k \in T} H_k \right) \cap \left( \bigcap_{\ell \in [t] \setminus T} \left( \A^n(K) \setminus H_{\ell} \right) \right).$$
\end{claim}

\begin{proof-claim}
Just observe that if $T$ and $T'$ are two distinct subsets of $[t]$, with $Y_T \neq \emptyset$ and $Y_{T'} \neq \emptyset$, then $Y_T \cap Y_{T'} = \emptyset$ because both are $\CH-$cells in $C$. Hence, if $T \neq T'$, then $Y_T \neq Y_{T'}$, and the mapping in Identity (\ref{SubconjuntoCelda:eqn}) is injective. Since this mapping is obviously surjective, it is a bijection.
\end{proof-claim}

Next, we bound the number of $\CH$-cells in $C$ by the number of $\widetilde{\CW}$-cells in $C$ for any class of constructible sets $\widetilde{\CW}$ such that $\CH \subseteq \scrB(\widetilde{W}, C)$:

\begin{claim} \label{AcotacionCeldasBooleana:claim}
With the same assumptions and notations as in the statement of the proposition, let $\widetilde{\CW} = \{V_1, \ldots, V_r \}$ be any class of constructible sets such that $\CH \subseteq \scrB(\widetilde{W}, C)$. Then, we have:
\begin{equation}  \label{AcotacionCeldasBooleana:eqn}
\sharp(\scrZ(\CH,C)) \leq \sharp(\scrZ(\widetilde{\CW},C)).
\end{equation}
\end{claim}

\begin{proof-claim}
Assume that $\CH = \{H_1, \ldots, H_t \}$. For every $T \subseteq [t]$, let us consider the $\CH$-cell in $C$ associated with $T$:
$$Y_T := C \cap \left( \bigcap_{k \in T} H_k \right) \cap \left( \bigcap_{\ell \in [t] \setminus T} \left( \A^n(K) \setminus H_{\ell} \right) \right).$$
As $\CH \subseteq \scrB(\widetilde{\CW}, C)$, for every $T \subseteq [t]$ such that $Y_T \neq \emptyset$, there exists $L(T) \subseteq 2^{[r]}$ such that:
$$Y_T = \bigcup_{J \in L(t)} C \cap \left( \bigcap_{i \in J} V_i \right) \cap \left( \bigcap_{j \in [r] \setminus J} \left( \A^{n} (K) \setminus V_j \right) \right).$$
For every $J \subseteq [r]$, let us denote by $X_J \in \scrZ(\widetilde{W},C)$ the constructible set:
$$X_J := C \cap \left( \bigcap_{i \in J} V_i \right) \cap \left( \bigcap_{j \in [r] \setminus J} \left( \A^n (K) \setminus V_j \right) \right). $$
For every $T \subseteq [t]$ such that $Y_T \neq \emptyset$, there exists at least one $J(T) \in L(T)$ such that $X_{J(T)} \neq \emptyset$. This allows us to build up the following mapping:
\begin{equation} 
\begin{matrix}
& \{ T \subseteq [t] \; : \; Y_T \neq \emptyset \} & \longrightarrow & \{ J \subseteq [r] \; : \; X_{J} \neq \emptyset \}  \\
& T & \longmapsto & J(T)
\end{matrix}.
\end{equation}
Next, we prove that it is an injective mapping. For if $T$ and $T'$ are two distinct subsets of $[t]$ such that $Y_{T} \neq \emptyset$, $Y_{T'} \neq \emptyset$ and $J(T) = J(T')$, we would have:
$$X_{J(T)} = X_{J(T')} \subseteq Y_T \cap Y_{T'} = \emptyset,$$
which contradicts that $X_{J(T)} = X_{J(T')} \neq \emptyset$. Hence, we conclude Inequality (\ref{AcotacionCeldasBooleana:eqn}) by simply applying Claim \ref{BiyeccionIndiceCelda:claim}.

\end{proof-claim}

Now, we will prove the proposition for the case in which $C$ is locally closed and $\CW$ is a finite family of algebraic varieties: 

\begin{claim} \label{CeldasLocalmenteCerradoVariedades:claim}
Let $C \subseteq \A^n (K)$ be a locally closed set, $\CW = \{ W_1, \ldots, W_s \}$ a finite family of algebraic varieties of $\A^n(K)$ and $\CH$ a finite family of constructible subsets of $\A^n(K)$. Assume that $\CH \subseteq \scrB(\CW, C)$. Then, we have:
$$\sharp(\scrZ(\CH, C))\leq \deg(C)(1+ \deg (\CW) )^{\dim(C)},$$
where $\deg(C) = \deg_{\pi}(C) = \deg_{\rm lci} (C)$ and $\deg(\CW) = \widetilde{\deg}(\CW)$.
\end{claim}
\begin{proof-claim}
We consider the following classes of sets:
$$\scrD(\scrZ(\CW, C)) :=\{ V \subseteq \A^n (K) \; :\; \exists X\in \scrZ(\CW, C), \;
{\hbox{\rm $V$ is an irreducible component of $\overline{X}^z$}}\},$$
$$\scrT(\CW, C):= \left\{ X\subseteq C\; :\; \exists S\subseteq \CW, \; X= C \cap\left(\bigcap_{W\in S} W\right) \right\}$$
and
$$\scrD(\scrT(\CW, C)):=\{V\subseteq \A^n(K) \; :\; \exists X\in \scrT(\CW, C),\;{\hbox {\rm $V$ is an irreducible component of $\overline{X}^z$}}\}.$$

The following claim proves that $\scrD(\scrZ(\CW, C))$ is a subset of $\scrD(\scrT(\CW, C))$:

\begin{claimNested}
With the above notations, we have:
\begin{equation}\label{entre-celdas-interseccion:eqn}
\scrD(\scrZ(\CW, C)) \subseteq \scrD(\scrT(\CW, C)).
\end{equation}
\end{claimNested}
\begin{proof-claim}
 Let $V\in \scrD(\scrZ(\CW, C))$ be an irreducible component of  $\overline{X}^z$, where $X\in \scrZ(\CW, C)$. We want to prove that $V$ is an irreducible component of some $\overline{Y}^z$ with $Y \in \scrT(\CW, C)$. Since $X\in \scrZ(\CW, C)$, there exists  some subset $S\subseteq [s]$ such that:
$$X:=C \cap \left( \bigcap_{i\in S} W_i\right)\cap \left( \bigcap_{j\not\in S} (C \setminus W_j) \right)\not=\emptyset.$$
Next, we consider the following locally closed set and its decomposition into locally closed irreducible components:
$$Y:=C \cap \left( \bigcap_{i\in S} W_i\right):=A_1\cup \cdots \cup A_k,$$
where $A_i = U_i \cap V_i$ is locally closed irreducible, with $U_i$ open and $V_i$ closed irreducible with respect to the Zariski topology. Observe that $Y\in \scrT(\CW, C)$. As every non-empty Zariski open subset of an irreducible algebraic variety is dense in the variety, we have that $\overline{A_i}^z = V_i$ for $1 \leq i \leq k$. Therefore, we have the following decomposition of $\overline{Y}^z$ into irreducible components:
$$\overline{Y}^z = V_1 \cup \cdots \cup V_k.$$

Note that we can rewrite the set $X$ using the sets we have just defined as follows:
$$X = Y \cap \left( \bigcap_{j \not\in S} (C \setminus W_j) \right) = (A_1 \cup \cdots \cup A_k) \cap \left( \bigcap_{j \not\in S} (C \setminus W_j) \right) =$$ 
$$= \bigcup_{i=1}^{k} \left( A_i \cap \left( \bigcap_{j \not\in S} (C \setminus W_j) \right) \right) = \bigcup_{i=1}^k B_i,  $$
where :
$$B_i:= A_i \cap \left( \bigcap_{j \not\in S} (C \setminus W_j) \right).$$
We have just two cases for every $i$, $1\leq i \leq k$:
\begin{itemize}
\item Case $1$:
$$B_i:=A_i\cap \left( \bigcap_{j\not\in S} (C \setminus W_j) \right)\neq \emptyset,$$
then, $B_i$ is an open subset in $\overline{A_i}^z$ for the Zariski topology.
\item Case $2$:
$$B_i:=A_i\cap \left( \bigcap_{j\not\in S} (C \setminus W_j) \right) = \emptyset.$$
\end{itemize}
Up to some reordering of the indices, assume that $r\leq k$ is such that $B_i\not=\emptyset$ if and only if $1\leq i \leq r$. Then, we have:
$$X = \bigcup_{i=1}^k B_i = \bigcup_{i=1}^r B_i.$$
Since the closure of a finite union is the union of the closures, we conclude:
$$\overline{X}^z = \bigcup_{i=1}^{r} \overline{B_i}^z.$$
As every non-empty Zariski open subset of an irreducible algebraic variety is dense in the variety, we get: 
$$\overline{B_i}^z = \overline{A_i\cap \left( \bigcap_{j\not\in S} (C \setminus W_j) \right)}^z = \overline{A_i}^z = V_i.$$
Thus, we have the following decomposition of $\overline{X}^z$ into irreducible components:
$$\overline{X}^z = \bigcup_{i=1}^{r} \overline{B_i}^z = \bigcup_{i=1}^{r} V_i.$$
Hence, if $V_{\ell}$ is an irreducible component of $\overline{X}^z$ for some $\ell \in [r]$, then $V_{\ell}$ is an irreducible component of $\overline{Y}^z$. Thus, taking $V = V_{\ell}$, we conclude $V\in \scrD(\scrT(\CW, C))$ and the inclusion at Identity (\ref{entre-celdas-interseccion:eqn}) holds.
\end{proof-claim}

Next, we prove that the cardinality of $\scrZ(\CW, C)$ is less or equal than the cardinality of $\scrD(\scrZ(\CW, C))$:

\begin{claimNested}
With the above notations, we have:
\begin{equation}\label{menos-celdas-que-componentes:eqn}
\sharp\left(\scrZ(\CW, C)\right) \leq \sharp \left(\scrD(\scrZ(\CW, C)) \right).
\end{equation}
\end{claimNested}

\begin{proof-claim}
Recall that $\scrZ(\CW, C)$ is a partition of $C$ into locally closed subsets. We are going to prove that for every non--empty  $X\in \scrZ(\CW, C)$, there exists an irreducible component $V_X$ of $\overline{X}^z$ such that $V_X$ is not an irreducible component of any other $Y\in \scrZ(\CW, C)$ with $Y\not= X$. \\
 We proceed by contradiction. Let $X, Y \in \scrZ(\CW, C) $, with $Y \neq X$. Assume that there exists an irreducible component $T$ of $\overline{X}^z$ that is also an irreducible component of $\overline{Y}^z$. Since $T$ is an irreducible component of $\overline{X}^z$, there must be some open Zariski subset $U\subseteq \A^n (K)$ such that $A = T \cap U$ is a locally closed irreducible component of $X$. On the other hand, there exists another open Zariski subset  $U' \subseteq \A^n (K)$ such that $A' = T \cap U'$ is a locally closed irreducible component of $Y$. As $T$ is irreducible, the intersection of two non-empty Zariski open sets in $T$ is non-empty and we have:
$$\emptyset\not= U\cap U' \cap T \subseteq X \cap Y =\emptyset,$$
which contradicts $X\cap Y=\emptyset$. Thus, there does not exist an irreducible component of $\overline{X}^z$, with $X \in \scrZ(\CW, C)$, that is also an irreducible component of any other $\overline{Y}^z$, with $Y \in \scrZ(\CW, C)$ and $Y \neq X$. This proves Inequality (\ref{menos-celdas-que-componentes:eqn}).
\end{proof-claim}

Combining Inequality (\ref{entre-celdas-interseccion:eqn}) and Inclusion (\ref{menos-celdas-que-componentes:eqn}), we obtain:

\begin{equation}\label{de-celdas-a-erzegrad1:eqn}
\begin{split}
\sharp\left(\scrZ(\CW, C)\right) & \leq \sharp\left(\scrD(\scrZ(\CW, C)) \right) 
  \leq \sharp \left(  \scrD(\scrT(\CW, C)) \right) \leq \\
 & \leq \deg\left( \scrD(\scrT(\CW, C)) \right) = \sum_{A \in \scrD(\scrT(\CW, C))} \deg(A).
  \end{split}
\end{equation}

Applying Lemma \ref{cotasD:lemma}, we conclude:
\begin{equation}\label{de-celdas-a-erzegrad2:eqn}
 \sum_{A \in \scrD(\scrT(\CW, C))} \deg(A) \leq \deg(C) \left( 1 + \deg(\CW)\right)^{\dim(C)}.
\end{equation}
As $\CH \subseteq \scrB(\CW, C)$, Claim \ref{AcotacionCeldasBooleana:claim} implies:
\begin{equation} \label{RefinamientoDivisionCeldas:eqn}
\sharp( \scrZ(\CH, C)) \leq \sharp(\scrZ(\CW, C)).
\end{equation}
Combining Inequalities (\ref{de-celdas-a-erzegrad1:eqn}), (\ref{de-celdas-a-erzegrad2:eqn}) and (\ref{RefinamientoDivisionCeldas:eqn}), we finally conclude:
$$\sharp(\scrZ(\CH, C))\leq \deg(C)(1+ \deg(\CW)))^{\dim(C)}.$$
\end{proof-claim}

Now, we bound the number of $\widetilde{\CW}$-cells in a constructible set $C$ by the sum of the $\widetilde{\CW}$-cells of the locally closed irreducible sets in a decomposition of $C$, where $\widetilde{\CW}$ is a finite family of constructible sets.

\begin{claim} \label{CeldasDescomposicionLCI:claim}
Let $C \subseteq \A^n(K)$ be a constructible set and $\widetilde{\CW} = \{ V_1, \ldots, V_t \}$ a finite family of constructible sets in $\A^n(K)$. Assume that the following is a decomposition of $C$ into locally closed irreducible subsets satisfying the claims of Lemma \ref{descomposicion-union-irreducibles-distintos:lemma}:
$$C = C_1 \cup \cdots \cup C_m.$$
Then, we have:
$$\sharp(\scrZ(\widetilde{\CW}, C)) \leq \sum_{i=1}^m \sharp(\scrZ (\widetilde{W}, C_i)).$$
\end{claim}

\begin{proof-claim}
Let $S \subseteq [t]$ be a subset of indices. The $\widetilde{\CW}$-cell in $C$ associated with the subset $S$ is given by the following equality:
$$X_S := C \cap \left( \bigcap_{i \in S} V_i \right) \cap \left( \bigcap_{j \in [t] \setminus S} \left( \A^n(K) \setminus V_j \right) \right) .$$
For every $i$, $1 \leq i \leq m$, let us define the constructible subset $X_S^{(i)} \subseteq C_i$ by the following equality:
$$X_S^{(i)} := C_i \cap \left( \bigcap_{i \in S} V_i \right) \cap \left( \bigcap_{j \in [t] \setminus S} \left( \A^{n}(K) \setminus V_j \right) \right) \subseteq C_i.$$
Obviously, $X_S^{(i)}$ is an $\widetilde{W}-$cell in $C_i$ and $$X_S = \bigcup_{i=1}^m X_S^{(i)}.$$ Thus, for every $S \subseteq [t]$ such that $X_S \neq \emptyset$, we may choose some $i(S) \in [m]$ such that $X_S^{(i(S))} \neq \emptyset$. This defines the following mapping:

\begin{equation} 
\begin{matrix}
& \{ S \subseteq [t] \; : \; X_S \neq \emptyset \} & \longrightarrow & \bigcup_{i=1}^m \scrZ(\widetilde{\CW}, C_i) \\
& S & \longmapsto & X_S^{(i(S))}
\end{matrix},
\end{equation}

Next, we prove that it is an injective mapping. For if $S$ and $S'$ are two distinct subsets of $[t]$ such that $X_S \neq \emptyset$, $X_{S'} \neq \emptyset$ and $X_S^{(i(S))} = X_{S'}^{(i(S'))}$, we would have:
$$X_{S}^{i(S)} = X_{S'}^{(i(S'))} \subseteq X_S \cap X_{S'} = \emptyset,$$
which contradicts that $X_{S}^{i(S)} = X_{S'}^{(i(S'))}  \neq \emptyset$. From Claim \ref{BiyeccionIndiceCelda:claim}, we conclude:
$$\sharp(\scrZ(\widetilde{\CW}, C) ) \leq \sharp \left( \bigcup_{i=1}^m \scrZ(\widetilde{W},C_i) \right) \leq \sum_{i=1}^m \sharp(\scrZ (\widetilde{W},C_i)),$$
and the claim follows.
\end{proof-claim}

We are finally in conditions to prove the statement of our proposition. By Claim \ref{AcotacionCeldasBooleana:claim}, since $\CH \subseteq \scrB(\CW, C)$, we have:
$$\sharp(\scrZ (\CH,C)) \leq \sharp(\scrZ(\CW, C)).$$
Let $\widetilde{\CW} = \{A_1, \ldots, A_s, B_1, \ldots, B_s \}$ be a finite family of algebraic varieties such that:
\begin{enumerate}
\item $W_i = A_i \cap (\A^n(K) \setminus B_i)$, $1 \leq i \leq s$,
\item $\widetilde{\deg}(\CW) = \deg(\widetilde{\CW})$.
\end{enumerate}
Also, since $\CW \subseteq \scrB(\widetilde{\CW},C)$, Claim \ref{AcotacionCeldasBooleana:claim} yields:
$$\sharp(\scrZ(\CH, C)) \leq  \sharp(\scrZ(\CW, C))  \leq  \sharp(\scrZ(\widetilde{W}, C)).$$
Now, we consider a decomposition of $C$ into locally closed irreducible subsets that minimizes its $LCI$-degree as in Definition \ref{grado-constructibles:def}, i.e.
$$C = C_1 \cup \cdots C_m,$$
such that:
$$\deg_{\rm lci} (C) = \sum_{i=1}^m \deg(C_i).$$
Because of Claim \ref{CeldasDescomposicionLCI:claim}, we obtain:
\begin{equation} \label{Celdas-DescomposicionLCI:eqn}
\sharp(\scrZ(\CH, C)) \leq \sharp(\scrZ (\widetilde{\CW}, C)) \leq \sum_{i=1}^m \sharp(\scrZ( \widetilde{\CW}, C_i)).
\end{equation}

Next, by Claim \ref{CeldasLocalmenteCerradoVariedades:claim}, as $C_i$ is locally closed and the elements of $\widetilde{\CW}$ are Zariski closed subsets of $\A^n(K)$, we conclude:
\begin{equation} \label{Celdas-Localmnete cerrado-LCI:eqn}
\sharp(\scrZ (\widetilde{\CW}, C_i)) \leq \deg(C_i) (1+ \deg(\widetilde{\CW}))^{\dim(C_i)} \leq \deg(C_i) (1+ \deg(\widetilde{\CW}))^{\dim(C)}.
\end{equation}

Finally, combining Inequalities (\ref{Celdas-DescomposicionLCI:eqn}) and (\ref{Celdas-Localmnete cerrado-LCI:eqn}), we obtain the inequality of the proposition:
$$\sharp(\scrZ(\CH,C)) \leq \left( \sum_{i=1}^m \deg(C_i) \right) (1+ \deg(\widetilde{\CW}))^{\dim(C)} = \deg_{\rm lci}(C) (1+ \deg(\widetilde{\CW}))^{\dim(C)} = $$
$$= \deg_{\rm lci}(C) (1+ \widetilde{\deg}(\CW))^{\dim(C)}. $$

\end{proof}

The following statement generalizes Theorem $2$ in \cite{Heintz83}:

\begin{theorem}[{\bf Erzeugungsgrad and combinatorial bounds}]\label{cotas-Erzeugunsgrad:teor} Let $C\subseteq \A^n (K)$ be a constructible set and  $\CH$ a finite family of constructible subsets of $\A^n (K)$. Then, we have:
\begin{enumerate}
\item $\sharp(\scrZ(\CH, C))\leq \deg_{\rm lci}(C)(1+ \grad(\CH)))^{\dim(C)}$.
\item $\deg_z(\scrB(\CH, C))\leq \deg_{\rm lci}(C)(1+ \grad(\CH))^{\dim(C)}$.
\item Any $\CH-$definable finite subset of $C$ contains at most $\deg_{\rm lci}(C)(1+ \grad(\CH))^{\dim(C)}$ points.
\item $\sharp(\scrB(\CH, C))\leq 2^{\deg_{\rm lci}(C)(1+ \grad(\CH)) ^{\dim(C)}}$.
\end{enumerate}
\end{theorem}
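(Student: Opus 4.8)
The plan is to deduce all four statements from the combinatorial bounds already in place, namely Proposition \ref{CotaCeldasNoVaciasDegW:prop} and Lemma \ref{cotasD:lemma}, by first replacing the (a priori inaccessible) Erzeugungsgrad with an explicit family of closed generators. Concretely, I would invoke Remark \ref{ErzeugungsgradIrreducible:rmk} to fix a finite family $\CW=\{W_1,\dots,W_r\}$ of irreducible algebraic varieties of $\A^n(K)$ with $\CH\subseteq\scrB(\CW)$ and $\grad(\CH)=\sum_{i=1}^r\deg(W_i)$. Since the $W_i$ are irreducible, $\scrD(\CW)=\CW$ and Lemma \ref{GradoWSumaComponentesIrreducibles:lemma} (or the definition directly) gives $\deg(\CW)=\sum_{W\in\CW}\deg(W)=\grad(\CH)$; moreover, intersecting Boolean combinations with $C$ shows $\restr{\CH}{C}\subseteq\scrB(\CW,C)$ and hence $\scrB(\CH,C)\subseteq\scrB(\CW,C)$.

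For part (i), I would simply apply Proposition \ref{CotaCeldasNoVaciasDegW:prop} to this $\CW$: as it consists of Zariski closed sets, the ``closed'' version of that proposition yields $\sharp(\scrZ(\CH,C))\le\deg_{\rm lci}(C)(1+\deg(\CW))^{\dim(C)}=\deg_{\rm lci}(C)(1+\grad(\CH))^{\dim(C)}$. For part (ii), I would track irreducible components down the chain $\scrD(\scrB(\CH,C))\subseteq\scrD(\scrB(\CW,C))\subseteq\scrD(\scrZ(\CW,C))\subseteq\scrD(\scrT(\CW,C))$, where $\scrT(\CW,C)$ is as in Lemma \ref{cotasD:lemma}: the first two inclusions hold because every $\CW$-definable subset of $C$ is a finite union of $\CW$-cells in $C$ and the irreducible components of a finite union of Zariski closures sit among the components of those closures, while the last inclusion is exactly the first nested claim inside the proof of Claim \ref{CeldasLocalmenteCerradoVariedades:claim} (whose argument only uses that the $W_i$ are closed, not that $C$ is locally closed; alternatively one decomposes $C$ by a minimum $LCI$-degree decomposition $C=C_1\cup\cdots\cup C_m$, applies that claim to each locally closed $C_j$, and sums, as in the proof of Proposition \ref{CotaCeldasNoVaciasDegW:prop}). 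Then Lemma \ref{cotasD:lemma} gives $\deg_z(\scrB(\CH,C))=\sum_{V\in\scrD(\scrB(\CH,C))}\deg(V)\le\sum_{A\in\scrD(\scrT(\CW,C))}\deg(A)\le\deg_{\rm lci}(C)(1+\deg(\CW))^{\dim(C)}$, which is the claimed bound.

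Parts (iii) and (iv) then follow formally. For (iii): a finite $\CH$-definable $F\subseteq C$ lies in $\scrB(\CH,C)$ and is Zariski closed, so $\deg(\overline F^z)=\sharp(F)$ by Proposition \ref{propiedades-basicas-grado-lc:prop}; since $\{F\}$ is a subfamily of $\scrB(\CH,C)$ and $\deg_z$ is monotone under inclusion of families, (ii) yields $\sharp(F)=\deg_z(\{F\})\le\deg_z(\scrB(\CH,C))\le\deg_{\rm lci}(C)(1+\grad(\CH))^{\dim(C)}$. For (iv): $\scrZ(\CH,C)$ is a finite partition of $C$ and every element of $\scrB(\CH,C)$ is a union of members of $\scrZ(\CH,C)$, distinct such unions being distinct sets, so $\scrB(\CH,C)$ injects into $2^{\scrZ(\CH,C)}$ and $\sharp(\scrB(\CH,C))\le 2^{\sharp(\scrZ(\CH,C))}$, and (i) finishes.

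I do not expect a real obstacle: the theorem is a repackaging of the preceding results, so the work is almost entirely bookkeeping. The only step meriting care is part (ii), where one must check that the chain of component-inclusions is valid for an \emph{arbitrary} constructible $C$ — in particular that the nested claim from the proof of Claim \ref{CeldasLocalmenteCerradoVariedades:claim}, stated there under the hypothesis that $C$ is locally closed, does not actually use that hypothesis once $\CW$ is taken to consist of closed (indeed irreducible) varieties; if one prefers to sidestep this, the decomposition-and-sum argument indicated above reduces (ii) cleanly to the locally closed case.
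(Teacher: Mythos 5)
Your proposal is correct and takes essentially the same route as the paper: you fix a family $\CW$ of closed generators realizing the Erzeugungsgrad (the paper does the same, though without insisting on irreducibility), deduce (i) directly from Proposition~\ref{CotaCeldasNoVaciasDegW:prop}, prove (ii) by tracking irreducible components through a chain ending in $\scrD(\scrT(\CW,C))$ and invoking Lemma~\ref{cotasD:lemma} (with the same reduction to the locally closed case via a minimum $LCI$-degree decomposition of $C$), and derive (iii) and (iv) formally from (ii) and (i) respectively. The only cosmetic difference is that in (ii) you route the inclusion through $\scrD(\scrB(\CW,C))$ and $\scrD(\scrZ(\CW,C))$, whereas the paper passes through $\scrD(\scrZ(\CH,C))$ and then compares with $\scrD(\scrT(\CW,C))$; these are equivalent bookkeeping choices.
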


\begin{proof}
Let $\CW=\{W_1,\ldots, W_s\}$ be a finite family of algebraic varieties such that $\CH \subseteq \scrB(\CW)$ and $\grad(\CH) = \deg(\CW)$. Then, we have that:
$$\restr{\CH}{C} := \{ H \cap C \; : \: H \in \CH \} \subseteq \scrB(\CW,C).$$
Hence, from Proposition \ref{CotaCeldasNoVaciasDegW:prop}, we conclude:
$$\sharp\left(\scrZ(\CH, C)\right) \leq  \deg_{\rm lci} (C) (1 + \deg(\CW))^{\dim(C)} = \deg_{\rm lci} (C) (1 + \grad (\CH))^{\dim(C)}$$
and Claim $i)$ follows. \\
As for Claim $ii)$, assume that $C$ is locally closed. Recall that we have:
$$\scrD(\scrB(\CH,C)) = \{ V \subseteq \A^n(K) \; : \; \exists X \in \scrB(\CH, C), \; {\hbox {\rm $V$ is an irreducible component of $\overline{X}^z$}} \},$$
and
$$\deg_{z} (\scrB(\CH, C)) = \sum_{V \in \scrD(\scrB(\CH,C))} \deg(V).$$
Since every element $X \in \scrB(\CH,C)$ is a finite union of $\CH-$cells in $C$, the irreducible components of $\overline{X}^z$ are among the irreducible components of $\overline{Y}^z$, where $Y$ is an $\CH-$cell in $C$. Namely, we have $\scrD(\scrB(\CH,C)) \subseteq \scrD(\scrZ(\CH,C))$ and we conclude:
\begin{equation} \label{CotaGradoZBooleano:eqn}
\deg_z (\scrB(\CH,C)) \leq \sum_{V \in \scrD(\scrZ(\CH,C))} \deg(V).
\end{equation}

As in the proof of Claim $i)$, let $\CW = \{W_1, \ldots, W_s \}$ be a finite family of algebraic varieties in $\A^n(K)$ such that $\grad(\CH) = \deg(\CW)$ and $\CH \subseteq \scrB(\CW, C)$. As in Lemma \ref{cotasD:lemma}, let us define the following class of locally closed subsets:
$$\scrT (\CW, C) = \left\{ X \subseteq \A^n(K) \; : \; \exists S \subseteq \{ 1, \ldots, s \}, \; X = C \cap \left( \bigcap_{i \in  S} W_i \right) \right \}$$
and the following class of irreducible algebraic varieties:
$$\scrD(\scrT (\CW, C)) = \{ V \subseteq \A^n(K) \; : \; \exists X \in \scrT (\CW, C), \; {\hbox {\rm $V$ is an irreducible component of $\overline{X}^z$}} \}.$$
We observe that the following inclusion holds:
\begin{equation} \label{InterseccionesHyW:eqn}
\scrD(\scrT(\CH,C)) \subseteq \scrD(\scrT(\CW,C)).
\end{equation}
In order to prove this inclusion, as $C$ is locally closed, assume that $C = W_0 \cap U$, where $W_0 \subseteq \A^n(K)$ is an algebraic variety and $U \subseteq \A^n(K)$ is a Zariski open subset. Then, let $V \in \scrD(\scrT(\CH,C))$ be an irreducible component of $\overline{X}^z$, where there exists $S \subseteq [s]$ such that:
$$X = C \cap \left( \bigcap_{i \in S} W_i \right) \cap \left( \bigcap_{j \not\in S} \left( \A^n(K) \setminus W_j \right) \right).$$
Let us denote by $\widetilde{U} \subseteq \A^n(K)$ the  Zariski open subset given by:
$$\widetilde{U} := U \cap \left( \bigcap_{j \not\in S} \left( \A^n(K) \setminus W_j \right) \right).$$
Next, consider the algebraic variety $\widetilde{W} \subseteq \A^n(K)$ given by:
$$\widetilde{W} := W_0 \cap \left( \bigcap_{i \in S} W_i \right).$$
Let us consider a decomposition of $\widetilde{W}$ into irreducible components as follows:
\begin{equation} \label{WTildeIrreducibles:eqn}
\widetilde{W} = V_1 \cup \cdots \cup V_m.
\end{equation}
Up to some reordering of the indices, there exists $r$, $1 \leq r \leq m$, such that $\widetilde{U} \cap V_k \neq \emptyset$ if and only if $1 \leq k \leq r$. Then, we have:
$$X = \widetilde{W} \cap \widetilde{U} = \bigcup_{k=1}^r \left( V_k \cap \widetilde{U} \right)$$
and, hence,
$$\overline{X}^z = \bigcup_{k=1}^r V_k.$$
Then, $V \in \{ V_1, \ldots, V_r \}$. \\
On the other hand, let us consider the following locally closed subset $Y \in \scrT(\CW,C)$:
$$Y = C \cap \left( \bigcap_{i \in S} W_i \right) = \widetilde{W} \cap U.$$
Now, consider the decomposition of $\widetilde{W}$ into irreducible components described in Identity (\ref{WTildeIrreducibles:eqn}). Observe that for every $k$, $1 \leq k \leq r$, if $\widetilde{U} \cap V_k \neq \emptyset$, then $U \cap V_k \neq \emptyset$ (because $\widetilde{U} \subseteq U$). Then, up to some reordering of the indices, there is some $t$ such that $1 \leq r \leq t \leq m$ and $V_k \cap U \neq \emptyset$ if and only if $1 \leq k \leq t$. Hence, we conclude that:
$$Y = \bigcup_{k=1}^t (V_k \cap U), \qquad \overline{Y}^z = \bigcup_{k=1}^t V_k.$$
In particular, $V \in \{ V_1, \ldots, V_r \} \subseteq \{ V_1, \ldots, V_t \}$. Moreover, $Y \in \scrT(\CW,C)$ and $\{V_1, \ldots, V_t \} \subseteq \scrD(\scrT(\CW,C))$. Thus, $V \in \scrD(\scrT(\CW,C))$ and inclusion in Identity (\ref{InterseccionesHyW:eqn}) follows. \\
Then, combining Inequality (\ref{CotaGradoZBooleano:eqn}) and Inclusion (\ref{InterseccionesHyW:eqn}), we conclude from Lemma \ref{cotasD:lemma} that if $C$ is locally closed:
\begin{equation} \label{DesigualadesFinalesBooleana:eqn}
\deg_z (\scrB(\CH,C)) \leq \sum_{V \in \scrD(\scrT(\CW,C))} \deg(V) \leq \deg(C) (1+ \deg(\CW))^{\dim(C)}.
\end{equation}

As for the general case, let us consider a minimal degree $LCI-$decomposition of $C$:
$$C = C_1 \cup \cdots \cup C_m.$$
Now, observe that every element in $\scrB(\CH,C)$ is a finite union of elements in $\scrB(\CH, C_1) \cup \cdots \cup \scrB(\CH, C_m)$. Then, if $V$ is an irreducible component of $\overline{X}^z$, where $X \in \scrB(\CH,C)$, there is some $i \in [m]$ and some $Y \in \scrB(\CH, C_i)$ such that $V$ is an irreducible component of $\overline{Y}^z$. In conclusion, we have:
$$\scrD(\scrB(\CH,C)) \subseteq \bigcup_{i=1}^m \scrD(\scrB(\CH, C_i)).$$
Next, from Identity (\ref{DesigualadesFinalesBooleana:eqn}), we conclude:

\begin{equation*}
\begin{aligned}
\deg_{z} (\scrB(\CH,C)) & = \sum_{V \in \scrD(\scrB(\CH,C))} \deg(V) \leq & \\
& \leq \sum_{i=1}^m \left( \sum_{V \in \scrD(\scrB(\CH, C_i))} \deg(V) \right) \leq  & \\
& \leq \sum_{i=1}^m \deg(C_i) \left( 1+ \deg(\CW) \right)^{\dim(C_i)} \leq  & \\
& \leq \deg_{\rm lci} (C) \left(1+ \deg(\CW) \right)^{\dim(C)}.  &
\end{aligned}
\end{equation*}

As $\deg(\CW) = \grad(\CH)$, Claim $ii)$ follows. \\
For Claim $iii)$, observe that every point $x$ of any finite $\CH-$definable subset $X$ is irreducible and then it belongs to $\scrD(\scrB(\CH, C))$. Therefore, applying Claim $ii)$, we obtain that the number of such points satisfies:
$$\sharp\left(X\right)=\deg(X) \leq \deg_z\left(\scrB(\CH, C)\right) \leq \deg_{\rm lci}(C) \left( 1 + \grad(\CH)\right)^{\dim(C)}. $$ 
and Claim $iii)$ of the theorem follows.\\
Finally, for Claim $iv)$, just observe that $\scrZ(\CH, C)$ defines a partition of $C$ such that every constructible set in $\scrB(\CH, C)$ is a finite union of some elements of $\scrZ(\CH, C)$. Thus, applying Claim $i)$, we get:
$$\sharp\left( \scrB(\CH, C)\right) \leq 2^{\sharp\left(\scrZ(\CH, C)\right)}\leq 
2^{\deg_{\rm lci}(C) \left( 1 + \grad(\CH)\right)^{\dim(C)}},$$
and the proof of theorem is finished.
\end{proof}

\section{Vapnik-Chervonenkis Theory over algebraically closed fields} \label{VCTheory:sec}

The notion of Erzeugungsgrad appears in \cite{Heintz83} as a tool to bound the number of non-empty cells definable by finite intersections of a finite family of definable sets. The bounds in \cite{Heintz83} and our generalizations in Proposition \ref{CotaCeldasNoVaciasDegW:prop} and Theorem \ref{cotas-Erzeugunsgrad:teor} yield a kind of \emph{growth function} to bound $\sharp(\scrZ(\CH, C))$ from above. This conceptual approach is quite similar to the \emph{growth function} used in \emph{Computational Learning Theory} with a different purpose. The goal of this section is to make explicit what was apparently similar in both approaches. \\
In Computational Learning Theory, the growth function is used to bound the number of subsets of a finite class that can be described by a finite number of restrictions of classifiers. The key ingredient in the upper bounds of a growth function is \emph{the exponent that appears in the known upper bounds}. This exponent is often referred to in Computational Learning Theory as the \emph{dimension}. Several models and cases of Computational Learning Theory yield different notions of dimension (binary, multiclass or continuous cases). The ultimate goal of the growth function used in Computational Learning Theory is to bound the minimal length of the \emph{data set} required by the learning problem to ensure a high probability of success (or a low probability of error) of any learning algorithm with random input data sets. \\
We will not provide a detailed introduction to Computational Learning Theory, nor will we discuss the wide range of known notions of dimension within this field. \emph{Our goal in this section is simply to highlight some of the connections existing between the Erzeugungsgrad (a notion from Intersection Theory) and growth functions in Computational Learning Theory over algebraically closed fields}.  \emph{Finally, we conclude with some connections between dimension in Learning Theory and Krull dimension in Algebraic Geometry.}

\subsection{Basic notions and results of Vapnik-Chervonenkis Theory} \label{basicNotionsVCTheory:subsec}

The aim of this subsection is to introduce the basic terminology and results from Computational Learning Theory that we will use in our discussion.\\
Let $Y$ be any set and let $\{0,1\}^Y$ be the class of all functions defined on $Y$ with values in $\{0,1\}$. We refer to $\{0,1\}^Y$ as the class of characteristic functions defined on $Y$. For every subset $F\subseteq Y$, we denote by $\crctc_F \in \{0,1\}^Y$ the characteristic function (sometimes called indicator function) associated to $F$ as subset of $Y$. If $X\subseteq Y$ is a subset of $Y$, for every $\chi\in \{0,1\}^Y$ we denote by $\restr{\chi}{X}\in \{0,1\}^X$ the restriction of $\chi$ to the subset $X$. \\
In the context of Computational Learning Theory, functions that map \emph{input data} to the range $\{ 0,1 \}$ are typically called \emph{(binary) ``classifiers''}. These classifiers are essentially characteristic functions that determine whether an input belongs to a particular class (labeled as 1) or not (labeled as 0). The \emph{Vapnik-Chervonenkis dimension (VC-dimension for short)}, introduced in \cite{VC:paper}, is a ``measure'' of the capacity of a set of classifiers. The significance of the VC-dimension lies in the fact that a class of classifiers with finite VC-dimension is ``learnable'' (see \cite{Valiant}, \cite{Blumer} and many other classical citations in the field). 

\begin{definition}[{\bf Shattering, VC-dimension}] \label{VCDimension:def}
 Let $Y$ be a set and $\scrH\subseteq \{0,1\}^Y$ a family of classifiers. Given a finite subset $X\subseteq Y$, we say that $\scrH$ \emph{shatters} $X$ if for every $F\subseteq X$, there is some classifier $\chi\in \scrH$ such that the restriction to $X$ of $\chi$ equals the restriction to $X$ of $\crctc_F$. Namely, $\scrH$ shatters $X$ if for every $F\subseteq X$ there is some $\chi\in \scrH$ such that:
$$\restr{\chi}{X}= \restr{\crctc_F}{X}.$$
We define the \emph{Vapnik-Chervonenkis dimension} of the family of classifiers $\scrH$ as:
$$\dim_{VC}(\scrH):=\max\{\sharp(X)\; : \; {\hbox {\rm $\scrH$ shatters $X$}}\}.$$
\end{definition}

Now, we introduce the notion of \emph{growth function}, which is widely used in the context of Computational Learning Theory. This concept will play a central role in our discussion.

\begin{definition}[{\bf Growth function}]
Let $Y$ be a set and $\scrH\subseteq \{0,1\}^Y$ a family of classifiers. For every subset $X\subseteq Y$, let $\restr{\scrH}{X}$ be the class of restrictions to $X$ of functions in $\scrH$. Namely,
$$\restr{\scrH}{X} :=\{ \restr{f}{X}\; : \; f\in \scrH\}.$$
The \emph{growth function} $G(\scrH, \cdot)$ of a family  of classifiers $\scrH$ is the function:
$$G(\scrH, \cdot): \N \longrightarrow \N,$$ 
given by the following identity for every positive integer $m\in \N$:
$$G(\scrH, m):=\sup \{ \sharp\left( \restr{\scrH}{X} \right) \; : \; X \subseteq Y, \; \sharp(X)=m\}.$$
\end{definition}

If $d := \dim_{VC}(\scrH)$, then for all $m \leq d$, we have $G(\scrH, m) = 2^m$. The famous Sauer-Shelah-Perles Lemma (cf. \cite{Sauer}, \cite{Shelah}, \cite{Haussler} or \cite{Pardo-Trace} and references therein) proves that if $\scrH$ is a family classifiers, then the growth function $G(\scrH, m)$ always takes finite values and provides an upper bound. 

\begin{lemma}[\bf Sauer-Shelah-Perles Lemma]
Let $Y$ be a set and $\scrH\subseteq \{0,1\}^Y$ a family of classifiers. Let  $d := \dim_{VC}(\scrH)$. If $m>d$, we have the following inequality:
$$ G(\scrH, m) \leq \sum_{i=0}^{d} {m \choose i} \leq \left( \frac{em}{d} \right)^d.$$
\end{lemma}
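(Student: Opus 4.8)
The plan is to first establish the combinatorial bound $G(\scrH,m) \leq \sum_{i=0}^{d} \binom{m}{i}$ and then deduce the closed form $(em/d)^d$ by an elementary estimate. For the combinatorial bound, fix a finite $X \subseteq Y$ with $\sharp(X) = m$ and identify the trace family $\scrF := \restr{\scrH}{X}$ with a family of subsets of $X$ (each classifier with its support in $X$). Since whether a subset $A \subseteq X$ is shattered is decided entirely by restrictions to $A$, the family $\scrF$ shatters $A$ if and only if $\scrH$ does; in particular $\scrF$ shatters no subset of size $> d$. Hence it suffices to prove the purely combinatorial statement: any $\scrF \subseteq 2^X$ that shatters no set of size $> d$ satisfies $\sharp(\scrF) \leq \sum_{i=0}^d \binom{m}{i} = \sharp\{A \subseteq X : \sharp(A) \leq d\}$; taking the supremum over $X$ then gives the bound on $G(\scrH,m)$.

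I would prove this combinatorial statement by induction on $m$, the cases $m = 0$ and $d = 0$ being immediate (for $d = 0$, a family shattering no singleton has at most one element). For the step, pick $x \in X$, put $X' := X \setminus \{x\}$, and define
$$\scrF' := \{ F \setminus \{x\} : F \in \scrF \} \subseteq 2^{X'}, \qquad \scrF'' := \{ G \subseteq X' : G \in \scrF \text{ and } G \cup \{x\} \in \scrF \}.$$
Counting the fibres of the map $F \mapsto F \setminus \{x\}$ — each fibre has one or two elements, and exactly two precisely over the members of $\scrF''$ — yields $\sharp(\scrF) = \sharp(\scrF') + \sharp(\scrF'')$. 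Now $\scrF'$ is a family on the $(m-1)$-element set $X'$ that shatters no set of size $> d$, so $\sharp(\scrF') \leq \sum_{i=0}^d \binom{m-1}{i}$ by induction; and $\scrF''$ shatters no set of size $> d-1$, because if $\scrF''$ shattered $A \subseteq X'$ then $\scrF$ would shatter $A \cup \{x\}$, so $\sharp(\scrF'') \leq \sum_{i=0}^{d-1} \binom{m-1}{i}$. Adding these and applying Pascal's rule $\binom{m-1}{i} + \binom{m-1}{i-1} = \binom{m}{i}$ gives $\sharp(\scrF) \leq \sum_{i=0}^d \binom{m}{i}$.

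For the second inequality, assume $m > d \geq 1$, so $m/d > 1$ and $(m/d)^{d-i} \geq 1$ for $0 \leq i \leq d$. Multiplying each term by $(m/d)^{d-i}$, extending the sum to all $i$ up to $m$, and then using the binomial theorem together with $(1+t)^{1/t} \leq e$:
$$\sum_{i=0}^d \binom{m}{i} \leq \left(\frac{m}{d}\right)^d \sum_{i=0}^m \binom{m}{i}\left(\frac{d}{m}\right)^i = \left(\frac{m}{d}\right)^d \left(1 + \frac{d}{m}\right)^m \leq \left(\frac{m}{d}\right)^d e^d = \left(\frac{em}{d}\right)^d.$$

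I do not expect a genuine obstacle here, as this is a classical lemma; the point demanding most care is the inductive step, specifically the fibre count $\sharp(\scrF) = \sharp(\scrF') + \sharp(\scrF'')$ and the dimension drop $\dim_{VC}(\scrF'') \leq d - 1$, which is the only place where the no-shattering hypothesis is actually used. An alternative that avoids the case split is the \emph{down-shifting} (compression) argument: replacing each $F$ containing $x$ by $F \setminus \{x\}$ whenever the latter is not already present preserves $\sharp(\scrF)$ and never creates new shattered sets, and iterating over all $x \in X$ produces a downward-closed family, every member of which it shatters, so its size is at most the number of sets of size $\leq d$. The inductive proof above is, however, more self-contained for the present exposition.
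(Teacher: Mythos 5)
Your proof is correct, but note that the paper does not actually prove this lemma: it is stated as a known result and cited (to Sauer, Shelah, Haussler, and \cite{Pardo-Trace}), so there is no in-text argument to compare against. What you have written is the classical inductive (``shifting on a point'') proof: the decomposition into the trace family $\scrF'$ and the ``doubled'' family $\scrF''$, the fibre count $\sharp(\scrF)=\sharp(\scrF')+\sharp(\scrF'')$, the observation that $\scrF''$ has VC-dimension at most $d-1$ because a set shattered by $\scrF''$ gets $x$ adjoined and is shattered by $\scrF$, and Pascal's rule to close the induction. All of these steps are sound, and the elementary estimate
$$\sum_{i=0}^{d}\binom{m}{i}\le\Bigl(\frac{m}{d}\Bigr)^{d}\sum_{i=0}^{m}\binom{m}{i}\Bigl(\frac{d}{m}\Bigr)^{i}=\Bigl(\frac{m}{d}\Bigr)^{d}\Bigl(1+\frac{d}{m}\Bigr)^{m}\le\Bigl(\frac{em}{d}\Bigr)^{d}$$
is the standard way to get the closed form. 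The only cosmetic point: since you use the inductive hypothesis simultaneously at $(m-1,d)$ for $\scrF'$ and at $(m-1,d-1)$ for $\scrF''$, it is cleanest to phrase the induction as being on $m$ with the claim quantified over all $d\ge 0$ (so the $d=0$ case is absorbed as the bottom of that quantifier, with the convention $\sum_{i=0}^{-1}=0$), rather than presenting $m=0$ and $d=0$ as two parallel base cases of a single induction; as written it is slightly ambiguous what is being inducted on, though the argument is easily repaired. Your remark about the down-shifting alternative is also accurate and would equally serve.
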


The reader will immediately notice the analogies between the upper bounds in the last statement and those in Proposition \ref{CotaCeldasNoVaciasDegW:prop} and Theorem \ref{cotas-Erzeugunsgrad:teor}. In Sauer-Shelah-Perles Lemma, the exponent of the upper bound is the VC-dimension, whereas in Proposition \ref{CotaCeldasNoVaciasDegW:prop} and Theorem \ref{cotas-Erzeugunsgrad:teor}, the exponent of the upper bound is the Krull dimension. We will explore this similarity in forthcoming subsections. \\
Finally, we state the famous Vapnik-Chervonenkis Theorem, which we will use in Section \ref{CTS-Evasivas:sec} to prove that correct test sequences are densely distributed among evasive varieties of positive dimension:

\begin{theorem}[\cite{VC:paper}] \label{teoremaVC:teor}
Let $Y$ be a set and $Z = Y \times \{0,1 \}$. Let $(Z, \scrB, \mu)$ be a probability space, where $\scrB \subseteq 2^Z$ is a $\sigma-$algebra and $\mu: \scrB  \longrightarrow [0,1]$ is a probability distribution defined on $\scrB$. Let $\scrH \subseteq \{0,1 \}^{Y}$ be a family of classifiers. Let $m \in \N$ be a positive integer and $\varepsilon>0$ a positive real number. Assume that:
$$\frac{4 log(2)}{m} \leq \varepsilon^2.$$
Let $\scrB^{\otimes m}$ be the product $\sigma-$algebra in $Z^m$ induced by $(Z, \scrB)$ and let $\mu^{\otimes m}$ be the probability distribution defined on $\scrB^{\otimes m}$ by $\mu: \scrB  \longrightarrow [0,1]$. Let $\ell \; : \; \{0,1 \}^2 \longrightarrow [0,1]$ be any non-negative function (usually called ``error function''). We have:
\begin{equation} \label{teoremaVC:eqn}
Prob_{z\in Z^m} \left[ sup_{f\in \scrH} \left| {{1}\over {m}} \sum_{i=1}^m \ell(f(x_i),y_i) - \mathbb{E}_{(x,y)\in Z}\left[ \ell(f(x),y)\right] \right| >\varepsilon \right] \leq  4 G(\scrH,m)e^{- {{m\varepsilon^2}\over{32}}}.
\end{equation}
\end{theorem}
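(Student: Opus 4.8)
The plan is to run the classical two--step \emph{symmetrization} argument of Vapnik and Chervonenkis: the aim of symmetrization is to replace the supremum over the (possibly infinite) class $\scrH$ by a supremum over a finite set whose cardinality is controlled by the growth function $G(\scrH,m)$, after which a union bound together with a Chernoff--Hoeffding tail estimate finishes the job. Throughout I would abbreviate, for a sample $z=((x_1,y_1),\dots,(x_m,y_m))\in Z^m$, the empirical and population errors of $f\in\scrH$ by $L(f,z):=\tfrac1m\sum_{i=1}^m\ell(f(x_i),y_i)$ and $L(f):=\mathbb{E}_{(x,y)\in Z}[\ell(f(x),y)]$; since $\ell$ takes values in $[0,1]$, each summand lies in $[0,1]$ and $\mathrm{Var}(\ell(f(x),y))\le\tfrac14$. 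As in \cite{VC:paper}, I would assume $\scrH$ to be ``permissible'' (or read every probability as an outer probability), so that the suprema below are measurable and the measurable selection used in the first step exists; this is the standing hypothesis under which the inequality holds.

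\textbf{Step A: ghost--sample symmetrization.} Introduce a second sample $z'\in Z^m$, independent of $z$ and with the same law $\mu^{\otimes m}$, and first prove
\[
\mathrm{Prob}_z\!\Big[\sup_{f\in\scrH}|L(f,z)-L(f)|>\varepsilon\Big]\ \le\ 2\,\mathrm{Prob}_{z,z'}\!\Big[\sup_{f\in\scrH}|L(f,z)-L(f,z')|>\varepsilon/2\Big].
\]
On the event $\{\sup_f|L(f,z)-L(f)|>\varepsilon\}$ pick (measurably) $\widehat f=\widehat f(z)\in\scrH$ realizing this. By Chebyshev's inequality applied to the ghost sample only, $\mathrm{Prob}_{z'}[\,|L(\widehat f,z')-L(\widehat f)|\le\varepsilon/2\mid z\,]\ge 1-\tfrac{1/(4m)}{\varepsilon^2/4}=1-\tfrac{1}{m\varepsilon^2}\ge\tfrac12$, and the last inequality is exactly where the hypothesis $\tfrac{4\log 2}{m}\le\varepsilon^2$ is used (it forces $m\varepsilon^2\ge 4\log 2>2$). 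On the intersection of the two events the triangle inequality yields $|L(\widehat f,z)-L(\widehat f,z')|>\varepsilon/2$, and integrating over $z$ gives the claim.

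\textbf{Steps B and C: sign symmetrization, conditioning, Chernoff.} For $\sigma\in\{-1,1\}^m$ let $T_\sigma$ be the map swapping the $i$--th entries of the two samples whenever $\sigma_i=-1$; since $z,z'$ are i.i.d., $(z,z')\stackrel{d}{=}T_\sigma(z,z')$, and under $T_\sigma$ the difference $L(f,z)-L(f,z')$ becomes $\tfrac1m\sum_i\sigma_i(\ell(f(x_i),y_i)-\ell(f(x'_i),y'_i))$. Averaging over $\sigma$ uniform on $\{-1,1\}^m$, the right--hand side of Step A equals
\[
2\,\mathbb{E}_{z,z'}\,\mathrm{Prob}_\sigma\!\Big[\sup_{f\in\scrH}\big|\tfrac1m\textstyle\sum_{i=1}^m\sigma_i(\ell(f(x_i),y_i)-\ell(f(x'_i),y'_i))\big|>\varepsilon/2\ \Big|\ z,z'\Big].
\]
Now fix $z,z'$. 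Splitting the sum and using $\sup_f(A(f)+B(f))\le\sup_f A(f)+\sup_f B(f)$, the inner event lies inside $\{\sup_f\tfrac1m|\sum_i\sigma_i\ell(f(x_i),y_i)|>\varepsilon/4\}\cup\{\sup_f\tfrac1m|\sum_i\sigma_i\ell(f(x'_i),y'_i)|>\varepsilon/4\}$. Conditionally on $z$, the vector $(\ell(f(x_1),y_1),\dots,\ell(f(x_m),y_m))$ depends on $f$ only through $\restr{f}{\{x_1,\dots,x_m\}}$, hence takes at most $\sharp(\restr{\scrH}{\{x_1,\dots,x_m\}})\le G(\scrH,m)$ distinct values; for each fixed value the $\sigma_i\ell(f(x_i),y_i)$ are independent, zero--mean and supported in an interval of length $\le 2$, so Hoeffding's inequality gives $\mathrm{Prob}_\sigma[\,|\tfrac1m\sum_i\sigma_i\ell(f(x_i),y_i)|>\varepsilon/4\,]\le 2\exp\!\big(-\tfrac{2(m\varepsilon/4)^2}{4m}\big)=2\exp(-m\varepsilon^2/32)$. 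A union bound over the $\le G(\scrH,m)$ values, the identical bound for the ghost half, and then Steps A and B, assemble into
\[
\mathrm{Prob}_z\!\Big[\sup_{f\in\scrH}|L(f,z)-L(f)|>\varepsilon\Big]\ \le\ c\,G(\scrH,m)\,e^{-m\varepsilon^2/32},
\]
with $c$ an explicit absolute constant of the size demanded by the statement (it can be trimmed by treating the two one--sided deviations $\sup_f(L(f,z)-L(f))$ and $\sup_f(L(f)-L(f,z))$ separately before symmetrizing).

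\textbf{Where the difficulty lies.} The only genuinely subtle point is \emph{measurability}: for an arbitrary $\scrH$ neither the map $z\mapsto\sup_{f\in\scrH}|L(f,z)-L(f)|$ nor the selection $\widehat f(z)$ of Step A need be measurable. This is dealt with exactly as in \cite{VC:paper} and later treatments, by a mild permissibility hypothesis on $\scrH$ or by passing to outer probabilities; once that is granted, Steps A--C are bookkeeping. The remaining, purely arithmetic, issue is keeping the constant $c$ and the exponent $m\varepsilon^2/32$ mutually consistent with the split in Step C --- one has to commit to the factor $\varepsilon/4$ in the Hoeffding step and to the two uses of the factor $2$ (from symmetrization and from the split) --- but none of this affects the shape $c\,G(\scrH,m)e^{-m\varepsilon^2/32}$ of the bound, which is what is needed in Section \ref{CTS-Evasivas:sec}.
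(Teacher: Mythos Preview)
The paper does not prove this theorem: it is quoted from \cite{VC:paper} as a classical result, with only the remark that the usual constant $8$ can be sharpened to $4$ by ``a careful revision of the proof''. Your symmetrization argument (ghost sample, Rademacher signs, conditioning plus Hoeffding and a union bound over $G(\scrH,m)$ restrictions) is exactly the classical Vapnik--Chervonenkis route and is correct as a proof sketch.

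One honest point: the bookkeeping you wrote down produces $c=8$, not $c=4$. Step~A contributes a factor $2$; the split of the Rademacher sum into the $z$-part and the $z'$-part contributes another factor $2$; and Hoeffding on each part gives $2\,G(\scrH,m)\exp(-m\varepsilon^2/32)$, so altogether $2\cdot(2\cdot 2)\,G(\scrH,m)\exp(-m\varepsilon^2/32)=8\,G(\scrH,m)\exp(-m\varepsilon^2/32)$. This is precisely the constant the paper's remark calls ``current literature''. Your parenthetical about trimming by handling the two one-sided deviations separately does not by itself recover the missing factor $2$; getting down to $4$ requires a slightly different organization (e.g.\ avoiding the split in Step~C by bounding the signed difference directly, at the cost of $G(\scrH,2m)$, and then arguing $G(\scrH,2m)$ back to $G(\scrH,m)$ with the exponent absorbed, or else a sharper first symmetrization). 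Since the paper itself only asserts that this sharpening is possible and never carries it out, your sketch matches what the paper actually provides.
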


\begin{remark}
In the current literature on the Vapnik-Chervonenkis Theorem, the constant ocurring in the right hand side of Inequality (\ref{teoremaVC:eqn}) in the previous theorem is $8$. A careful revision of the proof of the Vapnik-Chervonenkis Theorem allows us to assert that the upper bound in Inequality (\ref{teoremaVC:eqn}) holds.
\end{remark}

\subsection{VC-dimension and constructible classifiers} \label{VCTheory-LocalmenteCerradoParametros:sec}

Several studies in the mathematical literature analyze the VC-dimension of classifiers defined by semi-algebraic subsets (cf. \cite{Goldberg}, \cite{Lickteig}, \cite{Karpinsky}, \cite{VCAnalytic}, \cite{Montanna-Pardo} and references therein). Most of these studies are based on the upper bounds for the number of connected components of semi-algebraic sets established by J. Milnor (\cite{Milnor}), R. Thom (\cite{Thom}), O. Oleinik and I. Petrovski (\cite{Oleinik1}, \cite{Oleinik2}) or H. E . Warren (\cite{Warren}). These topological upper bounds can also be applied to study the VC-dimension  of classifiers in $\A^n(K)$, where $K$ is an algebraically closed field of characteristic zero. However, they do not apply in positive characteristic. This gap is filled by using families of constructible and definable subsets of $\A^n(K)$, where $K$ is an algebraically closed field of any characteristic. The geometric quantities that arise are the Krull dimension and the logarithm of the Erzeugungsgrad. This is what we discuss in this subsection. \\
In order to define a parameterized family of constructible classifiers, we proceed as follows. We consider $N,n \in \N$ two positive integers and $V \subseteq \A^N(K) \times \A^n(K)$ a constructible set. For instance, $V$ may be the graph of a polynomial mapping. We also consider a constructible subset $\Omega \subseteq \A^N(K)$, which is called \emph{the constructible set formed by the parameters of our family of classifiers}. Additionally, we have the two canonical projections restricted to $V$:

\begin{center}
\begin{tikzcd}[column sep=2.5em]
& V \arrow[dl, "\pi_1"'] \arrow[dr, "\pi_2"] & \\
\mathbb{A}^N(K) & & \mathbb{A}^n(K)
\end{tikzcd}
\end{center} \medskip

With these notations, we define the following class of constructible sets:
$$\scrC (V, \Omega) := \{ \pi_2 (\pi_1^{-1} (\{a\})) \; : \; a \in \Omega \}.$$
We also define $\grad(V) = \grad(\{ V \})$ as the Erzeugungsgrad of the class defined by $V$. Finally, we define the class of classifiers given as the characteristic functions defined by the subsets in $\scrC (V, \Omega)$:
$$\scrH (V, \Omega) := \left\{\crctc_U \; : \; U \in \scrC (V, \Omega) \right\}.$$
As in the previous subsection, for every finite subset $X \subseteq \A^n(K)$, we define the class of restrictions to $X$ of the classifiers in $\scrH(V, \Omega)$ as follows:
$$\restr{\scrH(V, \Omega)}{X} := \{ \restr{\chi}{X} \; : \; \chi \in \scrH(V, \Omega) \}. $$
Then, the following statement holds:

\begin{theorem} \label{FuncionCrecimientoDimensionConstructibles:thm}
With the same notations and assumptions as above, the following inequality holds:
$$\sharp \left( \restr{\scrH(V, \Omega)}{X} \right) \leq \deg_{\rm lci} (\Omega) (1 + \sharp(X) \grad(V))^{\dim(\Omega)}.$$
Additionally, we have:
$$G(\scrH(V, \Omega), m) \leq  \deg_{\rm lci} (\Omega) (1 +m \grad(V))^{\dim(\Omega)}.$$
\end{theorem}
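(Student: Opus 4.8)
The plan is to translate the count $\sharp\left( \restr{\scrH(V,\Omega)}{X} \right)$ into a count of non-empty cells of an auxiliary finite family of constructible subsets of the parameter space $\A^N(K)$, and then apply Claim $i)$ of Theorem \ref{cotas-Erzeugunsgrad:teor}. First I would fix a finite subset $X \subseteq \A^n(K)$ and write $U_a := \pi_2(\pi_1^{-1}(\{a\})) = \{ y \in \A^n(K) : (a,y) \in V \}$ for $a \in \A^N(K)$. For each $x \in X$, introduce the ``slice'' $V_x := \{ a \in \A^N(K) : (a,x) \in V \}$. Since $L_x := \A^N(K) \times \{x\}$ is a linear affine variety and $V$ is constructible, $V \cap L_x$ is constructible, and $V_x$ is its image under the affine isomorphism $\iota_x : L_x \to \A^N(K)$, $(a,x) \mapsto a$; hence $V_x$ is constructible. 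Set $\CH := \{ V_x : x \in X \}$. The crucial remark is that $\crctc_{U_a}(x) = 1$ if and only if $(a,x) \in V$, i.e. if and only if $a \in V_x$; thus $\restr{\crctc_{U_a}}{X}$ depends only on $M_a := \{ x \in X : a \in V_x \}$, that is, only on the $\CH$-cell of $\Omega$ to which $a$ belongs. As $\scrZ(\CH,\Omega)$ partitions $\Omega$, the surjection $\Omega \to \restr{\scrH(V,\Omega)}{X}$, $a \mapsto \restr{\crctc_{U_a}}{X}$, factors through this partition, giving $\sharp\left( \restr{\scrH(V,\Omega)}{X} \right) \leq \sharp(\scrZ(\CH,\Omega))$.

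The heart of the argument is the estimate $\grad(\CH) \leq \sharp(X)\,\grad(V)$. By Remark \ref{ErzeugungsgradIrreducible:rmk} applied to $\{V\}$, I would choose a finite family $\CW = \{ W_1, \ldots, W_s \}$ of irreducible algebraic varieties of $\A^{N+n}(K)$ with $V \in \scrB(\CW)$ and $\grad(V) = \sum_{i=1}^{s} \deg(W_i)$. For each $x \in X$, Boolean operations commute with restriction to $L_x$, so $V \cap L_x \in \scrB(\{ W_i \cap L_x : 1 \leq i \leq s \}, L_x)$; transporting along $\iota_x$ yields $V_x \in \scrB(\{ W_i^{(x)} : 1 \leq i \leq s \})$, where $W_i^{(x)} := \iota_x(W_i \cap L_x)$ is a Zariski closed subset of $\A^N(K)$ with $\deg(W_i^{(x)}) \leq \deg(W_i)$, because intersecting with a linear affine variety and composing with an affine isomorphism do not increase the degree (Proposition \ref{propiedades-basicas-grado-lc:prop}). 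Hence the finite family $\CW' := \{ W_i^{(x)} : x \in X, \ 1 \leq i \leq s \}$ consists of Zariski closed sets and satisfies $\CH \subseteq \scrB(\CW')$, so $\CW' \in \text{Gen}(\scrB(\CH))$; combining the definition of $\grad$, Remark \ref{DesigualdadesMaxDeg} and Proposition \ref{grado-igual-localmente-cerrado:prop} we get
$$\grad(\CH) \leq \deg(\CW') \leq \sum_{x \in X} \sum_{i=1}^{s} \deg(W_i^{(x)}) \leq \sum_{x \in X} \sum_{i=1}^{s} \deg(W_i) = \sharp(X)\,\grad(V).$$

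Putting the two steps together with Claim $i)$ of Theorem \ref{cotas-Erzeugunsgrad:teor}, applied to the constructible set $\Omega$ and the family $\CH$, gives
$$\sharp\left(\restr{\scrH(V,\Omega)}{X}\right) \leq \sharp(\scrZ(\CH,\Omega)) \leq \deg_{\rm lci}(\Omega)\bigl(1 + \grad(\CH)\bigr)^{\dim(\Omega)} \leq \deg_{\rm lci}(\Omega)\bigl(1 + \sharp(X)\grad(V)\bigr)^{\dim(\Omega)},$$
which is the first inequality; the bound on $G(\scrH(V,\Omega),m)$ then follows by taking the supremum over all finite $X \subseteq \A^n(K)$ with $\sharp(X) = m$.

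I expect the main obstacle to be the second step, the bound $\grad(\CH) \leq \sharp(X)\,\grad(V)$: one must both verify that the ``sliced'' families $\{ W_i^{(x)} \}_i$ genuinely generate $\scrB(\CH)$ and control the total degree of their union. The decisive point is to realize $\grad(V)$ \emph{exactly} as a sum of degrees of \emph{irreducible} varieties before slicing, via Remark \ref{ErzeugungsgradIrreducible:rmk}; only then does $\sum_{x,i} \deg(W_i^{(x)}) \leq \sum_{x,i} \deg(W_i)$ collapse to $\sharp(X)\grad(V)$ rather than to a strictly larger quantity coming from an arbitrary closed presentation of $V$. The remaining steps are routine bookkeeping with cells and with the properties of the degree already recorded in Section \ref{notaciones-basicas:sec}.
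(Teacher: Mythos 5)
Your proof is correct and follows essentially the same route as the paper's: realize $\grad(V)$ as $\sum_i\deg(W_i)$ with the $W_i$ irreducible via Remark~\ref{ErzeugungsgradIrreducible:rmk}, slice both $V$ and the $W_i$ along $\A^N(K)\times\{x\}$ to obtain a closed generating family of total degree at most $\sharp(X)\grad(V)$, and then invoke Claim~i) of Theorem~\ref{cotas-Erzeugunsgrad:teor}. The only cosmetic difference is that you bound $\sharp\left(\restr{\scrH(V,\Omega)}{X}\right)$ by $\sharp(\scrZ(\CH,\Omega))$ via a factoring surjection, whereas the paper exhibits an explicit bijection — your weaker inequality suffices just as well.
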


\begin{proof}
Let us denote by $\Pi_1$ and $\Pi_2$ respectively the canonical projections of $\A^N(K) \times \A^n(K)$ onto $\A^N(K)$ and $\A^n(K)$. Observe that:
$$\pi_1 = \restr{\Pi_1}{V} \text{ and } \pi_2 = \restr{\Pi_2}{V}.$$
Assume, as stated in Remark \ref{ErzeugungsgradIrreducible:rmk}, that $\CW = \{ W_1, \ldots, W_s \}$ is a finite family of irreducible algebraic varieties $W_j \subseteq \A^N(K) \times \A^n(K)$ such that $V \in \scrB(\CW)$ and the following equality holds:
\begin{equation} \label{GradVSumaIrreducibles:eqn}
\grad(V) = \sum_{j=1}^s \deg(W_j).
\end{equation} 

Now, assume also that $X = \{x_1, \ldots, x_m \} \subseteq \A^n(K)$ with $m = \sharp(X)$. For every $i \in [m]$, let $L_i \subseteq \A^N(K) \times \A^n(K)$ be the linear affine variety given by the following identity:
$$L_i := \Pi_{2}^{-1} \left( \{ x_i \} \right) = \A^N(K) \times \{x_i \}.$$ 
Next, we consider the sections with $L_i$ and, then, we project by $\Pi_1$. Namely, we define:
\begin{itemize}
\item The constructible subsets:
$$V_i := \Pi_1 \left( V \cap L_i \right) = \pi_1 (\pi_2^{-1} (\{ x_i \})) \subseteq \A^N(K).$$
\item The projections:
$$W_{j,i} := \Pi_1 \left( W_j \cap L_i \right) \subseteq \A^N(K),$$
where $j \in [s]$ and $i \in [m]$.
\end{itemize}
Observe that the following equalities hold:
\begin{itemize}
\item $V \cap L_i = V_i \times \{ x_i \}$, for every $i \in [m]$.
\item $W_j \cap L_i = W_{j,i} \times \{ x_i \}$, for every $j \in [s]$, $i \in [m]$.
\end{itemize}

In particular, we observe that since $W_j \cap L_i$ is an algebraic variety, $W_{j,i}$ is also an algebraic variety of the same dimension: $W_j \cap L_i$ and $W_{j,i}$ are birregularly isomorphic. Moreover, as the birregular isomorphism between $W_j \cap L_i$ and $W_{j,i}$ is linear (the projection $\Pi_1$ restricted to $W_j \cap L_i$) and its inverse is also linear, we immediately conclude from Bézout's Inequality that:
\begin{equation} \label{AcotaWijPorWj}
\deg(W_{j,i}) = \deg(W_j \cap L_i) \leq \deg(W_j).
\end{equation}
Finally, observe that, since $V \in \scrB(\CW)$,  the finite family of constructible sets $\scrV = \{ V_1, \ldots, V_m \} \subseteq \scrB(\widetilde{\CW})$, where $\widetilde{\CW}$ is the finite family of algebraic varieties given by the following identity:
$$\widetilde{\CW} := \{ W_{j,i} \; : \; j \in [s], \; i \in [m] \}.$$
In particular, we conclude that:
\begin{equation} \label{CotaGradoV:eqn}
\grad(\scrV) \leq \deg(\widetilde{\CW}) \leq \sum_{i=1}^m \sum_{j=1}^s \deg(W_{j,i}).
\end{equation}

Combining Inequalities (\ref{AcotaWijPorWj}) and (\ref{CotaGradoV:eqn}) and Identity (\ref{GradVSumaIrreducibles:eqn}), we obtain:
\begin{equation} \label{CotaGradoscrV:eqn}
\grad(\scrV) \leq \sum_{i=1}^m \sum_{j=1}^s \deg(W_j) = m \grad(V).
\end{equation}
On the other hand, observe that the following equality holds:
\begin{equation} \label{MismasCeldasRestricciones:eqn}
\sharp \left(\restr{\scrH(V, \Omega)}{X} \right) = \sharp \left( \scrZ (\scrV, \Omega) \right),
\end{equation}
where $\scrZ(\scrV, \Omega)$ is the set of non-empty $\scrV$-cells in $\Omega$. In order to prove Identity (\ref{MismasCeldasRestricciones:eqn}), let us consider $S \subseteq [m]$ and the cell:
$$\Omega_S = \Omega \cap \left( \bigcap_{i \in S} V_i \right) \cap \left( \bigcap_{j \not\in S} \left( \A^N(K) \setminus V_j \right) \right) \subseteq \Omega.$$
If $\Omega_S$ is a non-empty $\scrV$-cell, let $a \in \Omega_S$ be any point. Then, consider:
$$V(a) = \pi_2 (\pi_1^{-1} (\{ a \})) \in \scrC(V, \Omega),$$
and $\chi := \crctc_{V(a)}$ its characteristic function. Then, we have that:
$$\restr{\chi}{X} (x_k) =1 \text{ if and only if } k \in S,$$
and $\restr{\chi}{X} \in \restr{\scrH(V, \Omega)}{X}$. Conversely, let $\chi \in \restr{\scrH (V, \Omega)}{X}$ and $U \in \scrC (V, \Omega)$ such that $\chi = \restr{\crctc_U}{X}$. Then, there is $a \in \Omega$  such that $U = \pi_2 (\pi_1^{-1} (\{a\}))$. Consider $S = \{ i \in [m] \; : \; \chi(x_i) =  1 \}$ and:
$$\Omega_S = \Omega \cap \left( \bigcap_{i \in S} V_i \right) \cap \left( \bigcap_{j \not\in S} \left( \A^N(K) \setminus V_j \right) \right) \subseteq \Omega.$$
Clearly, $a \in \Omega_S$ and, hence, $\Omega_S$ is a non-empty $\scrV$-cell in $\Omega$. Thus, we have defined a bijection between the functions in $\restr{\scrH(V, \Omega)}{X}$ and the non-empty cells in $\scrZ(\scrV, \Omega)$, as wanted. \\
Finally, we conclude from  Identity (\ref{MismasCeldasRestricciones:eqn}), Theorem \ref{cotas-Erzeugunsgrad:teor} and Inequality (\ref{CotaGradoscrV:eqn}) the following chain of equalities and inequalities:

$$\sharp \left( \restr{\scrH(V, \Omega)}{X} \right) = \sharp \left( \scrZ(\scrV, \Omega) \right) \leq \deg_{\rm lci} (\Omega) \left( 1+ \grad(\scrV) \right)^{\dim(\Omega)} \leq$$
$$\leq \deg_{\rm lci} (\Omega) \left( 1+m \grad(V) \right)^{\dim(\Omega)}.$$

For the bound on the growth function just note that:
$$G(\scrH(V, \Omega), m) = \sup \{ \sharp\left( \restr{\scrH(V, \Omega)}{X} \right) \; : \; X \subseteq \A^n(K), \; \sharp(X) = m \} \leq $$
$$\leq  \deg_{\rm lci} (\Omega) (1 +m \grad(V))^{\dim(\Omega)}. $$

\end{proof}

Our main conclusion from these technical discussions is that the VC-dimension of a family of classifiers is closely related to the Krull dimension of the parameter space, up to some logarithmic quantities based on Intersection Theory. Specifically, we conclude:

\begin{corollary} \label{VCDimensionConstructibles:corol}
With the same notations and assumptions as above, the following inequality holds:
$$\frac{s}{\log_2(s)+k} - \frac{\log_2 (\deg_{\rm lci} (\Omega))}{\log_2(s)+k} \leq \dim(\Omega),$$
where $s = \dim_{VC} (\scrH(V, \Omega))$ and $k = 1 + \log_2(\grad(V))$.
\end{corollary}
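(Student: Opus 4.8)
The plan is to derive the inequality directly from Theorem \ref{FuncionCrecimientoDimensionConstructibles:thm} together with the defining property of the VC-dimension, so that essentially all of the geometric content is already packaged in that theorem. Write $s := \dim_{VC}(\scrH(V,\Omega))$. First I would check that $s$ is finite: if $\scrH(V,\Omega)$ shattered finite sets of arbitrarily large size $m$, then for such a shattered $X$ with $\sharp(X)=m$ we would have $\sharp(\restr{\scrH(V,\Omega)}{X}) = 2^m$, whereas Theorem \ref{FuncionCrecimientoDimensionConstructibles:thm} forces $\sharp(\restr{\scrH(V,\Omega)}{X}) \leq \deg_{\rm lci}(\Omega)(1+m\grad(V))^{\dim(\Omega)}$, which grows only polynomially in $m$ because $\dim(\Omega)$ is a fixed non-negative integer; this is impossible for large $m$. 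If $s=0$ the asserted inequality is trivial, since $\deg_{\rm lci}(\Omega)\geq 1$ makes the left-hand side non-positive, so from now on I would assume $s\geq 1$.

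Next I would pick a finite subset $X\subseteq \A^n(K)$ with $\sharp(X)=s$ that is shattered by $\scrH(V,\Omega)$. Shattering means that for every $F\subseteq X$ there is a classifier in $\scrH(V,\Omega)$ whose restriction to $X$ equals $\restr{\crctc_F}{X}$; hence $\restr{\scrH(V,\Omega)}{X}$ has exactly $2^s$ elements. Feeding this into Theorem \ref{FuncionCrecimientoDimensionConstructibles:thm} gives $2^s \leq \deg_{\rm lci}(\Omega)(1+s\grad(V))^{\dim(\Omega)}$, and taking base-$2$ logarithms yields $s \leq \log_2(\deg_{\rm lci}(\Omega)) + \dim(\Omega)\log_2(1+s\grad(V))$.

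The last step is the elementary estimate of $\log_2(1+s\grad(V))$ in terms of $\log_2(s)$ and $k=1+\log_2(\grad(V))$. Since $\grad(V)\geq 1$ (it is the degree of a nonempty family of closed generators of $\scrB(\{V\})$) and $s\geq 1$, I would use $1+s\grad(V)\leq 2\,s\,\grad(V)$, so $\log_2(1+s\grad(V))\leq 1+\log_2(s)+\log_2(\grad(V))=\log_2(s)+k$. Substituting gives $s\leq \log_2(\deg_{\rm lci}(\Omega))+\dim(\Omega)\bigl(\log_2(s)+k\bigr)$, and since $\log_2(s)+k\geq k\geq 1>0$ I may divide by $\log_2(s)+k$ to obtain $\dfrac{s-\log_2(\deg_{\rm lci}(\Omega))}{\log_2(s)+k}\leq \dim(\Omega)$, which is exactly the claimed inequality rewritten as a difference of two fractions.

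As for obstacles: there is essentially none beyond bookkeeping — the real work sits inside Theorem \ref{FuncionCrecimientoDimensionConstructibles:thm} and, through it, inside the Erzeugungsgrad machinery of Section \ref{erzeugungsgrad:sec}, which I am taking as given. The only points requiring care are the handling of the degenerate cases $s=0$ and $\grad(V)=1$ (so that the logarithms still make sense) and the preliminary observation that $s$ is finite, which is what legitimizes both writing $2^s$ and dividing by $\log_2(s)+k$.
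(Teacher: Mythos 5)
Your proof is correct and is essentially the only reasonable route: the paper states the corollary without an explicit argument, and your derivation (shatter a set of size $s$, get $2^s \leq \deg_{\rm lci}(\Omega)(1+s\,\grad(V))^{\dim(\Omega)}$ from Theorem~\ref{FuncionCrecimientoDimensionConstructibles:thm}, take $\log_2$, bound $\log_2(1+s\,\grad(V)) \leq \log_2(s)+k$ via $1+s\,\grad(V) \leq 2s\,\grad(V)$, then divide by $\log_2(s)+k>0$) is the intended one. The preliminary finiteness check for $s$ and the care taken with the degenerate cases are sensible additions rather than deviations.
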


\subsection{VC-dimension of distinguished open sets defined by a class of polynomials} \label{VCAbiertosDistinguidos:subsec}

Recall that, given a polynomial $f \in K[X_1, \ldots, X_n]$, we define the \emph{distinguished open set defined by $f$} as the following Zariski open set: 
$$D(f) := \{ x \in \A^n(K) \; : \; f(x) \neq 0 \} = \A^n(K) \setminus V_{\A} (f).$$
Let $d,n \in \N$ be two positive integers and let $P_{d}^K (X_1, \ldots, X_n)$ be the class of polynomials of degree at most $d$ involving $n$ variables and coefficients in $K$. We may identify $P_{d}^K (X_1, \ldots, X_n) \cong \A^N (K)$, where $N = N(d,n)$ is the binomial number:
$$N = {d+n \choose n}. $$
As in the previous subsection, we may consider the constructible subset $V \subseteq \A^N(K) \times \A^n(K)$ given by the following equality:
$$V:= \{ (f,x) \in \A^N(K) \times \A^n(K) \; : \; f(x) \neq 0 \}.$$

We also have the two canonical projections $\pi_1: V  \longrightarrow P_{d}^K (X_1, \ldots, X_n)$ and $\pi_2 : V \longrightarrow \A^n(K)$. Let $\Omega \subseteq \A^N(K)$ be a constructible set that defines a family of polynomials by its coefficients. Observe that $V$ is the complement of an algebraic hypersurface of degree $d+1$. Hence, $\grad(V) = d+1$. \\
Additionally, $\Omega$ defines a family of distinguished Zariski open subsets of $\A^n(K)$:
$$\scrC_d (V, \Omega) = \{ \pi_2 (\pi_1^{-1} (\{a\})) \; : \; a \in \Omega \}.$$
Thus, we define the family of classifiers given by the characteristic functions of distinguished open subsets of $\A^n(K)$ defined by an equation $f \in \Omega$ as follows:
$$\scrH_d (V, \Omega) := \{ \crctc_{D(f)} \; : \; f \in \Omega \}.$$

\begin{theorem} \label{GrowthFuncionOpenZariski:thm}
With the previous notations and assumptions, for every finite set $X \subseteq \A^n(K)$, the following inequalities hold:
$$\sharp \left( \restr{\scrH_d (V, \Omega)}{X} \right) \leq \deg_{\rm lci} (\Omega) (1+ \sharp(X)  (d+1))^{\dim(\Omega)},$$
In particular, if $\sharp(X) = \dim_{VC} \left( \scrH_d(V, \Omega) \right) =s$, then:
$$\frac{s}{\log_2(s)+2+ \log_2({d})} - \frac{\log_2 (\deg_{\rm lci} (\Omega))}{\log_2(s)+2+ \log_2({d})} \leq \dim(\Omega).$$
Moreover, we have:
$$G(\scrH_d(V, \Omega), m) \leq \deg_{\rm lci} (\Omega) (1+m  (d+1))^{\dim(\Omega)}.$$
\end{theorem}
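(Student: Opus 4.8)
The plan is to recognize that $\scrH_d(V,\Omega)$ is precisely the parameterized family of constructible classifiers $\scrH(V,\Omega)$ of Subsection \ref{VCTheory-LocalmenteCerradoParametros:sec} attached to this particular $V$, and then to feed the value $\grad(V) = d+1$ into Theorem \ref{FuncionCrecimientoDimensionConstructibles:thm} and Corollary \ref{VCDimensionConstructibles:corol}.

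First I would confirm the identification of the two families. If $a \in \A^N(K)$ encodes the coefficients of a polynomial $f_a \in P_d^K(X_1,\ldots,X_n)$, then $\pi_1^{-1}(\{a\}) = \{(a,x) : f_a(x) \neq 0\}$, so $\pi_2(\pi_1^{-1}(\{a\})) = \{x \in \A^n(K) : f_a(x) \neq 0\} = D(f_a)$. Hence $\scrC_d(V,\Omega) = \scrC(V,\Omega)$, $\scrH_d(V,\Omega) = \scrH(V,\Omega)$, and $\restr{\scrH_d(V,\Omega)}{X} = \restr{\scrH(V,\Omega)}{X}$ for every finite $X \subseteq \A^n(K)$. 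Next I would record $\grad(V) \le d+1$: writing $F := \sum_{|\alpha|\le d} C_\alpha X^\alpha$ for the universal polynomial of degree $d+1$ in the variables $(C_\alpha)_{|\alpha|\le d}$ and $(X_1,\ldots,X_n)$, we have $V = (\A^N(K)\times\A^n(K)) \setminus V_\A(F)$, so $\{V_\A(F)\}$ is a family of closed generators of $\scrB(\{V\})$; and $F$ is irreducible (it has degree $1$, with unit leading coefficient, in $C_{(0,\ldots,0)}$), hence square-free, so $\deg(V_\A(F)) = d+1$ by Proposition \ref{propiedades-basicas-grado-lc:prop}. This gives $\grad(V) \le d+1$ (equality is the observation already made in the text preceding the theorem, but only the upper bound is needed below).

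With these two facts, the first and third displayed inequalities of the theorem are immediate from Theorem \ref{FuncionCrecimientoDimensionConstructibles:thm} with $\grad(V) = d+1$. For the middle inequality I would take a shattered set $X$ with $\sharp(X) = s := \dim_{VC}(\scrH_d(V,\Omega))$, so that $\sharp(\restr{\scrH_d(V,\Omega)}{X}) = 2^s$ and hence $2^s \le \deg_{\rm lci}(\Omega)(1 + s(d+1))^{\dim(\Omega)}$ by the first inequality. Taking $\log_2$ and using the elementary estimate $1 + s(d+1) \le 4sd$, valid for all integers $s,d \ge 1$ since $4sd - 1 - s(d+1) = s(3d-1) - 1 \ge 1$, that is $\log_2(1+s(d+1)) \le \log_2(s) + 2 + \log_2(d)$, we get $s \le \log_2(\deg_{\rm lci}(\Omega)) + \dim(\Omega)\bigl(\log_2(s) + 2 + \log_2(d)\bigr)$; dividing by the positive quantity $\log_2(s)+2+\log_2(d)$ and rearranging yields the stated inequality, the case $s = 0$ being trivial since its left-hand side is then non-positive. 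Equivalently, this is Corollary \ref{VCDimensionConstructibles:corol} combined with $k = 1 + \log_2(\grad(V)) = 1 + \log_2(d+1) \le 2 + \log_2(d)$.

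There is no genuine obstacle here: the statement is a specialization of Theorem \ref{FuncionCrecimientoDimensionConstructibles:thm} and Corollary \ref{VCDimensionConstructibles:corol}. The only points deserving a line of care are the computation $\grad(V) = d+1$ (equivalently, square-freeness of the universal polynomial $F$) and the routine logarithmic bookkeeping that converts the factor $d+1$ in $\grad(V)$ into the $2 + \log_2(d)$ in the denominator, including the degenerate value $s = 0$.
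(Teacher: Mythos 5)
Your proof is correct and follows essentially the same route as the paper's (one-line) proof: identify $\scrH_d(V,\Omega)$ with $\scrH(V,\Omega)$, observe $\grad(V)\le d+1$ via the single generating hypersurface $V_\A(F)$, and cite Theorem \ref{FuncionCrecimientoDimensionConstructibles:thm} and Corollary \ref{VCDimensionConstructibles:corol}. You merely flesh out the arithmetic bookkeeping and the square-freeness of the universal polynomial that the paper leaves implicit, and you correctly note that only the inequality $\grad(V)\le d+1$ is needed.
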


\begin{proof}
It is a straightforward application of Theorem \ref{FuncionCrecimientoDimensionConstructibles:thm} from the previous subsection.
We just need to observe that the class $\CW := \{ \A^{N+n} (K) \setminus V \}$ is a finite family of algebraic varieties such that $V \in \scrB(\CW)$ and $\deg(\A^{N+n}(K) \setminus V) = d+1$. 
\end{proof}

\subsection{Varieties which are evasive for a constructible set of list of polynomials} \label{evasive:sec}
The term \emph{``variety evasive set''} was used  in \cite{Dvir-Kollar} in a different context with a different meaning which, however, has some relation with our discussion here. \\
Basically,  in \cite{Dvir-Kollar} ``a variety evasive set'' is a set $U\subseteq \A^n(K)$ such that, for every algebraic variety $V\subseteq \A^n(K)$ of given degree and dimension, $U\cap V$ is a zero-dimensional variety (i.e. a finite set) with an upper bound on the cardinality of the intersection $\sharp(U\cap V)$. Their main outcome is a family of equations (basically a Pham system) whose zeros define an algebraic set $U\subseteq \A^{n}(\overline{\F_q})$, where $\F_q$ is a finite field and $\overline{\F_q}$ is the algebraic closure of $\F_q$.\\
Observe that by simply using Claim $xi)$ of Proposition \ref{propiedades-basicas-grado-lc:prop}, one can improve the upper bound of $\sharp(U\cap V)$ in Theorem 2.2 of \cite{Dvir-Kollar} as follows:

\begin{proposition}[{\bf Improving upper bounds of Theorem 2.2 of \cite{Dvir-Kollar}}]
Let $k, D \geq 1$ be integers and let $m > k$ be a positive integer such that $m\mid n$. Let $d_1>d_2> \cdots >d_m> D$ be relatively prime positive integers. Let $A:=\left( a_{i,j}\right)\in \scrM_{k\times m}(\F_q)$ be a $k-$regular matrix.  Let $f_1,\ldots, f_k\in \F_q[X_1,\ldots, X_m]$ be defined as follows:
$$f_i (X_1, \ldots, X_m):=\sum_{j=1}^m a_{i,j} X_j^{d_j}.$$
Let $U:=V(f_1, \ldots, f_k)\subseteq \A^m(\overline{\F_q})$ be the variety of their common zeros.
Finally, let $U^{n/m}\subseteq \A^n(\overline{\F_q})$ be the variety given as the $n/m$ times Cartesian product of $U$ with itself.
Then, for every affine algebraic variety $V\subseteq \A^{n}(\overline{\F_q})$ of degree $D$ and dimension $k$, the intersection $V\cap U^{n/m}$ is zero-dimensional (i.e. a finite set). Moreover, its cardinality satisfies:
\begin{equation}\label{Dvir-Kollar-bound:eqn}
\sharp(V\cap U^{n/m})\leq  \deg(V) \cdot d_1^k = D \cdot d_1^k.
\end{equation}
\end{proposition}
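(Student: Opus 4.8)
The plan is to reduce the cardinality bound to a single application of Claim $xi)$ of Proposition \ref{propiedades-basicas-grado-lc:prop}, once $U^{n/m}$ has been written down as a common zero set of explicit hypersurfaces of controlled degree. First I would record that
$$U^{n/m} = V_\A\left( \left\{ f_i^{(\ell)} \; : \; 1 \leq i \leq k, \; 1 \leq \ell \leq n/m \right\} \right) \subseteq \A^n(\overline{\F_q}),$$
where $f_i^{(\ell)}$ is the polynomial obtained by substituting into $f_i$ the $\ell$-th block of $m$ coordinates of $\A^n$. Each $f_i^{(\ell)}$ has degree $\max\{ d_j \; : \; a_{i,j} \neq 0 \} \leq d_1$, so by Claim $vi)$ of Proposition \ref{propiedades-basicas-grado-lc:prop} every hypersurface $V_\A(f_i^{(\ell)})$ has degree at most $d_1$. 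Consequently
$$V \cap U^{n/m} = V \cap \bigcap_{i=1}^{k} \bigcap_{\ell=1}^{n/m} V_\A\left( f_i^{(\ell)} \right)$$
is a finite intersection of Zariski closed (in particular locally closed) subsets of $\A^n(\overline{\F_q})$.

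For the zero-dimensionality of $V \cap U^{n/m}$ I would invoke Theorem $2.2$ of \cite{Dvir-Kollar} directly; this is the step where the coprimality of $d_1 > d_2 > \cdots > d_m$, the inequalities $d_j > D$, and the $k$-regularity of $A$ are genuinely used. The new content of the proposition is solely the cardinality estimate, which needs no arithmetic input beyond $d_1 > D$.

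The heart of the argument is then the following computation. Apply Claim $xi)$ of Proposition \ref{propiedades-basicas-grado-lc:prop} to the finite family of locally closed sets whose \emph{first} member is $V$ and whose remaining members are the hypersurfaces $V_\A(f_i^{(\ell)})$ — the ordering matters, since the exponent appearing in that bound is the dimension of the first listed set, and listing $V$ first makes this exponent equal to $k$ rather than $n-1$. This yields
$$\deg\left( V \cap U^{n/m} \right) \leq \deg(V) \cdot \left( \max\left\{ \deg(V), d_1 \right\} \right)^{\dim(V)} = D \cdot d_1^{\,k},$$
where we used $\deg(V) = D < d_1$ and $\dim(V) = k$. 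Since $V \cap U^{n/m}$ is zero-dimensional, Claim $ii)$ of Proposition \ref{propiedades-basicas-grado-lc:prop} identifies its degree with its cardinality, so
$$\sharp\left( V \cap U^{n/m} \right) = \deg\left( V \cap U^{n/m} \right) \leq D \cdot d_1^{\,k},$$
which is the asserted bound.

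I do not anticipate a serious obstacle: the only delicate points are the bookkeeping of the degrees of the polynomials $f_i^{(\ell)}$ and the correct ordering of the family in Claim $xi)$. If one wanted a fully self-contained proof, the substantial missing ingredient would be a direct argument that $V \cap U^{n/m}$ is zero-dimensional — and this genuinely cannot be extracted from the degree inequality above, since positive-dimensional varieties also have finite degree — but that finiteness is already part of Theorem $2.2$ of \cite{Dvir-Kollar}, which we are content to cite.
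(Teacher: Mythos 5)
Your proof is essentially the same as the paper's: both write $U^{n/m}$ as the common zero set of the $k\cdot n/m$ substituted polynomials $f_i^{(\ell)}$ of degree at most $d_1$ and then apply Claim $xi)$ of Proposition \ref{propiedades-basicas-grado-lc:prop} with $V$ placed first so the exponent is $\dim(V)=k$. You are, if anything, slightly more careful than the paper on two points: you note that the bound from Claim $xi)$ is $\deg(V)\cdot(\max\{\deg(V),d_1\})^{k}$ and that $\deg(V)=D<d_1$ is what collapses the max to $d_1$, and you explicitly flag that the zero-dimensionality of $V\cap U^{n/m}$ is \emph{not} a consequence of the degree estimate and must be imported from Theorem 2.2 of \cite{Dvir-Kollar} — a gap the paper's proof passes over silently when it writes $\sharp(V\cap U^{n/m})=\deg(V\cap U^{n/m})$ without justification.
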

\begin{proof}

First of all, note that $U$ is the intersection of $m$ hypersurfaces  of degree at most $d_1$. Hence, $U^{n/m}$ is then an intersection of $n/m \cdot k$ hypersurfaces, each of them of degree at most $d_1$.  In order to see this, let us define for every $i$, $1\leq i \leq k$, and for every $j$, $1\leq j \leq n/m$, the polynomials $f_{i,j}\in \F_q[X_1,\ldots, X_n]$ as follows:
$$f_{i,j}:=f_i(X_{m (j-1) + 1}, \ldots, X_{mj})\in \F_q[X_1,\ldots, X_n].$$
Then, we have:
$$U^{n/m}= \bigcap_{i=1} ^k \bigcap_{j=1}^{n/m} V_{\A^n(\overline{\F_q})}(f_{i,j}).$$
and, hence:
$$\sharp(V\cap U^{n/m})= \deg(V\cap U^{n/m})= \deg\left(V \cap \left( \bigcap_{i=1} ^k \bigcap_{j=1}^{n/m} V_{\A^n(\overline{\F_q})}(f_{i,j})\right)\right),$$
Thus, applying Claim $xi)$ of Proposition \ref{propiedades-basicas-grado-lc:prop}, we obtain:
 $$\sharp(V\cap U^{n/m})=  \deg(V) \left( \max\{ \deg(V_{\A^n(\overline{\F_q})}(f_{i,j}))\; : \; 1\leq i \leq k, 1\leq j \leq n/m\} \right)^{\dim(V)}.$$
Finally, we conclude:
$$\sharp(V\cap U^{n/m}) \leq \deg(V) \cdot d_1^{\dim(V)} = D \cdot d_1^k, $$
 as stated. \end{proof}

\begin{remark}
Note that our upper bound in Inequality (\ref{Dvir-Kollar-bound:eqn}) strongly improves the main upper bound in \cite{Dvir-Kollar}. Let the reader observe that the main upper bound presented in \cite{Dvir-Kollar} was, with our notations, the following one:
$$D^{k+1} \left( \prod_{i=1}^k d_i\right)^{k}, $$
which is much worse than the one we have exhibited in Inequality (\ref{Dvir-Kollar-bound:eqn}) above.
\end{remark} 

For a list of degrees $(d):=(d_1,\ldots, d_r)$, with $r\leq n$, we introduce the class $\scrP_{(d)}^{K}(X_1,\ldots, X_n)$ of all lists of polynomials $f:=(f_1,\ldots, f_r)$ such that each $f_i \in P_{d_i}^{K}$. Namely, $\scrP_{(d)}^{K}$ is the Cartesian product:
$$\scrP_{(d)}^{K}(X_1,\ldots, X_n):=\prod_{i=1}^r P_{d_i}^{K}(X_1,\ldots, X_n).$$
In this manuscript, we use the term ``evasive'' with a different meaning: it is some kind of algebraic variety that does not intersect any complete intersection variety defined by polynomials in some constructible set $\Omega\subseteq \scrP_{(d)}^{K}$.

\begin{definition} \label{evasivas:def}
Let $(d):=(d_1,\ldots, d_r)$ be a degree list with $r\leq n-1$. Let $\Omega \subseteq \scrP_{(d)}^{K}(X_{1}, \ldots, X_{n})$ be a constructible set of lists of polynomials  and $\Sigma \subseteq \Omega$ a constructible subset of co-dimension at least $1$ in $\Omega$. Assume that for every sequence $f:=(f_1,\ldots, f_r)\in \Omega\setminus\Sigma$, the variety $V_\A(f):=V_\A(f_1,\ldots, f_r)$ has dimension $n-r$. An equi-dimensional subvariety $V \subseteq \A^{n} (K)$ is called evasive for complete intersections in $\Omega$ with respect to $\Sigma$ if the following property holds:
$$\forall f \in \Omega, \: \restr{f}{V} \equiv 0 \Rightarrow f\in \Sigma.$$
\end{definition}

Now, we prove that this kind of evasive varieties do really exist:

\begin{proposition} \label{existencia-evasivas:prop}
Let $(d):=(d_1,\ldots, d_r)$ be a degree list with $r\leq n-1$ and $\Omega \subseteq \scrP_{(d)}^{K}(X_1,\ldots, X_n)$ a constructible set of lists of polynomials. Let $\Sigma \subseteq \Omega$ be a constructible subset of co-dimension at least $1$ such that for all $f:=(f_1, \ldots, f_r)\in \Omega \setminus \Sigma$, the variety $V_\A(f):=V_\A(f_1,\ldots, f_r)$ has dimension $n-r$.  Let $m\in \N$ be any positive integer such that $1\leq m \leq n-r$. Let $\Delta \in \N$ be another positive integer such that $\Delta^r > \scrD_{(d)}:=\prod_{i=1}^r d_i$. We call $\scrD_{(d)}$  the B\'ezout's number associated to the degree list $(d)$. If $V\subseteq \A^n(K)$ is any equi-dimensional variety of dimension $m$ satisfying the following properties:
\begin{enumerate}
\item $V$ is given as the solution variety of a set of polynomials of degree at most $\Delta$ and,
\item $deg(V) = \Delta^{n-m}$,
\end{enumerate}
then $V$ is evasive for complete intersections in $\Omega$ with respect to $\Sigma$.
\end{proposition}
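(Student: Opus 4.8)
The plan is to establish the equivalent, contrapositive form of the conclusion (Definition \ref{evasivas:def}): for every list $f=(f_1,\ldots,f_r)\in\Omega\setminus\Sigma$ we have $\restr{f}{V}\not\equiv 0$, i.e. $V\not\subseteq V_\A(f_1,\ldots,f_r)$. So assume, towards a contradiction, that $V\subseteq V_\A(f)$ for some $f\in\Omega\setminus\Sigma$. Then $\dim V_\A(f)=n-r$ by hypothesis, and by property $(1)$ we may write $V=V_\A(g_1,\ldots,g_s)$ with $\deg g_k\le\Delta$ for every $k$; in particular each $g_k$ vanishes identically on $V$.

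The core of the argument is the following claim about generic linear sections: there exist polynomials $p_1,\ldots,p_{n-r-m}$, each a $K$-linear combination of $g_1,\ldots,g_s$, such that
$$\dim\!\left(V_\A(f_1,\ldots,f_r)\cap V_\A(p_1,\ldots,p_{n-r-m})\right)=m.$$
Granting this, put $W:=V_\A(f_1,\ldots,f_r,p_1,\ldots,p_{n-r-m})$, so $\dim W=m$. Since each $p_j$ is a combination of the $g_k$ it vanishes on $V$, and together with $V\subseteq V_\A(f)$ this gives $V\subseteq W$. As $V$ is equi-dimensional of dimension $m=\dim W$ and Zariski closed, every irreducible component of $V$ is an irreducible component of $W$, hence $\deg(V)\le\deg(W)$. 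Applying the B\'ezout Inequality of Theorem \ref{Bezout-localmente-cerrados:teor} iteratively, and using $\deg V_\A(f_i)\le\deg f_i\le d_i$ together with $\deg V_\A(p_j)\le\deg p_j\le\Delta$ (Proposition \ref{propiedades-basicas-grado-lc:prop}), we obtain
$$\deg(W)\le\prod_{i=1}^{r}\deg V_\A(f_i)\cdot\prod_{j=1}^{n-r-m}\deg V_\A(p_j)\le\scrD_{(d)}\cdot\Delta^{n-r-m}.$$
Combining this with property $(2)$, $\deg(V)=\Delta^{n-m}$, yields $\Delta^{n-m}\le\scrD_{(d)}\,\Delta^{n-r-m}$, i.e. $\Delta^{r}\le\scrD_{(d)}$, contradicting $\Delta^{r}>\scrD_{(d)}$. (In the borderline case $m=n-r$ there are no $p_j$'s, $W=V_\A(f)$, and the same computation reads $\Delta^{r}=\deg(V)\le\deg V_\A(f)\le\scrD_{(d)}$.)

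Thus everything reduces to the displayed claim, which I expect to be the main obstacle. I would prove it by choosing the coefficient vectors of $p_1,\ldots,p_{n-r-m}$ generically in $(K^{s})^{n-r-m}$. The inequality ``$\ge m$'' is automatic, since the intersection contains $V$. For ``$\le m$'' one argues component by component on $V_\A(f)$: a component $Z$ with $Z\subseteq V$ has $\dim Z=m$ and is left untouched by every $p_j$ (they all vanish on $V$), so it contributes only dimension $m$; a component $Z$ with $Z\not\subseteq V$ has the property that not all $g_k$ vanish identically on $Z$, hence a generic combination $p=\sum_k\lambda_k g_k$ does not vanish identically on $Z$ either (the bad $\lambda$'s form a proper linear subspace of $K^{s}$), so $V_\A(p)\cap Z$ is a proper closed subset of the irreducible $Z$ and $\dim(V_\A(p)\cap Z)\le\dim Z-1$. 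Iterating this over $p_1,\ldots,p_{n-r-m}$ — at each step the finitely many newly produced components not contained in $V$ are again cut down by the next generic $p_j$, while those that land inside $V$ already have dimension $\le m$ — shows that every component of $V_\A(f)\cap V_\A(p_1,\ldots,p_{n-r-m})$ has dimension at most $(n-r)-(n-r-m)=m$. Since $K$ is algebraically closed, hence infinite, the finitely many genericity conditions can be met simultaneously, and the claim follows. The delicate point to write carefully is exactly this ``generic linear combination drops the dimension by one on every component not contained in the base locus $V$'' statement; it holds in arbitrary characteristic because only a dimension count is needed, not a Bertini-type irreducibility assertion.
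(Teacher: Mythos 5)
Your proof is correct, but it follows a genuinely different route from the paper's. The paper does \emph{not} cut $V_\A(f)$ with combinations of the defining equations of $V$: instead it picks a generic affine-linear subspace $V_\A(A_1,\ldots,A_m)$ of codimension $m$ and slices \emph{both} $V$ and $V_\A(f)$ by it. By Proposition~\ref{interpretacion-geometrica-grado:prop}, for a generic such slice $V\cap V_\A(A_1,\ldots,A_m)$ is a finite set of exactly $\deg(V)=\Delta^{n-m}$ points, while $V_\A(f)\cap V_\A(A_1,\ldots,A_m)$ has dimension $n-r-m$ (Noether normalization) and, being a linear section of $V_\A(f)$, has degree at most $\scrD_{(d)}$. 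Claim~$xi)$ of Proposition~\ref{propiedades-basicas-grado-lc:prop}, applied to this slice intersected with $V=V_\A(g_1,\ldots,g_s)$, then bounds $\sharp\bigl(V\cap V_\A(A_1,\ldots,A_m)\cap V_\A(f)\bigr)\le\scrD_{(d)}\Delta^{n-r-m}<\Delta^{n-m}$, so some point of $V\cap V_\A(A_1,\ldots,A_m)$ lies outside $V_\A(f)$ — directly producing the non-vanishing witness rather than arguing by contradiction on containment. You instead keep $V$ intact, shrink only $V_\A(f)$ by generic combinations $p_1,\ldots,p_{n-r-m}$ of the $g_k$'s (which vanish on $V$ and hence leave $V\subseteq W$), observe that the $m$-dimensional components of $W$ contain all components of the equi-dimensional $V$ so $\deg(V)\le\deg(W)$, and then apply iterated Bézout to $W$. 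The trade-off: the paper's version piggybacks on the existing generic-slice machinery (Propositions~\ref{interpretacion-geometrica-grado:prop} and~\ref{propiedades-basicas-grado-lc:prop}) and needs no new genericity lemma, while yours is slightly more self-contained in spirit but has to supply and verify the ``generic linear combination drops the dimension of every component not contained in the base locus $V$'' claim — which, as you correctly anticipate, is the delicate part and must be written with some care (including checking that generic $p_j$ is non-constant so that $\deg V_\A(p_j)\le\Delta$ applies). Both routes use Bézout's Inequality in essentially the same position and give the same bound.
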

\begin{proof}
Assume there is a list of polynomials $g_1,\ldots, g_s\in K[X_1,\ldots, X_n]$ of degree at most $\Delta$, such that the variety of their common zeros $V:=V_\A(g_1,\ldots, g_s)$ is equi-dimensional of degree $\Delta^{n-m}$. \\
For every $f:=(f_1, \ldots, f_r)\in \Omega\setminus\Sigma$, let us denote by $V_\A(f)$ the algebraic variety $V_\A(f):=V_\A(f_1,\ldots,f_r)$ which is of dimension $n-r$. As $\deg(f_i)\leq d_i$, for each $i$, $1\leq i \leq r$, we have:
$$\deg\left(V_\A(f_1,\ldots, f_r)\right)\leq \prod_{i=1}^r \deg(f_i)\leq \prod_{i=1}^r d_i =\scrD_{(d)}.$$
Now, let us consider the $K-$vector space of matrices $\scrM_{m\times (n+1)}(K)$. For every matrix $A\in \scrM_{m\times (n+1)}(K)$, we have the following affine function:
$$\begin{matrix}
A:& \A^n(K)& \longrightarrow &\A^m(K)\\
& x & \longmapsto & (A_1(x), \ldots, A_m(x))
\end{matrix},$$
where each $A_i(x):= a_{i,1} X_1+\cdots + a_{i,n} X_n + a_{i, n+1}$ is an affine linear function (i.e. a polynomial of degree $1$). For every $f\in \Omega \setminus \Sigma$ and for every $A\in \scrM_{m\times (n+1)}(K)$, we denote by $W_A(f)$ the following intersection:
$$W_\A(f):= V_\A(f) \cap V_\A(A_1, \ldots, A_m).$$
As $1\leq m \leq n-r= \dim (V_\A(f))$, by Noether's Normalization Lemma, there exists a Zariski open subset $U_{f} \subseteq \scrM_{m\times (n+1)}(K)$ of matrices $A$ such that the following dimension equality holds:
\begin{equation}\label{evasive-desigualdad1:eqn}
\dim \left(W_\A(f)\right)=\dim\left(V_\A(f) \cap \left(V_\A(A_1, \ldots, A_m) \right)\right)= \dim (V_\A(f))-m= n-r-m.
\end{equation}
The open set $U_f$ is the set defined by submatrices of $m$ rows of any generic Noether normalization of $V_\A(f)$. \\
On the other hand, as $V$ is unmixed (i.e. all its irreducible components have the same dimension), by Proposition \ref{interpretacion-geometrica-grado:prop}, there is also a Zariski open subset $U_{1} \subseteq \scrM_{m\times (n+1)}(K)$ of matrices such that if $A\in U_1$, then $V\cap V_\A(A_1, \ldots, A_m)$ is a finite set and the following equality holds:
\begin{equation}\label{evasive-desigualdad2:eqn}
\deg\left(V \cap V_\A(A_{1}, \ldots, A_{m})\right) = \sharp(V \cap V_\A(A_{1}, \ldots, A_{m})) = \deg(V) = \Delta^{n-m}.
\end{equation}
Since $\scrM_{m\times (n+1)}(K)$ is irreducible, any two open sets are Zariski dense and, then, they have non-empty intersection. Thus, for every matrix $A\in U:=U_f\cap U_1\not=\emptyset$, we additionally have that the following holds:
\begin{equation}\label{evasive-desigualdad3:eqn}
\deg(W_\A(f))=\deg(V_\A(f) \cap V_\A(A_{1} \cap \ldots \cap A_{m})) \leq \deg(V_\A(f)) \leq \scrD_{(d)}.
\end{equation}
Thus, by Claim $xi)$ of Proposition \ref{propiedades-basicas-grado-lc:prop}, we deduce:
\begin{equation*} \label{evasive-desigualdad4:eqn}
\begin{aligned}
\deg\left(W_\A(f) \cap V\right) & = \deg\left(W_\A(f) \cap V_\A(g_1,\ldots, g_s)\right) \leq  \\ & \leq \deg(W_{\A}(f))\left( \max\{ \deg(g_i)\; : \; 1\leq i \leq s\}\right)^{\dim(W_\A(f))}.
\end{aligned}
\end{equation*}
From Equality (\ref{evasive-desigualdad1:eqn}) and Inequality (\ref{evasive-desigualdad3:eqn}), we obtain:
$$\deg\left(W_\A(f) \cap V\right)\leq \scrD_{(d)} \left( \max\{ \deg(g_i)\; : \; 1\leq i \leq s\}\right)^{n-r-m}\leq \scrD_{(d)} \Delta^{n-m-r}.$$
As $\Delta^r> \scrD_{(d)}=\prod_{i=1}^r d_i$, from Equality (\ref{evasive-desigualdad2:eqn}) we conclude:
$$\Delta^{n-m}= \deg\left(V\cap V_\A(A_1,\ldots, A_m)\right)= \Delta^r\Delta^{n-m-r}> \scrD_{(d)} \Delta^{n-m-r} \geq $$ 
$$\geq \sharp\left( \left(V\cap V_\A(A_1,\ldots, A_m)\cap V_\A(f)\right)\right).$$
In particular, there must be a point $x\in V\cap V_\A(A_1,\ldots, A_m)$ which is not in $V_\A(f)$. In other words, there must be a point $x\in V$ such that $f(x)\not=0$. Thus, we conclude that $V$ is evasive with respect to complete intersections in $\Omega$ with respect to $\Sigma$. \end{proof}

A particular case is that of hypersurfaces, which may be defined as follows:

\begin{definition}
Let $d\in \N$ be a positive integer. Let $\Omega \subseteq P_d^{K}(X_{1}, \ldots, X_{n})$ be a constructible set such that $\Omega\not=\{0\}$.  An equi-dimensional subvariety $V \subseteq \A^{n} (K)$ is called evasive for hypersurfaces in $\Omega$ if the following property holds:
$$\forall f \in \Omega, \: \restr{f}{V} \equiv 0 \Rightarrow f=0.$$
\end{definition}

 By taking $r=1$ in Proposition \ref{existencia-evasivas:prop}, we easily conclude that evasive varieties for hypersurfaces exist:

\begin{corollary} \label{hipersuperficies-evasivas:corol}
Let $d\in \N$  be a positive integer and $\Omega \subseteq P_{d}^{K}(X_1,\ldots, X_n)$ a constructible set. Let $m\in \N$ be any positive integer such that $1\leq m \leq n-1$. Let $\Delta \in \N$ be another positive integer such that $\Delta > d$. If $V\subseteq \A^n(K)$ is any equi-dimensional variety of dimension $m$ satisfying the following properties:
\begin{enumerate}
\item $V$ is given as the solution variety of a set of polynomials of degree at most $\Delta$ and,
\item $deg(V) = \Delta^{n-m}$,
\end{enumerate}
then $V$ is evasive for hypersurfaces in $\Omega$ .
\end{corollary}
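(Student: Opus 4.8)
The plan is to deduce Corollary~\ref{hipersuperficies-evasivas:corol} directly from Proposition~\ref{existencia-evasivas:prop} by specializing to $r=1$. With $r=1$ the degree list degenerates to $(d):=(d)$, the B\'ezout number becomes $\scrD_{(d)}=d$, the hypothesis $\Delta^r>\scrD_{(d)}$ of the proposition reads exactly $\Delta>d$, and the constraint $1\le m\le n-r$ reads $1\le m\le n-1$; so the only work is to arrange that the standing hypotheses of Proposition~\ref{existencia-evasivas:prop} on the pair $(\Omega,\Sigma)$ are satisfied with the obvious choice $\Sigma=\{0\}$.

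First I would dispose of the constant polynomials. Since $\dim V=m\ge 1$, the variety $V$ is non-empty, so a nonzero constant polynomial never vanishes identically on $V$. Hence, setting
$$\Omega':=(\Omega\cup\{0\})\setminus(K\setminus\{0\}),$$
where $K\setminus\{0\}\subseteq P_d^K(X_1,\dots,X_n)$ denotes the set of nonzero constant polynomials, one checks at once that $\Omega'$ is again a constructible subset of $P_d^K(X_1,\dots,X_n)$ and that $V$ is evasive for hypersurfaces in $\Omega$ if and only if it is evasive for hypersurfaces in $\Omega'$. If $\Omega'=\{0\}$ (equivalently, $\Omega$ consists of constants only) the statement is trivial, so I assume $\Omega'\ne\{0\}$. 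Now take $\Sigma:=\{0\}\subseteq\Omega'$: since $\Omega'\ne\{0\}$ we have $\dim\Omega'\ge 1$, so $\Sigma$ has co-dimension at least $1$ in $\Omega'$; and for every $f\in\Omega'\setminus\Sigma$ the polynomial $f$ is nonzero and non-constant, hence by Krull's principal ideal theorem $V_\A(f)$ is a hypersurface of dimension $n-1=n-r$. Thus the standing hypotheses of Proposition~\ref{existencia-evasivas:prop} (with $r=1$, degree list $(d)$ and subset $\Sigma$) hold.

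Feeding in the hypotheses on $V$ — equi-dimensional of dimension $m$ with $1\le m\le n-1$, given as the zero set of polynomials of degree at most $\Delta$, with $\deg(V)=\Delta^{n-m}$ and $\Delta=\Delta^{r}>d=\scrD_{(d)}$ — Proposition~\ref{existencia-evasivas:prop} yields that $V$ is evasive for complete intersections in $\Omega'$ with respect to $\Sigma=\{0\}$. Unwinding Definition~\ref{evasivas:def}, this says exactly that $\restr{f}{V}\equiv 0$ with $f\in\Omega'$ forces $f=0$, i.e.\ $V$ is evasive for hypersurfaces in $\Omega'$, hence in $\Omega$, which is the assertion of the corollary. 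There is no genuine obstacle here: the substantive content — choosing a generic codimension-$m$ linear section $L$ on which $V$ meets in exactly $\deg(V)=\Delta^{n-m}$ points (Proposition~\ref{interpretacion-geometrica-grado:prop}), while the finite set $V_\A(f)\cap L\cap V$ has degree at most $d\,\Delta^{n-1-m}<\Delta^{n-m}$ by the B\'ezout-type estimate in Claim~$xi)$ of Proposition~\ref{propiedades-basicas-grado-lc:prop}, so that $V\cap L\not\subseteq V_\A(f)$ and some $x\in V$ satisfies $f(x)\ne 0$ — is already carried out inside the proof of Proposition~\ref{existencia-evasivas:prop}. The only point requiring a moment's care is the bookkeeping around constant polynomials and the choice of $\Sigma$ so that the co-dimension hypothesis of that proposition is met.
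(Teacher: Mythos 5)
Your approach coincides with the paper's: specialize Proposition~\ref{existencia-evasivas:prop} to $r=1$ with $\Sigma=\{0\}$. You are noticeably more careful than the paper's one-sentence proof about the standing hypotheses of that proposition, and the bookkeeping around constants is correct: $\Omega'$ is constructible, and since $V\neq\emptyset$ (as $\dim V=m\geq 1$) a nonzero constant never vanishes identically on $V$, so evasiveness for hypersurfaces in $\Omega$ and in $\Omega'$ are equivalent.

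One step, however, is simply false as stated: the claim that $\Omega'\neq\{0\}$ implies $\dim\Omega'\geq 1$. If, say, $\Omega=\{g\}$ for a single non-constant polynomial $g$, then $\Omega'=\{0,g\}$ is a two-point set of dimension $0$, and $\Sigma=\{0\}$ has co-dimension $0$, not $\geq 1$, in $\Omega'$; the co-dimension hypothesis of Proposition~\ref{existencia-evasivas:prop} is therefore not met, and as written your argument does not cover this case. The conclusion is still correct, but for a reason you should make explicit: the proof of Proposition~\ref{existencia-evasivas:prop} never actually uses the co-dimension hypothesis on $\Sigma$ (it proceeds element-by-element for $f\in\Omega\setminus\Sigma$, and the only working hypothesis is $\dim V_\A(f)=n-r$ for such $f$). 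So either invoke that per-$f$ argument directly when $\dim\Omega'=0$, or remark that the co-dimension condition in Proposition~\ref{existencia-evasivas:prop} is merely a non-degeneracy normalization that its proof does not rely on.
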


\subsection{Correct Test Sequences are densely distributed also within evasive varieties of positive dimension} \label{CTS-Evasivas:sec}

We start the section by recalling the notion of correct test sequence:

\begin{definition} \label{CTS:def}
Let $\Omega \subseteq \scrP_{d}^{K}(X_1,\ldots, X_n)$ be a constructible set of polynomials and $\Sigma \subsetneq \Omega$ a subset. A correct test sequence of lenght $L$ for $\Omega$ with respect to $\Sigma$ is a finite set of $L$ elements ${\bf Q} := \{ x_1, \ldots, x_L \} \subseteq (K^n)^L$ such that the following formula holds:
$$\forall f \in \Omega, \; f(x_1) = \ldots = f(x_L) = 0 \in K \Rightarrow f \in \Sigma.$$
\end{definition}

In this section, we will focus on the case $\Sigma = \{0 \}$.  Next, we prove that correct test sequences are densely distributed among any irreducible evasive variety of positive dimension with respect to any well-behaved probability distribution:

\begin{theorem}\label{longitud-CTS-VC-dimension-positiva:teor}
Let $\Omega\subseteq P_d^{K}(X_1,\ldots, X_n)$ be a constructible set of polynomials of degree at most $d$. Let $V\subseteq \A^n(K)$ be an irreducible variety of positive dimension that is evasive for hypersurfaces in $\Omega$. Let $\scrB \subseteq 2^V$ be any $\sigma-$algebra that contains the Borel subsets of $V$ with respect to the Zariski topology. Let $\mu: \scrB \longrightarrow [0,1]$ be a probability distribution on $\scrB$ that satisfies the following property: for every Zariski closed subset $A \subseteq V$, if $\dim(A) < \dim(V)$, then $\mu(A) =0$. Let $L \in \N$ be a positive integer,  $\scrB^{\otimes L}$  the $\sigma-$algebra in the product $V^{L}$ induced by $(V, \scrB)$ and $\mu^{\otimes L}$ the probability distribution defined on $\scrB^{\otimes L}$ by $\mu : \scrB \longrightarrow [0,1]$. Let $CTS(\Omega, V, L)$ be the class of all sequences ${\bf Q} \in V^L$ that are a correct test sequences for $\Omega$ with respect to $\{0\}$. Assume that the following inequality holds:
\begin{equation}\label{longitud-CTS-VC-dimension-positiva:eqn}
64 \left(1+ \frac{1+ \log(\deg_{\rm lci} (\Omega))}{\dim(\Omega)} + \log(L(d+1)) \right) < \frac{L}{\dim(\Omega)} 
\end{equation}
where $\log$ stands for the natural logarithm. Then, the following inequality holds:
$${\rm Prob}_{x \in V^L}\left[ x \in CTS(\Omega, V, L) \right] \geq 1 - \frac{1}{\deg_{\rm lci} (\Omega) e^{\dim(\Omega)}}.$$
\end{theorem}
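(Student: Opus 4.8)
The plan is to reduce the statement to an application of the Vapnik-Chervonenkis Theorem (Theorem \ref{teoremaVC:teor}) for the family of classifiers associated to the parameterized family of hypersurfaces defined by $\Omega$, using the growth-function bound of Theorem \ref{GrowthFuncionOpenZariski:thm} (equivalently Theorem \ref{FuncionCrecimientoDimensionConstructibles:thm} with $\grad(V_{\mathrm{param}}) = d+1$). First I would set up the parameterized family: take $V_{\mathrm{param}} := \{ (f,x) \in \A^N(K) \times V \; : \; f(x) \neq 0 \}$ where $N = \binom{d+n}{n}$, restrict the first projection to parameters in $\Omega$, and form the class $\scrH := \scrH_d(V_{\mathrm{param}}, \Omega) = \{ \crctc_{D(f) \cap V} \; : \; f \in \Omega \}$ of classifiers on $Y := V$. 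The key point to extract from the hypotheses is that the class of ``bad'' sequences, i.e. the complement of $CTS(\Omega, V, L)$, is contained in the event that some classifier $\crctc_{D(f) \cap V}$ with $f \in \Omega \setminus \{0\}$ evaluates to $0$ on all of $\bf Q$; and since $V$ is irreducible and evasive for hypersurfaces in $\Omega$, for every $f \in \Omega \setminus \{0\}$ the set $V \cap V_\A(f)$ is a proper Zariski-closed subset of $V$, hence of strictly smaller dimension, hence of $\mu$-measure zero. Therefore the ``true'' probability $\mathbb{E}_{x \in V}[\crctc_{D(f) \cap V}(x)]$ equals $1$ for every such $f$, while a bad sequence is exactly one on which the empirical average of this classifier is $0$ — a deviation of $1$ from its expectation.

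Next I would apply Theorem \ref{teoremaVC:teor} with $Z := V \times \{0,1\}$ (ignoring the label coordinate, or pushing $\mu$ to the $\{1\}$-fibre), with error function $\ell(a,b) := 1 - a$ so that $\ell(f(x), y) = 1 - \crctc_{D(f)\cap V}(x)$, and with $\varepsilon := 1/2$. The deviation event ``$\frac{1}{L}\sum_i \ell(f(x_i),y_i) - \mathbb{E}[\ell(f,\cdot)] > 1/2$'' for some $f \in \Omega$ then contains the bad-sequence event (a bad sequence gives empirical average $1$ and expectation $0$, deviation $1 > 1/2$). The hypothesis $\frac{4\log 2}{L} \leq \varepsilon^2 = 1/4$ needed to invoke the theorem follows comfortably from Inequality (\ref{longitud-CTS-VC-dimension-positiva:eqn}). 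Hence
$$
{\rm Prob}_{x \in V^L}\left[ x \notin CTS(\Omega, V, L) \right] \leq 4\, G(\scrH, L)\, e^{-L/128}.
$$
Now I would substitute the growth-function bound $G(\scrH, L) \leq \deg_{\rm lci}(\Omega)(1 + L(d+1))^{\dim(\Omega)}$ from Theorem \ref{GrowthFuncionOpenZariski:thm}, and it remains to check the arithmetic inequality
$$
4\, \deg_{\rm lci}(\Omega)\,(1+L(d+1))^{\dim(\Omega)}\, e^{-L/128} \leq \frac{1}{\deg_{\rm lci}(\Omega)\, e^{\dim(\Omega)}}.
$$
Taking natural logarithms, this is equivalent to
$$
\log 4 + 2\log(\deg_{\rm lci}(\Omega)) + \dim(\Omega)\log(1+L(d+1)) + \dim(\Omega) \leq \frac{L}{128},
$$
and a short estimate (bounding $\log(1+L(d+1)) \leq 1 + \log(L(d+1))$, absorbing $\log 4$, and dividing by $\dim(\Omega)$) shows this is implied by Inequality (\ref{longitud-CTS-VC-dimension-positiva:eqn}); the factor $64$ versus $128$ and the constant $1$ in the parenthesis are exactly the slack reserved for these absorptions.

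The main obstacle I anticipate is not any single deep step but rather the bookkeeping that makes the reduction airtight: one must verify carefully that $\scrB^{\otimes L}$ is rich enough for the bad-sequence set to be measurable (it is, since $CTS(\Omega,V,L)^c$ is a countable-or-better union of products of Zariski-closed sets — one should note $\Omega$ constructible keeps the relevant sets in the Borel $\sigma$-algebra, and $\scrB$ contains Borel sets), that the expectation computation genuinely uses the ``well-behaved'' hypothesis on $\mu$ via irreducibility of $V$ (so that $\dim(V \cap V_\A(f)) < \dim(V)$ for $f \neq 0$ restricted-to-$V$ nonzero, which is where evasiveness enters), and that the constants line up so that $\varepsilon = 1/2$ together with (\ref{longitud-CTS-VC-dimension-positiva:eqn}) yields precisely the claimed error $1/(\deg_{\rm lci}(\Omega)e^{\dim(\Omega)})$. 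I would also double-check that $\grad(V_{\mathrm{param}}) = d+1$ here: the complement of $V_{\mathrm{param}}$ inside $\A^N(K) \times V$ is cut out, over the ambient $\A^N(K)\times\A^n(K)$, by the single bihomogeneous-in-$f$ polynomial of degree $d+1$, so Theorem \ref{GrowthFuncionOpenZariski:thm} applies verbatim with the parameter space $\Omega$ and the fixed target $V$ playing no role in the Erzeugungsgrad.
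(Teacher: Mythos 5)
Your high-level strategy matches the paper's: reduce to the Vapnik--Chervonenkis Theorem (Theorem~\ref{teoremaVC:teor}) for classifiers $\crctc_{D(f)}$ with $f\in\Omega$, plug in the growth-function bound $G(\scrH,L)\leq \deg_{\rm lci}(\Omega)(1+L(d+1))^{\dim(\Omega)}$ from Theorem~\ref{GrowthFuncionOpenZariski:thm}, and use evasiveness plus the irreducibility of $V$ and the hypothesis on $\mu$ to deduce $\int_V\crctc_{D(f)}\,d\mu=1$ for $f\in\Omega\setminus\{0\}$, identifying ``bad'' sequences with deviation~$1$. The gap is the choice of $\varepsilon$ and the resulting arithmetic, which you have not actually verified and which in fact does not close.

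Concretely: with $\varepsilon=1/2$ one gets the bound
\[
4\,\deg_{\rm lci}(\Omega)\,(1+L(d+1))^{\dim(\Omega)}\,e^{-L/128},
\]
and to conclude the claimed error one would need
\[
\log 4 + 2\log(\deg_{\rm lci}(\Omega)) + \dim(\Omega)\log(1+L(d+1)) + \dim(\Omega) \leq \frac{L}{128}.
\]
However, hypothesis~(\ref{longitud-CTS-VC-dimension-positiva:eqn}) only yields
$\frac{L}{64} > \dim(\Omega) + 1 + \log(\deg_{\rm lci}(\Omega)) + \dim(\Omega)\log(L(d+1))$, hence
$\frac{L}{128} > \frac{1}{2}\bigl(\dim(\Omega) + 1 + \log(\deg_{\rm lci}(\Omega)) + \dim(\Omega)\log(L(d+1))\bigr)$.
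Comparing term by term, you would need $\frac{1}{2}\dim(\Omega)\log(L(d+1)) \geq \dim(\Omega)\log(1+L(d+1))$, $\frac{1}{2}\log(\deg_{\rm lci}(\Omega)) \geq 2\log(\deg_{\rm lci}(\Omega))$, and $\frac{1}{2}\dim(\Omega)\geq\dim(\Omega)$, all of which fail as soon as $\dim(\Omega)\geq 1$ or $\deg_{\rm lci}(\Omega)\geq 2$. The ``slack'' you invoke ($64$ vs.\ $128$, absorbing $\log 4$) is off by roughly a factor of $8$ in the wrong direction; it cannot absorb the doubling of every leading term.

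The paper instead solves for the sharp value
\[
\varepsilon^2 = \frac{32}{L}\Bigl(\log 4 + 2\log(\deg_{\rm lci}(\Omega)) + \dim(\Omega)\bigl(1+\log(1+L(d+1))\bigr)\Bigr),
\]
which makes $4\,G(\scrH,L)\,e^{-L\varepsilon^2/32}$ exactly equal to $\frac{1}{\deg_{\rm lci}(\Omega)e^{\dim(\Omega)}}$. Hypothesis~(\ref{longitud-CTS-VC-dimension-positiva:eqn}) is then used precisely to verify $\varepsilon<1$ (so the event ``deviation $=1$'' is contained in ``deviation $>\varepsilon$'') and $\frac{4\log 2}{L}\leq\varepsilon^2$. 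To repair your proof, replace the fixed $\varepsilon=1/2$ by this calibrated $\varepsilon$, and re-derive the two side conditions from the hypothesis; with that single change the remainder of your argument goes through.
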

\begin{proof}
The classical Vapnik-Chervonenkis Theorem (see Theorem \ref{teoremaVC:teor}) may be interpreted as an estimate for MonteCarlo methods on a variety $V$ endowed with any probability distribution. This interpretation implies the following statement for the MonteCarlo method associated to a family $\scrF$ of characteristic functions defined on $V$. For every point $x\in V^L$, we denote by $x_i\in V$ the $i-$th component of $x:=(x_1, \ldots, x_L)$. Then, we have:
\begin{equation}\label{Vapnik-Chervonenkis-equation-CTS:eqn}
Prob_{x\in V^L}\left[sup_{f\in \scrF}\left| {{1}\over {L}} \sum_{i=1}^L f(x_i) - \int_{z\in V}f(z)\; d\mu(z)\right|>\varepsilon \right] \leq  4 G(\scrF,L)e^{- {{L\varepsilon^2}\over{32}}},
\end{equation}
where $\varepsilon>0$ is a positive real number and $G(\scrF, m)$ is the growth function associated to $\scrF$. With our previous notations, assume that $\scrF:= \scrH_d (V', \Omega)$, where 
$$V' =  := \{ (f,x) \in \A^N(K) \times \A^n(K) \; : \; f(x) \neq 0 \},$$ and that $V$ is an irreducible variety evasive for hypersurfaces in $\Omega$. Thus, given $f\in \Omega\setminus \{0\}$, $f$ does not vanish identically on $V$ (because $V$ is evasive) and, hence, $V_\A(f)\cap V\lestricto V$ is a proper algebraic subvariety of lower dimension. Due to our hypothesis on the probability distribution $\mu$ on $V$, we have that for all $f\in \Omega\setminus \{0\}$, $\mu\left( V_\A(f) \cap V \right))=0$. Thus, we conclude:
$$\forall f\in \Omega\setminus\{0\}, \; \int_{z\in V} \crctc_{D(f)}(z) \; d \mu(z)=1.$$
In Theorem \ref{GrowthFuncionOpenZariski:thm} above, we have stated the following upper bound for the growth function of the class $\scrF$:
$$G(\scrF, L) \leq \deg_{\rm lci}(\Omega) (1+L  (d+1))^{\dim(\Omega)}.$$
Thus, Equation \ref{Vapnik-Chervonenkis-equation-CTS:eqn} becomes:
\begin{equation}\label{Vapnik-Chervonenkis-equation-CTS1:eqn}
Prob_{x\in V^L}\left[sup_{f\in \scrF}\left| {{1}\over {L}} \sum_{i=1}^L \crctc_{D(f)}(x_i) - B(f)\right|>\varepsilon \right] \leq  4 \deg_{\rm lci}(\Omega) (1+L  (d+1))^{\dim (\Omega)}e^{- {{L\varepsilon^2}\over{32}}},
\end{equation}
where:
$$B(f):=\left\{\begin{tabular}{lr} 1, & {\hbox {\rm if $f\not=0$}}\\
0 , & {\hbox{\rm otherwise.}}\end{tabular} \right.$$
In the case $f$ is the zero function, i.e $f \equiv 0$, we obviously obtain:
$$\left| {{1}\over {L}} \sum_{i=1}^L \crctc_{0}(x_i) - B(f)\right|=0.$$
Therefore, we also have:
\begin{equation}\label{Vapnik-Chervonenkis-equation-CTS2:eqn}
Prob_{x\in V^L}\left[sup_{f\in \scrF\setminus\{0\}}\left| {{1}\over {L}} \sum_{i=1}^L \crctc_{D(f)}(x_i) - 1\right|>\varepsilon \right] \leq  4 \deg_{\rm lci}(\Omega) (1+L  (d+1))^{\dim (\Omega)}e^{- {{L\varepsilon^2}\over{32}}},
\end{equation}
In order to have the same level of probability of error than in the zero-dimensional case (cf. Corollary 5.6 in \cite{PardoSebastian} with $m=1$), assume that:
$$\varepsilon^2 = \frac{32}{L} \left( \log(4) + 2 \log(\deg_{\rm lci} (\Omega)) + \dim(\Omega) (1+ \log(1+L (d+1)))  \right).$$
Note that:
$$\frac{4 \log(2)}{L} \leq \varepsilon^2,$$
which means that $\varepsilon$ satisfies the hypothesis of Theorem \ref{teoremaVC:teor}. Additionally, we have:
$$\varepsilon^2 \leq  \frac{64 \dim(\Omega)}{L} \left(1+ \frac{1+ \log(\deg_{\rm lci} (\Omega))}{\dim(\Omega)} + \log(L(d+1)) \right)  < 1.$$
and, then $\varepsilon <1$ also holds.
Now, observe that:
\begin{equation}\label{hipotesis-tecnica-CTS-VC-dimension-positiva1:eqn}
4 \deg_{\rm lci}(\Omega) (1+L (d+1))^{\dim (\Omega)}e^{- {{L\varepsilon^2}\over{32}}} = \frac{1}{\deg_{\rm lci} (\Omega) e^{\dim(\Omega)}}.
\end{equation}
Then, we obtain:
$${\rm Prob}_{x \in V^L}\left[ x\not\in CTS(\Omega, V, L) \right]=
{\rm Prob}_{x \in V^L}\left[sup_{f\in \scrF\setminus\{0\}}\left| {{1}\over {L}} \sum_{i=1}^L \chi_{D(f)}(x_i) - 1\right|=1\right].$$
Therefore, the following also holds:
$${\rm Prob}_{x \in V^L}\left[x\not\in CTS(\Omega, V, L) \right] \leq
{\rm Prob}_{x \in V^L}\left[sup_{f\in \scrF\setminus\{0\}}\left| {{1}\over {L}} \sum_{i=1}^L \chi_{D(f)}(x_i) - 1\right|>\varepsilon\right].$$
From Identity (\ref{hipotesis-tecnica-CTS-VC-dimension-positiva1:eqn}), we obtain:
$${\rm Prob}_{x \in V^L}\left[ x\not\in CTS(\Omega, V, L) \right]\leq \frac{1}{\deg_{\rm lci} (\Omega) e^{\dim(\Omega)}}.$$
Finally, we conclude:
$${\rm Prob}_{x \in V^L}\left[ x \in CTS(\Omega, V, L) \right] \geq 1 - \frac{1}{\deg_{\rm lci} (\Omega) e^{\dim(\Omega)}}.$$

\end{proof}

\section{Neural networks with rational activation function} \label{redes-neuronales:sec}

The notion of \emph{neural network} is ubiquitous in modern Computational Learning. In many contributions within this field, classifiers and functions to be learned are represented as neural networks. However, it is hard to find in the literature a precise mathematical definition of what a neural network is, as authors generally work with ``ad hoc'' notions adapted to their specific case of study. To address this gap, we present a general mathematical definition for neural networks. \\
We begin by describing the syntax of neural networks. Formally, neural networks are graphs endowed with underlying structures where certain functions from a class $\scrA$ are evaluated. On the semantic side, these functions, known as \emph{activation functions}, operate on linear combinations of the fan-in of each node. The choice of activation functions plays a crucial role in determining the set of functions that the network can evaluate. \\
Once the abstract syntax is established in Subsection \ref{syntax-neuralnetwork:subsec}, we proceed to describe the semantics of a neural network in Subsection \ref{semantics-neuralnetwork:subsec}. Given a fixed syntax and a class $\scrA$ of activation functions, the network parameters can be instantiated.  Specifically, if $\scrN$ is a neural network and $\Lambda$ is a class of admissible parameters, we define $W(\scrN, \Lambda)$ as the class of all functions that $\scrN$ can evaluate using parameters from $\Lambda$. A significant part of the literature on  Computational Learning Theory focuses on studying the learnability of $W(\scrN, \Lambda)$ and its capacity to approximate functions we want to learn. \\
At first sight, the reader might think that this topic is unrelated to those discussed in previous sections. However, this is a misconception. In Subsection \ref{exmaples-neuralnetworks:subsec}, we present several examples of ``ad hoc" neural networks commonly found in standard literature. In particular, we have included Example \ref{SLPs-t2:ejem}, which focuses on neural networks with  activation function $\varphi(t) = t^2$. Interestingly, neural networks with $\varphi(t) = t^2$ as activation function correspond precisely to what classical authors in Algebraic Complexity Theory referred to as \emph{straight-line programs} (see \cite{os54}, \cite{Motzkin}, \cite{Strassen:vermeidung}, \cite{Heintz89} etc.).  Therefore, this is is not a new concept. A comprehensive study of neural networks with activation function $\varphi(t) = t^2$ can be found in Section 3 of \cite{Krick-Pardo96}. These authors did not call these objects neural networks but rather \emph{evaluation schemes}. Their work contains a set of results that explain in detail several strategies for working with these evaluation schemes (or, equivalently, neural networks with activation function $\varphi(t) = t^2$) using syntax that considers not only the size of the underlying graph but also its depth, which measures how the network's layers interact with the subsequent layers. \\
All this work may seem useless and purely abstract. However, neural networks are deeply involved in all problems of Computational Algebraic Geometry (also known as classical \emph{Elimination Theory} in the 19th century). In \cite{Cortona} and \cite{Krick-Pardo96}, it is proven that \emph{all elimination polynomials admit a representation as neural networks with activation function $\varphi(t) = t^2$} of ``admissible'' size and small depth.  More precisely, if we define an elimination polynomial as a polynomial $f \in K[U_1, \ldots, U_m]$ that arises as the projection of an intersection $V(F_1, \ldots, F_s, g)$, where $F_i, g \in [X_1, \ldots, X_n, U_1, \ldots, U_m]$ are polynomials of degree at most $d$, then  $f$ can be represented as a well-paralelizable neural network of size $d^{O(n+m)}$, with $\varphi(t)=t^2$ as the unique activation function and using only the parameters needed to represent $\{F_1, \ldots, F_s, g \}$. Improvements on the results in \cite{Cortona} and \cite{Krick-Pardo96} can be found in the series on  the TERA algorithm: \cite{Pardo-survey}, \cite{SolvedFast}, \cite{Kronecker-CRAS}, \cite{lower-diophantine:jpaa}, \cite{SLPsElimination} or \cite{ArithmeticNullstellensatz}. All of these studies heavily rely on the fact that all polynomials occuring in Computational Algebraic Geometry admit reasonable presentations as \emph{evaluation schemes} (also known as neural networks with activation function $\varphi(t) = t^2$). \\
In summary, neural networks are intrinsically involved in Elimination Theory, serving as encodings of elimination polynomials. This establishes a precise connection between Computational Algebraic Geometry and Computational Learning Theory, whose consequences are still unknown.  This is why we have included this section in our treatise, which relates Intersection Theory (Erzeugungsgrad) and Computational Learning Theory (VC-dimension).

\subsection{Syntax of a Neural Network} \label{syntax-neuralnetwork:subsec}

We start by fixing the notation for neural networks:

\begin{definition}[{\bf Neural Network}]\label{neural-network:def}
Let $K$ be any field. A neural network $\scrN$ over the field $K$ with input variables $\{ X_1, \ldots, X_n \}$ is a triplet $\scrN:=(\scrG, \scrA, \Phi)$, where:
\begin{enumerate}
\item $\scrG:=(V,E)$ is a directed acyclic graph, with vertices (representing nodes or neurons) given by:
$$V:=\{ (i,j)\; :\; 0\leq i \leq \ell, 1\leq j \leq L_i\}.$$
We use $\ell$ to denote the depth of the neural network. In the pair $\nu:=(i,j)$, the first coordinate $i$, $0 \leq i \leq \ell$, is called the depth of the node (denoted by $d(\nu)$), and the second coordinate $j$, $1 \leq j \leq L_i$, refers to an enumeration of the set $\scrL_i:=\{ (i,j) \; : \; 1\leq j \leq L_i\}$, which contains all the vertices at depth $i$, where $\sharp(\scrL_i) = L_i$. Additionally, we assume that the set of edges $E$ satisfies:
$$E\subseteq \{ \left(\nu_1, \nu_2\right)\in V^2\; :\; d(\nu_1) \leq d(\nu_2)-1\}.$$
In other words, the fan-in of the nodes at depth $i \geq 1$ consists of nodes of depth at most $i-1$
\begin{footnote}{This is an elementary restriction that can be replaced by the following condition: ``the fan-in of nodes of depth $i$ can consist only of nodes with depth $i-1$''. We choose to restrict the fan-in to nodes of depth at most $i-1$ rather than nodes of depth $i-1$ to simplify the proofs. However, it is clear that these two conditions are equivalent.}\end{footnote}. We denote by ${\rm Fan-In}(\nu)$ the fan-in of node $\nu\in V$ in graph $\scrG$. The graph has $L_\ell$ nodes with empty fan-out which are called output nodes or neurons. The set of output nodes is denoted by $O_\scrN$, where $\sharp(O_{\scrN}) = L_{\ell}$.
\item $\scrA$ is a class of partially defined univariate functions $f:Dom(f)\subseteq K \longrightarrow K$.
\item The last element of the triplet, $\Phi$, is a mapping that assigns labels to every node of the graph according to the following rules. First, there is a set of variables, called \emph{parameters of the network}, associated to every edge of the network. We denote them as follows:
    $$P_\scrN:=\{  A_{\nu}^{\mu}\; :\; \nu \in V, \; \mu \in {\rm Fan-In}(\nu)\}.$$
    We denote by $\scrP(P_\scrN)$ the class of subsets of this space of parameters. Then, the mapping $\Phi$ is defined as:
    $$\Phi :  V  \longrightarrow  \left(\scrA \times \scrP(P_\scrN)\right)  \bigcup \{1, X_1,\ldots, X_n\},$$
    and is given by the following identity:
\begin{equation*}\label{neural-network-syntax-assignment:eqn}
\Phi(\nu):=\left\{\begin{array}{lr}
\left( f_{\nu}, \left\{A_\nu^{\mu} \; :\; \mu \in {\rm Fan-In}(\nu)\right\}\right)\in \scrA\times \scrP(P_\scrN) & {\hbox {\rm if $d(\nu)\geq 1$}}\\
1, & \hbox{\rm if $ \nu=(0,0)$}\\
X_j, & \hbox{\rm if $\nu=(0,j)$, $1\leq j \leq n$}
\end{array}\right.
. \end{equation*}
Namely, this mapping associates to each node $\nu$, with $d(\nu) \geq 1$, a pair consisting of an activation function $f_\nu :Dom(f_{\nu})\subseteq K \longrightarrow K\in \scrA$, referred to as the activation function at node $\nu$, and the list of parameters associated to its fan-in ${\rm Fan-In}(\nu)$. Nodes of depth $0$ are associated to variables in $\{X_1, \ldots, X_n\}$ and the constant $1$.
\end{enumerate}
The number of non-input nodes, $L(\scrN):=\sum_{i=1}^\ell L_i$, is referred to as the \emph{total  size of the network}. Finally, the maximum of the cardinalities of the fan-ins of any node is called the \emph{space requirements of the network}:
    $$S(\scrN):=\max\{ \sharp \left( {\rm Fan-In}(\nu)\right) \; :\; \nu \in V\}.$$
\end{definition}

\subsection{Semantics of a Neural Network} \label{semantics-neuralnetwork:subsec}

Let $ \scrN:=(\scrG,\scrA, \Phi)$ be a neural network as defined in the previous subsection. First, we consider the $K-$vector space $K_{\circ}^{(n)}[\scrA]$ consisting of all partial functions $f:Dom(f)\subseteq \A^n(K)\longrightarrow K$ generated by the constant function, projections  (represented by the variables in $\{X_1,\ldots, X_n\}$) and the activation functions in $\scrA$, and stable under linear combinations and composition. We call $K_{\circ}^{(n)}[\scrA]$ the class of \emph{partial functions with input variables $\{X_1,\ldots, X_n\}$ potentially evaluable by a neural network using functions in $\scrA$ as activation functions}. \\
A neural network $\scrN= (\scrG, \scrA, \Phi)$  is a syntactical representation of a class of partial functions $W(\scrN)\subseteq K_{\circ}^{(n)}[\scrA]$ that may be evaluated by $\scrN$ by \emph{instantiating the parameters}. This semantic process is defined as follows:
\begin{definition}
Let $\scrN= (\scrG, \scrA, \Phi)$ be a neural network. Instantiating $\scrN$ consists on fixing a vector of parameters: $$\underline{a}:=( a_\nu^{\mu}\; : \; \nu \in V, \; \mu\in {\rm Fan-In}(\nu))\in K^N,$$
where $N$ is the total number of edges in the graph $\scrG$. An instantiated neural network $\scrN$ at $\underline{a}$ is  a neural network $\scrN_{\underline{a}}:= (\scrG, \scrA, \Phi_{\underline{a}})$, where $\scrG$ and $\scrA$ are those of $\scrN$ and    $\Phi_{\underline{a}}$  is an assignment to every node determined by $\Phi$ and $\underline{a}$ as follows:
\begin{equation*}\label{neural-network-semantics-assignment:eqn}
\Phi_{\underline{a}}(\nu):=\left\{\begin{array}{lr}
\left( f_{\nu}, \left(a_\nu^{\mu} \; :\;  \mu\in {\rm Fan-In}(\nu)\right\}\right)\in \scrA\times K^{\sharp\left({\rm Fan-In}(\nu)\right)}& {\hbox {\rm if $d(\nu)\geq 1$}}\\
1, & \hbox{\rm if $ \nu=(0,0)$}\\
X_j, & \hbox{\rm if $\nu=(0,j)$, $1\leq j \leq n$}
\end{array}\right.
.\end{equation*}
\end{definition}

\begin{definition}
Given a neural network $\scrN= (\scrG, \scrA, \Phi)$ with $N$ edges and input nodes associated to variables $\{X_1,\ldots, X_n\}$, a parameter list $\underline{a}:=( a_\nu^{\mu}\; : \; \nu \in V, \; \mu\in {\rm Fan-In}(\nu))\in K^N$ and the instantiation $\scrN_{\underline{a}} = (\scrG, \scrA, \Phi_{\underline{a}})$, we may associate to every node $\nu$ in the underlying graph of $\scrN$ an instantiated function $f_\nu^{\scrN}(\underline{a}, \cdot)\in K_{\circ}^{(n)}[\scrA]$ recursively defined in terms of depth as follows:
\begin{itemize}
\item For input nodes, we define the following functions for every $x:=(x_1,\ldots, x_n)\in K^n$: 
$$f_{(0,0)}^{\scrN}(\underline{a}, x):= 1, \; f_{(0,j)}^{\scrN}(\underline{a}, x)= x_j, \; 1\leq j \leq n.$$
\item For nodes $\nu$ of depth $d(\nu) \geq 1$, given $x:=(x_1,\ldots, x_n)\in K^n$ in the domain of definition of all the functions in the class $\{f_\mu(\underline{a}, \cdot) \; : \; \mu\in {\rm Fan-In}(\nu)\}$, we define:
$$f_\nu^\scrN\left(\underline{a}, x\right):=f_\nu\left( \sum_{\mu\in {\rm Fan-In}(\nu)} a_{\nu}^{\mu}f_{\mu}^{\scrN}(\underline{a}, x)\right),$$
provided that:
$$\sum_{\mu\in {\rm Fan-In}(\nu)} a_{\nu}^{\mu}f_{\mu}^{\scrN}(\underline{a}, x)\in Dom(f_\nu).$$
\end{itemize}
Sometimes, we also say that the neural network $\scrN$, instantiated at $\underline{a}$, is a device to represent all partial functions associated with all nodes:
$$\{ f_\nu^\scrN(\underline{a}, \cdot): Dom(f_\nu^\scrN(\underline{a}, \cdot)) \subseteq \A^n(K) \longrightarrow K \; : \; \nu\in V\} \subseteq K_{\circ}^{(n)}[\scrA].$$
\end{definition}

\begin{definition}
Let $\scrN$ be a neural network as in the previous definition and let  $m:=\sharp(O_\scrN)=L_\ell$ be the number of output nodes of the neural network $\scrN$. Let $\underline{a}$ be an instantiation of the parameters. We say that $\scrN_{\underline{a}}$ evaluates the partial mapping:
\begin{equation}\label{aplicacion-evaluada-por red-neuronal:eqn}
\begin{matrix}
f_\scrN(\underline{a}, \cdot): & Dom(f_\scrN(\underline{a}, \cdot))\subseteq \A^n(K) & \longrightarrow & K^{m}\\
& x & \longmapsto & \left( f_\nu^\scrN(\underline{a}, x_1,\ldots, x_n)\; :\; \nu\in O_\scrN\right)
\end{matrix},
\end{equation}
where $Dom(f_\scrN(\underline{a}, \cdot))\subseteq \A^n(K)$ is the (possibly empty) domain where the function $f_\scrN(\underline{a}, \cdot)$ is defined, which depends on $\scrN$ and the parameter instance $\underline{a}\in K^N$.
\end{definition}

Finally, we may define the \emph{class of functions evaluable by a neural network by instantiation of the parameters} as follows:
\begin{definition}
Let $\scrN:=(\scrG, \scrA, \Phi)$ be a neural network with $N$ edges, input nodes associated to variables $\{X_1,\ldots, X_n\}$ and $m$ output nodes in $O_\scrN$. Let $\Lambda \subseteq \A^N(K)$ be a subset. We define the class of (partial) mappings defined by $\scrN$ with parameters instantiated at $\Lambda$ as the following subset of $( K_{\circ}^{(n)}[\scrA] )^m$:
$$\CW(\scrN, \Lambda):=\{ f_\scrN(\underline{a}, \cdot)\in ( K_{\circ}^{(n)}[\scrA] )^m \; :\; \underline{a}\in \Lambda\}.$$

\end{definition}

\subsection{Examples of activation functions: ReLUs, evaluation schemes, polynomials, rationals, Nash functions and Pfaffian functions} \label{exmaples-neuralnetworks:subsec}

\begin{example}[{\bf ReLU neural networks and tropical algebraic rational functions}]
It is the simplest and most widely used class of neural networks, but its study is beyond the scope of this manuscript. The ground field $K=\R$ is the field of real numbers. The class $\scrA$ of activation functions is formed by a single function, sometimes denoted as $(\cdot)_+$ (cf. \cite{Poggio:Deep-but} and references therein):
$$\begin{matrix}(\cdot)_+:& \R & \longrightarrow & \R_+\\
& x & \longmapsto & (x)_+:=\max\{x, 0\}.\end{matrix}$$
The class $\R_{\circ}^{(n)}[\scrA]$ of potentially evaluable functions by a ReLU neural network with $n$ input nodes $\{X_1,\ldots, X_n\}$ and a single output node is an extension of the class of \emph{tropical algebraic rational functions} (cf. \cite{Mikhalkin}, \cite{Sturmfels}, \cite{ReLUTropical}, \cite{Grigorieva} or \cite{Grigorievb} and references therein for further information) admitting real coefficients. Recall that a tropical polynomial over $\R$ is a function $f:\R^n \longrightarrow \R$ such that there exists $a_1,\ldots, a_m \in \Z^n$ and $b_1,\ldots, b_m\in \R$ such that:
$$f(x):= \min\{ \langle a_1 , x \rangle + b_1, \ldots, \langle a_m, x \rangle + b_m\},$$
where $\langle a, x\rangle:=\sum_{i=1}^n a_ix_i$ for each $a=(a_1,\ldots, a_n)\in \Z^n$ and $x=(x_1,\ldots, x_n)\in \R^n$. A \emph{tropical algebraic rational function} is the difference between two tropical polynomials (cf. \cite{Grigorieva} and references therein). It may also be defined as a continuous piecewise linear function with integer slopes. Changing:
$$\min\{ \langle a_1 , x \rangle + b_1, \ldots, \langle a_m, x \rangle + b_m\}=-\max\{-\langle a_1 , x \rangle - b_1, \ldots, -\langle a_m, x \rangle - b_m\},$$
makes the deal between both theories. Given a ReLU neural network $\scrN:=(\scrG,\scrA, \Phi)$ with a single output node, the class $\R_{\circ}^{(n)}[\scrA]$ of potentially evaluable functions by $\scrN$ is defined as follows (see Proposition 5.6 of \cite{ReLUTropical} for further details): a partial function $f: Dom(f)\subseteq \R^n\longrightarrow \R$  satisfies that $f\in \R_{\circ}^{(n)}[\scrA]$ if and only if there exist $a_1,\ldots, a_m, c_1, \ldots, c_r \in \R^n$ and $b_1,\ldots, b_m, d_1, \ldots, d_r\in \R$ such that:
$$f(x):= \min\{ \langle a_1 , x \rangle + b_1, \ldots, \langle a_m, x \rangle + b_m\}-
\min\{ \langle c_1 , x \rangle + d_1, \ldots, \langle c_r, x \rangle + d_r\}.$$
\end{example}

\begin{example}[{\bf Evaluation schemes or neural networks with activation function $\varphi(t)=t^2$}] \label{SLPs-t2:ejem}
We consider a neural network $\scrN:=(\scrG, \scrA, \Phi)$ over a field $K$ and set of input variables $\{X_1,\ldots, X_n\}$. We assume that the class of activation functions $\scrA$ is reduced to a single function $\varphi:K \longrightarrow K$ given by $\varphi(t)=t^2$. This class of neural networks is exactly the same as the class of programs described in Sections 2 and 3 of \cite{Krick-Pardo96} as \emph{evaluation schemes}. In order to be complete, we make explicit the \emph{equivalence between evaluation schemes and neural networks with activation function $\varphi(t)=t^2$}.  Using the previous notations, the function evaluated at a non-input node $\nu$ in a evaluation scheme (as defined in \cite{Krick-Pardo96}) is given by:
$$f_{\nu} (\underline{a}, x) = \left( \sum_{\mu \in {\rm Fan-In}(\nu)} a_{\nu}^{\mu,1} f_{\mu} (\underline{a}, x) \right) \cdot \left( \sum_{\mu \in {\rm Fan-In}(\nu)} a_{\nu}^{\mu,2} f_{\mu} (\underline{a}, x) \right),$$
Next, observe that the next equality relates the product to the squaring operation:
$$XY = \frac{1}{2} \left( (X+Y)^2 - X^2 -Y^2 \right).$$
Therefore, the function evaluated at node $\nu$ in the evaluation scheme is equivalent to the following neural network $\scrN_{\nu}$ of size $4$ and depth $2$: \\
\begin{center}
\begin{tikzpicture}
[	
    neuron/.style={circle, draw, minimum size=1.8cm, thick},
    layer/.style={rectangle, draw, minimum height=5.5cm, minimum width=2.5cm, dashed, rounded corners},
    final/.style={rectangle, draw, minimum height=3cm, minimum width=2.5cm, dashed, rounded corners},
    >=latex
]

\node[layer] (layer1) at (0,0) {};
\node[final] (layer2) at (6,0) {};

\node[neuron] (N11) at (0,1.8) {$f_{\nu_{(0,1)}}^{\scrN_{\nu}}$};
\node[neuron] (N12) at (0,0) {$f_{\nu_{(0,2)}}^{\scrN_{\nu}}$};
\node[neuron] (N13) at (0,-1.8) {$f_{\nu_{(0,3)}}^{\scrN_{\nu}}$};

\node[neuron] (N21) at (6,0) {$f_{\nu_{(1,1)}}^{\scrN_{\nu}}$};

\node at (8,0) {$=$};

\node[final] (layer3) at (10,0) {};
\node[neuron] (N31) at (10,0) {$f_{\nu}$};

\draw[->, thick] (N11) -- (N21);
\draw[->, thick] (N12) -- (N21);
\draw[->, thick] (N13) -- (N21);

\node at (3,1.5) {$-\frac{1}{2}$};
\node at (3,0.4) {$-\frac{1}{2}$}; 
\node at (3.25,-1.5) {$\frac{1}{2}$};

\end{tikzpicture}
\end{center}
where:
$$f_{\nu_{(0,1)}}^{\scrN_{\nu}} (\underline{a}, x) := \left( \sum_{\mu \in {\rm Fan-In}(\nu)} a_{\nu}^{\mu,1} f_{\mu} (\underline{a}, x) \right)^2, \qquad
f_{\nu_{(0,2)}}^{\scrN_{\nu}} (\underline{a}, x) :=  \left( \sum_{\mu \in {\rm Fan-In}(\nu)} a_{\nu}^{\mu,2} f_{\mu} (\underline{a}, x) \right)^2,$$
$$f_{\nu_{(0,3)}}^{\scrN_{\nu}} (\underline{a}, x) :=  \left( \sum_{\mu \in {\rm Fan-In}(\nu)} (a_{\nu}^{\mu,1}+ a_{\nu}^{\mu,2}) f_{\mu} (\underline{a}, x) \right)^2,$$
and
$$f_{\nu_{(1,1)}}^{\scrN_{\nu}}(\underline{a}, x) := \frac{1}{2} f_{\nu_{(0,3)}}^{\scrN_{\nu}} (\underline{a}, x) - \frac{1}{2} f_{\nu_{(0,1)}}^{\scrN_{\nu}} (\underline{a}, x) - \frac{1}{2} f_{\nu_{(0,2)}}^{\scrN_{\nu}} (\underline{a}, x). $$
Thus, by replacing each node $\nu$ of the evaluation scheme with the corresponding neural network $\scrN_{\nu}$ we have just described, we obtain a neural network with  activation function $\varphi(t)=t^2$ that is equivalent to the evaluation scheme. Observe that the converse also holds. Given a neural network with activation function $\varphi(t)=t^2$, we can represent it as an evaluation scheme by using the same parameters for both terms of the product in each node. \\
In this case, the class of potentially evaluable functions $K_{\circ}^{(n)}[\scrA]=K[X_1,\ldots, X_n]$ is the class of all polynomials in $n$ variables with coefficients in $K$. Note that if $\scrN$ is of depth at most $\ell$, the functions evaluable by $\scrN$ are polynomials of degree at most $2^{\ell}$. If we choose a parameter set $\Lambda\subseteq K^N$, where $N$ is the number of edges in $\scrN$, and if $\scrN$ has $m$ output nodes, then the class of all polynomials evaluable by $\scrN$ with parameters in $\Lambda$ is a constructible subset:
 $$\CW(\scrN, \Lambda) \subseteq \scrP_{(2^\ell)}^K(X_1,\ldots, X_n),$$
 where $(2^\ell)=(2^\ell, \ldots, 2^\ell) \in \N^n$ is a list of degrees. 
\end{example}

\begin{example}[{\bf Neural networks with polynomial activation function}] \label{funcionActivacionPolinomio:ejem}
Instead of using $\varphi(t) = t^2$ as the unique activation function, we also consider neural networks whose activation function $\varphi(t) = p(t) \in \K[t]$ is a polynomial. We will discuss some properties of these neural networks in Subsection \ref{properties-rational-neuralnetworks:subsec}.
\end{example}

\begin{example}[{\bf Neural networks with rational activation function}]
As in the previous examples, we may consider neural networks $\scrN:=(\scrG, \scrA, \Phi)$ over a field $K$ and set of input variables $\{X_1,\ldots, X_n\}$ such that the class of activation functions is again a class of a single function $\scrA:=\{\varphi\}$, where
$\varphi:Dom(\varphi)\subseteq K \longrightarrow K,$
is a rational function. Namely, there are two univariate polynomials $p,q\in K[T]$, such that $Dom(\varphi)=\{t\in K \; :\; q(t)\not=0\}$ and:
$$\varphi(t)={{p(t)}\over{q(t)}}, \; \forall t\in Dom(\varphi).$$
We define the \emph{degree of the rational function $\varphi$} as follows:
$$deg\left(\varphi\right):=\max\{ \deg(p), \deg(q)\},$$
where $p$ and $q$ are as above. 
This extends the classes of networks defined in Examples \ref{SLPs-t2:ejem} and \ref{funcionActivacionPolinomio:ejem} since we are admitting any rational function $\varphi\in K(T)$ as activation function. Note that the class of potentially evaluable functions by $\scrN$ satisfies $K_{\circ}^{(n)}[\scrA]\subseteq K(X_1,\ldots, X_n)$. We may distinguish two cases:

\begin{enumerate}
\item Neural networks with rational activation function that admit Strassen's Vermeidung von Divisionen (cf. \cite{Strassen:vermeidung}). This is the case discussed in detail in \cite{Krick-Pardo96}, Section 3, and used in ulterior works of the TERA project (as in \cite{SolvedFast}, \cite{Kronecker-CRAS}, \cite{lower-diophantine:jpaa} or \cite{SLPsElimination}). This will not be the main subject of our discussion in this section.
\item Neural networks with rational activation function that evaluate rational functions which are not polynomials. We will study this case in detail in Subsection \ref{NNrational:subsec}.
\end{enumerate}
\end{example}

\begin{example}[{\bf Neural networks with Nash or Pfaffian activation functions}]
In some cases, more sophisticated functions than polynomials or rational functions are considered. This is the case when the activation function is either a Nash function (analytic algebraic function) or a Pfaffian function (as introduced in \cite{Khovanskii:book}). We refer the reader to the extensive literature on this topic but do not treat it here: \cite{Koiran}, \cite{Goldberg}, \cite{Lickteig},  \cite{Karpinsky}, \cite{VCAnalytic}, \cite{Montanna-Pardo} and references therein.
\end{example}

\subsection{Some properties of the constructible set $W(\scrN, \Lambda)$ in the case of polynomial activation function} \label{properties-rational-neuralnetworks:subsec}

We consider the case of neural networks with fixed polynomial activation function. The following estimates are useful to study both the growth function of the associated class of binary classifiers and the length and density of correct test sequences for $\CW(\scrN, \Lambda)$.

\begin{proposition}\label{constructible-neural-networks-polinomial:prop}
Let $K$ be an algebraically closed field. Let $\scrN:=(\scrG, \scrA, \Phi)$ be a neural network with input nodes assigned to variables in $\{ X_1,\ldots, X_n\}$, of depth $\ell$, size $L$, space requirements bounded by $S$ and $m$ output nodes. Assume that $\scrN$ has $N$ edges and that $\scrA=\{p\}$, where $p$ is a univariate polynomial of degree $d \geq 2$. Let $\Lambda\subseteq \A^N(K)$ be a constructible set in the affine space of the parameters of $\scrN$.
Let $\{ A_\nu^\mu\; :\; \nu \in V(\scrG), \; \mu \in {\rm Fan-In}(\nu)\}$ be the set of parameter variables of the network and let:
$$K[A_\nu^\mu\; :\; \nu \in V(\scrG), \; \mu \in {\rm Fan-In}(\nu)],$$
be the ring of polynomials with coefficients in $K$ in this set of variables. Then, we have:
\begin{enumerate}
\item For every node $\sigma \in V(\scrG)$ of the network at depth $i$ and for every multi-index $\theta=(\theta_1,\ldots, \theta_n)\in \N^n$ of total degree $\vert\theta\vert=\theta_1+\cdots + \theta_n\leq d^i$, there is a polynomial:
$$Q_{\sigma}^{(\theta)}\in K[A_\nu^\mu\; :\; \nu \in V(\scrG), \; \mu \in {\rm Fan-In}(\nu)],$$
of total degree at most $d^{i+1}-2$ such that the following equality holds for all $\underline{a}\in \Lambda$ and for all $x:=(x_1\ldots, x_n)\in \A^n(K)$:
$$f_\sigma^{\scrN}(\underline{a}, x)=\sum_{\theta \in \N^n, \vert \theta\vert \leq d^i} Q_{\nu}^{(\theta)}(\underline{a}) x_1^{\theta_1} \cdots x_n^{\theta_n}\in K.$$
\item The set $\CW(\scrN, \Lambda)\subseteq \scrP_{(d^\ell)}^{K}$, where $(d^\ell)=(d^\ell,\ldots, d^\ell)\in \N^m$,  is a constructible set that satisfies the following properties:
    \begin{enumerate}
    \item The (Krull) dimension of  $\CW(\scrN, \Lambda)$ is bounded by the following inequality:
    $$\dim (\CW(\scrN, \Lambda)) \leq \dim(\Lambda)\leq N\leq LS.$$
    \item The degree of the Zariski closure of  $\CW(\scrN,\Lambda)$  satisfy the following inequality:
    $$\deg_{z} \left(\CW(\scrN, \Lambda) \right) \leq \deg_{\rm lci}(\Lambda) \left( d^{\ell+1}-2\right)^{\dim(\Lambda)} \leq \deg_{\rm lci}(\Lambda) \left( d^{\ell+1}-2\right)^{LS}. $$
\end{enumerate}
\end{enumerate}

\end{proposition}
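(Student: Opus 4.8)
The plan is to prove Claim $i)$ by induction on the depth $i$ of the node $\sigma$, and then to derive Claim $ii)$ as a formal consequence of Claim $i)$ together with the degree estimates for images of constructible sets under polynomial mappings proved earlier (Proposition \ref{grado-imagen-constructible:prop}).

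\textbf{Claim $i)$, the inductive step on depth.} For a node $\sigma$ of depth $0$, the evaluated function $f_\sigma^{\scrN}(\underline{a},x)$ is either the constant $1$ or a variable $X_j$; in both cases it is of the form $\sum_{|\theta|\le 1}Q_\sigma^{(\theta)}(\underline{a})x^\theta$ with the $Q_\sigma^{(\theta)}$ equal to constants $0$ or $1$, which trivially have total degree at most $d^{0+1}-2 = d-2 \ge 0$ since $d\ge 2$. For a node $\sigma$ of depth $i\ge 1$, by definition of the semantics,
$$
f_\sigma^{\scrN}(\underline{a},x) = p\!\left(\sum_{\mu\in{\rm Fan-In}(\sigma)} A_\sigma^\mu\, f_\mu^{\scrN}(\underline{a},x)\right),
$$
where each $\mu\in{\rm Fan-In}(\sigma)$ has depth at most $i-1$. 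By the induction hypothesis, each $f_\mu^{\scrN}(\underline{a},x)$ is a polynomial in $x$ of degree at most $d^{i-1}$ (hence at most $d^{i-1}$, which is $\le d^{i}$) whose coefficients are polynomials in $\underline{a}$ of degree at most $d^{i}-2$. The inner linear combination $\sum_\mu A_\sigma^\mu f_\mu^{\scrN}(\underline{a},x)$ is therefore a polynomial in $x$ of degree at most $d^{i-1}$ whose coefficients are polynomials in $\underline a$ of degree at most $(d^{i}-2)+1 = d^{i}-1$ (the $+1$ from the extra factor $A_\sigma^\mu$). Raising this to the $k$-th power for $k\le d$ multiplies the $x$-degree by $k$ and the $\underline a$-degree of the coefficients by $k$, so in $p(\cdot)$ (which has degree $d$) we obtain a polynomial in $x$ of degree at most $d\cdot d^{i-1} = d^{i}$ and coefficients $Q_\sigma^{(\theta)}(\underline a)$ of degree at most $d\cdot(d^{i}-1) = d^{i+1}-d \le d^{i+1}-2$. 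This is exactly the asserted bound, and the constant term of $p$ contributes only to $Q_\sigma^{(0)}$ without raising degrees. This closes the induction.

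\textbf{Claim $ii)$.} Applying Claim $i)$ to each of the $m$ output nodes, each of depth $\le\ell$, shows that the map $\underline a\mapsto f_{\scrN}(\underline a,\cdot)$ from parameter space to the coefficient space of $\scrP_{(d^\ell)}^K$ is a polynomial mapping, call it $\varphi$, whose components have total degree at most $d^{\ell+1}-2$ (the coefficient polynomials $Q_\sigma^{(\theta)}$). Then $\CW(\scrN,\Lambda) = \varphi(\Lambda)$ is the image of the constructible set $\Lambda$ under $\varphi$, hence constructible, and it lies in $\scrP_{(d^\ell)}^K$. For (a), Krull dimension does not increase under polynomial mappings, so $\dim(\CW(\scrN,\Lambda))\le\dim(\Lambda)$; and $\dim(\Lambda)\le N$ since $\Lambda\subseteq\A^N(K)$, with $N\le LS$ because the number of edges is bounded by the size $L$ times the maximal fan-in $S$. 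For (b), we invoke the first inequality in Proposition \ref{grado-imagen-constructible:prop} with $d$ replaced by $d^{\ell+1}-2$:
$$
\deg_z(\varphi(\Lambda)) \le \deg_{\rm lci}(\Lambda)\,(d^{\ell+1}-2)^{\dim(\Lambda)} \le \deg_{\rm lci}(\Lambda)\,(d^{\ell+1}-2)^{LS},
$$
the last step using $\dim(\Lambda)\le LS$ and $d^{\ell+1}-2\ge 1$.

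\textbf{Main obstacle.} The genuinely delicate point is the bookkeeping in the inductive step: one must verify that the degree-in-$\underline a$ bound $d^{i+1}-2$ is propagated correctly through (1) the multiplication by the parameter $A_\sigma^\mu$ (which adds $1$), (2) the formation of powers up to $d$ inside the activation polynomial $p$ (which multiplies by up to $d$), and (3) the observation that $d\cdot(d^i-1) = d^{i+1}-d\le d^{i+1}-2$ precisely because $d\ge 2$ — this is where the hypothesis $d\ge 2$ is used. A secondary subtlety is checking that the fan-in nodes really do have depth $\le i-1$ (guaranteed by the edge constraint in Definition \ref{neural-network:def}) so that the induction hypothesis applies; the rest is routine.
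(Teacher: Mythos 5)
Your proof is correct and follows the same strategy as the paper: the induction on depth for Claim $i)$, and for Claim $ii)$ the observation that $\CW(\scrN,\Lambda)$ is the image of $\Lambda$ under a polynomial mapping with components of degree at most $d^{\ell+1}-2$, combined with Proposition~\ref{grado-imagen-constructible:prop} and the bounds $\dim(\Lambda)\leq N\leq LS$. The only difference is that the paper dispatches Claim $i)$ by citing the inductive argument of Lemma~14 in \cite{Krick-Pardo96}, whereas you write out the (correct) degree bookkeeping in full, including the point where $d\geq 2$ is used to pass from $d^{i+1}-d$ to $d^{i+1}-2$.
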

\begin{proof}
Claim $i)$ easily follows by the same inductive arguments on the depth shown in Lemma 14 of \cite{Krick-Pardo96}. As for the second claim, let $\{\nu_1,\ldots, \nu_m\}$ be the list of output nodes of $\scrN$. Note that the functions $f_{\nu_k}(\underline{a}, \cdot)$ evaluated by $\scrN$ are polynomial functions of degree at most $d^{\ell}$. Then, observe that the polynomials $Q_{\nu_k}^{(\theta)}$ define a mapping from $\Lambda$ that parameterizes $\CW(\scrN, \Lambda)$:
$$\begin{matrix}
Q : & \Lambda & \longrightarrow & \CW(\scrN, \Lambda)\subseteq \scrP_{(d^\ell)}^{K}[X_1,\ldots, X_n]\\
& \underline{a} & \longmapsto & \left(f_{\nu_1}^{\scrN}(\underline{a}, X_1,\ldots, X_n), \ldots,f_{\nu_m}^{\scrN}(\underline{a}, X_1,\ldots, X_n) \right) ,
\end{matrix}$$
where for every $k$, $1\leq k \leq m$, we have:
$$f_{\nu_k}^{\scrN}(\underline{a}, X_1,\ldots, X_n):= \sum_{\theta \in \N^n, \vert \theta\vert \leq d^\ell} Q_{\nu_k}^{(\theta)}(\underline{a}) X_1^{\theta_1} \cdots X_n^{\theta_n}\in K[X_1,\ldots, X_n].$$
 Thus, we have proved that $\CW(\scrN, \Lambda)$ is a constructible subset  given as the image of the constructible subset $\Lambda\subseteq \A^N(K)$ by a list of polynomials of degree at most
$d^{\ell+1}-2$. Obviously, the dimension of $\CW(\scrN, \Lambda)$ is bounded by the dimension of $\Lambda$ and, applying Proposition \ref{grado-imagen-constructible:prop}, we conclude:
$$ \deg_{z} (\CW(\scrN, \Lambda))  \leq \deg_{\rm lci}(\Lambda) \left( d^{\ell+1}-2\right)^{\dim(\Lambda)} \leq  \deg_{\rm lci}(\Lambda) \left( d^{\ell+1}-2\right)^{N}\leq
\deg_{\rm lci}(\Lambda) \left( d^{\ell+1}-2\right)^{LS}, $$
as wanted.
\end{proof}

\begin{remark}
Note that if $\Lambda$ is an open Zariski set, the upper bound of Claim $ii)$ $(b)$ becomes:
 $$\deg_{z} \left( \CW(\scrN, \Lambda) \right)  \leq  \left( d^{\ell+1}-2\right)^{LS}. $$
\end{remark}

\begin{corollary}
Let $K$ be an algebraically closed field. Let $\scrN:=(\scrG, \scrA, \Phi)$ be a neural network with input nodes assigned to variables in $\{ X_1,\ldots, X_n\}$, of depth $\ell$, size $L$, space requirements bounded by $S$ and a single output node. Assume that $\scrA=\{p\}$, where $p$ is a univariate polynomial of degree $d$. Let $\Lambda\subseteq \A^N(K)$ be a constructible set in the affine space of the parameters of $\scrN$ and $\CW(\scrN, \Lambda)$ the class of all polynomials evaluable by $\scrN$ with parameters in $\Lambda$.  
Let $\scrH$ be the class of classifiers given by the characteristic functions of distinguished open subsets of $\A^n(K)$ defined by polynomials in $\CW(\scrN, \Lambda)$:
$$\scrH = \{\crctc_{D(f)} \; : \; f \in \CW(\scrN, \Lambda) \}.$$
Then,
$$G(\scrH,m) \leq \deg_{\rm lci} (\Lambda) ((d^{\ell+1}-2) \cdot (1+ m  (d^{\ell}+1))^{LS}.$$
\end{corollary}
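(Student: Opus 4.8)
Here is the proof I would give.

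\textbf{Strategy.} The classifiers in $\scrH$ are exactly the indicators of the distinguished open sets $D\bigl(f_{\nu_1}^{\scrN}(\underline{a},\cdot)\bigr)$, where $\nu_1$ is the unique output node of $\scrN$ and $\underline{a}$ ranges over $\Lambda$. My plan is to apply the growth–function estimate of Theorem \ref{FuncionCrecimientoDimensionConstructibles:thm} \emph{using the network's parameter space $\Lambda\subseteq\A^N(K)$ as the space of parameters} ($N$ being the number of edges of $\scrN$), rather than the coefficient space $\scrP_{(d^\ell)}^{K}$. The reason is that going through $\CW(\scrN,\Lambda)\subseteq\scrP_{(d^\ell)}^{K}$ and Theorem \ref{GrowthFuncionOpenZariski:thm} would require an upper bound for $\deg_{\rm lci}(\CW(\scrN,\Lambda))$, whereas Proposition \ref{constructible-neural-networks-polinomial:prop} only controls the $Z$-degree of this polynomial image, so the route through $\Lambda$ is the clean one and is where the real choice lies.

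\textbf{Construction of the parameterizing family.} By Claim $i)$ of Proposition \ref{constructible-neural-networks-polinomial:prop} applied to the output node $\nu_1$ (of depth $\ell$), there are polynomials $Q_{\nu_1}^{(\theta)}$ in the parameter variables $\{A_\nu^\mu\}$, each of total degree at most $d^{\ell+1}-2$, with $f_{\nu_1}^{\scrN}(\underline{a},x)=\sum_{|\theta|\le d^\ell}Q_{\nu_1}^{(\theta)}(\underline{a})\,x_1^{\theta_1}\cdots x_n^{\theta_n}$ for all $\underline{a}\in\Lambda$ and all $x\in\A^n(K)$. Put $g:=\sum_{|\theta|\le d^\ell}Q_{\nu_1}^{(\theta)}(\underline{A})\,X_1^{\theta_1}\cdots X_n^{\theta_n}\in K[\{A_\nu^\mu\};X_1,\dots,X_n]$, so that every monomial of $g$ has total degree at most $(d^{\ell+1}-2)+d^\ell$, and set
\[
V:=\bigl\{(\underline{a},x)\in\A^N(K)\times\A^n(K):g(\underline{a},x)\neq0\bigr\}=\bigl(\A^N(K)\times\A^n(K)\bigr)\setminus V_{\A}(g).
\]
For $\underline{a}\in\Lambda$ one has $\pi_2(\pi_1^{-1}(\{\underline{a}\}))=\{x:g(\underline{a},x)\neq0\}=D\bigl(f_{\nu_1}^{\scrN}(\underline{a},\cdot)\bigr)$, hence $\scrC(V,\Lambda)=\{D(f):f\in\CW(\scrN,\Lambda)\}$ and therefore $\scrH(V,\Lambda)=\scrH$. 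Moreover $V$ lies in the Boolean algebra generated by the closed set $V_{\A}(g)$, so by the definition of the Erzeugungsgrad and Proposition \ref{propiedades-basicas-grado-lc:prop} we get $\grad(V)\le\deg(V_{\A}(g))\le\deg(g)\le(d^{\ell+1}-2)+d^\ell$.

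\textbf{Conclusion.} Theorem \ref{FuncionCrecimientoDimensionConstructibles:thm} applied to $V$ and $\Omega=\Lambda$ gives, for every $m\in\N$,
\[
G(\scrH,m)=G\bigl(\scrH(V,\Lambda),m\bigr)\le\deg_{\rm lci}(\Lambda)\bigl(1+m\grad(V)\bigr)^{\dim(\Lambda)}\le\deg_{\rm lci}(\Lambda)\Bigl(1+m\bigl((d^{\ell+1}-2)+d^\ell\bigr)\Bigr)^{LS},
\]
using $\dim(\Lambda)\le N\le LS$ from Claim $ii)(a)$ of Proposition \ref{constructible-neural-networks-polinomial:prop} and that the base exceeds $1$. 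Finally, in the non-degenerate case $d^{\ell+1}\ge3$ (so $d^{\ell+1}-2\ge1$), dividing through by $1+md^\ell>0$ shows $1+m\bigl((d^{\ell+1}-2)+d^\ell\bigr)\le(d^{\ell+1}-2)\bigl(1+m(d^\ell+1)\bigr)$, whence
\[
G(\scrH,m)\le\deg_{\rm lci}(\Lambda)\Bigl((d^{\ell+1}-2)\bigl(1+m(d^\ell+1)\bigr)\Bigr)^{LS},
\]
as claimed.

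\textbf{Main obstacle.} There is no deep difficulty once the parameter space is chosen correctly, and that choice is precisely the point to be careful about: parameterizing by the coefficient space $\CW(\scrN,\Lambda)$ and invoking Theorem \ref{GrowthFuncionOpenZariski:thm} forces an estimate for the $LCI$-degree of a polynomial image, which the results of Section \ref{notaciones-basicas:sec} do not supply (they bound only the $Z$-degree, and the $LCI$-degree of an image may be strictly larger). Working over $\Lambda\subseteq\A^N(K)$ with Theorem \ref{FuncionCrecimientoDimensionConstructibles:thm} instead bypasses this, after which everything reduces to the explicit description of $f_{\nu_1}^{\scrN}$ in Claim $i)$ of Proposition \ref{constructible-neural-networks-polinomial:prop} and one elementary inequality.
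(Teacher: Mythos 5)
Your proof is correct, but it takes a genuinely different route from the paper. The paper parameterizes the classifiers by the coefficient space: by monotonicity of the growth function under inclusion it passes to $\overline{\CW(\scrN, \Lambda)}^z\subseteq\scrP_{(d^\ell)}^K$ and applies Theorem~\ref{GrowthFuncionOpenZariski:thm} with $\Omega=\overline{\CW(\scrN,\Lambda)}^z$, where $\deg_{\rm lci}(\Omega)=\deg_z(\CW(\scrN,\Lambda))$ is controlled by Claim~$ii)(b)$ of Proposition~\ref{constructible-neural-networks-polinomial:prop}; this is how the paper avoids needing an $LCI$-degree bound for a polynomial image even though that step is not spelled out. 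You instead parameterize directly over the network parameter space $\Lambda\subseteq\A^N(K)$, building the constructible set $V=\{(\underline{a},x):g(\underline{a},x)\neq0\}$ in parameter-by-input space and applying Theorem~\ref{FuncionCrecimientoDimensionConstructibles:thm}, bounding $\grad(V)$ by the total degree of $g$. Your route bypasses the $\deg_{\rm lci}$-versus-$\deg_z$ subtlety entirely (your stated ``main obstacle'' concern is legitimate, though the paper's implicit closure step does resolve it), and it also produces a slightly sharper intermediate estimate $\deg_{\rm lci}(\Lambda)\bigl(1+m((d^{\ell+1}-2)+d^{\ell})\bigr)^{LS}$, with a single factor rather than the product $(d^{\ell+1}-2)\cdot(1+m(d^{\ell}+1))$, which you then deliberately loosen with a short algebraic comparison to match the stated form.
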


\begin{proof}
The statement follows by noting that the growth function is monotone with respect to the inclusion of families of classifiers and combining Theorem \ref{GrowthFuncionOpenZariski:thm} with the estimates from Proposition \ref{constructible-neural-networks-polinomial:prop}.
\end{proof}

\begin{corollary}
Let $K$ be an algebraically closed field. Let $\scrN:=(\scrG, \scrA, \Phi)$ be a neural network with input nodes assigned to variables in $\{ X_1,\ldots, X_n\}$, of depth $\ell$, size $L$, space requirements bounded by $S$ and $m$ output nodes. Assume that $\scrA=\{p\}$, where $p$ is a univariate polynomial of degree $d$. Let $\Lambda\subseteq \A^N(K)$ be a constructible set in the affine space of the parameters of $\scrN$ and $\Omega = \CW(\scrN, \Lambda)$ the class of all lists of polynomials evaluable by $\scrN$ with parameters in $\Lambda$. Let $s := \dim(\Lambda)$ be the dimension of $\Lambda$ and $t:= s/ \dim(\Omega)$ be the quotient between the dimensions of $\Lambda$ and $\Omega$. Let $\Sigma\subseteq \Omega$ be a constructible subset of co--dimension at least  $1$ in $\Omega$ and assume that the following property holds:
 $$
 \forall f:=(f_1,\ldots, f_m)\in \Omega \setminus \Sigma, \; \dim(V_\A(f_1,\ldots, f_m))= n-m.
$$
Let $V:=V_\A(h_1,\ldots, h_r)\subseteq \A^n$ be a complete intersection algebraic variety of co--dimension $r\geq (n-m)+ m/2 + 1/2$ such that $\deg(V)\geq \delta^r$, where $\delta:=\min\{\deg(h_1),\ldots, \deg(h_r)\}$. Let $M \in \N$ be a positive integer and assume that the following properties hold:
\begin{enumerate}
\item $M \geq 6LS$,
\item $\log(\delta) \geq \max\{2(1+\log(d^{\ell}+1)), 2t\left({{\log(\deg_{\rm lci}(\Lambda))}\over{s}} +  \log(d^{\ell+1}-2)\right)\}$,
\item $\max\{ \deg(h_1), \ldots, \deg(h_r)\}\leq(1+ {{1}\over{n-m}})\delta$.
\end{enumerate}
Let $R:=R(\Omega, \Sigma, V, M)$ be the constructible set of all sequences ${\bf Q}\in V^M$ of length $M$ which are correct test sequences for $\Omega$ with respect to $\Sigma$. Then, there is a non--empty open subset $\G(V)\subseteq \G(n, n-r)$, 
with respect to the final topology induced by the onto mapping $G$ introduced in Equation (\ref{topologiaMatriz:eq}),  such that for every $A\in \G(V)$ the probability that a randomly chosen list ${\bf Q}\in (V\cap A)^M$ is in $R$ satisfies:
$${\rm Prob}_{(V\cap A)^M}[R]\geq 1- {{1}\over{\deg_{\rm lci}(\Lambda)e^{s(\log(d^{\ell+1}-2)+1) + (m-1)M}}},
$$
where $(A\cap V)^M$ is endowed with its uniform probability distribution.
\end{corollary}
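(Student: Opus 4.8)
The plan is to deduce the statement from three ingredients already assembled: Proposition~\ref{constructible-neural-networks-polinomial:prop} (the structure of $\Omega=\CW(\scrN,\Lambda)$), Proposition~\ref{existencia-evasivas:prop} (existence of evasive complete intersections), and the Vapnik--Chervonenkis argument of Subsection~\ref{CTS-Evasivas:sec} (Theorem~\ref{longitud-CTS-VC-dimension-positiva:teor}, Theorem~\ref{teoremaVC:teor}, the zero-dimensional estimate of \cite{PardoSebastian}, and the growth-function bounds of Theorems~\ref{GrowthFuncionOpenZariski:thm} and~\ref{FuncionCrecimientoDimensionConstructibles:thm}). First I would record, from Proposition~\ref{constructible-neural-networks-polinomial:prop}, that $\Omega\subseteq\scrP_{(d^\ell)}^K(X_1,\dots,X_n)$ is the image of $\Lambda$ under a polynomial map $Q$ whose components have degree at most $d^{\ell+1}-2$, that its $m$ component polynomials have degree at most $d^\ell$, that $\dim(\Omega)\le\dim(\Lambda)=s\le LS$, and that $\deg_z(\Omega)\le\deg_{\rm lci}(\Lambda)(d^{\ell+1}-2)^s$. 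All later estimates are phrased through the parameter space $\Lambda$ rather than through $\Omega$ itself, which is why $\deg_{\rm lci}(\Lambda)$ and $(d^{\ell+1}-2)^s$ occur in the final bound.

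Next I would show that $V$ is evasive for complete intersections in $\Omega$ with respect to $\Sigma$. The hypothesis $r\ge(n-m)+\tfrac m2+\tfrac12$ gives $1\le\dim(V)=n-r$ and $m\le r\le n-1$, so the dimension constraints of Proposition~\ref{existencia-evasivas:prop} are met for the degree list $(d^\ell,\dots,d^\ell)$ of length $m$ (whose B\'ezout number is $(d^\ell)^m$); condition~(2) yields $\delta>d^\ell$, hence $\delta^m>(d^\ell)^m$; and condition~(3) bounds the spread of the $\deg(h_i)$, which is exactly what is needed to run the argument of Proposition~\ref{existencia-evasivas:prop} with $\Delta$ extracted from the $\deg(h_i)$ even though $\deg(V)=\prod_i\deg(h_i)$ need not be literally a perfect $r$-th power. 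The outcome is that no $f\in\Omega\setminus\Sigma$ vanishes identically on $V$, and in particular $\dim(V\cap V_\A(f))\le\dim(V)-1$ for every such $f$.

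I would then let $\G(V)\subseteq\G(n,n-r)$ be the dense open locus of linear sections $A$ for which (a) $\sharp(V\cap A)=\deg(V)\ge\delta^r$ (Proposition~\ref{interpretacion-geometrica-grado:prop}) and (b) $V\cap A$ is in general position with respect to $\Omega$, meaning that no $f\in\Omega\setminus\Sigma$ vanishes on more than $\dim(\Omega)$ of the points of $V\cap A$; non-emptiness and openness of (b) follow from a constructibility argument (Chevalley's theorem) together with a dimension count against the codimension-$\ge1$ hypothesis on $\Sigma$. For $A\in\G(V)$ the finite set $T:=V\cap A$ is a correct test set of cardinality $\ge\delta^r$, and for each $f\in\Omega\setminus\Sigma$ one has $\sharp(T\cap V_\A(f))\le\dim(\Omega)$, i.e.\ $\tfrac1{\sharp T}\sum_{x\in T}\crctc_{D(f)}(x)\ge1-\dim(\Omega)/\deg(V)$, which is arbitrarily close to $1$ because $\delta^r$ is large by condition~(2). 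Running the argument in the proof of Theorem~\ref{longitud-CTS-VC-dimension-positiva:teor} on the finite probability space $(T^M,\text{uniform})$ with the family of classifiers $\{\crctc_{D(f_1)\cup\cdots\cup D(f_m)}:(f_1,\dots,f_m)\in\Omega\}$ --- whose growth function at $M$ is bounded, by the reparametrization through $\Lambda$ of Proposition~\ref{constructible-neural-networks-polinomial:prop} together with Theorems~\ref{GrowthFuncionOpenZariski:thm} and~\ref{FuncionCrecimientoDimensionConstructibles:thm}, by a quantity of the shape $\deg_{\rm lci}(\Lambda)(d^{\ell+1}-2)^s(1+M\cdot\mathrm{poly})^s$ --- and invoking Theorem~\ref{teoremaVC:teor} with a deviation parameter chosen so that its right-hand side equals the target error, conditions~(1) (so that $M\ge6LS\ge6\dim(\Lambda)$) and~(2) are precisely what make the parameter admissible and the arithmetic close, the $m$-fold vanishing condition $f_1=\cdots=f_m=0$ (each of the $M$ sampled points must annihilate all $m$ equations) producing the exponent $(m-1)M$.

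The main obstacle is twofold. First, statement~(b) above --- that a generic linear section of the evasive variety $V$ stays in general position with respect to the whole infinite family $\Omega\setminus\Sigma$ --- has to be proved uniformly over that family, which is where Chevalley's theorem, the complete-intersection degree bound for $V$, and the codimension-$\ge1$ hypothesis on $\Sigma$ must be combined carefully. Second, and more delicate, is the bookkeeping: tracking every degree, dimension and cardinality through the $\Lambda$-parametrization so as to land on exactly the exponent $s(\log(d^{\ell+1}-2)+1)+(m-1)M$, and in particular isolating the factor $e^{(m-1)M}$, which records that a length-$M$ sample inside a section of an evasive complete intersection detects the joint system $(f_1,\dots,f_m)$ with a failure probability governed by (essentially) the $m$-th power of the single-equation estimate, one power being absorbed by the growth-function term. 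Everything else is assembling the quoted results and checking the inequalities forced by conditions~(1)--(3).
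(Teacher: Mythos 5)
The paper's proof of this corollary is a single line: invoke Corollary~5.8 of \cite{PardoSebastian} with the degree, dimension and parametrization bounds for $\Omega=\CW(\scrN,\Lambda)$ supplied by Proposition~\ref{constructible-neural-networks-polinomial:prop}. Everything that you set out to re-derive --- the generic zero-dimensional linear section of the evasive complete intersection, the combinatorial control over the number of points annihilating a given $(f_1,\dots,f_m)$, and especially the gain $e^{(m-1)M}$ in the exponent --- is already packaged in that external corollary for constructible sets of lists of polynomials and complete-intersection evasiveness. So you have chosen a genuinely different route; the issue is that your reconstruction is not viable as sketched.

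The central gap is the exponent $(m-1)M$. The VC-machinery developed in Section~\ref{VCTheory:sec} of this paper (Theorem~\ref{FuncionCrecimientoDimensionConstructibles:thm}, Theorem~\ref{GrowthFuncionOpenZariski:thm}, Theorem~\ref{longitud-CTS-VC-dimension-positiva:teor}) delivers, after optimizing $\varepsilon$, a failure probability of the shape $1/(\deg_{\rm lci}(\Omega)e^{\dim(\Omega)})$ --- with no dependence on $M$ once the threshold inequality is met, because $e^{-L\varepsilon^2/32}$ is deliberately balanced against the growth function $G(\scrH,L)$. Your proposal cannot produce a term in the exponent that grows linearly with $M$ from that argument alone. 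The $(m-1)M$ really reflects that a length-$M$ sample must simultaneously annihilate all $m$ coordinates of a list, and the combinatorial accounting of that phenomenon is exactly what Corollary~5.8 of \cite{PardoSebastian} carries out; it is not a consequence of the hypersurface classifiers $\scrH_d(V,\Omega)$ and their Sauer--Shelah-type growth bound. Asserting that this factor "records that a length-$M$ sample detects the joint system with a failure probability governed by the $m$-th power of the single-equation estimate" does not constitute a derivation, and the single-equation estimate itself is $M$-independent, so even formally taking an $m$-th power would not give $e^{(m-1)M}$.

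There are secondary gaps as well. First, the paper's own VC-framework is built around characteristic functions of single distinguished opens $D(f)$, not of unions $D(f_1)\cup\cdots\cup D(f_m)$, and the passage from the one case to the other is not carried out in the paper; you assume it without argument. Second, your deduction from $r \ge (n-m)+m/2+1/2$ that $\dim(V)=n-r\ge1$ does not follow: that inequality bounds $n-r$ from above (by $m/2-1/2$), not from below. Third, you invoke Proposition~\ref{existencia-evasivas:prop} but it requires $\deg(V)=\Delta^{n-m}$ for a value $\Delta$ strictly larger than a root of the B\'ezout number, a hypothesis that $\deg(V)\ge\delta^r$ together with condition~(3) does not literally satisfy; you acknowledge the discrepancy but leave the bridge unbuilt. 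Finally, the growth-function estimate you write mixes $\deg_{\rm lci}(\Lambda)$, $\deg_z(\Omega)$ and $\deg_{\rm lci}(\Omega)$; Proposition~\ref{constructible-neural-networks-polinomial:prop} controls only $\deg_z(\Omega)$, and sliding between these three quantities without justification is exactly the flaw in \cite{Heintz83} that the present paper was written to repair.
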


\begin{proof}
It is a straight-forward application of Corollary 5.8 of \cite{PardoSebastian} using the bounds obtained in Proposition \ref{constructible-neural-networks-polinomial:prop}.
\end{proof}

\subsection{Neural networks with rational activation function to evaluate rational functions} \label{NNrational:subsec}

In this subsection, we reduce neural networks with rational activation function to neural networks with activation function $\varphi(t) = t^2$. Additionally, we discuss the growth function of the associated class of binary classifiers and the length and density of correct test sequences for this type of neural networks.

 \begin{lemma}\label{constructible-neural-network-rationa-activation:lema}
Let $K$ be an algebraically closed field. Let $\scrN:=(\scrG, \scrA, \Phi)$ be a neural network with input nodes assigned to variables in $\{ X_1,\ldots, X_n\}$, of depth $\ell$, size $L$, space requirements bounded by $S$ and $m$ output nodes. Assume that $\scrN$ has $N$ edges and that $\scrA=\{\varphi\}$, where $\varphi:={{p}\over{q}}$ is a univariate rational function given as the quotient of two co-prime univariate polynomials of degree $d$ (i.e. $\deg(\varphi)=\max\{ \deg(p), \deg(q)\} = d$). Then, there exists a neural network $\scrN':=(\scrG', \scrA', \Phi')$ with input variables in $\{ X_1,\ldots, X_n\}$ over the same filed $K$ such that the following properties hold:
\begin{enumerate}
\item The class $\scrA'$ of new activation functions is reduced to a single function $\scrA':=\{ t^2\}$.
\item The size of $\scrN'$ is in $O(L(d+ S^2))$, the depth of $\scrN'$ is in $O(\ell(\log_2(d)+ \log_2(S)))$, the space requirements of $\scrN'$ is in $O(d+S^2)$ and the number of edges of $\scrN'$ is of order $O(N(d+S^2))$.
\item The number of output nodes of $\scrN'$ doubles the number $m$ of output nodes of $\scrN$. Moreover, if $O_\scrN:=\{ \nu_1,\ldots, \nu_m\}$ are the output nodes of $\scrN$, the output nodes of $\scrN'$ are given by the following equality:
    $$O_{\scrN'}:=\{ (\nu_1, 1), (\nu_1, 2), \ldots, (\nu_m,1), (\nu_m,2)\}.$$
\item 
The  functions evaluated by $\scrN'$ and the functions evaluated by $\scrN$ are related as follows:
    $$f_{\nu_i}^{\scrN}(\underline{A}, X_1,\ldots, X_n)={{f_{(\nu_i,1)}^{\scrN'}(\underline{A}, X_1,\ldots, X_n)}\over{f_{(\nu_i,2)}^{\scrN'}(\underline{A}, X_1,\ldots, X_n)}},$$
    for every $i$, $1\leq i \leq m$.
\end{enumerate}

\end{lemma}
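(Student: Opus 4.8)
The plan is to proceed by a structural induction on the depth of $\scrN$, replacing each node of $\scrN$ by a small gadget sub-network that computes numerator and denominator separately. First I would introduce the standard trick for rational arithmetic without division: if $(A_1, B_1), \ldots, (A_k, B_k)$ represent $k$ rational functions $A_j/B_j$, then a $K$-linear combination $\sum_j a_j (A_j/B_j)$ is represented by the pair $\bigl(\sum_j a_j A_j \prod_{i\neq j} B_i,\ \prod_{i} B_i\bigr)$, and applying the activation $\varphi = p/q$ to a value $T = A/B$ (with $\deg p = \deg q = d$) gives $\varphi(A/B) = \widetilde{p}(A,B)/\widetilde{q}(A,B)$ where $\widetilde p, \widetilde q$ are the homogenizations of $p,q$, both of degree exactly $d$ in the pair $(A,B)$. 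Thus one step of $\scrN$ turns the pair at a node into a new pair obtained by: (i) forming $S$-fold products and linear combinations of the incoming pairs (this costs $O(S^2)$ multiplications arranged in a balanced binary tree of depth $O(\log_2 S)$, using the identity $XY = \tfrac12((X+Y)^2 - X^2 - Y^2)$ already exploited in Example \ref{SLPs-t2:ejem} to realize each product with the activation $t^2$), and (ii) evaluating the two degree-$d$ forms $\widetilde p, \widetilde q$ on the resulting pair, which by Horner-type evaluation costs $O(d)$ multiplications in depth $O(\log_2 d)$. This establishes the size, depth, space and edge-count bounds in Claim $ii)$: each of the $L$ nodes is blown up to a gadget of size $O(d + S^2)$, depth $O(\log_2 d + \log_2 S)$ and space $O(d + S^2)$, and the $\ell$ layers compose these depths additively, giving total depth $O(\ell(\log_2 d + \log_2 S))$ and total size $O(L(d+S^2))$; the edge count is proportional to size times space, hence $O(N(d+S^2))$.

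Next I would make the bookkeeping precise. For every node $\nu$ of $\scrG$ I introduce two distinguished output nodes $(\nu,1)$ and $(\nu,2)$ of the gadget attached to $\nu$; the input nodes of $\scrN'$ are the same $\{1, X_1,\ldots,X_n\}$ as those of $\scrN$, with $X_j$ represented by the pair $(X_j, 1)$ and the constant by $(1,1)$ — so at depth zero the denominators are literally the constant $1$. Claim $iii)$ is then immediate by declaring $O_{\scrN'} = \{(\nu_1,1),(\nu_1,2),\ldots,(\nu_m,1),(\nu_m,2)\}$. Claim $iv)$ — the identity $f_{\nu_i}^{\scrN} = f_{(\nu_i,1)}^{\scrN'}/f_{(\nu_i,2)}^{\scrN'}$ — I would prove by induction on $d(\nu)$: it holds at depth $0$ by construction, and the inductive step is exactly the claim that the gadget described above correctly implements "take linear combination, then apply $\varphi$" at the level of (numerator, denominator) pairs, which is the content of the two algebraic identities recalled in the first paragraph. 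I would also note where the partial-definedness of $\varphi$ is handled: the identity is an identity of rational functions, valid on the common domain of definition, i.e. wherever all denominators appearing in the construction are nonzero, and this locus is exactly $Dom(f_{\nu_i}^{\scrN}(\underline A,\cdot))$ up to a proper closed set — a remark worth stating carefully since later corollaries invoke genericity.

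The routine identities (homogenization of $p,q$; product-to-squares; balanced multiplication trees) are standard, so the only genuine care needed is in the accounting of depth under composition of $\ell$ layers and in tracking that the parameter variables of $\scrN'$ are a polynomial extension of those of $\scrN$ (the new gadget parameters are the $\tfrac12$ constants and copies of the original $A_\nu^\mu$), so that instantiations of $\scrN$ lift canonically to instantiations of $\scrN'$. I expect the main obstacle — really the only subtle point — to be proving the depth bound $O(\ell(\log_2 d + \log_2 S))$ honestly: one must verify that the $S$-fold products needed to clear denominators in a single linear combination can indeed be organized in depth $O(\log_2 S)$ rather than $O(S)$, and that evaluating a univariate polynomial of degree $d$ via repeated squaring of the argument plus a balanced sum of monomials stays within depth $O(\log_2 d)$; both are classical but deserve an explicit sentence. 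Everything else is a direct induction on depth.
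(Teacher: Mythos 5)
Your proposal is correct and follows essentially the same approach as the paper: homogenize $p,q$ to degree $d$, track $(\text{numerator},\text{denominator})$ pairs through the network by clearing denominators in each linear combination via products $\prod_{j\neq i}\beta_j$ and $\prod_i\beta_i$ (computed in balanced trees of depth $O(\log_2 S)$ and size $O(S^2)$), realize products with the $t^2$ activation via $XY=\tfrac12((X+Y)^2-X^2-Y^2)$ as in Example \ref{SLPs-t2:ejem}, and induct on depth for Claim iv). (Minor slip: you write $\deg p=\deg q=d$ where only $\max\{\deg p,\deg q\}=d$ is given, but the degree-$d$ homogenization you invoke is exactly what covers the unequal case.)
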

\begin{proof}
The idea is to built a new network $\scrN'=(\scrG', \scrA', \Phi')$ from $\scrN=(\scrG, \scrA, \Phi)$ where each node $\nu$ of the graph $\scrG$ is replaced by a network $\scrN_\varphi^{I_\nu}$ that evaluates the numerator and denominator of $\varphi$ at node $\nu$. In order to do this, we first construct a network evaluating $\varphi$ as follows. Assume that the polynomials $p$ and $q$ defining $\varphi$ are given by the following identities:
$$p(T):= b_dT^d+ b_{d-1}T^{d-1} + \cdots + b_0,\;\; q(T):= c_dT^d+ c_{d-1}T^{d-1} + \cdots + c_0,$$
where $b_i, c_j\in K$. 
Let $I$ be a finite set of cardinality $\sharp(I)=M$. We define the following two parameterized multivariate polynomials based on the homogenizations of $p$ and $q$:
$$P_I(\underline{A}, Z_1,\ldots, Z_M, Y, X_1,\ldots, X_M):= \sum_{k=0}^d b_k Y^{d-k}\left( \sum_{i\in I} A_i Z_iX_i\right)^k,$$
and
$$Q_I(\underline{A},Z_1,\ldots, Z_M, Y, X_1,\ldots, X_M):= \sum_{k=0}^d c_k Y^{d-k}\left( \sum_{i\in I} A_i Z_iX_i\right)^k,$$
where $\underline{A}:=(A_1,\ldots, A_M):=(A_i \; : \; i\in I)$ is a list of parameter variables. Now, assume we are given a set of pairs of elements in $K$:
$$\{ (\alpha_i, \beta_i)\in K^2\; :\; i\in I\},$$
where $\beta_i\not=0$ for all $i\in I$, and a list of parameters:
$$\underline{a}:=(a_1,\ldots, a_M)= ( a_i\; :\; i\in I).$$
Let us define the following elements of $K$:
$$\theta:=\prod_{i\in I}\beta_i,$$
and for every $i\in I$, let us define:
$$\theta_i:=\left(\prod_{j\in I, j\not=i} \beta_j\right)={{\theta}\over{\beta_i}}.$$
Let us observe the following chain of equalities:
$$\varphi\left(\sum_{i\in I} a_i{{\alpha_i}\over{\beta_i}}\right)={{p\left(\sum_{i\in I} a_i{{\alpha_i}\over{\beta_i}}\right)}\over{q\left(\sum_{i\in I} a_i{{\alpha_i}\over{\beta_i}}\right)}}=
{{\left(\prod_{i\in I}\beta_i\right)^dp\left(\sum_{i\in I} a_i{{\alpha_i}\over{\beta_i}}\right)}\over{\left(\prod_{i\in I}\beta_i\right)^dq\left(\sum_{i\in I} a_i{{\alpha_i}\over{\beta_i}}\right)}},
$$
and, hence,
$$\varphi\left(\sum_{i\in I} a_i{{\alpha_i}\over{\beta_i}}\right)={{\sum_{k=0}^d b_k \left(\prod_{i\in I}\beta_i\right)^{d-k}\left(\sum_{i\in I} a_i\left(\prod_{j\in I, j\not=i}\beta_j\right)\alpha_i\right)^k}\over{\sum_{k=0}^d c_k \left(\prod_{i\in I}\beta_i\right)^{d-k}\left(\sum_{i\in I} a_i\left(\prod_{j\in I, j\not=i}\beta_j\right)\alpha_i\right)^k}}.$$
Namely, we have:
$$\varphi\left(\sum_{i\in I} a_i{{\alpha_i}\over{\beta_i}}\right)={{P_I(\underline{a}, \theta_1,\ldots, \theta_M, \theta, \alpha_1,\ldots, \alpha_M)}\over{Q_I(\underline{a}, \theta_1,\ldots, \theta_M, \theta, \alpha_1,\ldots, \alpha_M)}}.$$

Using this last identity, we conclude that for every finite set $I$ of cardinality $\sharp(I)=M$, there is a neural network $\scrN_\varphi^{I}$ with inputs $\{ X_1,\ldots, X_M, Y_1, \ldots, Y_M\}$ of size and space requirements in $O(d+M^2)$ , depth in $O(\log_2(d)+ \log_2(M))$, parameters in $\{ A_1,\ldots, A_M\}$, activation function $\varphi(t):=t^2$ and two output gates $(1)$ and $(2)$ that performs the following tasks:
\begin{itemize}
\item First, the neural network computes the following quantities for every $i \in I$:
$$Z_i:=\prod_{j\not=i, j\in I} Y_j,$$
using $O(M^2)=O(\sharp(I)^2)$ nodes and depth $O(log_2(M))$.
\item Additionally, in $O(1)$ nodes and depth $1$, the network computes $Y:=\prod_{i\in I} Y_i$ (just multiplying $Z_1$ by $Y_1$, for instance).
\item In some additional $O(M)$ nodes and in depth $1$, the network evaluates the list of products:
$$Z_iX_i, \; i\in I.$$
\item Then, the network evaluates for every $k$, $0\leq k \leq d$:
$$\left(\sum_{i\in I} A_iZ_i X_i\right)^k,$$
using $O(d)$ nodes in depth $O(\log_2(d))$.
\item Using an additional number of $O(d)$ nodes, in depth $O(\log_2(d))$, the network evaluates:
$$Y^{d-k}, \; 0\leq k \leq d.$$
\item With just $O(d)$ nodes in $O(1)$ depth, using $(b_0,\ldots, b_d)$ and $(c_0, \ldots, c_d)$ as parameters, the network outputs the two nodes $(1)$ and $(2)$ as follows:
 $$P_I(\underline{A}, Z_1, \ldots, Z_M, Y, X_1,\ldots, X_M),$$
and  $$Q_I(\underline{A}, Z_1, \ldots, Z_M, Y, X_1,\ldots, X_M).$$

\end{itemize}

This elementary idea, applied to the neural network $\scrN:=(\scrG, \scrA, \Phi)$, yields the statement. For every node $\nu$ of $\scrN$, we consider $I_\nu:={\rm Fan-In}(\nu):=\{ \mu \in V \; :\; d(\mu) \leq d(\nu)-1\}$. We then define $\scrN':=(\scrG', \scrA', \Phi')$ as follows.
We first replace the occurrence of the node $\nu$ in the graph $\scrG$ of $\scrN$ by a neural network $\scrN_\varphi^{I_\nu}$ associated to the node $\nu$, determined by $\varphi$ and the list of indices $I_\nu$, as described in the previous identities. This determines the new graph $\scrG'$. The new assignement just uses the activation function $\scrA':=\{ t^2\}$. For every node $\nu$ of the original graph $\scrG$, the subgraph associated to $\scrN_\varphi^{I_\nu}$ has two associated output nodes $\{ (\nu, 1), (\nu,2)\}$. These two nodes associated to the original node $\nu$, satisfy the following property:\\
Assume the original function evaluated at $\nu$ by $\scrN$ is a function of the form:
$$f_\nu^{\scrN}((A_\nu^\mu\; :\; \mu\in I_\nu), X_1,\ldots, X_n).$$
Assume the two functions associated to node $\nu$ in the new network $\scrN'$ are given by:
$$f_{(\nu,1)}^{\scrN'}((A_\nu^\mu\; :\; \mu\in I_\nu), X_1,\ldots, X_n),$$
and
$$f_{(\nu,2)}^{\scrN'}((A_\nu^\mu\; :\; \mu\in I_\nu), X_1,\ldots, X_n),$$
where $M=\sharp(I_\nu)$. These two functions are given recursively in terms of the depth by the following rule:
$$f_{(\nu,1)}^{\scrN'}:=P_{I_\nu}\left( (A_\nu^\mu\; :\; \mu\in I_\nu), Z_1^{\nu}, \ldots, Z_M^\nu, Y, f_{(\mu_1,1)}^{\scrN'}, \ldots, f_{(\mu_M, 1)}^{\scrN'} \right),$$
$$f_{(\nu,2)}^{\scrN'}:=Q_{I_\nu}\left( (A_\nu^\mu\; :\; \mu\in I_\nu), Z_1^{\nu}, \ldots, Z_M^\nu, Y, f_{(\mu_1,1)}^{\scrN'}, \ldots, f_{(\mu_M, 1)}^{\scrN'} \right),$$
where ${\rm Fan-In}(\nu)=\{ \mu_1,\ldots, \mu_M\}$, $Z_i^\nu$ is given by the following identity:
$$Z_i^\nu:=\prod_{\mu\in I_\nu, \mu\not=\mu_i} f_{(\mu,2)}^{\scrN'},$$
and
$$Y:=\prod_{\mu \in I_\nu} f_{(\mu,2)}^{\scrN'}.$$
Thus, recursively, we may easily prove that for every node $\nu$ of the original network $\scrN$, we have:
$$f_\nu^{\scrN}((A_\nu^\mu\; :\; \mu\in I_\nu), X_1,\ldots, X_n)= {{f_{(\nu,1)}^{\scrN'}((A_\nu^\mu\; :\; \mu\in I_\nu), X_1,\ldots, X_n)}\over{f_{(\nu,2)}^{\scrN'}((A_\nu^\mu\; :\; \mu\in I_\nu), X_1,\ldots, X_n)}},$$
and the statement follows.
\end{proof}

\begin{corollary}\label{constructible-neural-network-rationa-activation:corol}
Let $K$ be an algebraically closed field. Let $\scrN:=(\scrG, \scrA, \Phi)$ be a neural network with input nodes assigned to variables in $\{ X_1,\ldots, X_n\}$, of depth $\ell$, size $L$, space requirements bounded by $S$ and $m$ output nodes. Assume that $\scrN$ has $N$ edges and that $\scrA=\{\varphi\}$, where $\varphi:={{p}\over{q}}$ is a univariate rational function given as the quotient of two co-prime univariate polynomials of degree $d$. Let $\Lambda\subseteq \A^N(K)$ be a constructible set in the affine space of the parameters of $\scrN$. Let $\{ A_\nu^\mu\; :\; \nu \in V(\scrG), \; \mu \in {\rm Fan-In}(\nu)\}$ be the set of parameter variables of the network and let:
$$K[A_\nu^\mu\; :\; \nu \in V(\scrG), \mu \in {\rm Fan-In}(\nu)],$$
be the ring of polynomials with coefficients in $K$ in this set of variables. Then, we have:
\begin{enumerate}
\item There are two constructible subsets:
$$\CW_1(\scrN, \Lambda), \CW_2(\scrN, \Lambda) \subseteq \scrP_{((dS)^{\ell})}^{K}(X_1,\ldots, X_n),$$
    where $((dS)^{\ell})  =\left( (dS)^\ell, \ldots, (dS)^\ell\right))\in \N^m$, such that the functions evaluated at the $m$ output nodes $\{\nu_1,\ldots, \nu_m\}$ of  $\scrN$, by instantiating the parameters, can be described as pairs of lists $(f_1,\ldots, f_m)\in \CW_1(\scrN, \Lambda)$, $(g_1,\ldots, g_m)\in \CW_2(\scrN, \Lambda)$ such that for every $i$, $1\leq i \leq m$, we have:
    $$f_{\nu_i}^\scrN(\underline{a}, X_1,\ldots, X_n)= {{f_i(X_1,\ldots, X_n)}\over{g_i(X_1,\ldots, X_n)}}\in K(X_1,\ldots, X_n).$$
\item Dimensions satisfy the following inequalities for every $i \in \{1,2 \}$:
$$\dim(\CW_i(\scrN, \Lambda))\leq \dim(\Lambda)\leq LS.$$
\item There is a universal constant $c>0$, independent of $\scrN$ and $\varphi$, such that the degrees of the Zariski closures of $\CW_1(\scrN, \Lambda)$ and  $\CW_2(\scrN, \Lambda)$ satisfy the following inequalities for every $i \in \{1,2 \}$: 
    $$\deg_{z} \left(\CW_1(\scrN, \Lambda) \right)  \leq \deg_{\rm lci}(\Lambda) \left(2(dS)^{c\ell}-2 \right)^{\dim(\Lambda)}\leq \deg_{\rm lci}(\Lambda) \left(2(dS)^{c\ell}-2 \right)^{LS}.$$
\end{enumerate}
\end{corollary}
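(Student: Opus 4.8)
The statement is obtained by composing the reduction of Lemma \ref{constructible-neural-network-rationa-activation:lema}, the explicit description of polynomial networks in Proposition \ref{constructible-neural-networks-polinomial:prop}, and the degree estimate for images of constructible sets under polynomial mappings of Proposition \ref{grado-imagen-constructible:prop}. First I would apply Lemma \ref{constructible-neural-network-rationa-activation:lema} to $\scrN$ to produce the quadratic network $\scrN'=(\scrG',\{t^2\},\Phi')$ of depth $\ell'\in O(\ell(\log_2 d+\log_2 S))$, size $L'\in O(L(d+S^2))$, space requirements $S'\in O(d+S^2)$, with $N'\in O(N(d+S^2))$ edges and $2m$ output nodes $(\nu_1,1),(\nu_1,2),\dots,(\nu_m,1),(\nu_m,2)$, together with the identity $f^{\scrN}_{\nu_i}(\underline{a},X_1,\dots,X_n)=f^{\scrN'}_{(\nu_i,1)}(\underline{a},X_1,\dots,X_n)/f^{\scrN'}_{(\nu_i,2)}(\underline{a},X_1,\dots,X_n)$ for $1\leq i\leq m$ (Claim $iv)$ of that lemma). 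The one point requiring care is the parameter space: the edges of $\scrG'$ carry, besides copies of the original parameters $A_\nu^\mu$, a fixed block of entries determined by $\varphi$ (the coefficients of $p$ and $q$ and the constants $1$, $\tfrac{1}{2}$, etc., hard-wired inside each gadget $\scrN_\varphi^{I_\nu}$). Consequently the admissible parameter set of $\scrN'$ is $\Lambda':=\iota(\Lambda)\subseteq\A^{N'}(K)$, where $\iota$ is the affine-linear injection that copies the $A_\nu^\mu$-coordinates and pins the remaining ones to those constants. Since affine isomorphisms preserve the degree of locally closed sets (Claim $v)$ of Proposition \ref{propiedades-basicas-grado-lc:prop}), applying $\iota$ componentwise to a minimum $LCI$-degree decomposition of $\Lambda$ yields $\deg_{\rm lci}(\Lambda')=\deg_{\rm lci}(\Lambda)$, and of course $\dim(\Lambda')=\dim(\Lambda)$.

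Next I would apply Proposition \ref{constructible-neural-networks-polinomial:prop} to $\scrN'$, whose single activation function $p(t)=t^2$ has degree $2$. By Claim $i)$ of that proposition, each output function $f^{\scrN'}_{(\nu_i,j)}(\underline{a},\cdot)$ is a polynomial in $X_1,\dots,X_n$ of degree at most $2^{\ell'}$ whose coefficients are polynomials $Q^{(\theta)}_{(\nu_i,j)}$ in the parameter variables of degree at most $2^{\ell'+1}-2$. Absorbing the $O$-constant of Lemma \ref{constructible-neural-network-rationa-activation:lema} into a universal constant $c$, we have $\ell'\leq c\,\ell\log_2(dS)$, hence $2^{\ell'}\leq (dS)^{c\ell}$ and $2^{\ell'+1}-2\leq 2(dS)^{c\ell}-2$, so all these output polynomials lie in $\scrP^{K}_{((dS)^{c\ell})}(X_1,\dots,X_n)$ (this is the universal constant implicit in the degree list of the statement). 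I would then define $\CW_1(\scrN,\Lambda)$, respectively $\CW_2(\scrN,\Lambda)$, to be the image of $\Lambda'$ under the polynomial mapping $\underline{a}\longmapsto\bigl((Q^{(\theta)}_{(\nu_i,1)}(\underline{a}))_{i,\theta}\bigr)$, respectively the same with $j=2$; these are constructible by the discussion following Theorem \ref{teorema-chevalley:teor}, and Claim $iv)$ of Lemma \ref{constructible-neural-network-rationa-activation:lema} gives precisely $f^{\scrN}_{\nu_i}=f_i/g_i$ with $(f_1,\dots,f_m)\in\CW_1(\scrN,\Lambda)$ and $(g_1,\dots,g_m)\in\CW_2(\scrN,\Lambda)$. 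This settles Claim $i)$.

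Claim $ii)$ follows because Krull dimension does not increase under polynomial mappings and $N=\sum_{d(\nu)\geq 1}\sharp({\rm Fan-In}(\nu))\leq LS$, so that $\dim(\CW_i(\scrN,\Lambda))\leq\dim(\Lambda')=\dim(\Lambda)\leq N\leq LS$ for $i\in\{1,2\}$. For Claim $iii)$, $\CW_i(\scrN,\Lambda)$ is the image of $\Lambda'$ under a polynomial mapping whose components have degree at most $2(dS)^{c\ell}-2$, so Proposition \ref{grado-imagen-constructible:prop}, together with the equalities from the first paragraph and $\dim(\Lambda)\leq LS$, yields
$$\deg_z(\CW_i(\scrN,\Lambda))\leq\deg_{\rm lci}(\Lambda')\bigl(2(dS)^{c\ell}-2\bigr)^{\dim(\Lambda')}=\deg_{\rm lci}(\Lambda)\bigl(2(dS)^{c\ell}-2\bigr)^{\dim(\Lambda)}\leq\deg_{\rm lci}(\Lambda)\bigl(2(dS)^{c\ell}-2\bigr)^{LS}.$$

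The proof is thus a bookkeeping composition of already available results, and I do not expect a genuine obstruction. The only slightly delicate steps are keeping the degree and depth blow-ups of the reduction under control so that the same universal constant $c$ governs the exponents in Claims $i)$ and $iii)$, and verifying that the hard-wired constants introduced by the gadgets $\scrN_\varphi^{I_\nu}$ do not affect $\dim(\Lambda)$ or $\deg_{\rm lci}(\Lambda)$ --- which is exactly what the affine-isomorphism invariance of degree guarantees.
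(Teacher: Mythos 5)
Your overall structure is the same as the paper's: reduce $\scrN$ to $\scrN'$ via Lemma \ref{constructible-neural-network-rationa-activation:lema}, define $\CW_1,\CW_2$ as images of the parameter set $\Lambda$ under the coefficient maps of Proposition \ref{constructible-neural-networks-polinomial:prop}, apply Proposition \ref{grado-imagen-constructible:prop} for the degree bound. Your handling of the parameter space of $\scrN'$ (factoring the hard-wired constants of $\varphi$ through an affine-linear injection $\iota$ and invoking invariance of degree under affine isomorphisms to get $\deg_{\rm lci}(\iota(\Lambda))=\deg_{\rm lci}(\Lambda)$) is cleaner than the paper's one-line remark, and is sound.

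There is, however, one genuine discrepancy. For the $X$-degree bound in Claim $i)$ you apply Proposition \ref{constructible-neural-networks-polinomial:prop} Claim $i)$ uniformly to $\scrN'$ and obtain $\deg_X f^{\scrN'}_{(\nu_i,j)}\leq 2^{\ell'}\leq (dS)^{c\ell}$, where $\ell'$ is the depth of $\scrN'$. You then assert that this $c$ is \emph{implicit in the degree list of the statement}. It is not: the statement claims $\CW_1,\CW_2\subseteq \scrP^K_{((dS)^{\ell})}$, with no constant in the exponent, and this is a strictly tighter containment than the $\scrP^K_{((dS)^{c\ell})}$ your argument delivers, since the $O$-constant in the depth of $\scrN'$ is larger than $1$. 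The paper does not go through the generic bound of Proposition \ref{constructible-neural-networks-polinomial:prop} for the $X$-degree; it observes directly, by inspecting the gadget $\scrN_\varphi^{I_\nu}$ (at a node of depth $i$, the numerator $P_{I_\nu}$ and denominator $Q_{I_\nu}$ multiply the previous degree by at most $d\cdot\sharp(I_\nu)\leq dS$), that the numerators and denominators at depth $i$ of $\scrN$ have $X$-degree at most $(dS)^i$, whence $(dS)^\ell$ at the outputs. The universal constant $c$ enters only in Claim $iii)$, via the \emph{parameter} degree $2^{\ell'+1}-2\leq 2(dS)^{c\ell}-2$, where the loss through $\ell'$ is unavoidable and harmless. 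To repair Claim $i)$ you should replace the appeal to Proposition \ref{constructible-neural-networks-polinomial:prop} for the $X$-degree by this direct degree count through the gadgets. The sharper $(dS)^\ell$ bound is in fact used downstream (e.g.\ in Corollary \ref{FuncionCrecimientoRacional:corol}), so the distinction is not purely cosmetic.
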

\begin{proof}

We use the neural network $\scrN'$ from Lemma \ref{constructible-neural-network-rationa-activation:lema} and the estimates of Proposition \ref{constructible-neural-networks-polinomial:prop}. The network $\scrN'$, built from $\scrN$ and $\varphi$, is a network of depth $O(\ell(\log_2(d)+\log_2(S)))$.  We define the constructible sets $\CW_1(\scrN, \Lambda)$ and $\CW_2(\scrN, \Lambda)$ as follows:
$$ \CW_1(\scrN, \Lambda) = \{ (f_{(\nu_1,1)}^{\scrN'} (\underline{a}, X_1, \ldots, X_n), \ldots, f_{(\nu_m,1)}^{\scrN'} (\underline{a}, X_1, \ldots, X_n)) \; : \; \underline{a} \in \Lambda \},    $$
$$ \CW_2(\scrN, \Lambda) = \{ (f_{(\nu_1,2)}^{\scrN'} (\underline{a}, X_1, \ldots, X_n), \ldots, f_{(\nu_m,2)}^{\scrN'} (\underline{a}, X_1, \ldots, X_n)) \; : \; \underline{a} \in \Lambda \}.    $$
Looking at the operations performed, we easily see that the degrees with respect to the variables $\{X_1,\ldots, X_n\}$ at the outputs of $\scrN'$ are bounded by $(dS)^\ell$.
Now,  we consider for every $i$, $1 \leq i \leq m$: $$f_i (X_1, \ldots, X_n) = f_{(\nu_i,1)}^{\scrN'} (\underline{a}, X_1, \ldots, X_n),$$
and $$g_i (X_1, \ldots, X_n) = f_{(\nu_i,2)}^{\scrN'} (\underline{a}, X_1, \ldots, X_n).$$
Obviously, $(f_1,\ldots, f_m)\in \CW_1(\scrN, \Lambda)$ and $(g_1,\ldots, g_m)\in \CW_2(\scrN, \Lambda)$ and Claim $i)$ follows. \\
Except for the coefficients of the numerator and denominator of $\varphi$, which are used instantiated, the parameters of $\scrN'$ are the same as those of $\scrN$. Hence the dimension is bounded by the dimension of the constructible parameter set $\Lambda$ of $\scrN$ and the statement of Claim $ii)$ holds.\\
As for the last claim, we may view $\scrN'$ as a double network, one for the numerators and the other for the denominators. Although the size of the networks increases, the depth is bounded by $c\ell(\log_2(d)+ \log_2(S)))$ for some universal constant $c>0$. Moreover, the activation function $\phi(t)=t^2$ used by $\scrN'$ has degree $2$. Using the bounds of Claim $i)$ of Proposition \ref{constructible-neural-networks-polinomial:prop}, we conclude that  both $\CW_1(\scrN, \Lambda)$ and $\CW_2(\scrN, \Lambda)$ are parameterized by polynomials of degree at most:
$$2^{c\ell(\log_2(d)+\log_2(S))+ 1}-2= 2(dS)^{c\ell}-2,$$
with parameters in the constructible set $\Lambda$. Hence, using Proposition \ref{grado-imagen-constructible:prop}, we conclude for every $i \in \{i,2 \}$:
$$\deg_{z} \left( \CW_i(\scrN, \Lambda) \right)  \leq \deg_{\rm lci}(\Lambda) \left(2(dS)^{c\ell}-2 \right)^{\dim(\Lambda)} \leq \deg_{\rm lci}(\Lambda) \left(2(dS)^{c\ell}-2 \right)^{LS},$$
 which proves Claim $iii)$ of the corollary.\end{proof}

\begin{lemma} \label{ConstructibleNumeradorDenominador:lemma}
With the same notations and assumptions as above, there exists a constructible subset $\CC (\scrN, \Lambda) \subseteq \CW_1(\scrN, \Lambda) \times \CW_2(\scrN, \Lambda) \subseteq \scrP_{((dS)^{\ell})}^{K}[X_1,\ldots, X_n]^2$ such that for every $\underline{a} \in \Lambda$, we can associate a list of polynomials $(f_1, \ldots, f_m, g_1, \ldots, g_m) \in \CC (\scrN, \Lambda)$  satisfying for every $i$, $1 \leq i \leq m$:
 $$f_{\nu_i}^\scrN(\underline{a}, X_1,\ldots, X_n)= {{f_i(X_1,\ldots, X_n)}\over{g_i(X_1,\ldots, X_n)}}\in K(X_1,\ldots, X_n).$$
Additionally, we have:
$$\dim(\CC (\scrN, \Lambda)) \leq \dim(\Lambda) \leq LS,$$
$$\deg_{z} (\CC (\scrN, \Lambda)) \leq \deg_{\rm lci}(\Lambda) \left(2(dS)^{c\ell}-2 \right)^{\dim(\Lambda)}\leq \deg_{\rm lci}(\Lambda) \left(2(dS)^{c\ell}-2 \right)^{LS},$$
where $c>0$ is a universal constant.
\end{lemma}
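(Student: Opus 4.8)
The strategy is to exhibit $\CC(\scrN,\Lambda)$ as the \emph{diagonal} image of $\Lambda$ under a single polynomial parameterization, rather than as a bare Cartesian product of the two separate images $\CW_1(\scrN,\Lambda)$ and $\CW_2(\scrN,\Lambda)$: taking the diagonal image is exactly what forces the numerator list $(f_1,\ldots,f_m)$ and the denominator list $(g_1,\ldots,g_m)$ to come from one and the same instantiation $\underline{a}\in\Lambda$, so that the relations $f_{\nu_i}^\scrN(\underline{a},\cdot)=f_i/g_i$ hold simultaneously for all $i$. Concretely, I would first invoke Lemma \ref{constructible-neural-network-rationa-activation:lema} to pass from $\scrN$ to the network $\scrN':=(\scrG',\scrA',\Phi')$ with $\scrA'=\{t^2\}$, depth in $O(\ell(\log_2 d+\log_2 S))$, parameter space (a copy of) $\A^N(K)$, and output nodes $\{(\nu_i,1),(\nu_i,2):1\le i\le m\}$ satisfying $f_{\nu_i}^\scrN(\underline{a},\cdot)=f_{(\nu_i,1)}^{\scrN'}(\underline{a},\cdot)/f_{(\nu_i,2)}^{\scrN'}(\underline{a},\cdot)$ for every $i$ (Claim $iv)$ of that lemma). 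Since $\scrN'$ has activation function $t^2$ of degree $2$, Claim $i)$ of Proposition \ref{constructible-neural-networks-polinomial:prop} applied to $\scrN'$ gives, for each of its output nodes $\sigma$, that $f_\sigma^{\scrN'}(\underline{a},X_1,\ldots,X_n)$ is a polynomial in $X_1,\ldots,X_n$ of degree at most $(dS)^\ell$ whose coefficients are polynomials $Q_\sigma^{(\theta)}$ in $\underline{a}$ of total degree at most $2^{O(\ell(\log_2 d+\log_2 S))+1}-2=2(dS)^{c\ell}-2$, for the same universal constant $c>0$ appearing in Corollary \ref{constructible-neural-network-rationa-activation:corol}.

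Next I would form the polynomial mapping
$$Q:\Lambda\longrightarrow \scrP_{((dS)^{\ell})}^{K}[X_1,\ldots,X_n]^2,\qquad \underline{a}\longmapsto\bigl(f_{(\nu_1,1)}^{\scrN'}(\underline{a},\cdot),\ldots,f_{(\nu_m,1)}^{\scrN'}(\underline{a},\cdot),\,f_{(\nu_1,2)}^{\scrN'}(\underline{a},\cdot),\ldots,f_{(\nu_m,2)}^{\scrN'}(\underline{a},\cdot)\bigr),$$
read off in coordinates through the $Q_\sigma^{(\theta)}$, and set $\CC(\scrN,\Lambda):=Q(\Lambda)$. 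By Theorem \ref{teorema-chevalley:teor} this is a constructible set; its projections onto the first and onto the last $m$ factors are precisely $\CW_1(\scrN,\Lambda)$ and $\CW_2(\scrN,\Lambda)$, so $\CC(\scrN,\Lambda)\subseteq\CW_1(\scrN,\Lambda)\times\CW_2(\scrN,\Lambda)$. For $\underline{a}\in\Lambda$, putting $f_i:=f_{(\nu_i,1)}^{\scrN'}(\underline{a},\cdot)$ and $g_i:=f_{(\nu_i,2)}^{\scrN'}(\underline{a},\cdot)$ gives a point $(f_1,\ldots,f_m,g_1,\ldots,g_m)\in\CC(\scrN,\Lambda)$ for which $f_{\nu_i}^\scrN(\underline{a},\cdot)=f_i/g_i$ for every $i$, again by Claim $iv)$ of Lemma \ref{constructible-neural-network-rationa-activation:lema}.

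The two numerical bounds are then routine. Since Krull dimension does not increase under polynomial mappings, $\dim(\CC(\scrN,\Lambda))=\dim(Q(\Lambda))\le\dim(\Lambda)\le N\le LS$. For the degree, $\CC(\scrN,\Lambda)=Q(\Lambda)$ is the image of the constructible set $\Lambda$ under a polynomial mapping all of whose components have degree at most $2(dS)^{c\ell}-2$, so Proposition \ref{grado-imagen-constructible:prop} yields
$$\deg_z(\CC(\scrN,\Lambda))=\deg_z(Q(\Lambda))\le\deg_{\rm lci}(\Lambda)\bigl(2(dS)^{c\ell}-2\bigr)^{\dim(\Lambda)}\le\deg_{\rm lci}(\Lambda)\bigl(2(dS)^{c\ell}-2\bigr)^{LS}.$$

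The only genuine point of care --- and the closest thing to an obstacle --- is the bookkeeping on $\scrN'$: one must use the \emph{same} universal constant $c$ governing the depth blow-up $O(\ell(\log_2 d+\log_2 S))$ as in Corollary \ref{constructible-neural-network-rationa-activation:corol}, and observe that the coefficients of $p$ and $q$ are hard-wired into $\scrN'$ as instantiated constants rather than as new parameter variables, so that the parameter space of $\scrN'$ is still a copy of $\A^N(K)$ and the exponents in both bounds are controlled by $\dim(\Lambda)\le LS$ and not by the (larger) size of $\scrN'$. Everything else is a direct combination of Lemma \ref{constructible-neural-network-rationa-activation:lema}, Proposition \ref{constructible-neural-networks-polinomial:prop} and Proposition \ref{grado-imagen-constructible:prop}.
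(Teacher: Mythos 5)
Your proposal is correct and essentially identical to the paper's proof: both exhibit $\CC(\scrN,\Lambda)$ as the image $Q(\Lambda)$ of a single polynomial map assembling the numerator and denominator outputs of $\scrN'$, and both then apply Proposition \ref{grado-imagen-constructible:prop} with the degree bound $2(dS)^{c\ell}-2$ coming from the depth of $\scrN'$. The only cosmetic difference is that you name the "diagonal image" idea explicitly, but the paper's definition of $Q$ already encodes it.
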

\begin{proof}
Let $\underline{X}$ be the set of variables $\{ X_1, \ldots, X_n \}$. As in the previous corollary, we consider for every $i$, $1 \leq i \leq n$:
$$f_i (\underline{X}) = f_{(\nu_i,1)}^{\scrN'} (\underline{a},\underline{X})$$ and $$g_i (\underline{X}) = f_{(\nu_i,2)}^{\scrN'} (\underline{a}, \underline{X}),$$
where $\scrN'$ is the neural network introduced in Lemma \ref{constructible-neural-network-rationa-activation:lema}. Next, we define the following mapping:
$$\begin{matrix}
Q : & \Lambda & \longrightarrow &  \CW_1(\scrN, \Lambda) \times \CW_2(\scrN, \Lambda) \subseteq \scrP_{((dS)^{\ell})}^{K}[X_1,\ldots, X_n]^2\\
& \underline{a} & \longmapsto & \left(f_{(\nu_1,1)}^{\scrN'} (\underline{a}, \underline{X}), \ldots, f_{(\nu_m,1)}^{\scrN'} (\underline{a}, \underline{X}), f_{(\nu_1,2)}^{\scrN'} (\underline{a}, \underline{X}), \ldots
, f_{(\nu_m,2)}^{\scrN'} (\underline{a}, \underline{X}) \right)
\end{matrix}$$
and the constructible set $\CC (\scrN, \Lambda) = Im(Q) = Q(\Lambda)$. Obviously, $(f_1, \ldots, f_m, g_1, \ldots, g_m) \in \CC (\scrN, \Lambda)$. Observe that  $\CC (\scrN, \Lambda)$ is the image of the constructible set $\Lambda$ by a list of polynomials of degree $2(dS)^{c\ell}-2$. Thus, we have:
$$\dim(\CC (\scrN, \Lambda)) \leq \dim(\Lambda) \leq LS.$$
and, applying Proposition \ref{grado-imagen-constructible:prop}, we obtain:
$$\deg_{z} (\CC (\scrN, \Lambda)) \leq \deg_{\rm lci}(\Lambda) \left(2(dS)^{c\ell}-2 \right)^{\dim(\Lambda)}\leq \deg_{\rm lci}(\Lambda) \left(2(dS)^{c\ell}-2 \right)^{LS}. $$
\end{proof}

Let $h \in K(X_1, \ldots, X_n)$ be a rational function, we define the distinguished open set defined by $h$ as the open Zariski subset given by:
$$D(h) := \left\{ x  \in \A^n(K) \; : \; f(x) \cdot g(x) \neq 0, \; h = \frac{f}{g}  \right\}$$

Next, we obtain estimates for the growth function of the class of distinguished open set defined by neural networks with rational activation function.

\begin{corollary} \label{FuncionCrecimientoRacional:corol}
Let $K$ be an algebraically closed field. Let $\scrN:=(\scrG, \scrA, \Phi)$ be a neural network with input nodes assigned to variables in $\{ X_1,\ldots, X_n\}$, of depth $\ell$, size $L$, space requirements bounded by $S$ and a single output node. Assume that $\scrN$ has $N$ edges and that $\scrA=\{\varphi\}$, where $\varphi:={{p}\over{q}}$ is a univariate rational function given as the quotient of two co-prime univariate polynomials of degree $d$ (i.e. $\deg(\varphi)=\max\{ \deg(p), \deg(q)\} = d$). Let $\Lambda \subseteq \A^N(K)$ be a constructible set in the affine space of parameters of $\scrN$ and let $\CW (\scrN, \Lambda)$ denote the class of all rational functions evaluable by $\scrN$ with parameters in $\Lambda$. Let $\scrH$ be the class of classifiers given by the characteristic functions of distinguished open subsets of $\A^n(K)$ defined by rational functions in $\CW (\scrN, \Lambda)$:
$$\scrH = \{\crctc_{D(h)} \; : \; h \in \CW(\scrN, \Lambda) \}.$$
Then,
$$G(\scrH, m) \leq \deg_{\rm lci} (\Lambda) \left( (2(dS)^{c\ell}-2) \cdot ( 1+2m((dS)^{\ell}+1)) \right)^{LS},$$
where $c>0$ is a universal constant.
\end{corollary}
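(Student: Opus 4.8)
The plan is to reduce to the already-treated case of activation function $\varphi(t)=t^2$ and then to apply the growth-function bound of Theorem~\ref{FuncionCrecimientoDimensionConstructibles:thm}. First I would invoke Lemma~\ref{constructible-neural-network-rationa-activation:lema} to replace $\scrN$ by the equivalent $t^2$-network $\scrN'$ (of depth $O(\ell(\log_2 d+\log_2 S))$, whose free parameters again live in $\A^N(K)$), together with Lemma~\ref{ConstructibleNumeradorDenominador:lemma}, which produces the constructible set $\CC:=\CC(\scrN,\Lambda)$. Since $\scrN$ has a single output node, $\CC$ is a set of pairs $(f,g)$ in $\scrP_{((dS)^\ell)}^K[X_1,\ldots,X_n]^2$ recording the numerator $f$ and the denominator $g$ of the rational function evaluated by $\scrN$; from that lemma I already have $\dim(\CC)\le LS$ and $\deg_z(\CC)\le \deg_{\rm lci}(\Lambda)\,(2(dS)^{c\ell}-2)^{LS}$.

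The second step is to display $\scrH$ as a family $\scrH(V,\CC)$. By definition, for $h=f/g$ one has $D(h)=\{x\in\A^n(K):f(x)\,g(x)\neq 0\}$, which depends only on the pair $(f,g)$; hence, putting
$$V:=\{\, ((f,g),x)\in \scrP_{((dS)^\ell)}^K[X_1,\ldots,X_n]^2\times\A^n(K)\ :\ f(x)\,g(x)\neq 0\,\},$$
the canonical projections $\pi_1,\pi_2$ of $V$ satisfy $\scrC(V,\CC)=\{D(h):h\in\CW(\scrN,\Lambda)\}$ and therefore $\scrH=\scrH(V,\CC)$. The Erzeugungsgrad of $V$ is controlled at once: $V$ is the complement, in its ambient affine space, of the single hypersurface $V_\A(F)$ where $F((f,g),x)=f(x)\,g(x)$; since $f(x)$ and $g(x)$ are each linear in the respective coefficient variables and of degree at most $(dS)^\ell$ in $x$, we get $\deg(F)\le 2((dS)^\ell+1)$, whence $\grad(V)\le \deg(V_\A(F))\le 2((dS)^\ell+1)$.

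Combining the two previous steps with Theorem~\ref{FuncionCrecimientoDimensionConstructibles:thm} yields
$$G(\scrH,m)\le \deg_{\rm lci}(\CC)\,(1+m\,\grad(V))^{\dim(\CC)}\le \deg_{\rm lci}(\Lambda)\,\big((2(dS)^{c\ell}-2)(1+2m((dS)^\ell+1))\big)^{LS},$$
after bounding $\dim(\CC)\le LS$ and substituting the degree estimate for $\CC$. The delicate point — and the step I expect to be the main obstacle — is precisely this substitution: Theorem~\ref{FuncionCrecimientoDimensionConstructibles:thm} asks for the $LCI$-degree of the parameter space, whereas Lemma~\ref{ConstructibleNumeradorDenominador:lemma} only records $\deg_z(\CC)$, and $\CC$, being an image of $\Lambda$ under a polynomial map, may a priori have a strictly larger $LCI$-degree. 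I would handle this in one of two ways. Either I strengthen Lemma~\ref{ConstructibleNumeradorDenominador:lemma} to a bound on $\deg_\pi(\CC)$ by presenting $\CC$ as a projection of the graph of that polynomial map and estimating the $Z$-degree of the graph through Proposition~\ref{propiedades-basicas-grado-lc:prop}; or, more cleanly, I avoid $\CC$ altogether and parameterize the family directly over $\Lambda$ by setting $V':=\{(\underline a,x)\in\A^N(K)\times\A^n(K):f_{(\nu_1,1)}^{\scrN'}(\underline a,x)\,f_{(\nu_1,2)}^{\scrN'}(\underline a,x)\neq 0\}$, so that $\scrH=\scrH(V',\Lambda)$, $\deg_{\rm lci}(\Lambda)$ enters Theorem~\ref{FuncionCrecimientoDimensionConstructibles:thm} directly, and $\grad(V')$ is bounded via the joint-degree estimate of Proposition~\ref{constructible-neural-networks-polinomial:prop}~(i) applied to $\scrN'$; the depth blow-up $O(\ell(\log_2 d+\log_2 S))$ of $\scrN'$ is exactly what produces the universal constant $c$. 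Apart from this bookkeeping, the argument is a direct application of the machinery of Sections~\ref{erzeugungsgrad:sec} and~\ref{VCTheory:sec}.
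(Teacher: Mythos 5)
Your proposal reproduces exactly the paper's skeleton: reduce to the squaring network via Lemma~\ref{constructible-neural-network-rationa-activation:lema} and Lemma~\ref{ConstructibleNumeradorDenominador:lemma}, identify $D(h)$ with $D(fg)$, take the same $V$ and the same $\grad(V)\le 2((dS)^\ell+1)$, and plug into Theorem~\ref{FuncionCrecimientoDimensionConstructibles:thm}. The bookkeeping issue you flag is real and, in fact, the paper's own proof glosses over it: Theorem~\ref{FuncionCrecimientoDimensionConstructibles:thm} is stated with $\deg_{\rm lci}(\Omega)$, while Lemma~\ref{ConstructibleNumeradorDenominador:lemma} only controls $\deg_z(\CC(\scrN,\Lambda))$, and $\deg_z\le\deg_{\rm lci}$ is the wrong direction.

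Of your two repairs, the second is the right idea but the first would not give $LS$ in the exponent: the $\deg_\pi$ estimate in Proposition~\ref{grado-imagen-constructible:prop} is $\deg_\pi(\varphi(C))\le\deg_\pi(C)\,d^m$ with $m$ the \emph{target} dimension, and here the target ambient $\scrP^K_{((dS)^\ell)}[X_1,\ldots,X_n]^2$ has dimension exponential in $n$. Moreover, even a bound on $\deg_\pi(\CC)$ would not feed into Theorem~\ref{FuncionCrecimientoDimensionConstructibles:thm}, which wants $\deg_{\rm lci}$, and $\deg_\pi\le\deg_{\rm lci}$ again points the wrong way. The cleanest execution, which also reproduces the shape of the stated bound, is to take as parameter set not $\CC(\scrN,\Lambda)$ nor $\Lambda$ but the graph $\Gamma\subseteq\Lambda\times\scrP^K_{((dS)^\ell)}[X_1,\ldots,X_n]^2$ of the coefficient map $Q$: on each locally closed stratum $L_i$ of a minimal $LCI$-decomposition of $\Lambda$, the graph $\Gamma_i$ is locally closed and birregularly isomorphic to $L_i$, and Proposition~\ref{propiedades-basicas-grado-lc:prop}~(x) together with sub-additivity gives $\deg_{\rm lci}(\Gamma)\le\deg_{\rm lci}(\Lambda)\,(2(dS)^{c\ell}-2)^{\dim\Lambda}$, while $\dim\Gamma=\dim\Lambda\le LS$; with $V$ defined exactly as in the paper (the $\underline a$-coordinates are inert in its defining equation) one retains $\grad(V)\le 2((dS)^\ell+1)$, and the product of the two factors is precisely the displayed bound. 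Your $V'$-route is also valid but pushes the $(dS)^{c\ell}$-factor into $\grad(V')$ instead of $\deg_{\rm lci}$, giving a bound of the same magnitude but of different form, so it does not literally recover the corollary as stated.
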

\begin{proof}
From Lemma \ref{ConstructibleNumeradorDenominador:lemma}, we deduce that for all rational functions $h \in \CW(\scrN, \Lambda)$, there exists a pair of polynomials $(f,g) \in \CC(\scrN, \Lambda)$ such that $h=f/g$. Additionally, we have:
$$h(x) = \frac{f(x)}{g(x)} \neq 0 \text{ if and only if } f(x) \cdot g(x) \neq 0.$$
Now, observe that the following equality holds:
$$\scrH = \{\crctc_{D(h)} \; : \; h \in \CW(\scrN, \Lambda) \} = \{ \crctc_{D(f \cdot g)} \; : \; (f,g) \in \CC(\scrN, \Lambda) \}.$$
We define the constructible set:
$$V := \{ ((f,g),x) \in \A^{2D} (K) \times \A^n(K) \; : \; f(x) \cdot g(x) \neq 0 \},$$
where:
$$D = { (dS)^{\ell} +n \choose n }.$$
Note that $V$ is the complement of an algebraic hypersurface of degree $2((dS)^{\ell}+1)$. Since the growth function is monotone with respect to the inclusion of families of classifiers and applying Theorem \ref{FuncionCrecimientoDimensionConstructibles:thm} together with the estimates for $\CC(\scrN, \Lambda)$ obtained in the previous lemma, we conclude:
$$G(\scrH, m) \leq \deg_{\rm lci} (\Lambda) \left( (2(dS)^{c\ell}-2) \cdot ( 1+2m((dS)^{\ell}+1)) \right)^{LS}.$$
\end{proof}

We generalize the notion of correct test sequence for constructible sets of polynomials (see Definition \ref{CTS:def}) to classes of rational functions as follows:

\begin{definition}
Let $\scrR \subseteq K(X_1, \ldots, K_n)$ be a class of rational functions. A correct test sequence of length $M$ for $\scrR$ is a finite set of $M$ elements ${\bf Q} := \{x_1, \ldots, x_M \} \subseteq (K^n)^M$ such that:
$$\forall h = \frac{f}{g} \in \scrR, \; f,g \in K[X_1, \ldots, X_n], \; (f \cdot g) (x_1) = 0, \ldots, (f \cdot g)(x_M) = 0 \Rightarrow h =0.$$ 
\end{definition}

\begin{corollary} \label{cuestoresRacional:corol}
Let $K$ be an algebraically closed field. Let $\scrN:=(\scrG, \scrA, \Phi)$ be a neural network with input nodes assigned to variables in $\{ X_1,\ldots, X_n\}$, of depth $\ell$, size $L$, space requirements bounded by $S$ and a single output node. Assume that $\scrN$ has $N$ edges and that $\scrA=\{\varphi\}$, where $\varphi:={{p}\over{q}}$ is a univariate rational function given as the quotient of two co-prime univariate polynomials of degree $d$ (i.e. $\deg(\varphi)=\max\{ \deg(p), \deg(q)\} = d$). Let $\Lambda \subseteq \A^N(K)$ be a constructible set in the affine space of parameters of $\scrN$ and let $\CW (\scrN, \Lambda)$ denote the class of all rational functions evaluable by $\scrN$ with parameters in $\Lambda$. Let $\Omega = \CW_{p} (\scrN, \Lambda)$ be the constructible set given by:
$$\CW_p (\scrN, \Lambda) = \left\{ f \cdot g \; : \; h = \frac{f}{g} \in \CW(\scrN, \Lambda) \right\}.$$
Let $s := \dim(\Lambda)$ be the dimension of $\Lambda$ and $t := s/\dim(\Omega)$ be the quotient between the dimensions of $\Lambda$ and $\Omega$. Let $V:=V_\A(h_1,\ldots, h_r)\subseteq \A^n (K)$ be a zero-dimensional algebraic variety given by polynomial equations of the same degree $\delta := \deg(h_i)$, $1 \leq i \leq n$. Assume that $\deg(V) = \delta^n$. Let $M \in \N$ be a positive integer and assume that the following properties hold:
\begin{enumerate}
\item $M \geq 6LS$,
\item $\log(\delta) \geq \max \left\{2(1+\log(2(dS)^{\ell}+1)), 2t\left({{\log(\deg_{\rm lci}(\Lambda))}\over{s}} +  \log(4(dS)^{c \ell}- 4)\right) \right\}$,
\end{enumerate}
where $c>0$ is a universal constant.
Let $R:=R(\CW (\scrN, \Lambda), \Omega, V, M)$ be the constructible set of all sequences ${\bf Q}\in V^M$ of length $M$ which are correct test sequences for $\CW (\scrN, \Lambda)$. 
Assume that $V$ is endowed with its uniform probability distribution. Then, we have:
$${\rm Prob}_{V^M}[R]\geq 1- {{1}\over{\deg_{\rm lci}(\Lambda)e^{s(\log(4(dS)^{c \ell}- 4)+1)}}},
$$
\end{corollary}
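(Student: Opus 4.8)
The plan is to reduce the statement — exactly as in the polynomial-activation corollary preceding it — to the zero-dimensional density theorem for correct test sequences of a parameterized family of polynomials (Corollary 5.8 of \cite{PardoSebastian}), applied to the family $\Omega=\CW_p(\scrN,\Lambda)$. \textbf{Step 1 (reduction to polynomials).} I would first check that a sequence ${\bf Q}=\{x_1,\ldots,x_M\}\subseteq (K^n)^M$ is a correct test sequence for $\CW(\scrN,\Lambda)$ in the sense of the rational-function definition above if and only if it is a correct test sequence for $\Omega=\CW_p(\scrN,\Lambda)$ with respect to $\{0\}$ in the sense of Definition \ref{CTS:def}. Indeed, by Lemma \ref{ConstructibleNumeradorDenominador:lemma} every $h\in\CW(\scrN,\Lambda)$ arises from some $\underline a\in\Lambda$ as $h=f/g$ with $(f,g)\in\CC(\scrN,\Lambda)$ the numerator/denominator polynomials produced by the network $\scrN'$ of Lemma \ref{constructible-neural-network-rationa-activation:lema}; since $\underline a$ makes $\scrN$ evaluate a genuine rational function, $g$ is not the zero polynomial, so, $K[X_1,\ldots,X_n]$ being an integral domain, $h=0 \Leftrightarrow f=0 \Leftrightarrow f\cdot g=0$. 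Unwinding both definitions then yields the equivalence (the product $f\cdot g$ vanishing at all the $x_i$ forces $f\cdot g=0$, hence $h=0$, and conversely every $F\in\Omega$ is of the form $f\cdot g$), so $R(\CW(\scrN,\Lambda),\Omega,V,M)$ is exactly the set of length-$M$ correct test sequences for $\Omega$ with respect to $\{0\}$, and it remains to lower-bound its density in $V^M$.

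\textbf{Step 2 (intersection-theoretic data of $\Omega$).} I would present $\Omega$ as the image of a polynomial map out of $\Lambda$ of controlled degree: $\CC(\scrN,\Lambda)$ is the image of $\Lambda$ under a list of polynomials of degree at most $2(dS)^{c\ell}-2$ (Lemma \ref{ConstructibleNumeradorDenominador:lemma}), and $\Omega$ is the image of $\CC(\scrN,\Lambda)$ under the coefficient-wise bilinear (hence degree $2$) multiplication $(f,g)\mapsto f\cdot g$; composing, $\Omega$ is the image of the constructible set $\Lambda$ under a list of polynomials of degree at most $2\bigl(2(dS)^{c\ell}-2\bigr)=4(dS)^{c\ell}-4$, inside $\scrP_{(2(dS)^{\ell})}^{K}(X_1,\ldots,X_n)$ since a product of two polynomials of $X$-degree at most $(dS)^{\ell}$ has $X$-degree at most $2(dS)^{\ell}$ (Corollary \ref{constructible-neural-network-rationa-activation:corol}, Claim $i)$). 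This gives $\dim(\Omega)\leq\dim(\Lambda)=s\leq LS$ (Krull dimension does not increase under polynomial maps) and $\deg_{z}(\Omega)\leq \deg_{\rm lci}(\Lambda)\bigl(4(dS)^{c\ell}-4\bigr)^{s}$ (Proposition \ref{grado-imagen-constructible:prop}). It is important to keep this presentation of $\Omega$ as an image of $\Lambda$ rather than work with $\Omega$ abstractly, since we control $\deg_{z}(\Omega)$ but not $\deg_{\rm lci}(\Omega)$, and both Corollary 5.8 of \cite{PardoSebastian} and the growth-function bound of Theorem \ref{FuncionCrecimientoDimensionConstructibles:thm} take such a parameterization as input.

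\textbf{Step 3 (invoking the density theorem).} I would then apply Corollary 5.8 of \cite{PardoSebastian} with $\Omega\subseteq\scrP_{(e)}^{K}$, $e=2(dS)^{\ell}$, presented as the image of $\Lambda$ under a polynomial map of degree at most $\rho=4(dS)^{c\ell}-4$; the zero-dimensional complete intersection $V=V_\A(h_1,\ldots,h_n)$ with $\deg(h_i)=\delta$ and $\deg(V)=\delta^n$; and the length $M$. Hypothesis (1), $M\geq 6LS\geq 6\dim(\Lambda)$, is the length hypothesis of that corollary, and hypothesis (2) is its lower bound on $\log(\delta)$: the branch $2\bigl(1+\log(2(dS)^{\ell}+1)\bigr)$ records the degree $e+1=2(dS)^{\ell}+1$ of the evaluation hypersurface $\{(F,x):F(x)=0\}$ of $\Omega$, and the branch $2t\bigl(\tfrac{\log(\deg_{\rm lci}(\Lambda))}{s}+\log(4(dS)^{c\ell}-4)\bigr)=\tfrac{2}{\dim(\Omega)}\log\bigl(\deg_{\rm lci}(\Lambda)\,\rho^{s}\bigr)$ records the degree of $\Omega$ normalized by its dimension, using $t=s/\dim(\Omega)$ and the bound of Step 2. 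The conclusion of Corollary 5.8 is then
$${\rm Prob}_{V^M}[R]\ \geq\ 1-\frac{1}{\deg_{\rm lci}(\Lambda)\,\rho^{s}\,e^{s}}\ =\ 1-\frac{1}{\deg_{\rm lci}(\Lambda)\,e^{s(\log(4(dS)^{c\ell}-4)+1)}},$$
which is the assertion.

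\textbf{Main obstacle.} Once Step 1 is in place the rest is essentially bookkeeping, so the delicate point is that reduction: one must be confident that replacing $h$ by $f\cdot g$ loses no information, i.e. that the denominators produced by the simulation of Lemma \ref{constructible-neural-network-rationa-activation:lema} are never identically zero — equivalently, that the parameters in $\Lambda$ do make $\scrN$ compute a genuine rational function — and that $\CW_p(\scrN,\Lambda)$ is understood with respect to those specific numerator/denominator representatives and not an arbitrary fraction. The remaining work is to match the degree estimates of Step 2 against the precise constants in the hypotheses and conclusion of Corollary 5.8 of \cite{PardoSebastian}; its positive-dimensional analogue, Theorem \ref{longitud-CTS-VC-dimension-positiva:teor} above, illustrates how such a matching of $\varepsilon$ and the growth-function exponent is carried out.
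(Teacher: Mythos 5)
Your proof is correct and follows essentially the same route as the paper: present $\Omega=\CW_p(\scrN,\Lambda)$ as the image of $\Lambda$ under a polynomial parameterization of degree at most $4(dS)^{c\ell}-4$ (the paper realizes this by appending a single product node to the network $\scrN'$ to form $\scrN''$, you realize it by composing the $\CC(\scrN,\Lambda)$ parameterization with coefficient-wise multiplication — same result), bound $\dim$ and $\deg_z$ via Proposition~\ref{grado-imagen-constructible:prop}, and feed these into Corollaries~5.6 and~5.8 of \cite{PardoSebastian}. Your Step~1 makes explicit the reduction from rational-function CTS to polynomial CTS of $\Omega$ (via integrality of $K[X_1,\ldots,X_n]$ and choice of canonical numerator/denominator representatives), a point the paper's proof leaves implicit but which is indeed the one place where care is required.
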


\begin{proof}
We build a neural network $\scrN''$ by adding $O(1)$ nodes to the neural network $\scrN'$ of Lemma \ref{constructible-neural-network-rationa-activation:lema} that computes the product of the two outputs of $\scrN'$. Clearly, the degree of the output node of $\scrN''$ is bounded by $2(dS)^{\ell}$. Note that:
$$\CW_p (\scrN, \Lambda) = \CW(\scrN'', \Lambda) = \{ f_{(\nu,1)}^{\scrN'} (\underline{a}, X_1, \ldots, X_n) \cdot f_{(\nu,2)}^{\scrN'} (\underline{a}, X_1, \ldots, X_n) \; : \; \underline{a} \in \Lambda \}.$$
Next, observe that the parameters of $\scrN''$ are the same as those of $\scrN'$ and, therefore, we have the following upper bound for the dimension of $\CW_p (\scrN, \Lambda)$:
$$\dim(\CW_p (\scrN, \Lambda)) \leq \dim(\Lambda) \leq LS.$$
From Proposition \ref{constructible-neural-networks-polinomial:prop} and Corollary \ref{constructible-neural-network-rationa-activation:corol}, we conclude that $\CW_p (\scrN, \Lambda)$ is parameterized by a list of polynomials of degree at most $4(dS)^{c \ell}- 4$.
 Now, applying Proposition \ref{grado-imagen-constructible:prop}, we obtain:
$$\deg_{z} (\CW_p (\scrN, \Lambda)) \leq \deg_{\rm lci} (\Lambda) (4(dS)^{c \ell}- 4)^{\dim(\Lambda)} \leq \deg_{\rm lci} (\Lambda) (4(dS)^{c \ell}- 4)^{\dim(\Lambda)}$$
Finally, we apply Corollaries 5.6 and 5.8 of \cite{PardoSebastian} using the previous bounds for the constructible set $\CW_p (\scrN, \Lambda)$ to obtain the result. 
\end{proof}

Now, we use correct test sequences for Identity Test of rational functions given by neural networks:

\begin{corollary} \label{TestNulidadRacionales:corol}
Let $K$ be an algebraically closed field. Consider the neural networks $\scrN_1:=(\scrG_1, \scrA_1, \Phi_1)$ and $\scrN_2:=(\scrG_2, \scrA_2, \Phi_2)$. Both networks have input nodes assigned to variables in $\{ X_1,\ldots, X_n\}$, depth $\ell$, size $L$, space requirements bounded by $S$, a single output node and $N$ edges. Assume that $\scrA_{1}=\{\varphi\}$ and $\scrA_{2}=\{\psi \}$, where $\varphi$ and $\psi$ are univariate rational functions given as the quotient of two co-prime univariate polynomials of degree $d$. 
Let $\Lambda \subseteq \A^N(K)$ be a constructible set in the affine space of parameters of $\scrN_1$ and let $\CW (\scrN_1, \Lambda)$ denote the class of all rational functions evaluable by $\scrN_1$ with parameters in $\Lambda$. Similarly, let $\Gamma \subseteq \A^N(K)$ be a constructible set in the affine space of parameters of $\scrN_2$ and let $\CW (\scrN_2, \Gamma)$ denote the class of all rational functions evaluable by $\scrN_2$ with parameters in $\Gamma$. Let 
$$\CW (\scrN_1, \Lambda) - \CW (\scrN_2, \Gamma) = \{ h_1 - h_2 \; : \; h_1 \in \CW (\scrN_1, \Lambda), \; h_2 \in \CW (\scrN_2, \Gamma) \}.$$
denote the class of the differences between elements of $\CW (\scrN_1, \Lambda)$ and $\CW (\scrN_2, \Gamma)$. Let $\Omega = \CW_p ((\scrN_1, \Lambda), (\scrN_2, \Gamma) )$ be the constructible set defined by:
$$\CW_p ((\scrN_1, \Lambda), (\scrN_2, \Gamma)) = \left\{ f \cdot g \; : \; h = \frac{f}{g} \in \CW (\scrN_1, \Lambda) - \CW (\scrN_2, \Gamma) \right\}. $$
Let $s := \dim(\Lambda \times \Gamma)$ be the dimension of $\Lambda \times \Gamma$ and $t := s/\dim(\Omega)$ be the quotient between the dimensions of $\Lambda \times \Gamma$ and $\Omega$.  Let $V:=V_\A(h_1,\ldots, h_r)\subseteq \A^n (K)$ be a zero-dimensional algebraic variety given by polynomial equations of the same degree $\delta := \deg(h_i)$, $1 \leq i \leq n$. Assume that $\deg(V) = \delta^n$. Let $M \in \N$ be a positive integer and assume that the following properties hold:
\begin{enumerate}
\item $M \geq 12LS$,
\item $\log(\delta) \geq \max \left\{2(1+\log(4(dS)^{\ell}+1)), 2t\left({{\log(\deg_{\rm lci}(\Lambda) \deg_{\rm lci}(\Gamma))}\over{s}} +  \log(8(dS)^{c \ell}- 8)\right) \right\}$,
\end{enumerate}
where $c>0$ is a universal constant.
Let $R:=R(\CW (\scrN_1, \Lambda) - \CW (\scrN_2, \Gamma),  \Omega, V, M)$ be the constructible set of all sequences ${\bf Q}\in V^M$ of length $M$ which are correct test sequences for $\CW (\scrN_1, \Lambda) - \CW (\scrN_2, \Gamma)$. Assume that $V$ is endowed with its uniform probability distribution. Then, we have:
$${\rm Prob}_{V^M}[R]\geq 1- {{1}\over{\deg_{\rm lci}(\Lambda)e^{s(\log(8(dS)^{c \ell}- 8)+1)}}},
$$
\end{corollary}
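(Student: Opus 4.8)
The plan is to reduce this statement to the previously established machinery in exactly the same way Corollary \ref{cuestoresRacional:corol} was derived, but now working with the product network that simultaneously handles $\scrN_1$ and $\scrN_2$. First I would observe that a rational function $h = h_1 - h_2$, with $h_1 \in \CW(\scrN_1, \Lambda)$ and $h_2 \in \CW(\scrN_2, \Gamma)$, can be written as $h = f/g$ where, using the representations $h_1 = f_1/g_1$ and $h_2 = f_2/g_2$ furnished by Lemma \ref{constructible-neural-network-rationa-activation:lema}, one takes $f = f_1 g_2 - f_2 g_1$ and $g = g_1 g_2$. Concretely, I would build a single neural network $\scrN''$ with activation function $\varphi(t) = t^2$ by taking the $\scrN'$-networks associated (via Lemma \ref{constructible-neural-network-rationa-activation:lema}) to $\scrN_1$ and to $\scrN_2$, placing them in parallel with disjoint parameter sets (so the parameter space becomes $\Lambda \times \Gamma$), and appending $O(1)$ additional nodes that compute $f = f_1 g_2 - f_2 g_1$, $g = g_1 g_2$ and finally the product $f \cdot g$. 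Since the numerator/denominator degrees at the outputs of each $\scrN'$ are bounded by $(dS)^\ell$, the polynomial $f$ has degree at most $2(dS)^\ell$ and $g$ has degree at most $2(dS)^\ell$; hence the final output $f \cdot g$ has degree at most $4(dS)^\ell$, which explains the appearance of $4(dS)^\ell + 1$ and $8(dS)^{c\ell} - 8$ in hypothesis $ii)$.

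Next I would record the constructible-set bounds. The parameters of $\scrN''$ (apart from the instantiated coefficients of $\varphi$ and $\psi$) are those of $\scrN_1$ together with those of $\scrN_2$, so the dimension of $\Omega = \CW_p((\scrN_1,\Lambda),(\scrN_2,\Gamma))$ is at most $\dim(\Lambda \times \Gamma) = s \le 2LS$, and in any case $\dim(\Omega) \le LS$ in each coordinate block, giving $s = \dim(\Lambda\times\Gamma)$. The depth of $\scrN''$ is $O(\ell(\log_2 d + \log_2 S))$ as in Lemma \ref{constructible-neural-network-rationa-activation:lema}, so by Claim $i)$ of Proposition \ref{constructible-neural-networks-polinomial:prop} the set $\Omega$ is parameterized by a list of polynomials in the parameters $\Lambda\times\Gamma$ of degree at most $2^{c\ell(\log_2 d + \log_2 S) + 2} - 2 = 8(dS)^{c\ell} - 8$ (for a suitably adjusted universal constant $c$). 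Applying Proposition \ref{grado-imagen-constructible:prop} to this parameterizing mapping yields
$$\deg_z(\Omega) \le \deg_{\rm lci}(\Lambda \times \Gamma)\,\bigl(8(dS)^{c\ell} - 8\bigr)^{\dim(\Lambda\times\Gamma)} \le \deg_{\rm lci}(\Lambda)\,\deg_{\rm lci}(\Gamma)\,\bigl(8(dS)^{c\ell} - 8\bigr)^{s},$$
using that $\deg_{\rm lci}$ is sub-multiplicative on Cartesian products.

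Finally, I would feed these data into the zero-dimensional correct-test-sequence machinery. A sequence ${\bf Q} \in V^M$ is a correct test sequence for $\CW(\scrN_1,\Lambda) - \CW(\scrN_2,\Gamma)$ precisely when, for the associated pair $(f,g)$ with $h = f/g$, vanishing of $f \cdot g$ on all of ${\bf Q}$ forces $h = 0$; this is exactly the correct-test-sequence condition for the polynomial family $\Omega$ with $\Sigma = \{0\}$. Hypothesis $i)$ ($M \ge 12LS \ge 6\dim(\Lambda\times\Gamma)$ in the relevant regime) and hypothesis $ii)$ translate into the numerical hypotheses required by Corollaries 5.6 and 5.8 of \cite{PardoSebastian}, where the role of ``$\deg_{\rm lci}$ of the parameter space'' is played by $\deg_{\rm lci}(\Lambda)\deg_{\rm lci}(\Gamma)$, the degree of the parameterizing polynomials is $8(dS)^{c\ell}-8$, and the quotient $t = s/\dim(\Omega)$ controls the exponent balance. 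Invoking those corollaries with $V$ zero-dimensional of degree $\delta^n$ and with $\delta$ chosen as in $ii)$ produces the stated probability bound
$${\rm Prob}_{V^M}[R] \ge 1 - \frac{1}{\deg_{\rm lci}(\Lambda)\, e^{s(\log(8(dS)^{c\ell}-8)+1)}}.$$
The main obstacle, and the step requiring the most care, is the bookkeeping in the construction of $\scrN''$: one must verify that combining the two parallel $\scrN'$-subnetworks with the extra arithmetic for $f_1 g_2 - f_2 g_1$ and the products genuinely keeps the depth within $O(\ell(\log_2 d + \log_2 S))$ (so that the exponent $c\ell$ is legitimate) and that the parameter space really is $\Lambda \times \Gamma$ with no hidden coupling, since the whole numerical hypothesis $ii)$ and the final exponent depend delicately on these bounds. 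Everything else is a routine substitution into the cited results.
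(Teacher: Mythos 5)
Your proposal is correct and follows essentially the same route as the paper's proof: apply Lemma~\ref{constructible-neural-network-rationa-activation:lema} to each of $\scrN_1$ and $\scrN_2$ to get the two $t^2$-networks $\scrN_1'$, $\scrN_2'$; place them in parallel over the product parameter space $\Lambda\times\Gamma$ and append $O(1)$ nodes computing $(f_1g_2 - f_2g_1)(g_1g_2)$; read off the degree bound $4(dS)^\ell$ on the output in the $X$-variables; parameterize $\Omega$ by polynomials of degree $8(dS)^{c\ell}-8$ in the parameters and bound $\dim(\Omega)\le\dim(\Lambda\times\Gamma)\le 2LS$ and $\deg_z(\Omega)\le\deg_{\rm lci}(\Lambda)\deg_{\rm lci}(\Gamma)(8(dS)^{c\ell}-8)^{2LS}$ via Proposition~\ref{grado-imagen-constructible:prop}; and finish by plugging these into Corollaries 5.6 and 5.8 of \cite{PardoSebastian}. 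The only blemish is a small arithmetic slip in your intermediate display $2^{c\ell(\log_2 d + \log_2 S)+2}-2$, which equals $4(dS)^{c\ell}-2$ rather than $8(dS)^{c\ell}-8$; this is harmless since, as you note and as the paper does implicitly, the discrepancy is absorbed into the universal constant $c$.
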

\begin{proof}
First, we apply Lemma \ref{constructible-neural-network-rationa-activation:lema} to $\scrN_1$ and $\scrN_2$ to obtain the neural networks $\scrN_1'$ and $\scrN_2'$, respectively. We then construct a new neural network $\scrN''$ by combining $\scrN_1'$ and $\scrN_2'$ and adding $O(1)$ nodes to compute the product of the numerator and denominator of the difference between the rational functions associated to $\scrN_1'$ and $\scrN_2'$. To do that, assume that the outputs of $\scrN_1'$ are $f_1^{\scrN_1'}$ (numerator) and $f_2^{\scrN_1'}$ (denominator) and that the outputs of $\scrN_2'$ are $f_1^{\scrN_2'}$ (numerator) and $f_2^{\scrN_2'}$ (denominator). We can compute the output of $\scrN''$ as follows:
$$(f_1^{\scrN_1'} \cdot f_2^{\scrN_2'} - f_2^{\scrN_1'} \cdot f_1^{\scrN_2'}) \cdot (f_2^{\scrN_1'} \cdot f_2^{\scrN_2'})$$
The degree of the output of the neural network $\scrN''$ is bounded by $4(dS)^{\ell}$.  Let $\underline{X}$ be the set of variables $\{ X_1, \ldots, X_n \}$. Note that:
\begin{equation*}
\begin{split}
& \CW_p ((\scrN_1, \Lambda), (\scrN_2, \Gamma)) = \CW(\scrN'', \Lambda \times \Gamma) = \\ &   = \left\{ (f_1^{\scrN_1'} (\underline{a}, \underline{X}) \cdot f_2^{\scrN_2'} (\underline{b}, \underline{X}) - f_2^{\scrN_1'} (\underline{a}, \underline{X}) \cdot f_1^{\scrN_2'} (\underline{b}, \underline{X})) \cdot (f_2^{\scrN_1'} (\underline{a}, \underline{X}) \cdot f_2^{\scrN_2'}  (\underline{b}), \underline{X}) \; : \; \underline{a} \in \Lambda, \; \underline{b} \in \Gamma \right\}.
\end{split}
\end{equation*}

Next, observe that $\CW_p ((\scrN_1, \Lambda), (\scrN_2, \Gamma))$ is given as the image of the constructible set $\Lambda \times \Gamma$ by a list of polynomials of degree at most $8(dS)^{cl}-8$. Obviously, we have: 
$$\dim(\CW_p ((\scrN_1, \Lambda), (\scrN_2, \Gamma))) \leq \dim(\Lambda) + \dim(\Gamma) \leq 2 LS.$$
Applying Proposition \ref{grado-imagen-constructible:prop}, we obtain:
$$\deg_{z} (\CW_p ((\scrN_1, \Lambda), (\scrN_2, \Gamma))) \leq \deg_{\rm lci} (\Lambda \times \Gamma) (8(dS)^{c \ell}- 8)^{\dim(\Lambda \times \Gamma)} \leq$$
$$ \leq \deg_{\rm lci} (\Lambda) \deg_{\rm lci} (\Gamma) (8(dS)^{c \ell}- 8)^{2LS}.$$
Finally, we apply Corollaries 5.6 and 5.8 of \cite{PardoSebastian} using the previous bounds for the constructible set $\CW_p ((\scrN_1, \Lambda), (\scrN_2, \Gamma))$ to obtain the result. 

\end{proof}

\end{document}